%% file: _main.tex
\documentclass{article}

\input{_header}
\usepackage{autonum} %
\title{Instance-Optimal Estimation \\ with Multiple LLM Judges on a Budget}

\newcommand{\estivwe}{\textsc{Est-IVWE}}
\newcommand{\ivwe}{\textsc{IVWE}}

\author{%
Junghyun Lee\thanks{Equal contributions} \\
KAIST AI
\And
Sanghwa Kim$^*$ \\
KAIST AI
\And
Yassir Jedra \\
ICL EEE
\And
Alexandre Prouti\`{e}re \\
KTH EECS
\And
Se-Young Yun \\
KAIST AI
}

\begin{document}

\maketitle
\begin{abstract}
Evaluating large language models increasingly relies on LLM-as-a-judge protocols, but such evaluations remain costly: different judges have different prices and reliabilities, and the difficulty of each prompt--response pair can vary substantially. This raises a basic allocation question: under a fixed budget, how should one distribute evaluation queries across heterogeneous judges and instances to obtain the most accurate score estimates? We formalize this question as \emph{budgeted heteroskedastic multi-judge estimation}. Given \(K\) prompt--response pairs, \(J\) judges with known costs, and unknown query--judge variances, the goal is to estimate a bounded score vector while minimizing an \(\ell_p\)-error. Our first contribution is to analyze the inverse-variance weighted estimator (IVWE) and to derive the oracle allocation that minimizes its error rate. Since this allocation depends on the unknown variances, we then address the practical unknown-variance setting by proposing \estivwe{}, an adaptive algorithm that constructs and leverages \emph{optimistically biased} variance estimates to stabilize the empirical allocation. We prove that \estivwe{} matches the oracle IVWE rate up to lower-order terms in the budget. Our second and central theoretical contribution is a matching \emph{local} minimax lower bound, which establishes the instance-optimality of the proposed algorithms. A key technical insight is that Fano-type high-probability arguments are too coarse for this problem: their packing construction loses the local variance structure that governs the optimal allocation. We instead use an Assouad-type in-expectation argument, based on local perturbations, which preserves this structure and yields the sharp allocation-dependent lower bound.
Finally, we numerically validate the superiority of our approach over na\"{i}ve uniform allocation on synthetic and HelpSteer2 datasets.

\end{abstract}

\input{001Introduction}

\input{002ProblemSetting}
\input{003KnownUpper}

\input{004UnknownUpper}

\input{005Lower}

\input{006Gaussian}

\input{007Experiments}

\input{008Conclusion}

\begin{ack}
    We thank the authors of \citet{saha2026budget} for providing their codes and dataset for our experiments.
    We also thank B. Kveton for helpful discussions and comments.
\end{ack}

\bibliographystyle{plainnat}
\bibliography{references}

\newpage
\appendix

\crefname{appendix}{Appendix}{Appendices}
\Crefname{appendix}{Appendix}{Appendices}
\crefalias{section}{appendix}
\crefalias{subsection}{appendix}
\crefalias{subsubsection}{appendix}

\newpage
\tableofcontents
\newpage
\input{901Proof-Known}

\newpage
\input{902Proof-Unknown}
\newpage
\input{903Proof-Lower-whp}

\newpage
\input{904Proof-Lower-inexp}

\newpage
\input{905BetaConstruction}
\newpage
\input{906Gaussian}

\newpage
\input{907Proof-Gaussian}
\newpage
\input{908Experiments}

\end{document}

%% file: _header.tex
\PassOptionsToPackage{sort,compress,round}{natbib}
\usepackage[preprint]{neurips_2026} %

\usepackage[utf8]{inputenc}
\usepackage[T1]{fontenc}
\usepackage{url}
\usepackage{booktabs}
\usepackage{tabularx}
\usepackage{nicefrac}
\usepackage{microtype}
\usepackage{doi}
\usepackage{blindtext}
\usepackage{multicol, multirow, makecell}
\usepackage[normalem]{ulem}
\usepackage{algorithm}

\usepackage{enumerate}
\usepackage{subcaption}
\usepackage{rotating}
\usepackage{tablefootnote, threeparttable}
\usepackage[dvipsnames,table]{xcolor}

\definecolor{darkblue}{rgb}{0.0,0.0,0.65}
\definecolor{darkred}{rgb}{0.65,0.0,0.0}
\definecolor{darkgreen}{rgb}{0.0,0.5,0.0}
\definecolor{tab:blue}{RGB}{31,119,180}
\definecolor{tab:red}{RGB}{214,39,40}
\definecolor{tab:green}{RGB}{44,160,44}
\definecolor{tab:orange}{RGB}{255,127,14}
\definecolor{thmframe}{RGB}{0,114,178}
\definecolor{thmback}{RGB}{248,251,255}
\definecolor{propframe}{RGB}{0,158,115}
\definecolor{propback}{RGB}{246,252,250}
\definecolor{corframe}{RGB}{204,121,167}
\definecolor{corback}{RGB}{252,248,251}
\definecolor{assframe}{RGB}{213,94,0}
\definecolor{assback}{RGB}{255,249,245}
\definecolor{defframe}{RGB}{145,145,145}
\definecolor{defback}{RGB}{250,250,250}
\definecolor{rmkframe}{RGB}{120,120,120}
\definecolor{rmkback}{RGB}{249,249,249}
\definecolor{algframe}{RGB}{86,180,233}
\definecolor{algback}{RGB}{247,252,255}

\usepackage{hyperref}
\hypersetup{
    colorlinks = true,
    citecolor  = darkblue,
    linkcolor  = darkred,
    filecolor  = darkblue,
    urlcolor   = darkblue,
}

\usepackage{amsfonts, amsmath, amssymb, amsthm}
\usepackage{dsfont}
\usepackage{thmtools, thm-restate}
\usepackage{tcolorbox}
\tcbuselibrary{theorems,skins,breakable}
\usepackage[capitalize,noabbrev]{cleveref}
\crefname{assumption}{assumption}{assumptions}
\Crefname{assumption}{Assumption}{Assumptions}
\crefname{algocf}{algorithm}{algorithms}
\Crefname{algocf}{Algorithm}{Algorithms}
\input{math_commands} %

\newcommand{\bignorm}[1]{\left\lVert #1 \right\rVert}

\newcommand{\bigabs}[1]{\left\lvert #1 \right\rvert}

\newcommand{\indicator}{\mathds{1}}

\usepackage{pifont}

\usepackage{adjustbox}

\allowdisplaybreaks

\usepackage{enumitem}
\setlist[itemize]{leftmargin=*, labelindent = 15pt}
\setlist[enumerate]{leftmargin=*, labelindent = 15pt}

\usepackage[algo2e, linesnumbered, ruled, vlined]{algorithm2e}
\SetKwInput{KwInput}{Input}
\SetKwInput{KwOutput}{Output}

\newif\ifFINAL
\FINALfalse %

\ifFINAL
    \renewcommand{\junghyun}[1]{}
    \renewcommand{\hanseul}[1]{}
    \renewcommand{\debug}[1]{}
    \newcommand{\kj}[1]{}
    
\else
\fi

%% file: math_commands.tex
\usepackage{amsmath,amsfonts,bm,mathtools}

\newcommand{\Beta}{\mathrm{Beta}}

\def\1{\bm{1}}

\def\vzero{{\bm{0}}}

\def\vc{{\bm{c}}}

\def\ve{{\bm{e}}}

\def\vg{{\bm{g}}}
\def\vh{{\bm{h}}}

\def\vs{{\bm{s}}}

\def\vu{{\bm{u}}}
\def\vv{{\bm{v}}}

\def\vx{{\bm{x}}}
\def\vy{{\bm{y}}}

\def\mI{{\bm{I}}}

\DeclareMathAlphabet{\mathsfit}{\encodingdefault}{\sfdefault}{m}{sl}
\SetMathAlphabet{\mathsfit}{bold}{\encodingdefault}{\sfdefault}{bx}{n}

\def\gA{{\mathcal{A}}}
\def\gB{{\mathcal{B}}}

\def\gE{{\mathcal{E}}}
\def\gF{{\mathcal{F}}}
\def\gG{{\mathcal{G}}}
\def\gH{{\mathcal{H}}}
\def\gI{{\mathcal{I}}}
\def\gJ{{\mathcal{J}}}

\def\gL{{\mathcal{L}}}

\def\gN{{\mathcal{N}}}
\def\gO{{\mathcal{O}}}

\def\gR{{\mathcal{R}}}

\def\gY{{\mathcal{Y}}}

\def\sN{{\mathbb{N}}}

\def\sP{{\mathbb{P}}}

\def\sR{{\mathbb{R}}}

\newcommand{\E}{\mathbb{E}}

\newcommand{\KL}{D_{\mathrm{KL}}}
\newcommand{\kl}{\mathrm{kl}}
\newcommand{\Var}{\mathrm{Var}}

\DeclareMathOperator*{\argmin}{arg\,min}

\tcbset{
    sharpthm/.style n args={2}{
        enhanced,
        breakable,
        sharp corners,
        boxrule=0.6pt,
        left=6pt,
        right=6pt,
        top=5pt,
        bottom=5pt,
        before skip=8pt,
        after skip=8pt,
        colback=#1,
        colframe=#2,
        fonttitle=\bfseries,
        coltitle=black,
        colbacktitle=#1
    },
    algobox/.style={
        enhanced,
        breakable,
        sharp corners,
        boxrule=0.6pt,
        left=6pt,
        right=6pt,
        top=5pt,
        bottom=5pt,
        before skip=8pt,
        after skip=8pt,
        colback=algback,
        colframe=algframe,
        coltitle=black,
        colbacktitle=algback,
        fonttitle=\bfseries
    }
}
\theoremstyle{plain}
\newtheorem{theorem}{Theorem}[section]
\newtheorem{proposition}[theorem]{Proposition}
\newtheorem{lemma}[theorem]{Lemma}

\theoremstyle{definition}
\newtheorem{definition}[theorem]{Definition}
\theoremstyle{plain}
\newtheorem{assumption}{Assumption}
\newtheorem{remark}{Remark}

\tcolorboxenvironment{theorem}{sharpthm={thmback}{thmframe}}
\tcolorboxenvironment{corollary}{sharpthm={corback}{corframe}}
\tcolorboxenvironment{definition}{sharpthm={defback}{defframe}}
\tcolorboxenvironment{assumption}{sharpthm={assback}{assframe}}
\tcolorboxenvironment{lemma}{sharpthm={defback}{defframe}}

%% file: 001Introduction.tex
\section{Introduction}
\label{sec:introduction}

Evaluating the quality of responses produced by large language models (LLMs) is a central component of modern model development, particularly for post-training~\citep{Ouyang2022TrainingLM, rafailov2023direct, guo2025deepseek, bai2022constitutional} and model selection~\citep{fernandez2026radar}. Traditional evaluation methods, however, face important limitations. Human annotation remains the gold standard for capturing human preferences and judgments~\citep{Ouyang2022TrainingLM, rafailov2023direct}, but it is costly, time-consuming, and difficult to scale with the rapid pace of model development. Standardized benchmarks provide a more scalable alternative, but they are often too rigid for emerging applications where suitable benchmarks do not yet exist. These limitations have motivated the growing use of the ``LLM-as-a-judge'' framework, in which strong frontier models are used to evaluate model responses across diverse tasks~\citep{zheng2023judging, kim-etal-2024-prometheus,gu2025surveyllmasajudge}.

While LLM-as-a-judge reduces reliance on costly human feedback, it still has computational and statistical concerns.
Querying large frontier models at scale remains expensive, and different judges may have very different costs, expertise, and reliability~\citep{kim-etal-2024-prometheus, li2024llms, raju-etal-2024-constructing}. A uniform allocation of evaluation queries is therefore typically inefficient: a small specialized judge may be more reliable than a costly general-purpose model on some domains, while using a frontier model for straightforward instances may waste budget~\citep{fernandez2026radar}.
The difficulty is that, although the cost of each judge is usually known, the variability of its evaluations is not.
This variability is \emph{heteroskedastic}: it depends both on the judge and on the intrinsic difficulty of the prompt-response pair being evaluated. This raises the central question studied in this work:
\begin{center}
    {\it Given a fixed evaluation budget, how should one allocate queries across prompt-response pairs and multiple LLM judges in order to obtain the most accurate estimates of response quality?}
\end{center}

We formalize this question as a new statistical learning problem, referred to as \emph{{budgeted heteroskedastic multi-judge estimation}}. This problem can be viewed as an adaptive, cost-aware generalization of Neyman allocation \citep{neyman1934}. In classical Neyman allocation, samples are allocated across strata with known variances. Here, by contrast, the learner must allocate evaluation queries across query--judge pairs with known costs but unknown, heteroskedastic variances, with the goal of estimating the full score vector as accurately as possible. Formally, given a total budget \(B\), a set of (prompt-answer) queries \(k \in [K] \coloneq \{1,\ldots,K\}\), and a set of available LLM judges \(j \in [J]\), each judge \(j\) has a known query cost \(c_j\), while each query--judge pair \((k,j)\in [K]\times[J]\) has an unknown variance \(\sigma_{k,j}\). The learner’s objective is to adaptively allocate queries to judges so as to estimate the true score vector $\mathbf{s} = (s_k)_{k\in[K]} \in \mathbb{R}^K$ while minimizing the \(\ell_p\)-error, for a fixed \(p \in [1,\infty]\) specified in advance.
As we discuss later, this subsumes the recently studied budget allocation problem for a single LLM-as-a-judge in \cite{saha2026budget}, where the authors only considered $p = \infty$.
Furthermore, while our setting is motivated by the LLM-as-a-judge framework~\citep{saha2026budget}, it addresses a broader class of adaptive resource allocation problems across heterogeneous information sources under budget constraints.

\textbf{Contributions.} We provide a complete theoretical and algorithmic solution to our problem:

\emph{(a) Optimal allocation and an adaptive algorithm.} We first analyze the inverse-variance weighted estimator (IVWE) and to derive the oracle allocation that minimizes its error rate ({\Cref{sec:known}}). Since this allocation depends on the unknown variances, we then address the practical unknown-variance setting by proposing \estivwe{} (\Cref{alg:estimate-then-ivwe}), an adaptive algorithm that constructs and leverages \emph{optimistically biased} variance estimates to stabilize the empirical allocation. We prove that \estivwe{} matches the oracle IVWE rate up to lower-order terms in the budget ({\Cref{sec:unknown}}). 

\emph{(b) Local minimax lower bounds.} We derive a \emph{local} minimax lower bound, matching the performance guarantees of \estivwe{}, hence establishing its instance-optimality ({\Cref{sec:lower-bound}}). A key technical insight is that Fano-type high-probability arguments are too coarse for this problem: their packing construction loses the local variance structure that governs the optimal allocation. We instead use an Assouad-type in-expectation argument, based on local perturbations, which preserves this structure and yields the sharp allocation-dependent lower bound.

\emph{(c) Empirical validation.} We numerically validate \estivwe{} on both synthetic data and the real-world \emph{HelpSteer2} dataset~\citep{NEURIPS2024_02fd91a3} processed by \citet{saha2026budget}. Our results demonstrate that \estivwe{} consistently and significantly outperforms na\"{i}ve uniform allocation ({\Cref{sec:experiments}}).

\paragraph{Related Work.}

While we contextualize our contributions with specific literature throughout the text, we briefly situate our structural formulation within broader statistical frameworks.
In classical statistics, estimating a shared mean from multiple heteroskedastic sources is known as the \emph{common mean problem}~\citep{cochran1937problems,graybill-deal,norwood-hinkelmann}.
This line of work explores how to optimally aggregate samples, though traditionally without budget constraints.
More recently, theoretical computer science has studied a variant featuring a single sample per population -- referred to as \emph{entangled mean estimation}~\citep{chierichetti2014commonmean,liang2020entangled,diakonikolas2025entangled}—largely motivated by crowdsourcing applications.
Despite this rich literature on heteroskedastic aggregation, incorporating a fixed budget constraint across sources with varying costs has, to the best of our knowledge, remained unexplored prior to our work.

%% file: 002ProblemSetting.tex
\section{Problem Setting}
\label{sec:setting}
There are \(K\) queries, e.g., question--answer pairs, to be evaluated by \(J\) LLM judges. Each query \(k\in[K]\) has an unknown ground-truth score \(s_k \in \mathbb{R}\). Whenever the learner assigns query \(k\) to judge \(j\), she incurs a cost \(c_j \in \mathbb{R}_{>0}\) and observes a random estimate \(s_{k,j}\in[0,R]\).
We assume that $\mathbb{E}[s_{k,j}] = s_k$ and $\sigma_{k,j}^2 \coloneq \operatorname{Var}(s_{k,j}) > 0$. The variances \(\sigma_{k,j}^2\) are unknown, while the score upper bound \(R\) is known. The learner operates online: at each step, she selects a query--judge pair based on all observations collected so far. Given a total budget \(B\in\mathbb{R}_{>0}\) and a known cost vector \(\mathbf{c}=(c_j)_{j\in[J]}\in\mathbb{R}_{>0}^J\), her goal is to provide an estimator of the score vector $\mathbf{s}=(s_k)_{k\in[K]}\in\mathbb{R}^K$ as accurate as possible, i.e., minimizing the \(\ell_p\)-error, for a fixed \(p \in [1,\infty]\) specified in advance.

The boundedness assumption is quite realistic as the LLM-judges are typically required to assign scores in fixed rating scales (e.g., binary, $\{0,1,2,3,4\}$, or $[0,100]$) based on scoring rubrics to ensure consistent evaluation~\citep{gu2025surveyllmasajudge}.
As we will see later, such a boundedness assumption necessitates fundamentally different analysis techniques and algorithmic design compared to the \cite{saha2026budget}, which assumes Gaussianity for algorithm design.
Additionally, we emphasize that nonnegativity is not necessary; all statements hold for any interval of length $R$.

We now comment on the heteroskedasticity of the problem setting.
By Popoviciu's inequality on variances~\citep{popoviciu}, a trivial bound of $\sigma_{k,j}^2 \leq R^2 / 4$ holds, but we note that it can be significantly loose; for instance, if the pair $(k, j)$ is ``easy-to-evaluate'' (the judge has sufficient expertise to ``confidently'' evaluate the query), $\sigma_{k,j}^2 \ll R^2 / 4$.
Accounting for such variation in the variances is a central challenge in the heteroskedastic learning literature~\citep{chaudhuri2017heteroskedastic,kirschner2018heteroskedastic,fontaine2021heteroskedastic}.
With this, each problem instance is characterized by a triple $(\vs, \bm\sigma, \vc) \in \sR^K \times \sR_{> 0}^{K\times J} \times \sR^J_{> 0}$, where $\bm\sigma \coloneq (\sigma_{k,j})_{k\in [K],j\in [J]}$ is the variance profile.

To assess and compare the performance, we introduce the following notion of learnability:
\begin{definition}[$(B, \delta, \ell_p)$-Budget Efficient Algorithm]
\label{def:budget-efficient}
    Let $p \in [1, \infty]$, $B \in \sR_{> 0}$ be a total budget, and $\delta \in (0, 1)$ be a confidence level.
    An algorithm $\gA$ with score estimate $\hat{\vs} \in \sR^K$ is said to be \textbf{$(B, \delta, \ell_p)$-budget efficient} at $(\vs, \bm\sigma, \vc)$ with error rate $\varepsilon_p > 0$ if:
$$
\sP\left( \bignorm{\hat{\vs} - \vs}_p \le \varepsilon_p \right) \ge 1 - \delta\quad \hbox{ and } \quad \sum_{j \in [J]} c_j \sum_{k \in [K]} N_{k,j} \le B, \hbox{ a.s.}
$$
where $N_{k,j}$ denotes the (random) number of times $\gA$ queries pair $(k, j) \in [K] \times [J]$.

\end{definition}

%% file: 003KnownUpper.tex
\section{Oracle Allocation for Inverse-Variance Weighted Estimation}
\label{sec:known}

In this section, we first provide high-probability guarantees for the $\ell_p$-error of the celebrated Inverse-Variance Weighted Estimator (IVWE) \citep{cochran1937problems, cochran1954combination} under any fixed allocation, and then derive the allocation minimizing this error. This allocation is referred to as the {\it Oracle} allocation, because it requires the knowledge of the variances $\sigma_{k,j}^2$.  

Consider a fixed allocation and let $N_{k,j}$ be the number sampled scores obtained for query $k$ from judge $j$. Alternatively, the allocation can be represented by the cost-weighted proportions of the budget allocated to the $(k,j)$-pairs by $\bm\omega \coloneq ( \omega_{k,j} \triangleq \frac{c_j N_{k,j}}{B})_{k,j}$. The budget constraint $B$ is satisfied if and only if $\bm\omega\in \Lambda^{KJ}:=\{\bm\lambda\ge 0: \sum_{k,j}\lambda_{k,j}=1\}$. Let $\{ s_{k,j}^{(i)} \}_{i=1}^{N_{k,j}}$ denote sampled scores of query $k$ from the judge $j$. The \ivwe{} is defined by: 
\begin{equation}
    \hat{s}_k \coloneq \left( \sum_{j=1}^J \frac{N_{k,j}}{\sigma_{k,j}^2} \right)^{-1} \sum_{j=1}^J \frac{N_{k,j} \hat{s}_{k,j}}{\sigma_{k,j}^2}, \quad
    \hat{s}_{k,j} \coloneq \frac{1}{N_{k,j}} \sum_{i=1}^{N_{k,j}} {s}_{k,j}^{(i)}, \ \forall j \in [J],
\end{equation}
where by convention, $\hat{s}_{k,j} = 0$ if $N_{k,j} = 0$. The next proposition provides error-rate guarantees for \ivwe{}; its full statement and proof are given in \Cref{app:upper-bound-allocation}.

\begin{proposition}
    For any instance $(\vs, \bm\sigma, \vc)$ and $p\in[1,\infty]$, \ivwe{} is $(B, \delta, \ell_p)$-budget efficient with the following error rate $\varepsilon_p$:
    \begin{equation}
        \varepsilon_p \approx_B \sqrt{\frac{2 \gA_p(\bm\omega; \bm\sigma, \vc) \log\frac{2K}{\delta}}{B}}, \qquad
        \gA_p(\bm\omega; \bm\sigma, \vc) \coloneq \left( \sum\nolimits_{k \in [K]} \left( \sum\nolimits_{j \in [J]} \frac{\omega_{k,j}}{c_j \sigma_{k,j}^2} \right)^{-\frac{p}{2}} \right)^{\frac{2}{p}},
    \end{equation}
    where $\approx_B$ omits lower-order terms in $B$. 
\end{proposition}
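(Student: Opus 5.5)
Our plan is to obtain a per-coordinate concentration bound for the IVWE and then aggregate the coordinates through a union bound, which is where the factor $\log\frac{2K}{\delta}$ comes from. For a fixed query $k$, we write
\[
\hat{s}_k - s_k = \sum_{j\in[J]}\sum_{i=1}^{N_{k,j}} \alpha_{k,j}\bigl(s_{k,j}^{(i)} - s_k\bigr),
\qquad
\alpha_{k,j} \coloneq \frac{1/\sigma_{k,j}^2}{\sum_{j'} N_{k,j'}/\sigma_{k,j'}^2}.
\]
Since the allocation is fixed, this is a sum of independent, centered, bounded random variables. The $(j,i)$ term has variance $\alpha_{k,j}^2\sigma_{k,j}^2$ and range at most $\alpha_{k,j}R$. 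Summing the variances gives
\[
V_k = \Bigl(\sum_j N_{k,j}/\sigma_{k,j}^2\Bigr)^{-1} = \frac{1}{B}\Bigl(\sum_j \frac{\omega_{k,j}}{c_j\sigma_{k,j}^2}\Bigr)^{-1},
\]
where we substituted $N_{k,j}=B\omega_{k,j}/c_j$. Applying Bernstein's inequality at level $\delta/K$ for each $k$ then yields, with probability at least $1-\delta/K$,
\[
|\hat s_k - s_k| \le \sqrt{2V_k\log\tfrac{2K}{\delta}} + \tfrac{2}{3}M_k\log\tfrac{2K}{\delta},
\qquad M_k=\max_j \alpha_{k,j}R .
\]

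Next we show that the Bernstein range term is lower order in $B$. We have $M_k \le R\,\max_j \sigma_{k,j}^{-2}\big/\sum_{j'}N_{k,j'}\sigma_{k,j'}^{-2}$, and the denominator grows linearly in $B$ for any $\bm\omega$ with $\sum_j \omega_{k,j}>0$. Hence $M_k = O(1/B)$, whereas the first term is of order $B^{-1/2}$. If some query receives no budget, the rate is infinite by the convention $\mathcal A_p=\infty$, so there is nothing to prove in that case. We will also handle rounding: $N_{k,j}$ must be an integer, so we take $N_{k,j}=\lfloor B\omega_{k,j}/c_j\rfloor$. This respects the budget and perturbs $V_k$ only by a relative $O(1/B)$ correction, which is again absorbed into $\approx_B$.

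Finally, a union bound over $k\in[K]$ gives that all $K$ coordinate bounds hold simultaneously with probability at least $1-\delta$. On this event we take the $\ell_p$ norm of the leading terms:
\[
\|\hat{\mathbf s}-\mathbf s\|_p \lesssim \sqrt{2\log\tfrac{2K}{\delta}}\Bigl(\sum_k V_k^{p/2}\Bigr)^{1/p}
= \sqrt{\frac{2\log\frac{2K}{\delta}}{B}}\,\Bigl(\sum_k \Bigl(\sum_j \frac{\omega_{k,j}}{c_j\sigma_{k,j}^2}\Bigr)^{-p/2}\Bigr)^{1/p},
\]
and the last factor is exactly $\sqrt{\mathcal A_p(\bm\omega;\bm\sigma,\mathbf c)}$. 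The remainder terms contribute $O(K^{1/p}\log(K/\delta)/B)$, and for $p=\infty$ we use the maximum in place of the sum. The main subtlety is to make the lower-order term explicit, in particular its dependence on $\min_j\sigma_{k,j}^2$ through $M_k$. Bernstein's inequality, rather than Hoeffding's, is essential here: Hoeffding would give $R^2$ in place of $\sigma^2$ and would lose the inverse-variance structure entirely.
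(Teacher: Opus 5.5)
Your proposal is correct and follows essentially the same route as the paper: a per-query Bernstein bound with a union bound over the $K$ queries, then Minkowski's inequality in $\ell_p$ to separate the leading $\sqrt{2\mathcal{A}_p(\bm\omega;\bm\sigma,\vc)\log\frac{2K}{\delta}/B}$ term from an $O(1/B)$ range term. The paper makes that range term explicit as $\frac{R\log\frac{2K}{\delta}}{3B}\mathcal{B}_p(\bm\omega;\bm\sigma,\vc)$, and the remaining differences are cosmetic: your Bernstein constant $\tfrac{2}{3}$ is looser than the paper's $\tfrac{1}{3}$, the paper takes the minimum variance only over judges with $\omega_{k,j}>0$, and your rounding step is unnecessary there because $\bm\omega$ is defined from a given integer allocation.
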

In the above result, \(\gA_p(\cdot)\) is derived from the variance of \ivwe{}. We refer to it as the \emph{allocation objective}, since minimizing this quantity yields the allocation that minimizes the estimator's error rate. The following theorem, proved in \Cref{app:optimal-allocation}, characterizes this optimal allocation.
\begin{theorem}
\label{thm:optimal-allocation}
    For each query $k \in [K]$, let $j^*(k) \coloneq \argmin_{j \in [J]} c_j \sigma_{k,j}^2$ be the optimal judge. For $p \in [1, \infty]$, consider the following optimal allocation problem: $\gA_p^*(\bm\sigma, \vc) \coloneq \min_{\bm\omega\in \Lambda^{KJ}} \gA_p(\bm\omega; \bm\sigma, \vc)$. 
    The value of this problem and the corresponding optimal allocation $\bm\omega^* = (\omega_{k,j}^*)_{k,j}$ are given by
    \begin{equation}
        \gA_p^*(\bm\sigma, \vc) = \left( \sum_{k \in [K]} \left( c_{j^*(k)} \sigma_{k,j^*(k)}^2 \right)^{\frac{p}{p+2}} \right)^{\frac{p + 2}{p}}, \
        \omega_{k,j}^* = \frac{{\indicator[j = j^*(k)]} \left( c_{j^*(k)} \sigma_{k,j^*(k)}^2 \right)^{\frac{p}{p+2}}}{\sum_{k' \in [K]} \left( c_{j^*(k')} \sigma_{k',j^*(k')}^2 \right)^{\frac{p}{p+2}}}.
    \end{equation}
\end{theorem}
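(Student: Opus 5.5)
The proof splits the problem into two nested optimizations: an inner one distributing each query's budget share across judges, and an outer one distributing budget across queries.

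\textbf{Inner problem.} Fix query $k$ and write $W_k \coloneq \sum_j \omega_{k,j}$ for its total share. Set $a_{k,j} \coloneq (c_j\sigma_{k,j}^2)^{-1}$. The precision $P_k \coloneq \sum_j \omega_{k,j} a_{k,j}$ is linear in $(\omega_{k,j})_j$, so over the simplex scaled by $W_k$ it is at most $W_k \max_j a_{k,j} = W_k/(c_{j^*(k)}\sigma_{k,j^*(k)}^2)$. Equality holds exactly when all mass sits on maximizers of $a_{k,j}$, that is, on $j^*(k)$.

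The map $(P_1,\dots,P_K)\mapsto \big(\sum_k P_k^{-p/2}\big)^{2/p}$ is coordinatewise nonincreasing. Hence concentrating each query's mass on $j^*(k)$ never increases $\gA_p$. Writing $v_k \coloneq c_{j^*(k)}\sigma_{k,j^*(k)}^2$, the problem reduces to minimizing $F(W) \coloneq \big(\sum_k (v_k/W_k)^{p/2}\big)^{2/p}$ over $W\in\Lambda^K$.

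\textbf{Outer problem.} For finite $p$, I apply Hölder's inequality with exponents $r = (p+2)/p$ and $r' = (p+2)/2$:
\begin{equation}
\sum_k v_k^{\frac{p}{p+2}} = \sum_k \Big(\frac{v_k}{W_k}\Big)^{\frac{p}{p+2}} W_k^{\frac{p}{p+2}} \le \Big(\sum_k \Big(\frac{v_k}{W_k}\Big)^{\frac p2}\Big)^{\frac{2}{p+2}} \Big(\sum_k W_k\Big)^{\frac{p}{p+2}}.
\end{equation}
Since $\sum_k W_k = 1$, raising both sides to the power $(p+2)/p$ gives $F(W)\ge \big(\sum_k v_k^{p/(p+2)}\big)^{(p+2)/p}$.

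Equality in Hölder requires $(v_k/W_k)^{p/2} \propto W_k$, i.e. $W_k\propto v_k^{p/(p+2)}$. This is exactly the stated $\bm\omega^*$, and substituting it back confirms the value of $\gA_p^*$.

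An alternative route is Lagrange multipliers, since $F^{p/2}$ is convex in $W$. The Hölder argument is cleaner and handles boundary points automatically: if some $W_k=0$, then $F=\infty$.

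\textbf{Case $p=\infty$.} Here $F(W)=\max_k v_k/W_k$. If $\max_k v_k/W_k < \sum_k v_k$, then $W_k > v_k/\sum_{k'} v_{k'}$ for every $k$, which contradicts $\sum_k W_k = 1$. So the minimum is attained at $W_k\propto v_k$, matching the formula with $p/(p+2)=1$ and $(p+2)/p=1$. Equivalently, this case follows by taking limits of the finite-$p$ case.

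\textbf{Main obstacle.} The main point needing care is justifying the reduction step: monotonicity, together with the fact that ties in $j^*(k)$ leave the value unchanged. The resulting $\bm\omega^*$ is one minimizer among possibly several, and the stated formula fixes the $\argmin$ choice. A further subtlety is the convention for zero precisions, where $\gA_p=\infty$; with that convention the Hölder bound holds on the whole simplex. Everything else is routine.
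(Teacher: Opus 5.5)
Your proof is correct, and it takes a genuinely different route from the paper. The paper uses Lagrangian/KKT conditions. For finite $p$, dual feasibility $\nu_{k,j}=\lambda_0 - V_k/(c_j\sigma_{k,j}^2)\ge 0$ together with complementary slackness forces every judge with $\omega_{k,j}>0$ to minimize $c_j\sigma_{k,j}^2$. The paper then solves the stationarity equation for $\omega_{k,j^*(k)}\propto (c_{j^*(k)}\sigma_{k,j^*(k)}^2)^{p/(p+2)}$ and normalizes via the binding budget constraint. For $p=\infty$ it recasts the problem as an LP and uses a redistribution argument to show that every constraint $V\le\sum_j \omega_{k,j}/(c_j\sigma_{k,j}^2)$ binds. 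You instead decouple the problem. Linearity of the precision gives $P_k\le W_k/v_k$, which yields sparsity. Hölder with conjugate exponents $(p+2)/p$ and $(p+2)/2$ then gives a lower bound valid for every feasible $\bm\omega$, attained exactly when $W_k\propto v_k^{p/(p+2)}$. The $p=\infty$ case becomes a one-line averaging argument.

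Your route yields a self-contained certificate of global optimality. It does not need existence of a minimizer, a constraint qualification, or convexity to pass from the KKT system to the global minimum. It also avoids differentiability issues: the objective is $+\infty$ and non-differentiable when some query receives zero precision, a case the KKT derivation glosses over. Moreover, its equality conditions characterize all minimizers, including the case of ties in $\arg\min_j c_j\sigma_{k,j}^2$. The paper's route is constructive: the exponent $p/(p+2)$ falls out of the stationarity equation rather than having to be anticipated to choose the Hölder exponents. The Lagrangian template also adapts more directly to variants with additional constraints.
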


A few remarks are in order. First, note that when \(J=1\) (a single judge), \(c_1=1\) (unit cost), and \(p=\infty\), the above result reduces exactly to \cite[Theorem~3]{saha2026budget}. Our setting therefore generalizes the framework studied therein. An interesting consequence of the theorem is that, regardless of \(p\), the optimal allocation is \emph{sparse}: for each query \(k\), the budget is assigned to a single optimal judge \(j^*(k)\). This sparsity follows from the KKT conditions, analyzed in \Cref{app:optimal-allocation}, which show that an optimal allocation assigns budget only to judges minimizing \(c_j\sigma_{k,j}^2\). Intuitively, once the budget allocated to a query is fixed, the most accurate estimate \(\hat{s}_k\) is obtained by assigning all of that budget to the judge with the best cost--variance trade-off. Allocating any positive budget to a suboptimal judge \(j\neq j^*(k)\) can only worsen, or at best leave unchanged, the resulting estimate. Thus, while the choice of \(p\) changes how budget is distributed across queries, the optimal judge selection within each query remains sparse.

At this stage, the optimality statement concerns the minimization of our upper bound. In \Cref{sec:lower-bound}, we show that this bound is in fact minimax optimal.

%% file: 004UnknownUpper.tex
\section{\estivwe{}: An Adaptive Algorithm and its Error Rates}
\label{sec:unknown}

Next, we consider the realistic setting in which the variances are unknown. We propose \estivwe{}, a simple adaptive algorithm whose performance approaches that of \ivwe{} combined with the oracle allocation. The algorithm proceeds in two phases: the first estimates the variances, while the second allocates the remaining budget according to the empirical optimal allocation induced by these estimates. The pseudocode of \estivwe{} is given in Algorithm~\ref{alg:estimate-then-ivwe}. Before providing error-rate guarantees for \estivwe{}, we first motivate its design and discuss the main challenge in its analysis.

\begin{algorithm2e}[!ht]
    \SetKwInput{Input}{Input}
    \SetKwComment{Comment}{$\triangleright$\ }{}

    \Input{Total budget $B > 0$, Cost vector $\vc \in \mathbb{R}^J_+$, Number of forced exploration per pair $N_0 > 0$, Bias term for variance estimator $\tau > 0$}

    \BlankLine
    \tcp{Phase I: Forced Exploration}
    \For{each $(k, j) \in [K] \times [J]$}{
        Pull arm $(k,j)$ exactly $N_0$ times and observe noisy evaluation scores $\left\{ s_{k,j}^{(n)} \right\}_{n \in [N_0]}$\;

        Compute the empirical variance estimator
        \begin{eqnarray}
            \hat{\sigma}_{k,j}^2 &:=& \frac{1}{2 N_0 (N_0 - 1)} \sum_{1 \leq n \neq n' \leq N_0} \left( s_{k,j}^{(n)} - s_{k,j}^{(n')} \right)^2
        \end{eqnarray}
    }

    \tcp{Phase II: IVWE}
    Compute the optimal allocation $\left( \widehat{N}_{k,j}^* \right)_{k,j}$ as in \Cref{thm:optimal-allocation} using the remaining budget $B' \triangleq B - N_0 K \sum_{j \in [J]} c_j$, based on the \textbf{\textit{optimistic}} variance estimators $\left\{ \left( \tau + \hat{\sigma}_{k,j} \right)^2 \right\}_{k,j}$\;
    
    \Return \textbf{IVWE} with $\left( \widehat{N}_{k,j}^* \right)_{k,j}$ as allocation and $(\tau + \hat{\sigma}_{k,j})^2$ as variance proxies\;
    \caption{Estimate-then-IVWE (Est-IVWE)}\label{alg:estimate-then-ivwe}
\end{algorithm2e}

{\bf Optimistically biased variance estimates.} A key challenge in analyzing the performance of \Cref{alg:estimate-then-ivwe} is quantifying \emph{how accurately the variances are estimated from Phase I}.
Ideally, we seek a multiplicative guarantee of the form $(1 - \alpha) \sigma_{k,j}^2 \leq \hat{\sigma}_{k,j}^2 \leq (1 + \alpha) \sigma_{k,j}^2$ for some $\alpha \in (0,1)$.
Such a bound is highly desirable because, as seen in the form of the optimal allocation and error bounds (\Cref{thm:optimal-allocation}), the common error factors $(1 \pm \alpha)$ can be factored out, which provides a direct link between empirical and oracle allocations. While \citep[Lemma 6]{saha2026budget} have successfully leveraged this property by assuming that the scores are Gaussian, where multiplicative bounds are attained by \citep[Eqn. (4.4) \& (4.5)]{laurent-massart}\footnote{This heavily relies on the Gaussianity assumption, as its derivation is based on the fact that squaring Gaussian random variables results in a $\chi^2$-distribution.}, this assumption is not applicable to our setting where scores are bounded.
Instead, we utilize the empirical Bernstein inequality~\citep[Theorem 10]{maurer-pontil}, which provides an additive guarantee of $| \sigma_{k,j} - \hat{\sigma}_{k,j}| \lesssim \sqrt{\log(1/\delta)/N_0}$ ($N_0$ is the length of the first phase).
Now, if we were to use $\hat{\sigma}_{k,j}$ directly and require it to satisfy analogous multiplicative guarantee, we would require $N_0 \gtrsim \frac{1}{\sigma_{k,j}^2} \log\frac{1}{\delta}$.
This is problematic because we require the knowledge of $\sigma_{k,j}$ to set $N_0$, and for \emph{easy} pairs in which $\sigma_{k,j}$ is near-zero, the requirement on the number of exploratory samples $N_0$ becomes prohibitively large.

To overcome these limitations, we use an \emph{optimistic variance estimator}
\(
\bar{\sigma}_{k,j} = \hat{\sigma}_{k,j} + \tau,
\)
which satisfies \(\sigma_{k,j} \leq \bar{\sigma}_{k,j} \leq \sigma_{k,j} + 2\tau\) with high probability; see \Cref{lem:easy-hard}. This optimistic correction stabilizes the empirical allocation, especially for query--judge pairs with near-zero variance, whose estimates are otherwise highly unstable. It also ensures that such pairs receive a sufficient amount of budget for reliable estimation. By tuning \(\tau\) and \(N_0\) appropriately as functions of the total budget \(B\), the extra cost induced by this stabilization remains lower order relative to the leading error term.

The following theorem formalizes the resulting error guarantees.

\begin{theorem}[Error Rates of \estivwe{}]
\label{thm:est-ivwe}
    Let $p \in [1, \infty]$, $B \in \sR_{> 0}$, and $\delta \in (0, 1)$.
    Let us set $\tau = R \sqrt{\frac{2}{N_0 - 1} \log\frac{4 K J}{\delta}}$, $N_0 = (2 B)^{\frac{1}{3}} \left( R^2 \log\frac{4 K J}{\delta} \right)^{\frac{2}{3}}$ when $p \geq 2$, and $N_0 = 2^{\frac{p}{2p+2}} R^{\frac{3p+2}{2p+2}} \left( \log\frac{4KJ}{\delta} \right)^{\frac{3p+2}{4p+4}} B^{\frac{p+2}{4p+4}}$ otherwise.
    Further suppose that $B \geq 2 N_0 \left( K \sum_{j\in[J]} c_j \right)$.
    Then, \estivwe{} is $(B, \delta, \ell_p)$-budget efficient at any $(\vs, \bm\sigma, \vc)$ with the following error rate $\varepsilon_p$:
    \begin{equation}
            \varepsilon_p = \sqrt{\frac{2 \gA_p^*(\bm\sigma, \vc) \log\frac{4K}{\delta}}{B}} +
            \begin{cases}
                \gO\left( \frac{R^{\frac{1}{3}} \gA_p^*(\bm\sigma, \vc) \left( \log\frac{4KJ}{\delta} \right)^{\frac{2}{3}}}{B^{\frac{2}{3}}} \right) & \hbox{ if }p \geq 2, \\
                \gO\left( \frac{R^{\frac{p}{2p+2}} \gA_p^*(\bm\sigma, \vc) \left( \log\frac{4KJ}{\delta} \right)^{\frac{3p+2}{4p+4}}}{B^{\frac{3p+2}{4p+4}}} \right) & \hbox{ if } 1 \leq p < 2.
            \end{cases}
        \end{equation}
\end{theorem}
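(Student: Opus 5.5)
We condition on a good event $\mathcal{E}_1$ for Phase I, on which the optimistic variance proxies are uniformly accurate. We then apply the fixed-allocation \ivwe{} guarantee conditionally on Phase I, and finally compare the empirical objective with the oracle value $\gA_p^*$.

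\textbf{Phase I event.} By the empirical Bernstein inequality \citep[Theorem 10]{maurer-pontil}, applied to each of the $KJ$ pairs with confidence $\delta/(4KJ)$ on each side, and a union bound, the event
\[
\mathcal{E}_1 \coloneq \bigl\{ \forall (k,j):\ |\hat\sigma_{k,j}-\sigma_{k,j}|\le\tau \bigr\}
\]
has probability at least $1-\delta/2$. This is exactly the content of \Cref{lem:easy-hard}. With the stated choice of $\tau$, on $\mathcal{E}_1$ we have $\sigma_{k,j}\le\bar\sigma_{k,j}\le\sigma_{k,j}+2\tau$.

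\textbf{Phase II conditional on Phase I.} The Phase II samples are independent of Phase I given the allocation, so the allocation $\widehat N^*$ and the weights $1/\bar\sigma_{k,j}^2$ can be treated as fixed. The allocation is sparse: query $k$ uses only the judge $\hat j(k)=\argmin_j c_j\bar\sigma_{k,j}^2$. Hence $\hat s_k$ is simply the empirical mean of $\widehat N_{k,\hat j(k)}$ samples, and the misspecified weights play no role. Discarding the Phase I samples, or noting that including them only helps, I would apply Bernstein's inequality per query with confidence $\delta/(4K)$. This gives
\[
|\hat s_k-s_k|\le \sqrt{\frac{2\sigma_{k,\hat j}^2\log(4K/\delta)}{\widehat N_{k,\hat j}}}+\frac{R\log(4K/\delta)}{3\widehat N_{k,\hat j}}
\]
with probability at least $1-\delta/2$. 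Substituting $\widehat N=\omega B'/c$ and taking the $\ell_p$ norm, the leading term is at most $\sqrt{2\gA_p(\hat{\bm\omega};\bm\sigma,\vc)\log(4K/\delta)/B'}$. Since $\sigma\le\bar\sigma$, this is in turn at most $\sqrt{2\gA_p(\hat{\bm\omega};\bar{\bm\sigma},\vc)\log(4K/\delta)/B'}$, and by optimality of $\hat{\bm\omega}$ for $\bar{\bm\sigma}$ this equals $\sqrt{2\gA_p^*(\bar{\bm\sigma},\vc)\log(4K/\delta)/B'}$. Optimism is used precisely here: it makes the true variance term dominated by the proxy objective without needing any lower bound on $\sigma$.

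\textbf{Comparison to the oracle.} Monotonicity of the closed form in \Cref{thm:optimal-allocation} gives
\[
\gA_p^*(\bar{\bm\sigma},\vc)\le \gA_p^*\bigl(\bm\sigma+2\tau,\vc\bigr).
\]
Expanding $(\sigma+2\tau)^2=\sigma^2+4\tau\sigma+4\tau^2$, and using $\sigma\le R/2$ together with $(a+b)^{q}$ inequalities for the exponents $p/(p+2)$ and $(p+2)/p$, I expect a bound of the form $\gA_p^*(\bm\sigma)+O(\tau R\cdot\text{stuff})$. Taking square roots gives $\sqrt{\gA^*}+O(\tau R\,\cdots/\sqrt{\gA^*})$.

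Two further corrections must be controlled. First, $B'\ge B/2$ and $1/B'=1/B+O(N_0K\sum_j c_j/B^2)$. Second, the Bernstein second-order term needs a lower bound on $\widehat N_{k,\hat j}$. I would obtain this from $\hat\omega_{k}\propto(c\bar\sigma^2)^{p/(p+2)}\ge(c\tau^2)^{p/(p+2)}$, which shows the optimism also guarantees every query receives budget. Balancing the $\tau$-induced term, of order $\tau/\sqrt{B}\sim B^{-1/2}N_0^{-1/2}$, against the exploration cost of order $N_0/B$ gives $N_0\sim B^{1/3}$ and the $B^{-2/3}$ rate for $p\ge2$. For $p<2$ the $\ell_p$ aggregation of the per-query additive terms and the exponent $p/(p+2)<1/2$ change the balance, yielding the stated $N_0$.

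\textbf{Main obstacle.} I expect the hardest step to be the perturbation bound for $\gA_p^*(\bm\sigma+2\tau)-\gA_p^*(\bm\sigma)$ in a form proportional to $\gA_p^*$, uniformly over small $\sigma$, especially for $p<2$ where the map is non-Lipschitz near zero. My first approach would be to bound each term by $(c\sigma^2)^{p/(p+2)}+(4c\tau R)^{p/(p+2)}$ by subadditivity and then carefully track the exponents.
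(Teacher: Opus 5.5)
Your reduction is sound and coincides with the paper's. The pieces that work are the Phase I event via Maurer--Pontil, conditioning on Phase I, sparsity turning $\hat s_k$ into a plain mean, and the chain $\gA_p(\hat{\bm\omega};\bm\sigma,\vc)\le\gA_p(\hat{\bm\omega};\bar{\bm\sigma},\vc)=\gA_p^*(\bar{\bm\sigma},\vc)\le\gA_p^*(\bm\sigma+2\tau,\vc)$.

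The gap is the step you flag as the main obstacle. The route you propose there does not give the term linear in $\tau$ that your balancing assumes.

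Expanding $(\sigma+2\tau)^2$ and applying subadditivity of $t\mapsto t^{p/(p+2)}$ to variances throws away the perfect-square structure. It creates cross terms of order $(\tau\sigma)^{p/(p+2)}$ or $(\tau R)^{p/(p+2)}$. Take $p=2$, where $\sqrt{\gA_2^*(\bm\sigma,\vc)}=\sum_k\sqrt{c_{j^*(k)}}\,\sigma_{k,j^*(k)}$. Term by term you get $\sqrt{c\sigma^2+4c\tau\sigma+4c\tau^2}\le\sqrt{c}\,\sigma+2\sqrt{c\tau\sigma}+2\sqrt{c}\,\tau$, whereas the left side equals $\sqrt{c}\,(\sigma+2\tau)$ exactly. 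So $\sqrt{\gA_2^*}$ is perturbed at order $\sqrt{\tau}$. With the prescribed $N_0\asymp B^{1/3}$, i.e.\ $\tau\asymp B^{-1/6}$, this contributes order $B^{-7/12}$, which is larger than the claimed $B^{-2/3}$.

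The alternative you sketch does not rescue this. That alternative is $\sqrt{x+y}\le\sqrt{x}+y/(2\sqrt{x})$, combined with a first-order bound on $t^{p/(p+2)}$ to force linearity in $\tau$. It produces constants like $1/\sqrt{\gA_p^*}$ and $(c_{j^*(k)}\sigma_{k,j^*(k)}^2)^{-2/(p+2)}$, which blow up for near-zero variances. That is exactly the regime the optimistic bias is meant to handle, and it is incompatible with the displayed error term. Your sentence about $p<2$ inherits the same problem, so neither regime's rate or choice of $N_0$ is actually established.

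The missing idea is to perturb on the scale of standard deviations, inside an $\ell_q$ norm. With $q=2p/(p+2)$ one has exactly $\sqrt{\gA_p^*(\bm\sigma,\vc)}=\bigl(\sum_k(\sqrt{c_{j^*(k)}}\,\sigma_{k,j^*(k)})^q\bigr)^{1/q}$. Likewise $\sqrt{\gA_p^*(\bar{\bm\sigma},\vc)}\le\bigl(\sum_k(\sqrt{c_{j^*(k)}}\,\bar\sigma_{k,j^*(k)})^q\bigr)^{1/q}$.

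Since $\bar\sigma\le\sigma+2\tau$ is additive in $\sigma$ itself, Minkowski in $\ell_q$ applies when $q\ge1$, i.e.\ $p\ge2$. It gives $\sqrt{\gA_p^*(\bm\sigma,\vc)}+2\tau\bigl(\sum_k c_{j^*(k)}^{q/2}\bigr)^{1/q}$, which is linear in $\tau$ and does not depend on small $\sigma$'s.

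For $1\le p<2$ one has $q<1$ and Minkowski fails. The paper instead uses subadditivity of $t\mapsto t^q$ plus convexity of $x\mapsto x^{1/q}$, obtaining $\sqrt{\gA_p^*}+C_{p,1}^*\tau+C_{p,2}^*\tau^{q}$. The $\tau^{q}\asymp N_0^{-p/(p+2)}$ term, balanced against the $N_0/B$ term, is what yields the stated $N_0$ and the $B^{-(3p+2)/(4p+4)}$ rate.

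Finally, that $N_0/B$ term is not the exploration cost: replacing $B'$ by $B$ only costs order $N_0/B^{3/2}$. It is the second-order Bernstein term after lower bounding $\widehat N_k$ through $\bar\sigma\ge\tau$, which you do set up correctly.
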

\begin{proof}[Proof sketch]
    We divide the $(k,j)$-pairs into two cases using the bias $\tau$ as an implicit threshold.
    For ``easy-to-evaluate" pairs with $\sigma_{k,j} \leq \tau$, the bias stabilizes the ratio $(\hat{\sigma}_{k,j}+\tau)/(\sigma_{k,j}+\tau)$, avoiding the numerical instability of $\hat{\sigma}_{k,j}/ \sigma_{k,j}$ when $\sigma_{k,j} \approx 0$.
    For ``hard-to-evaluate" pairs with $\sigma_{k,j} > \tau$, the multiplicative guarantee between $\hat{\sigma}_{k,j}+\tau$ and $\sigma_{k,j}+\tau$ follows, as we choose $\tau$ to be the concentration radius of \citep[Theorem 10]{maurer-pontil}.
    Since the selection rule $\hat{j}^*(k)=\argmin_{j} c_j \overline{\sigma}_{k,j}^2$ in Phase II implies $c_{\hat{j}^*(k)} \overline{\sigma}_{k,\hat{j}^*(k)}^2 \leq c_{j^*(k)}\overline{\sigma}_{k,j^*(k)}^2$, we can upper bound the error $\varepsilon_p$ of the empirical allocation; and then we decouple the effect of additive $\tau$ using Minkowski's inequality.
    We then conclude by separating the impact of initial exploration $N_0$ from the total budget $B$, to recover the \emph{same} leading term as the error rate of \ivwe{}.
    The full proof is deferred to \Cref{app:upper-bound-unknown}.
\end{proof}

We conclude the section with two remarks on \estivwe{} and its performance.

{\bf (1) Oracle performance without known variances.} Remarkably, \Cref{thm:est-ivwe} shows that \estivwe{} (\Cref{alg:estimate-then-ivwe}) matches the leading term of the oracle strategy that knows the variances in advance (\Cref{thm:optimal-allocation}), for all \(p \in [1,\infty]\). Thus, to leading order, there is no price for not knowing the variances: a simple two-stage procedure suffices to recover the oracle rate.

{\bf (2) Relations to Graybill-Deal Estimator.} Our problem of estimating \(s_k\) from multiple sources can be viewed as an instance of the \emph{common mean problem}, a classical problem in statistics \citep{graybill-deal,pal2007revisit}. Our two-stage estimator is closely related to the \emph{Graybill--Deal} estimator, a standard approach for common mean estimation, which uses all available samples to estimate both the empirical means and variances. However, because the same samples are reused for both quantities, the resulting empirical means and variance estimates are statistically correlated, making sharp performance analysis difficult. Indeed, even for Gaussian populations \citep{nair1980,voinov1984}, tight characterizations of the estimator's variance are technically involved, and admissibility is known only for certain classes of problems \citep{ghosh-sinha,sinha-mouqadem,pal1997admissible}. We avoid this dependence through \emph{sample splitting}: the samples used to estimate the variances \(\hat{\sigma}_{k,j}^2\) in Phase I are not reused to compute the final \ivwe{} estimates \(\hat{s}_k\) in Phase II. This decouples the randomness of the variance estimates from that of the mean estimates and yields a key property for the analysis: conditioned on the estimated variances, \ivwe{} remains unbiased,
\(
\mathbb{E}\!\left[\hat{s}_k \,\middle|\, \{\hat{\sigma}_{k,j}\}_{k,j}\right] = s_k .
\)

%% file: 005Lower.tex
\section{Local Minimax Lower Bounds}
\label{sec:lower-bound}

We now establish fundamental local minimax lower bounds for our budgeted heteroskedastic multi-judge estimation problem. The term \emph{local} means that the lower bound holds in a small neighborhood of any fixed problem instance, in the same spirit as local minimax results for logistic bandits~\citep[Theorem~2]{abeille2021logistic}, online LQR~\citep[Theorem~1]{simchowitz2020lqr}, and generalized trace regression~\citep[Theorem~4.1]{lee2026gl-lowpopart}. These results show that the instance-dependent allocation quantity \(\gA_p^*(\bm\sigma, \vc)\), initially derived from upper bounds for \ivwe{}, is in fact the correct information-theoretic complexity measure.  A key technical insight in our lower-bound analysis is that different techniques yield different levels of sharpness. Fano-type high-probability arguments are too coarse to recover the optimal allocation dependence, whereas an Assouad-type in-expectation analysis preserves the local variance structure and yields the matching lower bound.

\subsection{A High-Probability Lower Bound and Its Limitation}

We begin by stating a high-probability minimax lower bound:

\begin{theorem}[High Probability Lower Bound]
\label{thm:lower-bound-whp}
    Let $p \in [1, \infty]$, $B > 0$, $\delta \in (0, e^{-2}]$, and $(\vs, \bm\sigma, \vc)$ be a given problem instance.
    Let $R_k \coloneq \min\{s_k, R - s_k\} > 0$, $R_{\min} = \min_k R_k$, and suppose that $\max_j \sigma_{k,j}^2 \leq \frac{R_k^2}{2}$.
    Then, any algorithm that is $(B, \delta, \ell_p)$-budget efficient with an error rate of $\varepsilon_p \in \left( 0, \frac{R_{\min}}{8} \right]$ at any $(\vs', \bm\sigma, \vc)$ with $\bignorm{\vs' - \vs}_p \leq 2\varepsilon_p$\footnote{This in the same spirit as locally stable algorithms used for proving local \& high-probability lower bounds~\citep{jedra2023identification,yun2019optimal}.} must satisfy the following: for some absolute constants $C_1, C_2 > 0$,
    \begin{equation}
        \varepsilon_p \geq \sqrt{\frac{\kl(1 - \delta, \delta)}{C_1 B} {\color{red}\gA_{\frac{2p}{2-p}}^*(\bm\sigma, \vc)}} \ \ \ \text{for $1 \leq p < 2$}, \quad 
        \varepsilon_p \geq \sqrt{\frac{ \log\frac{1}{\delta}}{C_2 B} {\color{red}\gA^*_{\infty}(\bm\sigma, \vc)}} \ \ \ \text{for $p \geq 2$}.
    \end{equation}
\end{theorem}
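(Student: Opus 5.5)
We reduce the high-probability statement to a multiple-hypothesis testing problem between locally perturbed instances, and then apply a change-of-measure inequality. The main tool is the standard transportation lemma (Kaufmann et al.): for any two instances $\nu,\nu'$ and any event $E$ measurable with respect to the algorithm's observations,
\begin{equation}
\sum_{k,j}\E_\nu[N_{k,j}]\,\KL(\nu_{k,j},\nu'_{k,j})\ \ge\ \kl\big(\sP_\nu(E),\sP_{\nu'}(E)\big).
\end{equation}

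To build the alternatives, we perturb only the means and keep the variances fixed. For each pair $(k,j)$ we construct a bounded law on $[0,R]$ with mean $s_k+\Delta_k$ and variance exactly $\sigma_{k,j}^2$. A natural choice is a suitably scaled Beta, or a two-point law centred at the mean. The hypotheses $\max_j\sigma_{k,j}^2\le R_k^2/2$ and $|\Delta_k|\le R_{\min}/8$ guarantee that this law stays inside $[0,R]$. For such perturbations we will aim to show $\KL\le C\Delta_k^2/\sigma_{k,j}^2$. This is the technical step I expect to be the main obstacle: we must exhibit explicit families whose KL divergence behaves like the Gaussian one, $\Delta^2/\sigma^2$, uniformly, even near the boundary of $[0,R]$. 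I would first try a location shift of a fixed law combined with a $\chi^2$ bound on the divergence.

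Using $\sum_{k,j}c_jN_{k,j}\le B$ almost surely, we write $\omega_{k,j}=c_j\E[N_{k,j}]/B\in\Lambda^{KJ}$. The left-hand side of the transportation inequality is then at most
\begin{equation}
CB\sum_k\Delta_k^2 w_k, \qquad w_k:=\sum_j\frac{\omega_{k,j}}{c_j\sigma_{k,j}^2}.
\end{equation}
The argument now splits according to $p$.

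\emph{Case $p\ge2$ (single-coordinate test).} Choose $\vs'=\vs+2\varepsilon_p\ve_k$, so that $\|\vs'-\vs\|_p=2\varepsilon_p$ and $\vs'$ lies in the stability region. Take $E=\{\|\hat\vs-\vs\|_p\le\varepsilon_p\}$. Budget efficiency gives $\sP_\nu(E)\ge1-\delta$ and $\sP_{\nu'}(E)\le\delta$, hence
\begin{equation}
4CB\varepsilon_p^2w_k\ge\kl(1-\delta,\delta)\ge\log\tfrac1{\delta}-\log 2,
\end{equation}
and for $\delta\le e^{-2}$ this is $\gtrsim\log\frac1\delta$. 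This must hold for every $k$, and $\min_k$ can be replaced by a weighted average. It follows that
\begin{equation}
\varepsilon_p^2\gtrsim\frac{\log(1/\delta)}{B\min_k w_k}\ge\frac{\log(1/\delta)}{B\max_{\bm\omega}\min_k w_k}.
\end{equation}
Next we compute $\max_{\bm\omega\in\Lambda}\min_k w_k$. As in the KKT argument of \Cref{thm:optimal-allocation}, it is optimal to put all mass for query $k$ on $j^*(k)$. Equalizing the $w_k$ gives the value $\big(\sum_k c_{j^*(k)}\sigma_{k,j^*(k)}^2\big)^{-1}=1/\gA^*_\infty$, which yields the bound for $p\ge2$.

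\emph{Case $1\le p<2$ (spread perturbation).} Perturb all coordinates simultaneously, with $\Delta_k=2\varepsilon_p a_k$ and $\|a\|_p=1$. The same argument gives
\begin{equation}
4CB\varepsilon_p^2\sum_k a_k^2w_k\ge\kl(1-\delta,\delta).
\end{equation}
The adversary chooses $a$ to minimize $\sum_ka_k^2w_k$ subject to $\|a\|_p=1$. Setting $b_k=a_k^2$, this is the minimum of $\sum_k b_kw_k$ over $\|b\|_{p/2}=1$. Since $p/2<1$, the infimum is attained at a single coordinate, which gives nothing new. So instead we apply Hölder in the other direction, maximizing over $\bm\omega$ for each fixed $a$.

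Concretely, for any fixed $\bm\omega$ we choose the best $a$, and in the worst case $\bm\omega$ is matched to $a$. The resulting saddle value is
\begin{equation}
\min_{\bm\omega}\max_{\|b\|_{p/2}\le1}\Big(\sum_k b_kw_k\Big)^{-1}.
\end{equation}
Through the duality between the exponents $p/2$ and $\frac{p}{2-p}$, this should evaluate to $\gA^*_{2p/(2-p)}$. I would verify the exponent bookkeeping at this step by a minimax swap over the compact simplex, using convexity in $\bm\omega$ and the closed form from \Cref{thm:optimal-allocation}.

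Finally, the Beta-type construction needs $\Delta_k\le R_{\min}/4$, which is guaranteed by $\varepsilon_p\le R_{\min}/8$.
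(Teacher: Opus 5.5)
The gap is in the case $1\le p<2$, at the step that has to produce $\gA^*_{2p/(2-p)}$. Your claim that $\min\{\sum_k b_kw_k:\ \sum_k b_k^{p/2}=1,\ b\ge 0\}$ is attained at a single coordinate is false. For $p/2<1$ the region $\{b\ge0:\sum_k b_k^{p/2}\ge1\}$ is convex, and the minimizer is dense: $b_k\propto w_k^{-2/(2-p)}$, with value $\big(\sum_k w_k^{-p/(2-p)}\big)^{-(2-p)/p}$ by reverse H\"older. For example, with $K=2$, $p=1$, $w=(1,1)$, the point $b=(\frac14,\frac14)$ gives $\frac12$, while a vertex gives $1$. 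This dense hard instance is exactly Case I of the paper's \Cref{lem:infimum-hard}.

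If your claim were true, your saddle value would reduce to $\min_{\bm\omega}\max_k w_k^{-1}=\gA^*_\infty(\bm\sigma,\vc)$. So the deferred ``duality'' step does not just go unchecked; it contradicts what precedes it. Also, with the constraint $\|b\|_{p/2}\le1$ the inner maximum is $+\infty$ (let $b\to0$). Separation forces $\|b\|_{p/2}=1$.

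No minimax swap or convexity argument is needed, because weak duality $\min_{\bm\omega}\max_b\ge\max_b\min_{\bm\omega}$ already points the right way. With $g_k:=c_{j^*(k)}\sigma^2_{k,j^*(k)}$, fix the algorithm-independent direction $a_k=\sqrt{g_k}\,/\big(\sum_{k'}g_{k'}^{p/2}\big)^{1/p}$. Then $\|a\|_p=1$, and
$\sum_k a_k^2w_k=\big(\sum_{k'}g_{k'}^{p/2}\big)^{-2/p}\sum_k g_kw_k\le\big(\sum_{k'}g_{k'}^{p/2}\big)^{-2/p}$, since $\sum_k g_kw_k\le\sum_{k,j}\omega_{k,j}\le1$. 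This gives $\varepsilon_p^2\ge\kl(1-\delta,\delta)\,\gA^*_{2p/(2-p)}(\bm\sigma,\vc)/(4CB)$ directly. It is shorter than the paper's route, which solves the inner problem by KKT to get $\gA_{2p/(2-p)}(\bm\omega;\bm\sigma,\vc)$ and then minimizes over $\bm\omega$ via \Cref{thm:optimal-allocation}.

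Your case $p\ge2$ is correct, and simpler than the paper's. The paper runs all $K$ alternatives at once with disjoint success events, then lower-bounds $\min\sum_k g_k\,\kl(p^{(k)},1-\delta)$ over error profiles with $\sum_k p^{(k)}\le\delta$. Your separate two-point tests against the null event give $\kl(1-\delta,\delta)$ for each $k$ directly, and the $g$-weighted average with $\sum_k g_kw_k\le1$ finishes the argument.

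On the KL step you flag as the main obstacle, the route you would try first fails. A location shift of a fixed compactly supported law is not absolutely continuous with respect to the unshifted law, and neither is a two-point law whose atoms move with the mean. So both KL and $\chi^2$ are infinite. You need two shape parameters on a fixed interval, tuned to give mean $s_k+\Delta_k$ and variance exactly $\sigma_{k,j}^2$. This is the scaled Beta family of \Cref{prop:beta-construction}.
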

\begin{proof}[Proof sketch]
    We provide the full proof in \Cref{app:whp-lower-bound}, which we sketch here.
    We suppose that the learner has knowledge of the true $\bm\sigma$.
    The proof follows the multiple hypotheses testing framework~\citep{jedra2023identification} based on data processing inequality~\citep[Lemma 1]{garivier2019lower}.
    The main technical novelty is the construction of the prior over $\vs$ to retain the instance-specific nature.
    Specifically, when constructing alternate models, one needs to ensure that $(i)$ its mean stays close to $\vs$, $(ii)$ its variance means the \emph{same} as $\bm\sigma$, and $(iii)$ it is bounded in $[0, R]$.
    This is done by considering a family of \emph{beta distributions} ({\Cref{prop:beta-construction}}).
    The proof ``bifurcates'' at $p = 2$ as the function $z \mapsto z^{2/p}$ is convex iff $p \geq 2$.
    This is crucial in constructing a hard lower bound instance for $\ell_p$, as for $p \geq 2$ (resp. $1 \leq p < 2$), the hardest instance is sparse (resp. dense) ({\Cref{lem:infimum-hard}}).
\end{proof}

The high-probability bound matches the upper bound only in the case \(p=\infty\), up to a factor of $\sqrt{\log K}$. For \(p<\infty\), however, it does not recover the allocation quantity \(\gA_p^*(\bm\sigma, \vc)\) appearing in the upper bound of \ivwe{}. Indeed, the lower bound only attains $\color{red}\gA_{\frac{2p}{2-p}}^*(\bm\sigma, \vc)$ for $1 \leq p < 2$ and $\color{red}\gA_\infty^*(\bm\sigma, \vc)$ for $p \geq 2$.
This gap is not an artifact of the proof constants, but reflects a limitation of Fano-type packing arguments in this problem.

\subsection{A Sharp In-Expectation Lower Bound}

The preceding result suggests that high-probability lower bounds based on global packing do not capture the full allocation geometry of the problem. We now show that the gap can be closed by changing the lower-bound technique. Using an Assouad-type construction, we obtain a local in-expectation lower bound that matches \(\gA_p^*(\bm\sigma, \vc)\) for all \(p\in[1,\infty)\).

\begin{theorem}[In-Expectation Lower Bound]
\label{thm:lower-bound-exp}
    Let $B > 0$ and $(\vs^\star, \bm\sigma, \vc)$ be any given problem instance with $\vs^\star = (s_k^\star)_k$.
    Let $R_k \coloneq \min\{s_k^\star, R - s_k^\star\}$, and define ``threshold'' $B_0 \coloneq \left( \sum_{k \in [K]} \left( c_{j^*(k)} \sigma_{k,j^*(k)}^2 \right)^{\frac{p}{p+2}} \right) \max_{k \in [K]} \frac{\left( c_{j^*(k)} \sigma_{k,j^*(k)}^2 \right)^{\frac{2}{p+2}}}{4 R_k^2}$ and ``radius'' $\xi(B) \coloneq \sqrt{\frac{\gA_p^*(\bm\sigma, \vc)}{16B}}$.
    Then, there exists a universal constant $C_3 > 0$ such that for $B \geq B_0$,
    \begin{equation}
        \inf_{\hat{\vs}} \sup_{\vs : \bignorm{\vs - \vs^\star}_p \leq \xi(B)} \E^{\hat{\vs}}_{\vs}\left[ \bignorm{\hat{\vs} - \vs}_p \right] \geq
        C_3 \sqrt{\frac{\gA_p^*(\bm\sigma, \vc)}{B}},
    \end{equation}
    and furthermore for $1 \leq p < \infty$ and $B \geq p B_0$,
    \begin{equation}
        \inf_{\hat{\vs}} \sup_{\vs : \bignorm{\vs - \vs^\star}_p \leq \sqrt{p}\xi(B)} \E^{\hat{\vs}}_{\vs}\left[ \bignorm{\hat{\vs} - \vs}_p^p \right]^{\frac{1}{p}} \geq
        4^{1 - \frac{1}{p}} C_3 \sqrt{\frac{p \gA_p^*(\bm\sigma, \vc)}{B}}.
    \end{equation}
\end{theorem}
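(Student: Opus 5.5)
We prove the bound with an Assouad-type construction over the hypercube $\{\pm1\}^K$, and we give the learner the true $\bm\sigma$ (this only helps it). For each $\bm\nu\in\{\pm1\}^K$ we set $\vs^{\bm\nu}\coloneq \vs^\star+\bm\nu\odot\bm\Delta$, where the per-coordinate perturbation $\Delta_k>0$ is matched to the oracle allocation:
\[
\Delta_k \propto \sqrt{\frac{\gA_p^*(\bm\sigma,\vc)}{B}}\;\frac{(c_{j^*(k)}\sigma_{k,j^*(k)}^2)^{\frac{1}{p+2}}}{\bigl(\sum_{k'}(c_{j^*(k')}\sigma_{k',j^*(k')}^2)^{\frac{p}{p+2}}\bigr)^{\frac1p}}.
\]
With this choice $\bignorm{\bm\Delta}_p^p\propto(\gA_p^*/B)^{p/2}$. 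Taking the constant to be $1/4$ gives $\bignorm{\vs^{\bm\nu}-\vs^\star}_p\le\xi(B)$.

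The condition $B\ge B_0$ is exactly what we need to guarantee $\Delta_k\le R_k/2$ for every $k$. This keeps the perturbed means inside $[0,R]$ with slack, which allows us to invoke \Cref{prop:beta-construction}. That proposition gives, for each $(k,j)$, a family of bounded (Beta) score distributions with mean $s_k^\star\pm\Delta_k$ and variance exactly $\sigma_{k,j}^2$. Their pairwise KL divergence satisfies
\[
\KL\lesssim\frac{\Delta_k^2}{\sigma_{k,j}^2}
\]
as long as $\Delta_k\lesssim R_k$. I expect this KL control for bounded variance-matched distributions to be the main technical obstacle. I would first try to derive it from a $\chi^2$ bound on the beta family near $\vs^\star$; if that fails, I would use a two-point mixture construction.

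Next, for $p<\infty$ we reduce the estimation loss to coordinatewise tests. We write
\[
\bignorm{\hat\vs-\vs}_p^p=\sum_k|\hat s_k-s_k|^p\ge\sum_k \Delta_k^p\,\indicator[\hat\nu_k\neq\nu_k],
\]
where $\hat\nu_k$ is the sign of $\hat s_k-s_k^\star$. Assouad's lemma then gives
\[
\sup_{\bm\nu}\E\bigl[\bignorm{\hat\vs-\vs}_p^p\bigr]\ge\tfrac12\sum_k\Delta_k^p\bigl(1-\max\mathrm{TV}_k\bigr),
\]
with the total variation taken between neighbouring hypotheses that differ only in coordinate $k$. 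For adaptive allocations, the divergence decomposition together with Pinsker's inequality bounds this by $\sqrt{\tfrac12\sum_j\E[N_{k,j}]\KL_{k,j}}$. Using the KL bound above and the budget constraint, the $k$-th term is controlled by
\[
\frac{B\,\Delta_k^2}{\sigma_{k,j^*(k)}^2\,c_{j^*(k)}}\;\sum_j\E[\omega_{k,j}],
\]
since $c_j\sigma_{k,j}^2\ge c_{j^*(k)}\sigma_{k,j^*(k)}^2$ for every $j$.

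The key point is that the learner cannot make every test informative at once. The expected proportions $\bar\omega_k=\sum_j\E[\omega_{k,j}]$ sum to $1$. We therefore lower bound the resulting quantity by minimizing over the simplex:
\[
\inf_{\bar\omega\in\Lambda^K}\ \sum_k\Delta_k^p\Bigl(1-\sqrt{C\,B\,\Delta_k^2\,\bar\omega_k/(c_{j^*(k)}\sigma^2_{k,j^*(k)})}\Bigr)_+ .
\]
Because $\Delta_k$ is tuned so that the oracle allocation equalizes $B\Delta_k^2\omega^*_k/(c\sigma^2)$ at a small constant, a convexity/Jensen argument shows the infimum is at least a constant times $\sum_k\Delta_k^p\asymp(\gA_p^*/B)^{p/2}$. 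The $p$-th-moment statement then follows directly; its $4^{1-1/p}\sqrt p$ constants come from rescaling $\bm\Delta$ by $\sqrt p$, which is why that part requires $B\ge pB_0$.

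For the first-moment statement, and for $p=\infty$, we cannot sum over coordinates in the same way. Instead we use the bound $\bignorm{\hat\vs-\vs}_p\ge\bignorm{\bm\Delta\odot\indicator[\hat\nu\neq\nu]}_p$. We then show, via a Markov/Paley–Zygmund step on the number of coordinate errors, that a constant fraction of the $\Delta_k^p$-mass is misestimated with constant probability. This yields $\E\bignorm{\hat\vs-\vs}_p\gtrsim\bignorm{\bm\Delta}_p$. For $p=\infty$ we instead use the natural limit of the construction, concentrating the perturbation on the coordinates that dominate $\gA_\infty^*$.
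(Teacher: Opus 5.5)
Your construction is the paper's: the same hypercube, the same $\Delta_k$, \Cref{prop:beta-construction}, and a Jensen step with the oracle weights $\omega_k^*=\Delta_k^p/\bignorm{\bm\Delta}_p^p$. There are, however, two genuine gaps.

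\textbf{Pinsker kills the $\sqrt p$ factor in the second statement.} Write $g_k\coloneq c_{j^*(k)}\sigma_{k,j^*(k)}^2$. Your choice of $\Delta_k$ gives $B\Delta_k^2\bar\omega_k/g_k=\bar\omega_k/(16\,\omega_k^*)$, and rescaling $\bm\Delta$ by $\sqrt p$ multiplies this by $p$. At the oracle proportions $\bar\omega_k=\omega_k^*$, every factor in your infimum becomes $(1-\sqrt{Cp/16})_+$. This vanishes once $p\ge 16/C$, so the infimum is $0$ and you cannot extract $\sqrt p$. Without the rescaling you only get order $\sqrt{\gA_p^*/B}$, which is weaker than the claim by $\sqrt p$. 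The paper uses Bretagnolle--Huber instead, $1-\mathrm{TV}\ge\frac12 e^{-\KL}$. Then coordinate $k$ contributes $\frac14\Delta_k^p e^{-C_2\Delta_k^2V_k}$, with $V_k=\sum_j\bar N_{k,j}/\sigma_{k,j}^2$. Jensen for the convex exponential yields $\frac14\bignorm{\bm\Delta}_p^p e^{-C_2p/16}$. The $1/p$-th root turns this exponential $p$-dependence into the $p$-free constant $4^{-1/p}e^{-C_2/16}$. For the first-moment bound, Pinsker is harmless: you may shrink the constant $1/4$ in $\bm\Delta$ until the factor at $\bar\omega_k=\omega_k^*$ is positive.

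\textbf{The max-over-neighbours form of Assouad does not handle adaptivity.} Bounding $\mathrm{TV}(\sP_{\bm\nu},\sP_{\bm\nu'})$ brings in $\E_{\bm\nu}[N_{k,j}]$, which depends on $\bm\nu$. The budget controls $\sum_{k,j}c_j\E_{\bm\nu}[N_{k,j}]$ only for each fixed $\bm\nu$. Meanwhile $\sum_k\max_{\bm\nu}\bar\omega_k(\bm\nu)$ can exceed $1$, so your claim that the proportions sum to $1$ is not justified. You need the averaged version:
\begin{itemize}
\item compare the mixtures $\bar{\sP}_{+,k}$ and $\bar{\sP}_{-,k}$;
\item use joint convexity of KL together with the divergence decomposition;
\item symmetrize via $\min\{a,b\}\le\frac{a+b}{2}$.
\end{itemize}
This produces the prior-averaged counts $\bar N_{k,j}=2^{-K}\sum_{\bm\nu}\E_{\bm\nu}[N_{k,j}]$, which do satisfy $\sum_{k,j}c_j\bar N_{k,j}\le B$ (\Cref{lem:assouad-adaptive}).

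\textbf{Two smaller remarks.}
\begin{itemize}
\item Your Paley--Zygmund step works if you apply it to the weighted mass $S=\sum_k\Delta_k^p\indicator[\hat\nu_k\neq\nu_k]\le\bignorm{\bm\Delta}_p^p$, not to the error count. The paper's secant bound $S^{1/p}\ge\bignorm{\bm\Delta}_p^{1-p}S$ gets there in one line.
\item For $p=\infty$, the limit of your construction is not concentrated. Since $g_k^{1/(p+2)}\to 1$, it is the uniform perturbation $\Delta_k\equiv\frac14\sqrt{\gA_\infty^*/B}$ on every coordinate. The $g_k$ enter only as averaging weights through $\indicator[\exists k:\hat\nu_k\neq\nu_k]\ge\sum_k (g_k/\gA_\infty^*)\indicator[\hat\nu_k\neq\nu_k]$, after which the same Jensen step applies. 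If you restrict the perturbation to a subset $T$, you retain only $\sum_{k\in T}g_k$. So $T$ must carry a constant fraction of $\gA_\infty^*=\sum_k g_k$; putting it on a few dominant coordinates loses exactly what the paper attributes to Fano-type packings.
\end{itemize}
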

\begin{proof}[Proof sketch]
    The full proof is deferred to \Cref{app:lower-bound-exp}.
    We follow the standard nonparametric lower bound framework based on the seminal {{Assouad's Lemma}}~\citep{assouad1983,yu1997lecam} combined with the Bretagnolle-Huber inequality~\citep{bretagnolle-huber}.
    We first construct a hypercube of $2^K$ hard instances, and each vertex corresponding to the same beta distribution construction as in the proof of \Cref{thm:lower-bound-whp} (\Cref{prop:beta-construction}).
    
    $\E[ \bignorm{\hat{\vs} - \vs}_p ]$ is \emph{not} decomposable coordinate-wise, hindering the direct application of Assouad's lemma.
    We bypass this using a secant bounding argument (leveraging the concavity of $x \mapsto x^{1/p}$), decoupling the expectation to establish a lower bound proportional to $\bignorm{\bm\Delta}_p$.
    For $\E[ \bignorm{\hat{\vs} - \vs}_p^p ]^{1/p}$, as it is decomposable, we can directly apply Assouad's Lemma.
    Finally, by tuning the adversarial shifts $\bm\Delta$ under KL-divergence constraints and minimizing the bound over all valid algorithm budget allocations, we recover the optimal rate $\gA_p^*(\bm\sigma, \vc)$.
\end{proof}

We make two remarks concerning our lower bounds and their comparison with the upper bounds derived earlier.

{\bf (1) Geometry of Constrained Lower Bounds.} Optimizing over algorithms subject to a budget constraint is reminiscent of techniques used in statistical estimation under communication constraints \citep{zhu2014quantized,chen2021vantrees,chen2024vantrees}. In such settings, restricting the algorithm class naturally gives rise to an optimal allocation problem. Our analysis reveals a related phenomenon, but also uncovers a separation between high-probability and in-expectation lower-bound techniques in their ability to capture the correct allocation geometry. We discuss this separation in more detail in the next subsection.

{\bf (2) In-Expectation Upper Bounds.} Since our sharp lower bound is stated in expectation, one may ask whether it is directly comparable to the high-probability upper bounds in \Cref{thm:optimal-allocation,thm:est-ivwe}. We note that an upper bound of the same order, without logarithmic factors, can also be obtained in expectation via the matrix Bernstein inequality \citep[Theorem~6.1.1]{tropp2015survey}.\footnote{Although the result is formulated for matrices, setting \(d_1=d_2=1\) recovers the desired scalar inequality.}

\subsection{Why Assouad is Sharp and Fano is Not}

While our in-expectation lower bounds via Assouad's Lemma correctly capture the dense $\mathcal{A}_p^*$ allocation geometry (\Cref{thm:lower-bound-exp}), our high-probability bounds via Fano's inequality are suboptimal: for $p \geq 2$, Fano-based bounds saturate at the $1$-sparse $\mathcal{A}_\infty^*$ rate, and even for $1 \leq p < 2$, they fail to recover the exact $\mathcal{A}_p^*$ allocation (\Cref{thm:lower-bound-whp}). This separation stems from two distinct bottlenecks inherent to Fano-type arguments~\citep{tsybakov} in our specific problem setting.

{\it (i) The information-theoretical bottleneck.} Fano-type arguments rely on constructing a globally separated packing of alternatives.  In our setting, this creates a bottleneck: the KL divergences are compressed into a single scalar constraint before the errors are aggregated across coordinates. 
As a result, the fine \(\ell_p\)-geometry that determines the optimal budget allocation is lost.

{\it (ii) The geometric barrier.} Fano-type arguments inherently rely on global volumetric packing over all coordinates simultaneously (\Cref{lem:infimum-hard}). For $p \ge 2$, maximizing an $\ell_p$ packing distance subject to an $\ell_2$-based KL-divergence constraint forms a convex maximization problem, making the optimal adversarial packing to concentrate entirely on the single hardest query. While the optimal packing does become dense for $1 \leq p < 2$, the aforementioned information loss still prevents it from recovering the true optimal allocation.

In contrast, the in-expectation Assouad-based argument circumvents these limitations because it is structurally distinct. Rather than relying on a global bottleneck, it \emph{first} lower bounds the risk coordinate-wise, \emph{then} aggregates these bounds, and \emph{finally} optimizes over the algorithm's budget allocation. By constructing a hypercube of independent perturbations, it forces any valid algorithm to distribute its budget globally across all queries to defend against simultaneous variations.

%% file: 006Gaussian.tex
\section{Beyond Boundedness: Gaussian Scores}

We briefly describe how the results presented so far extend to \emph{Gaussian scores}, as considered in \citep{saha2026budget}, namely \(s_{k,j} \sim \mathcal{N}(s_k,\sigma_{k,j}^2)\). In this setting, all results carry over, and in some cases take a sharper form. Technically, the Gaussian case is of independent interest because it allows the use of tools that exploit distribution-specific structure. For space reasons, we only provide an overview of the corresponding upper and lower bounds, highlighting the main differences and the key technical ingredients. Full details are deferred to \Cref{app:gaussian}.

\paragraph{Upper Bounds.} 
For known variances, we replace Bernstein's inequality with the standard Gaussian tail bound applied separately for each query $k$. This yields a cleaner upper bound, with no lower-order term from Bernstein's inequality. For unknown variances, we can exploit \(\chi^2\)-concentration for the Gaussian sample variance \citep[Eq.~(4.4)]{laurent-massart}. This allows us to use the \ivwe{} directly, without introducing the additional optimistic bias \(\tau\).

\paragraph{Lower Bounds.} Here, we consider global rather than local lower bounds. The high-probability lower bound is essentially unchanged, except that we use Gaussian constructions instead of Beta distributions, which affects only constants. The in-expectation lower bounds, for both \(\mathbb{E}[\|\hat{\vs}-\vs\|_p]\) and \(\mathbb{E}[\|\hat{\vs}-\vs\|_p^p]^{1/p}\), are more substantially different and technically more interesting. We sketch the main idea. We place a Gaussian prior over the instances and use a convexity argument to show that the posterior mean minimizes the expected \(\ell_p\)-risk. Since the posterior distribution is Gaussian with covariance given by the inverse Fisher information, rearranging the resulting expression yields the same optimal allocation in the lower bound. Remarkably, for \(\mathbb{E}[\|\hat{\vs}-\vs\|_p^p]^{1/p}\), this lower bound is optimal up to universal constants.

%% file: 007Experiments.tex
\section{Experiments}
\label{sec:experiments}

In this section, we numerically evaluate the performance of our proposed algorithms on both synthetic and real-world datasets.

\paragraph{Settings.}
For the synthetic experiment, we set \((K,J)=(1000,10)\) and consider the Beta distribution for each query--judge pair's score distribution ($R = 1$).
The instance, characterized by $(\vs, \bm\sigma, \vc)$, is randomly sampled from an appropriate prior distribution, detailed in \Cref{app:synthetic_data_extended}.

We conduct real-world experiments following \citet{saha2026budget}, in which queries from HelpSteer2~\citep{NEURIPS2024_02fd91a3} are evaluated by three LLM judges --- Llama-3-8B~\citep{Llama}, GPT-4~\citep{GPT4}, and Qwen~\citep{Qwen2.5} --- along four axes: complexity, correctness, helpfulness, and verbosity.
For each axis, we utilize the pre-generated dataset from \citet{saha2026budget}, where each query is repeatedly evaluated with each judge.
During the experiments, we simulate each judge by sampling with replacement from the generated dataset.
We set the judge costs uniformly to \(0.1\) for simplicity, and we consider $p = 2$ for the error metric.
Further details of the preprocessing procedure are provided in \Cref{app:real_data_extended}.

\paragraph{Baselines.}
We compare \estivwe{} (Bounded) (\Cref{alg:estimate-then-ivwe}) and its practical Gaussian variant, \estivwe{} (Gaussian) (\Cref{alg:estimate-then-ivwe-gaussian}), against two baselines: \textsc{Uniform}, which assigns each query--judge pair an equal number of times subject to the budget constraint, and \textsc{Oracle}, which is IVWE with the optimal allocation \(\bm\omega^*\) under known variances (\Cref{thm:optimal-allocation}).
In implementing \estivwe{} (\Cref{alg:estimate-then-ivwe}) and \estivwe{} (Gaussian) (\Cref{alg:estimate-then-ivwe-gaussian}), the exploration budget $N_0$ and the bias $\tau$ are set to the theoretical values provided in \Cref{thm:est-ivwe} and \Cref{thm:est-ivwe-Gaussian}, rather than being heuristically tuned, in both experiments.

\paragraph{Results and Discussions.}
The results for the synthetic and HelpSteer2 datasets are shown in \Cref{fig:synthetic} and \Cref{fig:real}, respectively.
First, note that the \estivwe{} and \textsc{Oracle} both outperform the \textsc{Uniform} baseline, confirming the efficacy of adaptive allocation under the heteroskedasticity of the query--judge pairs.
Amongst the adaptive baselines, we note that \estivwe{} rapidly approaches the performance of the \textsc{Oracle} as the budget increases, validating our theoretical results.
Notably, in both settings, the Gaussian variant \estivwe{} (Gaussian) outperforms or performs on par with \estivwe{} (Bounded), despite the fact that the score distributions are strictly bounded.
This shows that the Gaussian variant may be more effective in practice, although it lacks a rigorous theoretical guarantee.
Additional results from different $p$ and different datasets are provided in \Cref{app:synthetic_data_extended,app:real_data_extended}.
Furthermore, in \Cref{app:cost_analysis}, we provide an additional cost-sensitivity analysis on synthetic data to demonstrate that our algorithm remains robust to different choices of the cost vector.
Another interesting observation is that for HelpSteer2 datasets, the error curves eventually plateau at non-zero values.
This phenomenon primarily stems from the finite evaluation pool used for resampling with replacement and the inherent biases present in the LLM judges~\citep{gu2025surveyllmasajudge}.

\begin{figure}[!t]
    \centering
    \begin{subfigure}[b]{0.497\textwidth}
        \centering
        \includegraphics[width=\linewidth]{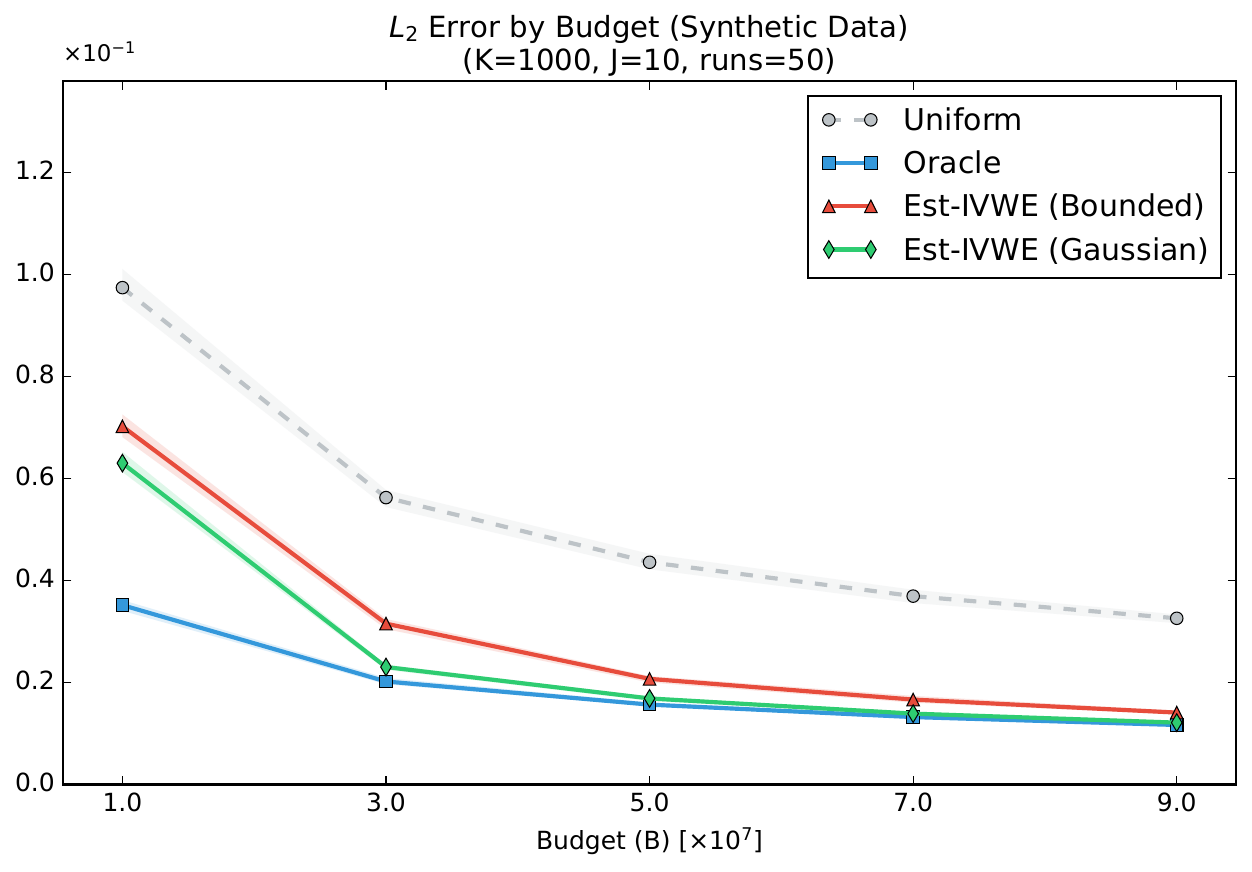}
        \caption{Synthetic data ($K=1000$, $J=10$)}
        \label{fig:synthetic}
    \end{subfigure}
    \hfill
    \begin{subfigure}[b]{0.497\textwidth}
        \centering
        \includegraphics[width=\linewidth]{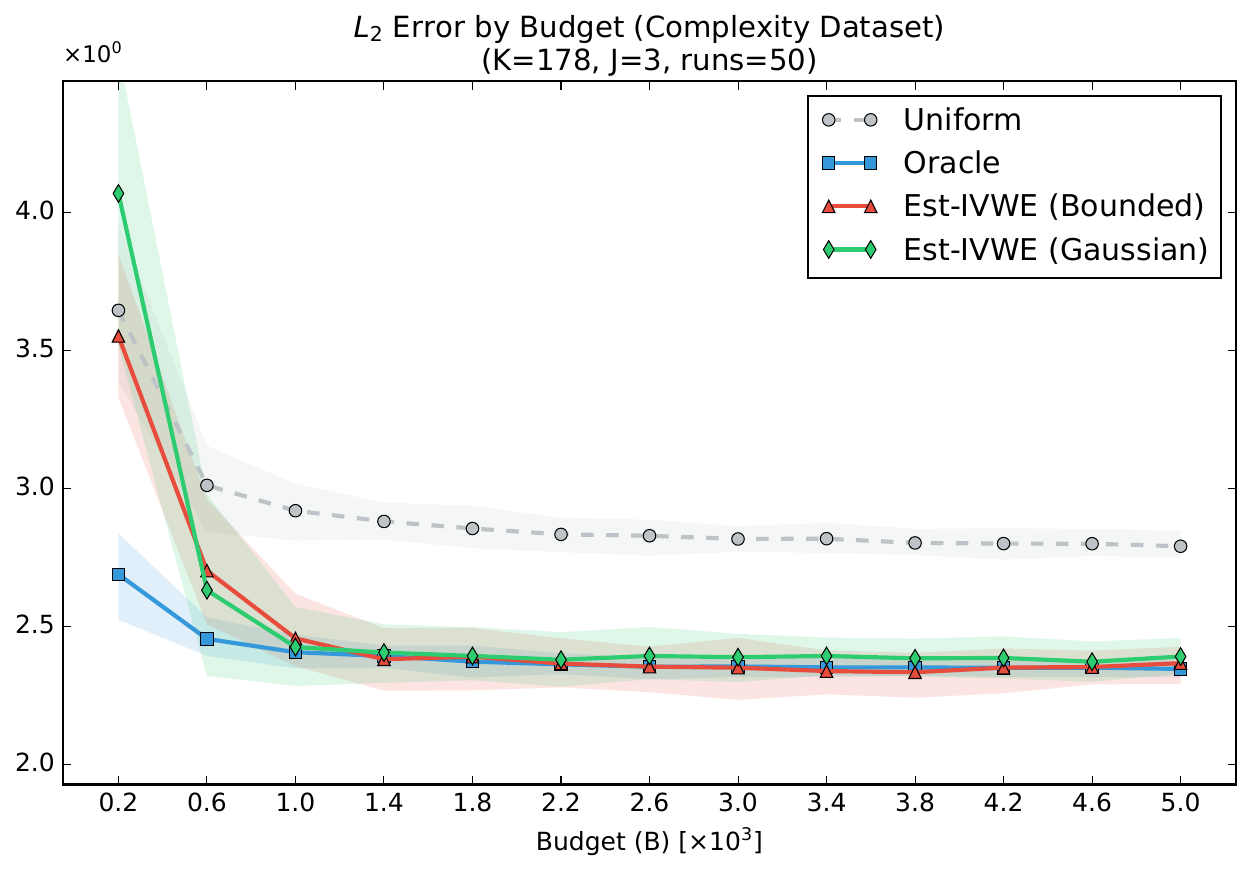}
        \caption{``Complexity" dataset ($K=178, J=3$)}
        \label{fig:real}
    \end{subfigure}
    \caption{Experimental results on both synthetic and real-world datasets. All results are averaged over $50$ independent runs, with the $10\%$ and $90\%$ quantiles (shaded regions)}
    \label{fig:exp-results}
\end{figure}

%% file: 008Conclusion.tex
\section{Conclusion and Future Work}
\label{sec:conclusion}
This paper introduces the problem of \emph{budgeted heteroskedastic multi-judge estimation}, motivated by budget-constrained evaluation via multiple LLM judges of varying costs.
The main algorithmic template is the inverse-variance weighted estimator (IVWE), whose error rate we analyze to derive the optimal oracle allocation.
We developed a variant of IVWE, Est-IVWE, that does not require the knowledge of the variance profile via a two-stage approach: Stage~I, the key ingredient, computes \emph{optimistic} estimates of the variances, and Stage~II is just IVWE with the estimates.
We establish the \emph{instance-wise} optimality of our algorithm by proving a matching, in-expectation lower bound for the error rate.
Interestingly, we show that the standard Fano-type (high-probability) argument fails to capture the right allocation, while an Assouad-type (in-expectation) argument does, which may be of independent interest.
We numerically validate the efficacy of IVWE and Est-IVWE on synthetic and HelpSteer2 datasets, showing its efficacy over na\"{i}ve uniform allocation.

Several promising directions remain for future work.
First, extending our framework to accommodate query-dependent and stochastic costs—such as varying token lengths per prompt—would better reflect real-world usage.
Second, incorporating models of inherent judge bias could further refine evaluation accuracy.
On the theoretical front, as noted in \Cref{sec:lower-bound}, deriving a matching high-probability lower bound remains an open challenge. 
Finally, scaling our empirical validation to larger, production-level LLM-as-a-judge pipelines is a crucial next step.

%% file: 901Proof-Known.tex
\section{\texorpdfstring{Proofs from \Cref{sec:known} (Known Variances)}{Proofs from Section 3 (Known Variances)}}\label{app:known}

\subsection{Error Rate of \textbf{IVWE}}
\label{app:upper-bound-allocation}
\begin{proposition}
\label{prop:upper-bound-allocation}
    Let $p \in [1, \infty]$, $B \in \sR_{> 0}$ be a total budget, and $\delta \in (0, 1)$ be a target confidence level.
    Also, let $(N_{k,j})_{k,j}$ be any given allocation.
    Then, \textbf{IVWE} is $(B, \delta, \ell_p)$-budget efficient at any $(\vs, \bm\sigma, \vc)$ with the following error rate $\varepsilon_p$:
    \begin{equation}
        \varepsilon_p = \sqrt{\frac{2 \gA_p(\bm\omega; \bm\sigma, \vc) \log\frac{2K}{\delta}}{B}} + \frac{R 
        \gB_p(\bm\omega; \bm\sigma, \vc) \log\frac{2K}{\delta}}{3B},
    \end{equation}
    where $\bm\omega \coloneq \left( \omega_{k,j} \triangleq \frac{c_j N_{k,j}}{B} \right)_{k,j}$, the \textbf{allocation objective} $\gA_p(\bm\omega; \bm\sigma, \vc)$ is defined as\footnote{Note that the allocation objective's limit as $p \rightarrow \infty$ is well defined as $\max_{k \in [K]} \left( \sum_{j \in [J]} \frac{\omega_{k,j}}{c_j \sigma_{k,j}^2} \right)^{-1}.$}
    \begin{equation}
        \gA_p(\bm\omega; \bm\sigma, \vc) \coloneq \left( \sum_{k \in [K]} \left( \sum_{j \in [J]} \frac{\omega_{k,j}}{c_j \sigma_{k,j}^2} \right)^{-\frac{p}{2}} \right)^{\frac{2}{p}},
    \end{equation}
    and $\gB_p(\bm\omega; \bm\sigma, \vc) \coloneq \left( \sum_{k \in [K]} \max_{j: \omega_{k,j} > 0} \left( \sigma_{k,j}^2 \sum_{j'\in[J]} \frac{\omega_{k,j'}}{c_{j'} \sigma_{k,j'}^2} \right)^{-p} \right)^{\frac{1}{p}}$.
\end{proposition}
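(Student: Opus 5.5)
The plan is to apply Bernstein's inequality separately to each query $k$, take a union bound over the $K$ queries, and then aggregate the per-query deviations into an $\ell_p$-norm. The budget constraint is immediate: since the allocation is fixed with $\bm\omega\in\Lambda^{KJ}$, we have $\sum_{k,j} c_j N_{k,j} = B\sum_{k,j}\omega_{k,j} = B$ almost surely. The substantive part is the high-probability error bound.

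Fix $k$ and write $W_k \coloneq \sum_j N_{k,j}/\sigma_{k,j}^2$. With the convention that pairs with $N_{k,j}=0$ contribute nothing, the estimator can be written as
\[
\hat s_k - s_k = \sum_{j}\sum_{i=1}^{N_{k,j}} X_{k,j}^{(i)}, \qquad X_{k,j}^{(i)} \coloneq \frac{s_{k,j}^{(i)} - s_k}{W_k\,\sigma_{k,j}^2}.
\]
This is a sum of independent, zero-mean terms. Their total variance is
\[
\sum_j N_{k,j}\,\frac{\sigma_{k,j}^2}{W_k^2\sigma_{k,j}^4} = \frac{1}{W_k}.
\]
Substituting $N_{k,j} = B\omega_{k,j}/c_j$ gives
\[
\frac{1}{W_k} = \frac{1}{B}\Big(\sum_j \frac{\omega_{k,j}}{c_j\sigma_{k,j}^2}\Big)^{-1} \eqqcolon \frac{v_k}{B}.
\]
For the range, we use $s_{k,j}^{(i)},s_k\in[0,R]$, so $|s_{k,j}^{(i)}-s_k|\le R$ and each summand satisfies $|X_{k,j}^{(i)}|\le R/(W_k\sigma_{k,j}^2)$. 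Maximizing over the judges actually used, this bound equals $\frac{R}{B}\,b_k$, where
\[
b_k \coloneq \max_{j:\omega_{k,j}>0}\Big(\sigma_{k,j}^2\sum_{j'}\frac{\omega_{k,j'}}{c_{j'}\sigma_{k,j'}^2}\Big)^{-1}.
\]

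The two-sided Bernstein inequality then gives, with probability at least $1-\delta/K$,
\[
|\hat s_k - s_k| \le \sqrt{\frac{2v_k\log(2K/\delta)}{B}} + \frac{R\,b_k\log(2K/\delta)}{3B}.
\]
A union bound over $k\in[K]$ makes this hold for all $k$ simultaneously with probability at least $1-\delta$.

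On that event, Minkowski's inequality in $\ell_p$ separates the two terms, giving
\[
\|\hat\vs-\vs\|_p \le \sqrt{\frac{2\log(2K/\delta)}{B}}\,\big\|(\sqrt{v_k})_k\big\|_p + \frac{R\log(2K/\delta)}{3B}\,\|(b_k)_k\|_p.
\]
It remains to identify these norms. First, $\|(\sqrt{v_k})_k\|_p = \big(\sum_k v_k^{p/2}\big)^{1/p} = \gA_p^{1/2}$, because $\gA_p = \big(\sum_k v_k^{p/2}\big)^{2/p}$. Second, $\|(b_k)_k\|_p = \gB_p$ by definition. The case $p=\infty$ follows the same way using maxima in place of sums.

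I do not expect a genuine obstacle. The one step that needs care is the bookkeeping of the convention for queries with $N_{k,j}=0$. If $W_k=0$ for some $k$, the bound is vacuous, with $v_k=\infty$, and we adopt that convention. We should also note that the range bound is taken only over judges with $\omega_{k,j}>0$, which matches the $\max_{j:\omega_{k,j}>0}$ in the definition of $\gB_p$.
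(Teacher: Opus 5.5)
Your proposal is correct and follows essentially the same route as the paper's proof. Both write $\hat s_k - s_k$ as a sum of independent centered terms with total variance $W_k^{-1}$ and range bound $R/(W_k\sigma_{k,j}^2)$, apply Bernstein's inequality with a union bound over the $K$ queries, separate the two terms by Minkowski's inequality, and reparametrize via $\omega_{k,j} = c_j N_{k,j}/B$ to read off $\gA_p$ and $\gB_p$ (your remarks on the degenerate case $W_k=0$ are bookkeeping the paper leaves implicit).
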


\begin{proof}
    Let $W_k \coloneq \sum_{j \in [J]} \frac{N_{k,j}}{\sigma_{k,j}^2}$, and an allocation $(N_{k,j})_{k,j}$ be given.
    Then, first note that
    \begin{equation}
        \hat{s}_k - s_k = W_k^{-1} \sum_{j \in [J]} \frac{1}{\sigma_{k,j}^2} \sum_{i=1}^{N_{k,j}} (s_{k,j}^{(i)} - s_k) = \sum_{j \in [J]} \sum_{i=1}^{N_{k,j}} \underbrace{\frac{s_{k,j}^{(i)} - s_k}{W_k \sigma_{k,j}^2}}_{\triangleq X_j^{(i)}}
    \end{equation}
    
    Note that $\E[X_j^{(i)}] = 0$, $|X_j^{(i)}| \leq \frac{R}{W_k \min_{j : \omega_{k,j}>0} \sigma_{k,j}^2}$, and $\Var[X_j^{(i)}] = \frac{\Var[s_{k,j}^{(i)} - s_k]}{W_k^2 \sigma_{k,j}^4} = \frac{1}{W_k^2 \sigma_{k,j}^2}.$
    
    We now utilize the Bernstein's inequality for bounded random variables~\citep{bernstein} to the aggregated $\hat{s}_k$: with probability at least $1 - \delta$, the following holds for for any $k \in [K]$:
    \begin{align}
        \left| \hat{s}_k - s_k \right| &\leq \sqrt{2 \left( \sum_{j,i} \frac{1}{W_k^2 \sigma_{k,j}^2} \right) \log\frac{2 K}{\delta}} + \frac{R \log\frac{2 K}{\delta}}{3 W_k \min_{j : \omega_{k,j}>0} \sigma_{k,j}^2} \\
        &= \sqrt{2 \log\frac{2 K}{\delta}} W_k^{-\frac{1}{2}} + \frac{R \log\frac{2 K}{\delta}}{3} \left( W_k \min_{j : \omega_{k,j}>0} \sigma_{k,j}^2 \right)^{-1}.
    \end{align}
    We then apply Minkowski's inequality for $\ell_p$-norm as follows:
    \begin{align}
        &\bignorm{\hat{\vs} - \vs}_p \\
        &\leq \left( \sum_{k \in [K]} \left( \sqrt{2 \log\frac{2 K}{\delta}} W_k^{-\frac{1}{2}} + \frac{R \log\frac{2 K}{\delta}}{3} \left( W_k \min_{j : \omega_{k,j}>0} \sigma_{k,j}^2 \right)^{-1} \right)^p \right)^{\frac{1}{p}} \\
        &\leq \left( \sum_{k \in [K]} \left( \sqrt{2 \log\frac{2 K}{\delta}} W_k^{-\frac{1}{2}} \right)^p \right)^{\frac{1}{p}} + \left( \sum_{k \in [K]} \left( \frac{R \log\frac{2 K}{\delta}}{3} \left( W_k \min_{j : \omega_{k,j}>0} \sigma_{k,j}^2 \right)^{-1} \right)^p \right)^{\frac{1}{p}} \tag{Minkowski's inequality} \\
        &= \sqrt{2 \log\frac{2K}{\delta}} \left( \sum_{k \in [K]} W_k^{-\frac{p}{2}} \right)^{\frac{1}{p}} + \frac{R \log\frac{2 K}{\delta}}{3} \left( \sum_{k \in [K]} \left( W_k \min_{j : \omega_{k,j}>0} \sigma_{k,j}^2 \right)^{-p} \right)^{\frac{1}{p}} \\
        &\leq \sqrt{\frac{2}{B} \log\frac{2K}{\delta}} \underbrace{\left( \sum_{k \in [K]} \left( \sum_{j=1}^J \frac{\omega_{k,j}}{c_j \sigma_{k,j}^2} \right)^{-\frac{p}{2}} \right)^{\frac{1}{p}}}_{\triangleq \sqrt{\gA_p(\bm\omega; \bm\sigma, \vc)}}  + \frac{R \log\frac{2 K}{\delta}}{3 B} \underbrace{\left( \sum_{k \in [K]} \max_{j: \omega_{k,j} > 0} \left( \sigma_{k,j}^2 \sum_{j'\in[J]} \frac{\omega_{k,j'}}{c_{j'} \sigma_{k,j'}^2} \right)^{-p} \right)^{\frac{1}{p}}}_{\triangleq \gB_p(\bm\omega; \bm\sigma, \vc)},
    \end{align}
    where in the last inequality, we reparametrize as a ratio of budget allocated to $(k,j)$-pair, $\omega_{k,j} \coloneq \frac{c_j N_{k,j}}{B}$, which is invariant to the amount of budget $B$.
\end{proof}

\subsection{\texorpdfstring{Proof of \Cref{thm:optimal-allocation}: Optimal Allocation}{Proof of Theorem 3.2: Optimal Allocation}}
\label{app:optimal-allocation}
We first separate the cases into $p\in[1,\infty)$ and $p=\infty$, derive results for each case, and then conclude that, by convention, the optimal allocation can be represented in a single expression.

\paragraph{Optimal Allocation minimizing $\gA_\infty$}
Recall that the objective of the optimization problem $\gA_p(\bm\omega; \bm\sigma, \vc)$ is given as:
\begin{equation}
    \gA_p(\bm\omega; \bm\sigma, \vc) = \left( \sum_{k\in[K]} \left( \sum_{j\in[J]} \frac{\omega_{k,j}}{c_j \sigma_{k,j}^2} \right)^{-\frac{p}{2}} \right)^{\frac{2}{p}}.
\end{equation}
When $p = \infty$, this objective becomes minimizing $\gA_\infty (\bm\omega; \bm\sigma, \vc) = \max_{k\in[K]} \left( \sum_{j\in[J]}  \frac{\omega_{k,j}}{c_j \sigma_{k,j}^2}\right)^{-1}$, which can be calculated by convention. Since the inversion is decreasing, we instead solve the following optimization problem:
\begin{equation}
    \max_{\bm\omega}\left\{ \min_{k \in[K]} \sum_{j\in[J]}\frac{\omega_{k,j}}{c_j\sigma_{k,j}^2}: \sum_{(k,j) \in [K] \times [J]} \omega_{k,j} \leq 1, \quad \omega_{k,j} \geq 0 \quad\forall k,j \right\}.
\end{equation}
With a simple optimization trick, this problem can be converted to the linear optimization problem:
\begin{equation}
    \max_{\bm\omega, V}\left\{ V : \sum_{k,j} \omega_{k,j} \leq 1; \quad V \leq \sum_{j\in[J]} \frac{\omega_{k,j}}{c_j\sigma_{k,j}^2} \quad \forall k;\quad \omega_{k,j} \geq 0 \quad \forall k,j \right\}.
\end{equation}

The Lagrangian $\mathcal{L}$ with multipliers $\lambda_k$ (for the $V$ constraints), $\lambda_0$ (for the budget constraint), and $\nu_{k,j}$ (for non-negativity) is:
\begin{align}
    \gL \big( \bm\omega,V, \lambda_0, \{\lambda_k\}, \{\nu_{k,j}\} \big) & = V - \lambda_0 \left(\sum_{k,j} \omega_{k,j} - 1 \right) - \sum_{k} \lambda_k \left( V - \sum_{j} \frac{\omega_{k,j}}{c_j \sigma_{k,j}^2} \right) + \sum_{k,j} \nu_{k,j} \omega_{k,j}\\
    & = V \left(1 - \sum_{k} \lambda_k \right) + \sum_{k,j} \left( \frac{\lambda_k}{c_j \sigma_{k,j}^2} - \lambda_0 + \nu_{k,j}\right) \omega_{k,j} + \lambda_0,
\end{align}
where $\lambda_k \geq 0$, $\lambda_0 \geq 0$, and $\nu_{k,j} \geq 0$.

For an optimal solution $(\bm{\omega}^*, V^*)$ to exist, the KKT conditions consisting of stationarity conditions:
\begin{equation}
    \begin{cases}
        \frac{\partial \mathcal{L}}{\partial V} = 0 \implies \sum_k \lambda_k = 1\\
        \\
        \frac{\partial \mathcal{L}}{\partial \omega_{k,j}} = 0 \implies \frac{\lambda_k}{c_j \sigma_{k,j}^2} - \lambda_0 + \nu_{k,j} = 0 \implies \nu_{k,j} = \lambda_0 - \frac{\lambda_k}{c_j \sigma_{k,j}^2}
    \end{cases},
\end{equation}
and complementary slackness conditions:
\begin{equation}
    \begin{cases}
        \lambda_k \left( V - \sum_{j} \frac{\omega_{k,j}}{c_j \sigma_{k,j}^2} \right) = 0 \\
        \lambda_0 \left( \sum_{k,j} \omega_{k,j} - 1 \right) = 0\\
        \nu_{k,j} \omega_{k,j} = 0
    \end{cases}
\end{equation}
must hold.

For a fixed $k\in [K]$, we have that $\lambda_0 - \frac{\lambda_k}{c_j \sigma_{k,j}^2} =\nu_{k,j} \geq 0$ for all $j\in[J]$ from above conditions, which is equivalent to $\lambda_k \leq \lambda_0 (c_j \sigma_{k,j}^2)$ for all $j \in [J]$.
This implies that $\lambda_k \leq \lambda_0 \min_{j\in[J]} (c_j \sigma_{k,j}^2 )$.
If we allocate a query $k$ to a judge $j$, i.e., $\omega_{k,j} > 0$, complementary slackness condition implies that $\nu_{k,j} = 0$ for that judge.
For such a judge $j$, the equation $\lambda_k = \lambda_0 (c_j \sigma_{k,j}^2)$ holds.
This equation and the inequality $\lambda_k \leq \lambda_0 \min_{j\in[J]} (c_j \sigma_{k,j}^2 ) = \lambda_0 c_{j^*(k)}\sigma_{k, j^*(k)}^2$ say that such judge $j$ is one of the minimizer of $c_j \sigma_{k,j}^2$, i.e. $j = j^*(k)$. 
Unless the minimizer is unique, we select one among them with any consistent tie-breaking rule, and say that as $j^*(k)$.
We only allocate the query $k$ to this $j^*(k)$, then it is still the optimal solution of LP problem.
For the other judges, the inequality $\lambda_k \leq \lambda_0 \min_{j\in[J]} (c_j \sigma_{k,j}^2 )$ should be strict, i.e., $\omega_{k,j}=0$ for $j \neq j^*(k)$.

Now, we claim that $\lambda_k > 0$ for all $k$.
If there is a query $k$ with $\lambda_k = 0$, it implies that the constraint $V \leq \sum_j \frac{\omega_{k,j}}{c_j \sigma_{k,j}^2}$ is non-binding for that $k$.
In this case, by redistributing the resource allocated to this query $k$, we can raise the value of $V$.
Consequently, at the optimum, the constraint $V \leq \sum_j \frac{\omega_{k,j}}{c_j \sigma_{k,j}^2}$ should be binding for all $k\in[K]$, i.e., $V^* =  \frac{\omega_{k,j^*(k)}^*}{c_j \sigma_{k,j^*(k)}^2}$ for all $k\in[K]$.

Substituting $\omega_{k, j^*(k)} = V^* c_{j^*(k)} \sigma_{k, j^*(k)}^2$ into the budget constraint (which is tight, $\lambda_0 > 0$), then the optimal value of the optimization problem $V^*$ satisfies
\begin{equation}
    V^* \sum_{k}  c_{j^*(k)} \sigma_{k, j^*(k)}^2 = 1 .
\end{equation}
Therefore, the optimal allocation is
\begin{equation}
    \omega_{k,j}^* = \frac{{\color{blue}\indicator[j = j^*(k)]} c_{j^*(k)} \sigma_{k,j^*(k)}^2 }{\sum_{k' \in [K]} c_{j^*(k')} \sigma_{k',j^*(k)}^2 },
\end{equation}
where $j^*(k) \triangleq \argmin_{j \in [J]} c_j \sigma_{k,j}^2$; and the optimal value of the optimization problem is 
\begin{equation}
    V_*^{-1} = \left( \frac{\omega_{k,j}^*}{c_{j^*(k)} \sigma_{k,j^*(k)}^2} \right)^{-1} = \sum_{k\in[K]} c_{j^*(k)} \sigma_{k,j^*(k)}^2.
\end{equation}

\paragraph{Optimal Allocation minimizing $\gA_p$}
We first solve the optimization problem for $p\in[1,\infty)$.
Since the power of $2/p$ is monotone, we instead solve the optimization problem:
\begin{equation}
    \min_{\bm\omega} \left\{ \sum_{k \in [K]} \left( \sum_{j\in[J]} \frac{\omega_{k,j}}{c_j \sigma_{k,j}^2} \right)^{-\frac{p}{2}} : \sum_{(k,j) \in [K] \times [J]} \omega_{k,j} \leq 1; \quad \omega_{k,j} \geq 0 \quad \forall k,j \right\}.
\end{equation}
Let $\lambda_0$ be the multiplier for the budget constraint and $\nu_{k,j}$ be the multipliers for the non-negativity of $\omega_{k,j}$, then the Lagrangian is:
\begin{equation}
    \gL(\bm\omega, \lambda_0, \nu_{k,j}) =  \sum_{k} \left( \sum_{j} \frac{\omega_{k,j}}{c_j \sigma_{k,j}^2} \right)^{-\frac{p}{2}} + \lambda_0 \left( \sum_{k,j} \omega_{k,j} - 1 \right) - \sum_{k,j} \nu_{k,j} \omega_{k,j}
\end{equation}

The stationarity of KKT conditions requires the partial derivative of the Lagrangian with respect to decision variables $\omega_{k,j}$ to be zero:
\begin{equation}
    \frac{\partial \gL}{\partial \omega_{k,j}} =  -\frac{p}{2} \left( \sum_{j} \frac{\omega_{k,j}}{c_j \sigma_{k,j}^2} \right)^{-\frac{p+2}{2}} \cdot \frac{1}{c_j \sigma_{k,j}^2} + \lambda_0 - \nu_{k,j}=0.
\end{equation}
Denoting $V_k = \frac{p}{2} \left( \sum_{j\in[J]} \frac{\omega_{k,j}}{c_j \sigma_{k,j}^2} \right)^{-\frac{p+2}{2}}$ and rearranging the terms, we obtain $\nu_{k,j} = \lambda_0 - \frac{V_k}{c_j \sigma_{k,j}^2}$.
Furthermore, complementary slackness requires
$\lambda_0 (\sum_{k,j} \omega_{k,j} - 1) = 0$ and $\nu_{k,j} \omega_{k,j} = 0$ for all $k,j$.

For a fixed $k \in [K]$, from the dual feasibility $\nu_{k,j} \geq 0$, it follows that $V_k \leq \lambda_0 c_j \sigma_{k,j}^2$ for all $j\in[J]$, which implies that $V_k \leq \lambda_0 \min_{j\in[J]} c_j \sigma_{k,j}^2$.
If $\omega_{k,j} > 0 $, we have $\nu_{k,j}=0$.
For this $j = j^*(k)$.
\begin{equation}
    \frac{p}{2} \left( \frac{\omega_{k,j^*(k)}}{c_{j^*(k)}\sigma_{k,j^*(k)}^2} \right)^{-\frac{p+2}{2}}= \lambda_0 c_{j^*(k)} \sigma_{k,j^*(k)}^2,
\end{equation}
which implies that 
\begin{equation}
     \omega_{k,j^*(k)}= \left(\frac{2}{p} \lambda_0\right)^{-\frac{2}{p+2}} \left( c_{j^*(k)} \sigma_{k,j^*(k)}^2\right)^{\frac{p}{p+2}}.
\end{equation}
Substituting this into the budget constraint, which is binding, then we get:
\begin{equation}
     \left(\frac{2}{p} \lambda_0\right)^{-\frac{2}{p+2}} \sum_{k\in[K]} \left( c_{j^*(k)} \sigma_{k,j^*(k)}^2\right)^{\frac{p}{p+2}} = 1.
\end{equation}
Therefore, the optimal allocation is  
\begin{equation}
    w_{k, j}^* = \frac{{\color{blue}\indicator[j = j^*(k)]} \left( c_{j^*(k)} \sigma_{k, j^*(k)}^2\right)^{\frac{p}{p+2}}}{\sum_{k'\in[K]} \left(c_{j^*(k')} \sigma_{k',j^*(k')}^2 \right)^{\frac{p}{p+2}}}
\end{equation}
where $j^*(k) \triangleq \argmin_{j \in [J]} c_j \sigma_{k,j}^2$; and the optimal value of the $\gA_p^*(\bm\sigma, \vc)$ is 
\begin{align}
    \left(\sum_{k \in [K]} \left( \sum_{j\in[J]} \frac{\omega_{k,j}^*}{c_j \sigma_{k,j}^2} \right)^{-\frac{p}{2}}\right)^{\frac{2}{p}} & = \left(\sum_{k \in [K]} \left(  \frac{\omega_{k,j^*(k)}^*}{c_{j^*(k)} \sigma_{k,j^*(k)}^2} \right)^{-\frac{p}{2}}\right)^{\frac{2}{p}} \\
    & = \left(\sum_{k \in [K]} \left( \frac{\left( c_{j^*(k)} \sigma_{k, j^*(k)}^2\right)^{-\frac{2}{p+2}}}{\sum_{k'\in[K]} \left(c_{j^*(k')} \sigma_{k',j^*(k')}^2 \right)^{\frac{p}{p+2}}} \right)^{-\frac{p}{2}}\right)^{\frac{2}{p}}\\
    & = \left( \frac{\sum_{k \in [K]} \left( c_{j^*(k)} \sigma_{k, j^*(k)}^2\right)^{\frac{p}{p+2}}}{\left( \sum_{k'\in[K]} \left(c_{j^*(k')} \sigma_{k',j^*(k')}^2 \right)^{\frac{p}{p+2}} \right)^{-\frac{p}{2}}} \right)^{\frac{2}{p}}\\
    & = \left( \sum_{k \in [K]} \left( c_{j^*(k)} \sigma_{k, j^*(k)}^2\right)^{\frac{p}{p+2}} \right)^{\frac{p+2}{p}}.
\end{align}
When $p=\infty$, we treat as $p/(p+2) = 1$ and $(p+2)/p=1$ by convention.
Therefore, this final expression can express the optimal allocation $\omega_{k,j}^*$ and minimum value $\gA_p^*$ for all $p\in[1,\infty]$.

%% file: 902Proof-Unknown.tex
\section{\texorpdfstring{Proof of \Cref{thm:est-ivwe}: Error Rate of \textbf{Est-IVWE}}{Proof of Theorem 4.1: Error Rate of \textbf{Est-IVWE}}}
\label{app:upper-bound-unknown}

\subsection{Dealing with Variance Mismatch during Aggregation}
To show the effect of $N_0$ and $\tau$, we will tune them as we go along.
Let $\overline{\sigma}_{k,j} \coloneq \tau + \hat{\sigma}_{k,j}$.
Then the algorithm computes the allocation $\widehat{\omega}_{k,j}^*$ as
\begin{equation}
    \widehat{\omega}_{k,j}^* = \frac{\indicator[j = \hat{j}^*(k)] \left( c_{\hat{j}^*(k)} \overline{\sigma}_{k,\hat{j}^*(k)}^2 \right)^{\frac{p}{p+2}}}{\sum_{k\in[K]} \left( c_{\hat{j}^*(k)} \overline{\sigma}_{k,\hat{j}^*(k)}^2 \right)^{\frac{p}{p+2}}},
\end{equation}
where $\hat{j}^*(k) \coloneq \argmin_{j \in [J]} c_j \overline{\sigma}_{k,j}^2$, and $\widehat{N}_k \coloneq \frac{B' \widehat{\omega}_{k,\hat{j}^*(k)}^*}{c_{\hat{j}^*(k)}}$.

We first follow the proof of \Cref{prop:upper-bound-allocation} while accounting for the variance mismatch: the empirical variance used in \textbf{Est-IVWE} does not match the true variance.
Applying the Bernstein's inequality to $\hat{s}_k - s_k = \frac{1}{\widehat{N}_k} \sum_{i=1}^{\widehat{N}_k} \left( s_{k,\hat{j}^*(k)}^{(i)} - s_k \right)$, the following holds with probability at least $1 - \delta$: for all $k \in [K]$,
\begin{equation}
    |\hat{s}_k - s_k| \leq \sqrt{\frac{2 \sigma_{k,\hat{j}^*(k)}^2 \log\frac{2K}{\delta}}{\widehat{N}_k}} + \frac{R \log\frac{2K}{\delta}}{3 \widehat{N}_k}.
\end{equation}
By Minkowski's inequality, we then have
\begin{align}
    \varepsilon_p &\leq \sqrt{2 \log\frac{2K}{\delta}} \left( \sum_{k \in [K]} \left( \frac{\widehat{N}_k}{\sigma_{k,\hat{j}^*(k)}^2} \right)^{-\frac{p}{2}} \right)^{\frac{1}{p}} + \frac{R \log\frac{2K}{\delta}}{3} \left( \sum_{k \in [K]} \widehat{N}_k^{-p} \right)^{\frac{1}{p}} \\
    &= \sqrt{\frac{2 \log\frac{2K}{\delta}}{B'}} \underbrace{\left( \sum_{k \in [K]} \left( \frac{\widehat{\omega}_{k,\hat{j}^*(k)}^*}{c_{\hat{j}^*(k)} \sigma_{k,\hat{j}^*(k)}^2} \right)^{-\frac{p}{2}} \right)^{\frac{1}{p}}}_{(a)} + \frac{R \log\frac{2K}{\delta}}{3 B'} \underbrace{\left( \sum_{k \in [K]} \left( \frac{\widehat{\omega}_{k,\hat{j}^*(k)}^*}{c_{\hat{j}^*(k)}} \right)^{-p} \right)^{\frac{1}{p}}}_{(b)}.
\end{align}

We bound each $(a)$ and $(b)$ separately.
We utilize the useful confidence bound on variance estimates given by~\citet[Theorem 10]{maurer-pontil}.
By applying a union bound across all query-judge pairs, we obtain the following result based on the empirical Bernstein inequality:
\begin{lemma}[Theorem 10 of \citet{maurer-pontil}]
\label{lem:empirical-bernstein}
    With $\hat{\sigma}_{k,j}^2$ as in line 3 of Algorithm~\ref{alg:estimate-then-ivwe},
    \begin{equation}
        \sP\left( \left| \hat{\sigma}_{k,j} - \sigma_{k,j} \right| \leq R \sqrt{\frac{2 \log\frac{2 K J}{\delta}}{N_0 - 1}}, \quad \forall (k, j) \in [K] \times [J] \right) \geq 1 - \delta.
    \end{equation}
\end{lemma}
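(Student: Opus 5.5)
The plan is to apply \citet[Theorem 10]{maurer-pontil} to each pair $(k,j)$ separately and then take a union bound over the $KJ$ pairs.

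\textbf{Normalization.} For a fixed pair, rescale the Phase~I samples to $X_n \coloneq s_{k,j}^{(n)}/R \in [0,1]$, for $n \in [N_0]$. These are i.i.d.\ with variance $\sigma_{k,j}^2/R^2$. The quantity in line~3 of Algorithm~\ref{alg:estimate-then-ivwe} is the pairwise U-statistic form of the sample variance. Indeed, summing over ordered pairs $n\neq n'$ gives
\begin{equation}
\frac{1}{2N_0(N_0-1)}\sum_{n\neq n'}(s^{(n)}-s^{(n')})^2=\frac{1}{N_0-1}\sum_{n}(s^{(n)}-\bar s)^2\cdot\frac{1}{1}\cdot\frac{N_0}{N_0}\,\Big/\,1 ,
\end{equation}
up to the normalization convention. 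So $\hat\sigma_{k,j}^2/R^2$ coincides with the estimator $V_n$ of Maurer--Pontil, whose $V_n$ is exactly $\frac{1}{2n(n-1)}\sum_{i,j}(X_i-X_j)^2$. I will check that this normalization matches theirs. If it is off by a constant, the constant is absorbed into the stated radius.

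\textbf{Two-sided bound on the standard deviation.} Theorem~10 of \citet{maurer-pontil} gives the following two one-sided statements, each holding with probability at least $1-\delta'$:
\begin{equation}
\sigma/R \le \sqrt{V_n} + \sqrt{\tfrac{2\log(1/\delta')}{n-1}},
\end{equation}
and, in the other direction,
\begin{equation}
\sqrt{V_n} \le \sigma/R + \sqrt{\tfrac{2\log(1/\delta')}{n-1}}.
\end{equation}
The second follows from the same self-bounding concentration argument, since $V_n$ is a self-bounded function. Combining the two with $\delta'$ each yields a two-sided bound with failure probability $2\delta'$. Multiplying through by $R$ gives
\begin{equation}
|\hat\sigma_{k,j}-\sigma_{k,j}| \le R\sqrt{\frac{2\log(1/\delta')}{N_0-1}}
\end{equation}
with probability at least $1-2\delta'$.

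\textbf{Union bound.} Set $\delta' = \delta/(2KJ)$, so each pair fails with probability at most $\delta/(KJ)$. A union bound over all $KJ$ pairs then gives total failure probability at most $\delta$, with radius $R\sqrt{2\log(2KJ/\delta)/(N_0-1)}$ as claimed.

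\textbf{Main obstacle.} The only delicate step is confirming that Maurer--Pontil's theorem supplies the two-sided deviation at rate $\sqrt{2\log(1/\delta')/(n-1)}$ for the square root of the variance. Their Theorem~10 states both directions, each via the self-bounding property of $nV_n$. If the upper-tail direction is not stated there in exactly this form, I would derive it from their Theorem~7 (concentration of self-bounding functions), applied to $Z=(n-1)V_n/\sigma^2$-type quantities. The derivation takes square roots through the inequality $\sqrt{a}-\sqrt{b}\le \sqrt{a-b}$ for $a \ge b \ge 0$. This costs at most constant factors, which the stated radius already accommodates.
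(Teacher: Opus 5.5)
Your proposal is correct and matches the paper's route: rescale the Phase~I scores to $[0,1]$, apply both one-sided bounds of \citet[Theorem 10]{maurer-pontil}, and take a union bound over the $2KJ$ one-sided events with $\delta'=\delta/(2KJ)$. That theorem does state both directions with exactly the radius $\sqrt{2\log(1/\delta')/(n-1)}$. One caveat: your fallback claims that a normalization mismatch or a lossy re-derivation would be ``absorbed into the stated radius'' are not valid. The radius has an explicit constant, and a multiplicative error in $\hat\sigma^2$ would leave a bias that does not vanish as $N_0$ grows. Neither fallback is needed, though: line~3's estimator equals $\frac{1}{N_0-1}\sum_n (s^{(n)}-\bar s)^2$, your garbled display is in fact an exact identity, and this is exactly Maurer--Pontil's $V_n$.
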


We then derive the following lemma that sandwiches our optimistic biased estimates $\overline{\sigma}_{k,j}$'s with the true $\sigma_{k,j}$'s, up to $\tau$:
\begin{lemma}\label{lem:easy-hard}
    Suppose that $\tau \geq R \sqrt{\frac{2}{N_0 - 1} \log\frac{2 K J}{\delta}}$. Then,
    \begin{equation}
        \sP\left( \sigma_{k,j} \leq \overline{\sigma}_{k,j} \leq \sigma_{k,j} + 2 \tau, \ \forall (k, j) \in [K] \times [J] \right) \geq 1 - \delta.
    \end{equation}
\end{lemma}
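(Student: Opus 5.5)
The plan is to derive \Cref{lem:easy-hard} directly from \Cref{lem:empirical-bernstein} by working on a single good event. Let $\gE$ denote the event
\begin{equation}
    \gE \coloneq \left\{ \left| \hat{\sigma}_{k,j} - \sigma_{k,j} \right| \leq R \sqrt{\frac{2 \log\frac{2KJ}{\delta}}{N_0 - 1}}, \ \forall (k,j) \in [K]\times[J] \right\}.
\end{equation}
By \Cref{lem:empirical-bernstein}, which already includes the union bound over all $KJ$ pairs, we have $\sP(\gE) \geq 1 - \delta$. It therefore suffices to show that, on $\gE$, the two-sided sandwich holds deterministically for every pair $(k,j)$.

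Fix $(k,j)$ and write $r \coloneq R\sqrt{\frac{2}{N_0-1}\log\frac{2KJ}{\delta}}$. By hypothesis $r \leq \tau$. On $\gE$ we have $\sigma_{k,j} - r \leq \hat{\sigma}_{k,j} \leq \sigma_{k,j} + r$.

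For the lower bound, $\overline{\sigma}_{k,j} = \tau + \hat{\sigma}_{k,j} \geq \tau + \sigma_{k,j} - r \geq \sigma_{k,j}$, using $\tau \geq r$.

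For the upper bound, $\overline{\sigma}_{k,j} \leq \tau + \sigma_{k,j} + r \leq \sigma_{k,j} + 2\tau$, again using $r \leq \tau$.

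Since both inequalities hold simultaneously for all pairs on $\gE$, the claimed probability bound follows.

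The only point requiring care is confirming that \Cref{lem:empirical-bernstein} is stated for the square root $\hat{\sigma}_{k,j}$ of the U-statistic variance estimator in Line~3, with the $2KJ$ union-bound factor already absorbed. This is the form of \citet[Theorem 10]{maurer-pontil} for $[0,R]$-valued samples after rescaling by $R$, applied with confidence $\delta/(KJ)$ per pair and two-sided. Given that lemma as stated, no further obstacle remains.

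Note that the choice $\tau = R\sqrt{\frac{2}{N_0-1}\log\frac{4KJ}{\delta}}$ in \Cref{thm:est-ivwe} satisfies the hypothesis, since $\log\frac{4KJ}{\delta} \geq \log\frac{2KJ}{\delta}$. This is consistent with spending $\delta/2$ of the failure budget here and $\delta/2$ on the Bernstein step for the means.
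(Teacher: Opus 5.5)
Your proof is correct and follows the paper's argument: both work on the single event of \Cref{lem:empirical-bernstein} and use $\tau \geq R\sqrt{\frac{2}{N_0-1}\log\frac{2KJ}{\delta}}$ to absorb the concentration radius in each direction. The paper also splits pairs into ``easy'' ($\sigma_{k,j} \leq \tau$) and ``hard'' ($\sigma_{k,j} > \tau$) cases, using $\hat{\sigma}_{k,j} \geq 0$ for the lower bound on easy pairs; your direct argument shows this split is unnecessary for the lemma itself.
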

\begin{proof}
    Here, we divide the pairs into ``easy'' ones where $\sigma_{k,j} \leq \tau$, and ``hard'' ones where $\sigma_{k,j} > \tau$.
    For the easy pairs, we have that
    \begin{equation}
        \sigma_{k,j} \leq \tau \leq \overline{\sigma}_{k,j}.
    \end{equation}
    For the hard pairs, we have that by the empirical Bernstein inequality (\Cref{lem:empirical-bernstein}), with probability at least $1 - \delta$,
    \begin{equation}
        \sigma_{k,j} \leq \hat{\sigma}_{k,j} + R \sqrt{\frac{2}{N_0 - 1} \log\frac{2 K J}{\delta}} = \overline{\sigma}_{k,j} - \tau + R \sqrt{\frac{2}{N_0 - 1} \log\frac{2 K J}{\delta}}.
    \end{equation}
    Thus, as long as $\tau \geq R \sqrt{\frac{2}{N_0 - 1} \log\frac{2 K J}{\delta}}$, we have that $\sigma_{k,j} \leq \overline{\sigma}_{k,j}$.
    The other direction follows directly from our choice of $\tau$ and empirical Bernstein inequality (\Cref{lem:empirical-bernstein}), again.
\end{proof}
From the above lemma, we choose $\color{red}\tau = R \sqrt{\frac{2}{N_0 - 1} \log\frac{2 K J}{\delta}}$.

\subsection{\texorpdfstring{Bounding $(a)$}{Bounding (a)}}
Substituting the definition of $\widehat{\omega}_{k,\hat{j}^*(k)}^*$ into the term $(a)$, we get
\begin{align}
    (a) &= \left( \sum_{k \in [K]} \left( \frac{1}{c_{\hat{j}^*(k)} \sigma_{k,\hat{j}^*(k)}^2} \cdot \frac{\left( c_{\hat{j}^*(k)} \overline{\sigma}_{k,\hat{j}^*(k)}^2 \right)^{\frac{p}{p+2}}}{\sum_{k\in[K]} \left( c_{\hat{j}^*(k)} \overline{\sigma}_{k,\hat{j}^*(k)}^2 \right)^{\frac{p}{p+2}}} \right)^{-\frac{p}{2}} \right)^{\frac{1}{p}} \\
    &= \sqrt{\sum_{k\in[K]} \left( c_{\hat{j}^*(k)} \overline{\sigma}_{k,\hat{j}^*(k)}^2 \right)^{\frac{p}{p+2}}}
    \left( \sum_{k \in [K]} \frac{\sigma_{k,\hat{j}^*(k)}^p}{\overline{\sigma}_{k,\hat{j}^*(k)}^p} \left( c_{\hat{j}^*(k)} \overline{\sigma}_{k,\hat{j}^*(k)}^2 \right)^{\frac{p}{p+2}} \right)^{\frac{1}{p}} \\
    &\leq \sqrt{\sum_{k\in[K]} \left( c_{j^*(k)} \overline{\sigma}_{k,j^*(k)}^2 \right)^{\frac{p}{p+2}}}
    \left( \sum_{k \in [K]} \frac{\sigma_{k,\hat{j}^*(k)}^p}{\overline{\sigma}_{k,\hat{j}^*(k)}^p} \left( c_{j^*(k)} \overline{\sigma}_{k,j^*(k)}^2 \right)^{\frac{p}{p+2}} \right)^{\frac{1}{p}}, \tag{Minimality of $\hat{j}^*(k)$}
\end{align}
where $j^*(k) \triangleq \argmin_{j \in [J]} c_j \sigma_{k,j}^2$ is the true optimal judge per query; it holds that $c_{\hat{j}^*(k)} \overline{\sigma}_{k,\hat{j}^*(k)}^2\leq c_{j^*(k)} \overline{\sigma}_{k,j^*(k)}^2$ by construction of \Cref{alg:estimate-then-ivwe}.
With these, we have that with probability at least $1 - 2 \delta$,
\begin{align}
    (a) &\leq \sqrt{\sum_{k\in[K]} \left( c_{j^*(k)} \overline{\sigma}_{k,j^*(k)}^2 \right)^{\frac{p}{p+2}}}
    \left( \sum_{k \in [K]} \left( c_{j^*(k)} \overline{\sigma}_{k,j^*(k)}^2 \right)^{\frac{p}{p+2}} \right)^{\frac{1}{p}} \\
    &= \left( \sum_{k\in[K]} \left( c_{j^*(k)} \overline{\sigma}_{k,j^*(k)}^2 \right)^{\frac{p}{p+2}} \right)^{\frac{p+2}{2p}} \label{eqn:empirical-A} \\
    &\leq \left( \sum_{k\in[K]} \left( \sqrt{c_{j^*(k)}} \sigma_{k,j^*(k)} + 2 \tau \sqrt{c_{j^*(k)}} \right)^{\frac{2p}{p+2}} \right)^{\frac{p+2}{2p}}. \tag{\Cref{lem:easy-hard}}
\end{align}
Let us denote $q := \frac{2p}{p + 2}$.
When $p \geq 2$, we have that $q \geq 1$, and thus by Minkowski's inequality of $L_q$,
\begin{align}
    &\left( \sum_{k\in[K]} \left( \sqrt{c_{j^*(k)}} \sigma_{k,j^*(k)} + 2 \tau \sqrt{c_{j^*(k)}} \right)^q \right)^{\frac{1}{q}} \\
    &\leq \underbrace{\left( \sum_{k\in[K]} \left( \sqrt{c_{j^*(k)}} \sigma_{k,j^*(k)} \right)^q \right)^{\frac{1}{q}}}_{= \sqrt{\gA_p^*(\bm\sigma, \vc)}} + \left( \sum_{k\in[K]} \left( 2 \tau \sqrt{c_{j^*(k)}} \right)^q \right)^{\frac{1}{q}} \\
    &= \sqrt{\gA_p^*(\bm\sigma, \vc)} + 2 \tau \underbrace{\left( \sum_{k\in[K]} \left( \sqrt{c_{j^*(k)}} \right)^q \right)^{\frac{1}{q}}}_{\triangleq \bignorm{\sqrt{\vc^*}}_{q}}.
\end{align}
When $1 \leq p < 2$, we have that $q \in [2/3, 1)$.
Then,
\begin{align}
    &\left( \sum_{k\in[K]} \left( \sqrt{c_{j^*(k)}} \sigma_{k,j^*(k)} + 2 \tau \sqrt{c_{j^*(k)}} \right)^q \right)^{\frac{1}{q}} \\
    &\leq \left( \sum_{k\in[K]} \left( \sqrt{c_{j^*(k)}} \sigma_{k,j^*(k)} \right)^q + \sum_{k\in[K]} \left( 2 \tau \sqrt{c_{j^*(k)}} \right)^q \right)^{\frac{1}{q}} \tag{subadditivity of $L_q$ when $q \in (0, 1)$} \\
    &\leq \underbrace{\left( \sum_{k\in[K]} \left( \sqrt{c_{j^*(k)}} \sigma_{k,j^*(k)} \right)^q \right)^{\frac{1}{q}}}_{= \sqrt{\gA_p^*(\bm\sigma, \vc)}} \\
    &\quad\quad + \frac{1}{q} \left( \sum_{k\in[K]} \left( \sqrt{c_{j^*(k)}} \sigma_{k,j^*(k)} \right)^q + \sum_{k\in[K]} \left( 2 \tau \sqrt{c_{j^*(k)}} \right)^q \right)^{\frac{1}{q} - 1} \sum_{k\in[K]} \left( 2 \tau \sqrt{c_{j^*(k)}} \right)^q \tag{$x \mapsto x^{1/q}$ is convex for $q \in (0, 1)$, For convex $f$, $f(x + y) \leq f(x) + f'(x + y) y$} \\
    &\leq \sqrt{\gA_p^*( \bm\sigma, \vc)} + \frac{1}{q} \left\{ \left( \sum_{k\in[K]} \left( \sqrt{c_{j^*(k)}} \sigma_{k,j^*(k)} \right)^q \right)^{\frac{1}{q} - 1} + (2 \tau)^{1 - q} \bignorm{\sqrt{\vc^*}}_q^{1-q} \right\} (2 \tau)^q \bignorm{\sqrt{\vc^*}}_q^q \tag{$x \mapsto x^{\frac{1}{q} - 1}$ is subadditive for $q \in [2/3, 1)$} \\
    &= \sqrt{\gA_p^*(\bm\sigma, \vc)} + \underbrace{\frac{\bignorm{\sqrt{\vh^*}}_q^{1-q} 2^q \bignorm{\sqrt{\vc^*}}_q^q}{q}}_{\triangleq C_{p,2}^*} \tau^q + \underbrace{\frac{2 \bignorm{\sqrt{\vc^*}}_q}{q}}_{\triangleq C_{p,1}^*} \tau. \tag{$\sqrt{\vh^*} \coloneq \left( \sqrt{c_{j^*(k)}} \sigma_{k,j^*(k)} \right)_{k \in [K]}$}
\end{align}
We also abuse the notation slightly and denote $C_{p,1}^* = 2 \bignorm{\sqrt{\vc^*}}_q$ for $p \geq 2$.

Combining the two cases, we have that
\begin{equation}
    (a) \leq \sqrt{\gA_p^*(\bm\sigma, \vc)} + C_{p,1}^* \tau + \indicator[1 \leq p < 2] C_{p,2}^* \tau^{\frac{2p}{p+2}}.
\end{equation}

\subsection{\texorpdfstring{Bounding $(b)$}{Bounding (b)}}
Again substituting in the definition of $\widehat{\omega}_{k,\hat{j}^*(k)}^*$, we go through a somewhat different (from when we proved $(a)$) series of algebraic manipulations as follows:
\begin{align}
    (b) &= \left( \sum_{k \in [K]} \left( \frac{1}{c_{\hat{j}^*(k)}} \frac{\left( c_{\hat{j}^*(k)} \overline{\sigma}_{k,\hat{j}^*(k)}^2 \right)^{\frac{p}{p+2}}}{\sum_{k\in[K]} \left( c_{\hat{j}^*(k)} \overline{\sigma}_{k,\hat{j}^*(k)}^2 \right)^{\frac{p}{p+2}}} \right)^{-p} \right)^{\frac{1}{p}} \\
    &= \left( \sum_{k\in[K]} \left( c_{\hat{j}^*(k)} \overline{\sigma}_{k,\hat{j}^*(k)}^2 \right)^{\frac{p}{p+2}} \right)
    \left( \sum_{k \in [K]} \left( \frac{c_{\hat{j}^*(k)}}{\overline{\sigma}_{k,\hat{j}^*(k)}^p} \right)^{\frac{2p}{p+2}} \right)^{\frac{1}{p}} \\
    &\leq \left( \sum_{k\in[K]} \left( c_{j^*(k)} \overline{\sigma}_{k,j^*(k)}^2 \right)^{\frac{p}{p+2}} \right)
    \left( \sum_{k \in [K]} \left( \frac{c_{j^*(k)} \overline{\sigma}_{k,j^*(k)}^2}{\overline{\sigma}_{k,\hat{j}^*(k)}^{p+2}} \right)^{\frac{2p}{p+2}} \right)^{\frac{1}{p}} \tag{Minimality of $\hat{j}^*(k)$} \\
    &\leq \tau^{-2} \left( \sum_{k\in[K]} \left( c_{j^*(k)} \overline{\sigma}_{k,j^*(k)}^2 \right)^{\frac{p}{p+2}} \right)
    \left( \sum_{k \in [K]} \left( c_{j^*(k)} \overline{\sigma}_{k,j^*(k)}^2 \right)^{\frac{2p}{p+2}} \right)^{\frac{1}{p}} \tag{$\overline{\sigma}_{k,\hat{j}^*(k)} \geq \tau$ by \Cref{lem:easy-hard}} \\
    &\leq \tau^{-2} \left( \sum_{k\in[K]} \left( c_{j^*(k)} \overline{\sigma}_{k,j^*(k)}^2 \right)^{\frac{p}{p+2}} \right)
    \left( \left( \sum_{k \in [K]} \left( c_{j^*(k)} \overline{\sigma}_{k,j^*(k)}^2 \right)^{\frac{p}{p+2}} \right)^2 \right)^{\frac{1}{p}} \tag{$\sum_k y_k^2 \leq (\sum_k y_k)^2$ for $y_k \geq 0$} \\
    &= \left\{ \tau^{-1} \underbrace{\left( \sum_{k\in[K]} \left( c_{j^*(k)} \overline{\sigma}_{k,j^*(k)}^2 \right)^{\frac{p}{p+2}} \right)^{\frac{p+2}{2p}}}_{\text{same as Eqn.~\eqref{eqn:empirical-A}!}} \right\}^2.
\end{align}

Thus, borrowing our computations from bounding $(a)$ and using $(a + b + c)^2 \leq 3(a^2 + b^2 + c^2)$, we have that
\begin{equation}
    (b) \leq 3 \left( \gA_p^*(\bm\sigma, \vc) \tau^{-2} + (C_{p,1}^*)^2 + \indicator[1 \leq p < 2] (C_{p,2}^*)^2 \tau^{\frac{2p - 4}{p + 2}} \right).
\end{equation}

\subsection{Combining Everything}

We have set $\tau = R \sqrt{\frac{2}{N_0 - 1} \log\frac{2 K J}{\delta}}$, and we denote $C \coloneq K \sum_{j\in[J]} c_j$.
For simplicity, we divide the computation into two parts.
In both cases, we assume that $B \geq 2 C N_0$.

\subsubsection{\texorpdfstring{$p \geq 2$}{p is greater than or equal to 2}}
Substituting $B' = B - N_0 K \sum_{j\in[J]} c_j$,
\begin{align}
    \varepsilon_p &\leq \sqrt{\frac{2 \log\frac{2K}{\delta}}{B - C N_0}} \left( \sqrt{\gA_p^*(\bm\sigma, \vc)} + C_{p,1}^* R \sqrt{\frac{2}{N_0 - 1} \log\frac{2 K J}{\delta}} \right) \\
    &\quad + \frac{R \log\frac{2K}{\delta}}{B - C N_0} \left( \gA_p^*(\bm\sigma, \vc) \left( R \sqrt{\frac{2}{N_0 - 1} \log\frac{2 K J}{\delta}} \right)^{-2} + (C_{p,1}^*)^2 \right) \\
    &\leq \sqrt{\frac{2 \gA_p^*(\bm\sigma, \vc) \log\frac{2K}{\delta}}{B - C N_0}} + R C_{p,1}^* \log\frac{2 K J}{\delta} \sqrt{\frac{1}{N_0(B - C N_0)}} \\
    &\quad + \frac{\gA_p^*(\bm\sigma, \vc)}{R} \frac{N_0}{B - C N_0} + \frac{(C_{p,1}^*)^2 R \log\frac{2K}{\delta}}{B - C N_0}
\end{align}

Note that
\begin{equation}
    \frac{1}{\sqrt{B - C N_0}} = \frac{1}{\sqrt{B}}\left( 1 - \frac{C N_0}{B} \right)^{-\frac{1}{2}}
    \overset{(*)}{\leq} \frac{1}{\sqrt{B}}\left( 1 + \frac{C N_0}{B} \right)
    = \frac{1}{\sqrt{B}} + \frac{C N_0}{B^{\frac{3}{2}}},
\end{equation}
where $(*)$ follows from the simple algebraic inequality: $(1 - x)^{-1/2} \leq 1 + x$ for $x \in [0, 1/2]$ (\Cref{lem:algebraic-inequality}, whose proof we provide at the end for completeness).

With the above inequality and $B \geq 2 C N_0$, we have that
\begin{align}
    \varepsilon_p &\leq \sqrt{\frac{2 \gA_p^*(\bm\sigma, \vc) \log\frac{2K}{\delta}}{B}} + {\color{red} R C_{p,1}^* \log\frac{2 K J}{\delta} \sqrt{\frac{2}{N_0 B}} + \frac{\gA_p^*(\bm\sigma, \vc)}{R} \frac{2 N_0}{B}} \\
    &\quad + \frac{2 (C_{p,1}^*)^2 R \log\frac{2K}{\delta}}{B} + \frac{C N_0 \sqrt{2 \gA_p^*(\bm\sigma, \vc) \log\frac{2K}{\delta}}}{B^{\frac{3}{2}}}.
\end{align}
A nearly optimal choice of $N_0$ here is attained by balancing\footnote{Here, we take into consideration the fact that $\gA_p^*(\bm\sigma, \vc)$ and $C_{p,1}^*$ are unknown to the learner. And even if the learner does have even an approximate knowledge of these terms, as this constitutes the lower-order terms, this is rather insignificant.} the {\color{red}red} terms:
\begin{equation}
    N_0 = 2^{-\frac{1}{3}} \left( R^2 \log\frac{2 K J}{\delta} \right)^{\frac{2}{3}} B^{\frac{1}{3}},
\end{equation}
which yields
\begin{align}
    \varepsilon_p &\leq \sqrt{\frac{2 \gA_p^*(\bm\sigma, \vc) \log\frac{2K}{\delta}}{B}} + \frac{2^{\frac{2}{3}} R^{\frac{1}{3}} \left( C_{p,1}^* + \gA_p^*(\bm\sigma, \vc) \right) \left( \log\frac{2 K J}{\delta} \right)^{\frac{2}{3}}}{B^{\frac{2}{3}}} \\
    &\quad + \frac{2 (C_{p,1}^*)^2 R \log\frac{2K}{\delta}}{B} + \frac{2^{\frac{1}{6}} R^{\frac{4}{3}} C \left( \log\frac{2 K J}{\delta} \right)^{\frac{7}{6}} \sqrt{\gA_p^*( \bm\sigma, \vc)}}{B^{\frac{7}{6}}}\\
    & = \sqrt{\frac{2 \gA_p^*(\bm\sigma, \vc) \log\frac{2K}{\delta}}{B}} + \gO \left( \frac{R^{\frac{1}{3}}\gA_p^*(\bm\sigma, \vc) \left(\log \frac{2KJ}{\delta}\right)^{\frac{2}{3}}}{B^{\frac{2}{3}}}\right).
\end{align}

\subsubsection{\texorpdfstring{$1 \leq p < 2$}{p is greater than or equal to 1 and less than 2}}
Again substituting $B' = B - N_0 C$ and using $\tau^{\frac{2p}{p+2}} \geq \tau$ as $\tau \in (0, 1)$,
\begin{align}
    \varepsilon_p &\leq \sqrt{\frac{2 \log\frac{2K}{\delta}}{B - C N_0}} \left( \sqrt{\gA_p^*(\bm\sigma, \vc)} + (C_{p,1}^* + C_{p,2}^*) \left( R^2 \frac{2}{N_0 - 1} \log\frac{2 K J}{\delta} \right)^{\frac{p}{p+2}} \right) \\
    &\quad + \frac{2 R \log\frac{2K}{\delta}}{B - C N_0} \left( \gA_p^*(\bm\sigma, \vc) \left( R \sqrt{\frac{2}{N_0 - 1} \log\frac{2 K J}{\delta}} \right)^{-2} + (C_{p,1}^*)^2 \right) \\
    &\quad\quad + \frac{2 R \log\frac{2K}{\delta}}{B - C N_0} (C_{p,2}^*)^2 \left( R^2 \frac{2}{N_0 - 1} \log\frac{2 K J}{\delta} \right)^{\frac{p - 2}{p + 2}} \\
    &\leq \sqrt{\frac{2 \gA_p^*(\bm\sigma, \vc) \log\frac{2K}{\delta}}{B - C N_0}} + \frac{2^{\frac{3p+2}{2p+4}} R^{\frac{2p}{p+2}} (C_{p,1}^* + C_{p,2}^*) \left( \log\frac{2KJ}{\delta} \right)^{\frac{3p+2}{2p+4}}}{N_0^{\frac{p}{p+2}} (B - C N_0)^{\frac{1}{2}}} \\
    &\quad + \frac{\gA_p^*(\bm\sigma, \vc)}{R} \frac{N_0}{B - C N_0} + \frac{2 (C_{p,1}^*)^2 R \log\frac{2K}{\delta}}{B - C N_0} \\
    &\quad\quad + \frac{2^{\frac{2p}{p+2}} (C_{p,2}^*)^2 R^{\frac{3p - 2}{p + 2}} \left( \log\frac{2 K J}{\delta} \right)^{\frac{2p}{p+2}} N_0^{\frac{2 - p}{2 + p}}}{B - C N_0}.
\end{align}
Using the same inequality as above and $B \geq 2 C N_0$, we have that
\begin{align}
    \varepsilon_p &\leq \sqrt{\frac{2 \gA_p^*(\bm\sigma, \vc) \log\frac{2K}{\delta}}{B}} + {\color{red}\frac{2^{\frac{2p+2}{p+2}} R^{\frac{2p}{p+2}} (C_{p,1}^* + C_{p,2}^*) \left( \log\frac{2KJ}{\delta} \right)^{\frac{3p+2}{2p+4}}}{N_0^{\frac{p}{p+2}} B^{\frac{1}{2}}} + \frac{\gA_p^*(\bm\sigma, \vc)}{R} \frac{2 N_0}{B}} \\
    &\quad + \frac{4 (C_{p,1}^*)^2 R \log\frac{2K}{\delta}}{B} + \frac{(C_{p,2}^*)^2 2^{\frac{3p+2}{p+2}} R^{\frac{3p - 2}{p + 2}} \left( \log\frac{2 K J}{\delta} \right)^{\frac{2p}{p+2}} N_0^{\frac{2 - p}{2 + p}}}{B} \\
    &\quad\quad + \frac{C N_0 \sqrt{2 \gA_p^*(\bm\sigma, \vc) \log\frac{2K}{\delta}}}{B^{\frac{3}{2}}}.
\end{align}
Similarly, a nearly optimal choice of $N_0$ here is attained by balancing the {\color{red}red} terms:
\begin{equation}
    N_0 = 2^{\frac{p}{2p+2}} R^{\frac{3p+2}{2p+2}} \left( \log\frac{2KJ}{\delta} \right)^{\frac{3p+2}{4p+4}} B^{\frac{p+2}{4p+4}},
\end{equation}
which yields
\begin{align}
    \varepsilon_p &\leq \sqrt{\frac{2 \gA_p^*(\bm\sigma, \vc) \log\frac{2K}{\delta}}{B}} + \frac{2^{\frac{3p+2}{2p+2}} \left(C_{p,1}^* + C_{p,2}^* + \gA_p^*(\bm\sigma, \vc) \right) R^{\frac{p}{2p+2}} \left( \log\frac{2KJ}{\delta} \right)^{\frac{3p+2}{4p+4}}}{B^{\frac{3p+2}{4p+4}}} \\
    &\quad + \frac{4 (C_{p,1}^*)^2 R \log\frac{2K}{\delta}}{B} + \cdots\footnotemark\\
    & = \sqrt{\frac{2 \gA_p^*(\bm\sigma, \vc) \log\frac{2K}{\delta}}{B}} + \gO\left( \frac{R^{\frac{p}{2p+2}} \gA_p^*(\bm\sigma, \vc) \left(\log \frac{2KJ}{\delta}\right)^{\frac{3p+2}{4p+4}}}{B^{\frac{3p+2}{4p+4}}}\right).
\end{align}
Since this bound holds with probability at least $1- 2\delta$, the desired result follows by rescaling $\delta$ to $\delta/2$.

\footnotetext{Due to its insignificance and awful form, we omit the further lower order terms, which are fourth and fifth terms with $N_0$ substituted in.}
\qed

We conclude with the proof of the simple algebraic inequality that we have utilized:
\begin{lemma}\label{lem:algebraic-inequality}
    For $x \in [0, 1/2]$, it holds that $(1-x)^{-1/2} \leq 1 + x$.
\end{lemma}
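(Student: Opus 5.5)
We need to show $(1-x)^{-1/2} \le 1+x$ for $x\in[0,1/2]$. Both sides are positive on this interval, so the claim is equivalent to $1 \le (1+x)^2(1-x)$. The plan is to expand the right-hand side and check its sign.

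Expanding gives
\begin{equation}
    (1+x)^2(1-x) = (1+x)(1-x^2) = 1 + x - x^2 - x^3.
\end{equation}
So it suffices to show $x - x^2 - x^3 = x(1 - x - x^2) \ge 0$ on $[0,1/2]$.

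Since $x \ge 0$, we only need $1 - x - x^2 \ge 0$. The map $x \mapsto 1-x-x^2$ is decreasing on $[0,\infty)$, so its minimum on $[0,1/2]$ is at $x=1/2$, where it equals $1 - 1/2 - 1/4 = 1/4 > 0$. Hence $(1+x)^2(1-x) \ge 1$. Taking the square root of both sides of $1/(1-x) \le (1+x)^2$ then yields the claim.

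There is no real obstacle here. The only care needed is to square legitimately, which is justified because both sides are positive. The constant $1/2$ is not sharp: the argument works up to the positive root $(\sqrt5-1)/2$ of $1-x-x^2$.
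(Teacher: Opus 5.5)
Your proof is correct, but it takes a different route from the paper. You clear the radical by squaring, which is legitimate since both sides are positive, and then multiply by $1-x>0$. This reduces the claim to the polynomial inequality $x(1-x-x^2)\ge 0$, which you check directly.

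The paper instead works with $f(x)=1+x-(1-x)^{-1/2}$ itself. It computes $f''(x)=-\tfrac34(1-x)^{-5/2}<0$, so $f$ is concave. Concavity together with $f(0)=0$ and $f(1/2)=\tfrac32-\sqrt2>0$ then forces $f\ge 0$ on $[0,1/2]$, since $f$ lies above the chord $x\mapsto 2x\,f(1/2)$.

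The calculus argument needs only the sign of $f''$ and two endpoint evaluations, and it does not depend on the exponent being a simple root that squaring removes. Your algebraic argument is fully elementary and avoids the paper's somewhat informal ``single mode'' reasoning. It also makes the maximal range of validity, $x\le(\sqrt5-1)/2$, explicit. The concavity argument yields the same range, but only implicitly, through the condition $f(a)\ge 0$ at the right endpoint.
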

\begin{proof}
Let's define a function $f(x) = 1+x - (1-x)^{-1/2}$ on $(-\infty, 1)$, then $f(0)=0$ and $f(0.5)>0$.
Its derivatives are 
\begin{equation}
    f'(x) = 1 - \frac{1}{2}(1-x)^{-3/2} \quad \text{and} \quad f''(x) = -\frac{3}{4} (1-x)^{-5/2},
\end{equation}
which imply that $f$ is concave, increasing at $x=0$ ($f'(0)>0$), and decreasing at $x=0.5$ ($f'(0.5)$).
Considering the shape of the function $f$, which has a single mode (maximum) in $(0, 1/2)$, $f(0) = 0$ and $f(0.5)>0$ imply that $f >0$ on $[0, 1/2]$.
\end{proof}

%% file: 903Proof-Lower-whp.tex
\section{\texorpdfstring{Proof of \Cref{thm:lower-bound-whp}: High-Probability Lower Bound}{Proof of Theorem 5.1: High-Probability Lower Bound}}
\label{app:whp-lower-bound}

The proof relies on the data processing inequality~\citep{garivier2019lower}, as utilized in \citet{jedra2023identification} to derive a high-probability minimax lower bound.\footnote{This is somewhat reminiscent of Birg\'{e}'s refinement of Fano's inequality~\citep{tsybakov,gerchinovitz2020fano}.}
The proof ``bifurcates'' at whether $1 \leq p < 2$ or $p \geq 2$.
For the former, we construct a single dense hard instance, while for the latter, we construct $K$ 1-sparse hard instances and optimize for the algorithm's success probabilities, which deviates from prior lower bound techniques~\citep{jedra2023identification}.

Throughout, the learner knows the given cost vector $\vc$ and the true variance profile $\bm\sigma$ with $\max_{k,j} \sigma_{k,j}^2 \leq \frac{R^2}{2}$.

\paragraph{Construction of Alternate Models.}
Let $\gA$ be any $(B, \delta, \ell_p)$-budget efficient algorithm (\Cref{def:budget-efficient}) with error rate $\varepsilon_p$, \emph{potentially adaptive}.
$\gA$ interacts with the true environment $\vs^\star$ sequentially\footnote{Even though the algorithm is batched, one could view it as sequentially observing i.i.d. samples. Note that this includes adaptive algorithms.} as follows at each timestep $t$, when the learner selects query $k_t$ and judge $j_t$, they observe a score $\tilde{s}_t$.
Let us denote $N_{k,j}(T) \coloneq \sum_t \indicator[k_t = k, j_t = j]$ as the number of times $\gA$ queries the pair $(k, j)$ throughout the interaction.
We note that the knapsack constraint is in place as follows:
\begin{equation}
    \sum_j c_j \sum_k N_{k,j}(T) \leq B.
\end{equation}

Let $(\vs^\star, \bm\sigma, \vc)$ be given and fixed, as we aim for an instance-wise lower bound.
The following proposition, whose proof is deferred to \Cref{app:beta-construction}, provides the desired construction of hypotheses while satisfying the necessary properties:
\begin{proposition}
\label{prop:beta-construction}
    There exists a family of probability measures $\sP_{\vs^\star + \bm\Delta} = \left( \sP_{\vs^\star + \bm\Delta, k, j} \right)_{k,j}$ indexed by perturbation $\bm\Delta = (\Delta_k)_{k \in [K]}$ with $\Delta_k \leq \frac{R_k}{4}$ that satisfies the following: for each $(k, j) \in [K] \times [J],$
    \begin{enumerate}
        \item[$(i)$] $\E[X] = s_k^\star + \Delta_k$ and $\Var[X] = \sigma_{k,j}^2$ for $X \sim \sP_{\vs^\star + \bm\Delta, k, j}$, and $\mathrm{supp}(\sP_{\vs^\star + \bm\Delta, k, j}) \subseteq [0, R]$,
        \item[$(ii)$] $\KL\left( \sP_{\vs^\star, k, j}, \sP_{\vs^\star + \bm\Delta, k, j} \right) \vee \KL\left( \sP_{\vs^\star + \bm\Delta, k, j}, \sP_{\vs^\star, k, j} \right) \leq C_1 \frac{\Delta_k^2}{\sigma_{k,j}^2}$, and
        \item[$(iii)$] $\KL\left( \sP_{\vs^\star + \bm\Delta, k, j}, \sP_{\vs^\star - \bm\Delta, k, j} \right) \vee \KL\left( \sP_{\vs^\star - \bm\Delta, k, j}, \sP_{\vs^\star + \bm\Delta, k, j} \right) \leq C_2 \frac{\Delta_k^2}{\sigma_{k,j}^2}$.
    \end{enumerate}
    where $C_1, C_2 > 0$ are absolute constants.
\end{proposition}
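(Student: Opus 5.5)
The plan is to realize every $\sP_{\vs^\star+\bm\Delta,k,j}$ as a rescaled Beta distribution. Its two parameters are tuned so that the mean is $s_k^\star+\Delta_k$ and the variance is exactly $\sigma_{k,j}^2$. Property $(i)$ then holds by construction, and $(ii)$--$(iii)$ become KL estimates between two members of a two-parameter exponential family.

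\textbf{Step 1 (construction).} For $m\in(0,R)$ and $v>0$, let $X=RY$ with $Y\sim\Beta(\alpha,\beta)$, where
\[
\nu \coloneq \alpha+\beta = \frac{m(R-m)}{v}-1,\qquad \alpha=\frac{\nu m}{R},\qquad \beta=\frac{\nu(R-m)}{R}.
\]
A direct computation gives $\E[X]=m$, $\Var[X]=v$ and $\mathrm{supp}\subseteq[0,R]$, provided $\nu>0$. We take $m=s_k^\star+\Delta_k$ and $v=\sigma_{k,j}^2$, with $|\Delta_k|\le R_k/4$. Then $m\ge \tfrac34 R_k$ and $R-m\ge \tfrac34 R_k$. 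Combined with $\sigma_{k,j}^2\le R_k^2/2$ and $R\ge 2R_k$, this gives $m(R-m)/v\ge c_0$ for an absolute constant $c_0>1$, hence $\nu\ge c_0-1>0$. The same bound makes $\alpha,\beta$ bounded below by absolute constants. More precisely, $\alpha\asymp \nu m/R$ and $\nu\asymp m(R-m)/\sigma_{k,j}^2$, and the perturbation changes $m$, and therefore $\alpha,\beta,\nu$, by at most a constant factor.

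\textbf{Step 2 (KL via the exponential family).} The Beta family is an exponential family with natural parameter $\theta=(\alpha-1,\beta-1)$ and log-partition $A(\theta)=\log\Gamma(\alpha)+\log\Gamma(\beta)-\log\Gamma(\alpha+\beta)$. KL is the Bregman divergence of $A$, so
\[
\KL(P_{\theta_0},P_{\theta_1})\le \tfrac12\sup_{t\in[0,1]}(\theta_1-\theta_0)^\top\nabla^2A(\theta_t)(\theta_1-\theta_0).
\]
Here $\nabla^2A=\mathrm{diag}(\psi'(\alpha),\psi'(\beta))-\psi'(\nu)\mathbf{1}\mathbf{1}^\top$, and its operator norm is at most $\psi'(\alpha)+\psi'(\beta)\lesssim 1/\alpha+1/\beta+1/\alpha^2+1/\beta^2$ along the segment. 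Both $\alpha(m)$ and $\beta(m)$ are smooth in $m$. Differentiating the formulas of Step 1 gives
\[
|\partial_m\alpha|,|\partial_m\beta|\lesssim \frac{R}{\sigma_{k,j}^2}\ \text{(up to the factors above)}.
\]
Therefore $|\theta_1-\theta_0|\lesssim |\Delta_k|\,\sup|\partial_m(\alpha,\beta)|$. Plugging into the Hessian bound should yield the quadratic form of order $\Delta_k^2\,(\partial_m\alpha)^2/\alpha$ plus the analogous $\beta$ term, which we expect to be $\asymp \Delta_k^2/\sigma_{k,j}^2$.

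A cleaner route for this step is the following. By the mean-value/Fisher-information identity along the curve $m\mapsto P_m$,
\[
\KL(P_{m_0},P_{m_1})\le \tfrac12 (m_1-m_0)^2\sup_m I(m),
\]
where $I(m)=\dot\theta^\top\nabla^2A\,\dot\theta$. The goal is then to show $I(m)\le C/\sigma_{k,j}^2$ uniformly on $|m-s_k^\star|\le R_k/2$. Note that Cramér--Rao already gives $I(m)\ge 1/\sigma^2$, so only the matching upper bound is needed. Property $(ii)$ then follows with $m_0=s_k^\star$ and $m_1=s_k^\star+\Delta_k$, and also in the reverse direction. Property $(iii)$ follows with $m_0=s_k^\star-\Delta_k$ and $m_1=s_k^\star+\Delta_k$, giving $C_2=4C$.

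\textbf{Main obstacle.} The hard step is the uniform upper bound $I(m)\lesssim 1/\sigma^2$ when $\alpha$ or $\beta$ is small, i.e. of order a constant. In that regime the trigamma terms behave like $1/\alpha^2$ instead of $1/\alpha$. I would split into two cases. If $\alpha,\beta\ge 1$, use $\psi'(x)\le 2/x$ and directly verify $(\partial_m\alpha)^2/\alpha+(\partial_m\beta)^2/\beta\lesssim 1/\sigma^2$ from the explicit formulas. If $\alpha$ or $\beta$ is bounded, exploit the fact that Step 1 bounds them below by an absolute constant, so $\psi'$ is bounded by a constant. Then check that $|\partial_m\alpha|^2$ is still $\lesssim 1/\sigma^2$, using $\nu\gtrsim R_k^2/\sigma^2$. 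The hypothesis $\sigma^2\le R_k^2/2$ is exactly what keeps the constants absolute.
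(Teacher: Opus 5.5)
Your Step~1 is fine and gives $(i)$. Step~2, however, does not produce an absolute constant, and the cause is your choice of support. Spreading the Beta over all of $[0,R]$ makes it very skewed when $R_k\ll R$. In that regime the diagonal terms of the Hessian quadratic form are much larger than $1/\sigma_{k,j}^2$.

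Concretely, take $s_k^\star=R_k$, $\Delta_k=0$, $\sigma_{k,j}^2=R_k^2/4$. Your formulas give $\alpha\approx 4$, $\beta\approx 4R/R_k$, $\partial_m\alpha\approx 2m/\sigma_{k,j}^2$ and $\partial_m\beta\approx R/\sigma_{k,j}^2$. Hence $(\partial_m\beta)^2/\beta\approx (R/R_k)\,\sigma_{k,j}^{-2}$. This instance sits in your ``easy'' case $\alpha,\beta\ge 1$. Yet the bound $(\partial_m\alpha)^2/\alpha+(\partial_m\beta)^2/\beta\lesssim 1/\sigma_{k,j}^2$ that you propose to verify fails by the unbounded factor $R/R_k$. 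Your first, operator-norm route is worse still: $\|\theta_1-\theta_0\|^2\psi'(\alpha)\asymp \Delta_k^2R^2/\sigma_{k,j}^4$.

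The true Fisher information is still $O(1/\sigma_{k,j}^2)$ in this regime, but only because of cancellation with the term $-\psi'(\alpha+\beta)(\partial_m\alpha+\partial_m\beta)^2$ that you discard. When $\partial_m\alpha,\partial_m\beta\ge 0$, you would need
\[
I\le(\partial_m\alpha)^2\psi'(\alpha)+(\partial_m\beta)^2\bigl[\psi'(\beta)-\psi'(\alpha+\beta)\bigr]
\quad\text{together with}\quad
\psi'(\beta)-\psi'(\alpha+\beta)\lesssim\alpha/\beta^2 .
\]
So the real obstacle is skewness ($\alpha\ll\beta$), not small $\alpha$, and your plan has no step that handles it.

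Separately, $\KL(P_{m_0},P_{m_1})\le\tfrac12(m_1-m_0)^2\sup_m I(m)$ is not a valid identity for a curved one-parameter subfamily. For example, parametrize the Poisson family by $m=2\sqrt{\lambda}$. Then $I\equiv 1$, while $\KL(\mathrm{Poi}(1),\mathrm{Poi}(\lambda))=\log(1/\lambda)+\lambda-1\to\infty$ as $\lambda\to 0$. You would therefore have to bound the Bregman remainder along the chord from $\theta_0$ to $\theta_1$ directly, and that needs the same cancellation.

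The missing idea, which is what the paper does, is to localize the support. Set $X=s_k^\star-R_k+2R_kY$, so that $\mathrm{supp}\subseteq[s_k^\star-R_k,\,s_k^\star+R_k]\subseteq[0,R]$. Take $Y\sim\Beta(\alpha,\beta)$ with $\alpha,\beta=\frac{1-4d^2-4q}{8q}(1\pm 2d)$, where $d=\Delta_k/(2R_k)$ and $q=\sigma_{k,j}^2/(4R_k^2)\le 1/8$. This is exactly your Step~1 with $[0,R]$ replaced by this window.

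Now $\E[Y]=\tfrac12+d$ with $|d|\le\tfrac18$, so $Y$ is nearly symmetric. This gives $\alpha,\beta\ge\frac{21}{512\,q}$ along the whole segment. The parameter shifts, both between $\pm d$ and between $d$ and $0$, are $\lesssim|d|/q$. The crude diagonal bound $\tfrac12\|\theta_1-\theta_0\|^2\max(\psi'(\tilde\alpha),\psi'(\tilde\beta))$, combined with $\psi'(z)\le 1/z+1/z^2$, then yields $\KL\lesssim (d^2/q^2)\cdot q=\Delta_k^2/\sigma_{k,j}^2$ for both $(ii)$ and $(iii)$. This route needs no case analysis and no Fisher-information-along-the-curve argument.
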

The construction is rather tedious as it is based on appropriate family of beta distributions (which ensures boundedness) while making sure that the variance remains invariant across perturbations, and we need to make sure that the KL-divergence is quadratically bounded as above.
(Without the boundedness assumption, one can simply utilize location family of Gaussian distributions.)
For this high-probability lower bound proof, we will primarily utilize $(i)$ and $(ii)$, and later for the in-expectation lower bound, we will utilize $(ii)$ and $(iii)$.

We consider the canonical bandit model (See \citet[Section 4.6]{banditalgorithms}) over the observations, and let $\sP_{\vs^\star}$ and $\sP_{\vs \triangleq \vs^\star + \bm\Delta}$ be the two probability measures induced by the true and alternate instance, respectively.
The adversary's choice of perturbation $\bm\Delta$ will be optimized for later.

\paragraph{KL-Divergence Decomposition.}
By the Divergence Decomposition Lemma~\citep[Lemma 15.1]{banditalgorithms}, the KL divergence between the probability measures $\sP_{\vs^\star}$ and $\sP_{\vs}$ is given by
\begin{align}
    \KL(\sP_{\vs^\star}, \sP_{\vs}) &= \sum_{k \in [K]} \sum_{j \in [J]} \E^\gA_{\vs^\star}[N_{k,j}(T)] \KL\left( \sP_{\vs^\star,k,j}, \sP_{\vs,k,j} \right) \\
    &\leq C_1 \sum_{k \in [K]} \underbrace{\sum_{j \in [J]} \frac{\E^\gA_{\vs^\star}[N_{k,j}(T)]}{\sigma_{k,j}^2}}_{\triangleq w_k} (s_k - s_k^\star)^2. \tag{\Cref{prop:beta-construction} $(ii)$}
\end{align}

We first take the infimum over all $\vs$ with $\bignorm{\vs - \vs^\star}_p > 2 \varepsilon_p$ satisfying $\Delta_k = s_k - s_k^\star \leq \frac{R_k}{4}$ to optimize the RHS w.r.t. hard instances.
This is characterized in the next lemma, whose proof is deferred to \Cref{app:infimum-hard}:
\begin{lemma}\label{lem:infimum-hard}
    Let $w_k > 0$ and $\vs^\star \in \sR^K$ be fixed. Let $\vs = (s_k)_{k \in [K]} \in \sR^K$ be an optimal solution of the following constrained optimization:
    \begin{equation}
        V^\star = \min_{\vs \in \sR^K} \sum_k w_k (s_k - s_k^\star)^2, \text{ subj. to } \bignorm{\vs - \vs^\star}_p \geq 2 \varepsilon_p.
    \end{equation}
    Then,
    \begin{enumerate}
        \item \textbf{Case I. $1 \leq p < 2$:}
        \begin{equation}
            s_k = s_k^\star + 2\varepsilon_p \frac{w_k^{-\frac{1}{2-p}}}{\left( \sum_k w_k^{-\frac{p}{2-p}} \right)^{\frac{1}{p}}}, \quad V^\star = \frac{4 \varepsilon_p^2}{\left( \sum_k w_k^{-\frac{p}{2-p}} \right)^{\frac{2-p}{p}}}.
        \end{equation}
        \item \textbf{Case II. $p \geq 2$:} denoting $k^\star \in \argmin_k w_k$,
        \begin{equation}
            \vs = \vs^\star + 2 \varepsilon_p \ve_{k^\star}, \quad V^\star = 4 \varepsilon_p^2 \min_k w_k.
        \end{equation}
    \end{enumerate}
    In both cases, note that as $\varepsilon_p \leq \frac{\min_k R_k}{8}$ (as given in the theorem statement), $\vs$ satisfies $\vs \in [0, R]^K$ and leads to a valid probability distribution.
\end{lemma}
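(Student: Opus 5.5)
Set $\Delta_k \coloneq s_k - s_k^\star$. The problem becomes minimizing $\sum_k w_k \Delta_k^2$ subject to $\bignorm{\bm\Delta}_p \geq 2\varepsilon_p$. The objective is increasing in each $|\Delta_k|$, so the constraint binds at an optimum. Indeed, if $\bignorm{\bm\Delta}_p > 2\varepsilon_p$, rescaling $\bm\Delta$ by a factor $<1$ strictly decreases the objective. By symmetry we may also take $\Delta_k \geq 0$. Substituting $u_k \coloneq \Delta_k^p \geq 0$ turns the problem into minimizing $F(\bm u) \coloneq \sum_k w_k u_k^{2/p}$ over the simplex-like set $\{\bm u \geq 0 : \sum_k u_k = (2\varepsilon_p)^p\}$. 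The split into the two cases then comes from the curvature of $z \mapsto z^{2/p}$.

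\textbf{Case II ($p \geq 2$).} Here $z \mapsto z^{2/p}$ is concave, so $F$ is concave on the polytope and its minimum is attained at a vertex $u = (2\varepsilon_p)^p \ve_k$. The value there is $4\varepsilon_p^2 w_k$, and minimizing over $k$ gives $k^\star \in \argmin_k w_k$ and $V^\star = 4\varepsilon_p^2 \min_k w_k$. For a direct check that avoids concavity, note that $\bignorm{\bm\Delta}_2 \geq \bignorm{\bm\Delta}_p$ for $p \geq 2$. Hence $\sum_k w_k\Delta_k^2 \geq \min_k w_k \bignorm{\bm\Delta}_2^2 \geq 4\varepsilon_p^2\min_k w_k$, with equality at the $1$-sparse point.

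\textbf{Case I ($1 \leq p < 2$).} Now $2/p > 1$, so $F$ is strictly convex and the KKT conditions characterize the unique minimizer. Stationarity reads $w_k \tfrac{2}{p} u_k^{2/p - 1} = \lambda$. This gives $u_k \propto w_k^{-p/(2-p)}$, hence $\Delta_k \propto w_k^{-1/(2-p)}$, and normalizing by $\bignorm{\bm\Delta}_p = 2\varepsilon_p$ yields the stated $s_k$. All such $u_k$ are positive, so the nonnegativity constraints are inactive and no boundary case arises.

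A cleaner certificate is Hölder's inequality with exponents $\tfrac{2}{p}$ and $\tfrac{2}{2-p}$:
\[
\sum_k \Delta_k^p = \sum_k \big(w_k\Delta_k^2\big)^{p/2} w_k^{-p/2} \leq \Big(\sum_k w_k\Delta_k^2\Big)^{p/2}\Big(\sum_k w_k^{-\frac{p}{2-p}}\Big)^{\frac{2-p}{2}}.
\]
Rearranging gives $\sum_k w_k\Delta_k^2 \geq 4\varepsilon_p^2 \big(\sum_k w_k^{-p/(2-p)}\big)^{-(2-p)/p}$, which is exactly $V^\star$. Equality holds precisely when $w_k\Delta_k^2 \propto w_k^{-p/(2-p)}$, that is, at the claimed point. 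Substituting that point confirms the value of $V^\star$ by direct computation.

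\textbf{Validity.} Both solutions satisfy $0 \leq \Delta_k \leq \bignorm{\bm\Delta}_\infty \leq \bignorm{\bm\Delta}_p = 2\varepsilon_p \leq R_{\min}/4 \leq R_k/4$. This places $\vs$ in $[0,R]^K$ and in the admissible range of \Cref{prop:beta-construction}.

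The only delicate point is choosing the right Hölder exponents, or equivalently justifying the concave-vertex argument in Case II. I would use the $\ell_2$--$\ell_p$ norm comparison there so that both cases become one-line inequalities with explicit equality cases.
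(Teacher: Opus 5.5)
Your proposal is correct. Its skeleton matches the paper's proof exactly: the constraint binds, you reduce to $\Delta_k\ge 0$, substitute $u_k=\Delta_k^p$, and split on the curvature of $z\mapsto z^{2/p}$, using KKT for $1\le p<2$ and an extreme-point argument for $p\ge 2$. The paper additionally invokes Slater's condition, shows the equality multiplier is negative so that every solution is interior, and cites Bauer's maximum principle.

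Where you genuinely depart is in the certificates you would actually rely on. For $1\le p<2$ you use H\"older with exponents $2/p$ and $2/(2-p)$. For $p\ge 2$ you use the chain $\sum_k w_k\Delta_k^2\ge \min_k w_k\,\|\bm\Delta\|_2^2\ge \min_k w_k\,\|\bm\Delta\|_p^2$. Both inequalities, and their equality cases, are correct.

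These certificates give a lower bound at every feasible point with no existence, duality, or interiority bookkeeping. The norm comparison also covers $p=\infty$ verbatim. The paper's substitution $y_k=x_k^p$ does not literally reach that case, although the lemma's Case II includes it.

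What the paper's route buys is constructiveness: the KKT system produces the extremizer rather than verifying a guessed one. Its convex-versus-concave dichotomy is also the dense-versus-$1$-sparse picture the paper reuses to explain why the Fano-type bound saturates at $\mathcal{A}_\infty^*$ for $p\ge 2$. In your H\"older version the extremal point still has to come from somewhere, namely your KKT computation or the H\"older equality condition. Like the paper's $\pm$, that condition pins the point down only up to coordinate signs.
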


\paragraph{Optimal Hard Instances.}
Now the idea is to construct a set of optimal hard instances and utilize the Bernoulli KL-variant of the data processing inequality~\citep[Lemma 1]{garivier2019lower}, which we recall here:
\begin{lemma}[Data Processing Inequality; Lemma 1 of \cite{garivier2019lower}]
\label{lem:data-processing}
    Consider a measurable space $(\Gamma, \gG)$ equipped with two probability measure $\sP_1$ and $\sP_2$.
    Then, we have that
    \begin{equation}
        \KL(\sP_1, \sP_2) \geq \sup_Z \kl(\E_1[Z], \E_2[Z]),
    \end{equation}
    where $\sup_Z$ is over all possible $\gG$-measurable random variable $Z : \Omega \rightarrow [0, 1]$.
\end{lemma}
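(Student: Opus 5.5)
The plan is to realize $\kl(\E_1[Z],\E_2[Z])$ as the KL divergence between two Bernoulli laws obtained by ``processing'' $\sP_1$ and $\sP_2$ through the same randomization. The cleanest route avoids general Markov-kernel chain rules and uses a direct Jensen argument on the convex function $\varphi(x) \coloneq x\log x$ (with $\varphi(0)=0$). Since the inequality is to be shown for each fixed $\gG$-measurable $Z:\Gamma\to[0,1]$, the supremum follows immediately, so I fix such a $Z$ throughout.

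\textbf{Step 1 (trivial case).} If $\KL(\sP_1,\sP_2)=+\infty$ there is nothing to prove. Otherwise $\sP_1 \ll \sP_2$. Let $L \coloneq d\sP_1/d\sP_2$, so that $\KL(\sP_1,\sP_2)=\E_2[\varphi(L)]$. Write $a\coloneq \E_1[Z]=\E_2[ZL]$ and $b\coloneq \E_2[Z]$. Similarly $1-a=\E_2[(1-Z)L]$ and $1-b=\E_2[1-Z]$.

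\textbf{Step 2 (split and apply Jensen twice).} I decompose $\E_2[\varphi(L)] = \E_2[Z\varphi(L)] + \E_2[(1-Z)\varphi(L)]$. If $b>0$, then $Z\,d\sP_2/b$ is a probability measure, and Jensen's inequality for the convex $\varphi$ gives
\begin{equation}
    \E_2[Z\varphi(L)] = b\,\E_{Z d\sP_2/b}[\varphi(L)] \geq b\,\varphi\!\left(\frac{\E_2[ZL]}{b}\right) = a\log\frac{a}{b}.
\end{equation}
The analogous computation with the probability measure $(1-Z)\,d\sP_2/(1-b)$, valid when $b<1$, yields $\E_2[(1-Z)\varphi(L)] \geq (1-a)\log\frac{1-a}{1-b}$. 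Summing the two bounds gives exactly $\kl(a,b)$, which proves the lemma in this case.

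\textbf{Step 3 (boundary cases).} If $b=0$, then $Z=0$ $\sP_2$-a.s., hence also $\sP_1$-a.s. by absolute continuity, so $a=0$ and $\kl(0,0)=0$ under the usual conventions; the claim then follows from the single Jensen step on the $(1-Z)$ part. The case $b=1$ is symmetric.

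I expect the only delicate point to be the bookkeeping in these degenerate cases and the conventions $0\log 0=0$ and $\kl(a,b)=+\infty$ when $b\in\{0,1\}$ and $a\neq b$. Absolute continuity rules out the latter, so it never arises here. An alternative I would mention is the kernel argument: output $Y\sim\mathrm{Bernoulli}(Z(\omega))$, note that the law of $Y$ under $\sP_i$ is $\mathrm{Bernoulli}(\E_i[Z])$, and invoke the general data processing inequality for KL. The Jensen route above is self-contained and is the one I would write out.
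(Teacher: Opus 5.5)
Your argument is correct and complete. With $L=d\sP_1/d\sP_2$, the two Jensen steps under the probability measures $Z\,d\sP_2/b$ and $(1-Z)\,d\sP_2/(1-b)$ are exactly the log-sum inequality applied to the ``soft'' partition $\{Z,1-Z\}$. The degenerate cases are all covered:
\begin{itemize}
\item $\KL=\infty$ is trivial;
\item $b\in\{0,1\}$ is handled by absolute continuity, which forces $a=b$;
\item $a\in\{0,1\}$ is handled by the convention $0\log 0=0$.
\end{itemize}

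The paper itself does not prove this lemma; it imports it as a black box from \citet{garivier2019lower}. The argument there is, in substance, the one you list as your alternative. One enlarges the space to $\Gamma\times[0,1]$ with the product measures $\sP_i\otimes\mathrm{Leb}$ and notes that $\KL(\sP_1\otimes\mathrm{Leb},\sP_2\otimes\mathrm{Leb})=\KL(\sP_1,\sP_2)$. One then applies the event form of data processing, $\KL\ge\kl(\sP_1(E),\sP_2(E))$, to $E=\{(\gamma,x):x\le Z(\gamma)\}$. Fubini gives $(\sP_i\otimes\mathrm{Leb})(E)=\E_i[Z]$, which finishes the proof.

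That route is shorter if the data-processing inequality for events, or for general Markov kernels, is taken as known. It also makes transparent that the lemma is just data processing through the kernel $\gamma\mapsto\mathrm{Bernoulli}(Z(\gamma))$.

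Your route needs no enlargement of the probability space, no tensorization of KL, and no appeal to the general inequality. The cost is the explicit case bookkeeping. Moreover, specializing your Jensen computation to $Z=\indicator_E$ is exactly how one proves the event version, so your proof is the more primitive of the two.
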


We divide into two cases:
\paragraph{Case I. $1 \leq p < 2$.}
In this case, we consider a single hard instance as in \textbf{\textit{Case I}} of \Cref{lem:infimum-hard}.
By construction, we then have that $\bignorm{\vs - \vs^\star}_p = 2\varepsilon_p$.

Define an event $\gE \triangleq \left\{ \bignorm{\hat{\vs} - \vs}_p \leq \varepsilon_p \right\}$.
We first show that $\sP_{\vs^\star}(\gE) < \delta.$
To do so, suppose that $\gE$ is true.
Then,
\begin{equation}
    2 \varepsilon_p = \bignorm{\vs - \vs^\star}_p
    \overset{(i)}{\leq} \bignorm{\vs - \hat{\vs}}_p + \bignorm{\hat{\vs} - \vs^\star}_p
    \overset{(ii)}{<} \varepsilon_p + \bignorm{\hat{\vs} - \vs^\star}_p
    \Longrightarrow
    \bignorm{\hat{\vs} - \vs^\star}_p > \varepsilon_p,
\end{equation}
where $(i)$ follows from triangle inequality and $(ii)$ from the fact that $\gE$ holds.
But, as $\gA$ is $(B, \delta, \ell_p)$-budget efficient, $\bignorm{\hat{\vs} - \vs^\star}_p > \varepsilon_p$ holds with probability at most $\delta$ under $\sP_{\vs^\star}$.

We then apply the data processing inequality (\Cref{lem:data-processing}) as follows:
\begin{equation}
\label{eqn:lower-bound}
    \KL(\sP_{\vs^\star}, \sP_{\vs})
    \geq \kl\left( \sP_{\vs^\star}(\gE), \sP_{\vs}(\gE) \right)
    \overset{(*)}{\geq} \kl\left( \delta, 1 - \delta \right) = \kl( 1 - \delta, \delta),
\end{equation}
where $(*)$ follows from the monotonicity of $\kl(\cdot, \cdot)$\footnote{$q \mapsto \kl(p, q)$ is increasing in $[p, 1]$, and $p \mapsto \kl(p, q)$ is decreasing in $[0, 1/2]$ when $q \geq 1/2$.}, combined with the facts that $\sP_{\vs}(\gE) \geq 1 - \delta$ ($\gA$ is $(B, \delta)$-budget efficient with error rate $\varepsilon_p$) and $\sP_{\vs^\star}(\gE) \leq \delta$ as shown above.

Thus, we are left with
\begin{equation}
    C_1 \sum_{k, j} \E^\gA_{\vs^\star}[N_{k,j}(T)] \frac{(s_k - s_k^\star)^2}{\sigma_{k,j}^2} \geq \kl\left( 1 - \delta, \delta \right).
\end{equation}

We take the supremum over all $(B, \delta, \ell_p)$-budget efficient algorithms $\gA$'s, which is equivalent to taking a supremum over all possible allocations $\{N_{k,j}\}_{k,j}$ satisfying the knapsack constraint $\sum_{k,j} N_{k,j} c_j \leq B.$
Applying \Cref{lem:infimum-hard} and rearranging for $\varepsilon_p$, we obtain:
\begin{align}
    \varepsilon_p^2 & \ge \frac{\kl(1-\delta, \delta)}{4 C_1} \min_{\{N_{k,j}\}: \sum_{k,j} N_{k,j} c_{j} \le B} \left( \sum_k \left( \sum_{j = 1}^J \frac{N_{k,j}}{\sigma_{k,j}^2} \right)^{-\frac{p}{2-p}} \right)^{\frac{2-p}{p}} \\
    &\overset{(**)}{\ge} \frac{\kl(1-\delta, \delta)}{4 C_1 B} \underbrace{\left( \sum_{k\in[K]}\left( c_{j^*(k)} \sigma_{k,j^*(k)}^2 \right)^{\frac{p}{2}}\right)^{\frac{2}{p}}}_{= \gA_{\frac{2p}{2-p}}^*(\bm\sigma, \vc)},
\end{align}
where $(**)$ follows from similar reasoning as in \Cref{app:optimal-allocation}.

\paragraph{Case II. $p \geq 2$.}
In this case, we consider the following set of $K$ hard instances:
\begin{equation}
    \left\{ \vs^{(k)} \triangleq \vs^\star + 2 \varepsilon_p \ve_k \ : \ k \in [K] \right\},
\end{equation}
which satisfies $\KL( \sP_{\vs^\star}, \sP_{\vs^{(k)}}) \leq 4 C_1 \varepsilon_p^2 w_k$ for $k \in [K]$.

For each $k \in [K]$, define $\gR^{(k)}$ as the event that the algorithm $\gA$ outputs $\hat{\vs}$ that is $\varepsilon_p$-close to $\vs^{(k)}$:
\begin{equation}
    \gR^{(k)} \coloneq \left\{ \bignorm{\hat{\vs} - \vs^{(k)}}_p \leq \varepsilon_p \right\},
\end{equation}
and define the null event
\begin{equation}
    \gR^{(0)} \coloneq \left\{ \bignorm{\hat{\vs} - \vs^\star}_p \leq \varepsilon_p \right\}.
\end{equation}
Then, we have that $\gR^{(k)} \cap \gR^{(k')} = \emptyset$ for $k \neq k'$ as
\begin{equation}
    \bignorm{\vs^{(k)} - \vs^\star}_p = 2\varepsilon_p, \quad
    \bignorm{\vs^{(k)} - \vs^{(k')}}_p = 2^{1 + \frac{1}{p}} \varepsilon_p > 2 \varepsilon_p,
\end{equation}
where the last inequality holds as $p \geq 2,$
Let $p^{(k)} \coloneq \sP_{\vs^\star}(\gR^{(k)})$ be the (error) probability that $\gA$ outputs $\hat{\vs}$ that is closest to $\vs^{(k)}$.

As $\gA$ is $(B, \delta, \ell_p)$-budget efficient, we have that
\begin{equation}
    \sum_{k \in [K]} p^{(k)} \leq 1 - \sP_{\vs^\star}(\gR^{(0)}) \leq \delta, \quad
    \sP_{\vs^{(k)}}(\gR^{(k)}) \geq 1 - \delta.
\end{equation}

Then by applying the data processing inequality (\Cref{lem:data-processing}) per $k$, we have
\begin{align}
    4 C_1 \varepsilon_p^2 \sum_{k \in [K]} c_{j^*(k)} \sigma_{k,j^*(k)}^2 w_k &\geq \sum_{k \in [K]} c_{j^*(k)} \sigma_{k,j^*(k)}^2 \KL(\sP_{s^\star}, \sP_{\vs^{(k)}}) \\
    &\geq \sum_{k \in [K]} c_{j^*(k)} \sigma_{k,j^*(k)}^2 \kl(\sP_{s^\star}(\gR^{(k)}), \sP_{\vs^{(k)}}(\gR^{(k)})) \\
    &\overset{(*)}{\geq} \sum_{k \in [K]} c_{j^*(k)} \sigma_{k,j^*(k)}^2 \kl(p^{(k)}, 1 - \delta),
\end{align}
where $(*)$ follows from the monotonicity of $\kl(\cdot, \cdot)$.

Also note that due to the knapsack constraint, recalling the definition of $w_k$,
\begin{align}
    \sum_{k \in [K]} c_{j^*(k)} \sigma_{k,j^*(k)}^2 w_k &= \sum_{k \in [K]} \sum_{j \in [J]} \frac{\E^\gA_{\vs^\star}[N_{k,j}(T)] c_{j^*(k)} \sigma_{k,j^*(k)}^2}{\sigma_{k,j}^2} \\
    &= \sum_{k \in [K]} \sum_{j \in [J]} \E^\gA_{\vs^\star}[c_j N_{k,j}(T)] \frac{c_{j^*(k)} \sigma_{k,j^*(k)}^2}{c_j \sigma_{k,j}^2} \\
    &\leq \sum_{k \in [K]} \sum_{j \in [J]} \E^\gA_{\vs^\star}[c_j N_{k,j}(T)]
    \leq B.
\end{align}
Combining everything, we then have
\begin{equation}
    \varepsilon_p^2 \geq \frac{1}{4 C_1 B} \sum_{k \in [K]} c_{j^*(k)} \sigma_{k,j^*(k)}^2 \kl(p^{(k)}, 1 - \delta).
\end{equation}
As the last step of the lower bound proof valid for \emph{any} $(B, \delta, \ell_p)$-budget efficient algorithm, we must find the infimum of this lower bound over all valid algorithms.
This reduces to finding the distribution of error probabilities $\{p^{(k)}\}_{k \in [K]}$ that minimizes the right-hand side, subject to the algorithm's success constraints.
Specifically, we solve the following convex optimization:
\begin{equation}
\label{eqn:infimum-p}
    V^* \coloneq \min_{\{p^{(k)}\}_{k \in [K]}} \sum_{k \in [K]} c_{j^*(k)} \sigma_{k,j^*(k)}^2 \kl(p^{(k)}, 1 - \delta), \text{ subj. to } \sum_{k \in [K]} p^{(k)} \leq \delta, \ 0 \leq p^{(k)} \leq 1.
\end{equation}
Although a closed-form solution cannot be found, we show in \Cref{app:infimum-p} that a tractable lower bound is available as follows: when $\delta \leq e^{-2}$,
\begin{equation}
    V^* \geq \left( \frac{1}{2} - \frac{1}{e^2} \right) \underbrace{\left( \sum_{k \in [K]} c_{j^*(k)} \sigma_{k,j^*(k)}^2 \right)}_{= \gA_\infty^*(\bm\sigma, \vc)} \log\frac{1}{\delta}.
\end{equation}
Combining everything and redefining $C_1'$, we arrive at
\begin{equation}
    \varepsilon_p^2 \geq \left( \frac{1}{2} - \frac{1}{e^2} \right) \frac{\log\frac{1}{\delta}}{4 C_1 B} \gA_\infty^*(\bm\sigma, \vc).
\end{equation}
\qed

\begin{remark}[Implicit Inter-Query Allocation for $p \geq 2$]
    Unlike the proof for $1 \leq p < 2$, the high-probability lower bound for $p \geq 2$ does not immediately yield a closed-form optimal allocation across queries. The tightness of the knapsack inequality does dictate the \emph{intra-query} strategy: an optimal algorithm must allocate its budget exclusively to the most cost-efficient judge, setting $N_{k,j}(T) = 0$ for all $j \neq j^*(k)$. However, the \emph{inter-query} allocation remains implicit. Because the final bound is obtained by solving a convex optimization over the algorithm's error distribution over hard instances rather than its physical budget variables, the specific budget fraction allocated to each $k$ is not explicitly resolved here. We will formally recover the closed-form optimal allocation across all $p \geq 1$ in the subsequent in-expectation lower bound.
\end{remark}

\subsection{\texorpdfstring{Proof of \Cref{lem:infimum-hard}: Optimal Hard Instance}{Proof of Lemma C.2: Optimal Hard Instance}}
\label{app:infimum-hard}
For notational simplicity, let $\vx \coloneq \vs - \vs^\star$.
By the symmetry of the objective function and the constraint, we can assume without loss of generality that $x_k \geq 0$ for all $k \in [K]$.
Then our optimization becomes:
\begin{equation}
    V^\star = \min_{\vx \in \sR_{\geq 0}^K} \left\{ F(\vx) \triangleq \sum_k w_k x_k^2 \right\}, \text{ subj. to } \  \sum_k x_k^p \geq (2\varepsilon_p)^p.
\end{equation}
As $F(\cdot)$ is strictly increasing in $\bignorm{\vx}_p$, the minimum is necessarily obtained on the boundary of the constraint, i.e., where $\sum_k x_k^p = (2\varepsilon_p)^p$.

We introduce a (bijective) change of variables $y_k = x_k^p$ for $y_k \geq 0$, which maps the non-convex $L_p$ exterior into a standard linear simplex (a convex set).
Thus, our optimization becomes:
\begin{equation}
    V^\star = \min_{\vy \in \sR_{\geq 0}^K} \left\{ \tilde{F}(\vy) \triangleq \sum_k w_k y_k^{\frac{2}{p}} \right\}, \text{ subj. to } \  \sum_k y_k = (2\varepsilon_p)^p.
\end{equation}
Note that the geometric properties of this optimization, and thus the nature of the optimal solution, depend entirely on the curvature of $\widetilde{F}(\vy)$, which bifurcates at $p = 2$.

\paragraph{Case I: $1 \leq p < 2$.}
In this regime, as the exponent of $\widetilde{F}(\cdot)$ satisfies $\frac{2}{p} > 1$, $\widetilde{F}(\vy)$ is a \emph{strictly convex} function.
Minimizing a strictly convex function subject to a linear equality and non-negativity constraints (which trivially satisfies Slater's condition) guarantees that strong duality holds~\citep[Section 5.2.3]{boyd-vandenberghe}.

We thus introduce Lagrangian multiplier vectors $\bm\lambda \in \sR^K_{\geq 0}$ and $\nu \in \sR$, and the corresponding Lagrangian
\begin{equation}
    \gL(\vy, \bm\lambda, \nu) \coloneq \tilde{F}(\vy) - \bm\lambda^\top \vy + \nu \left( \mathbf{1}^\top \vy - (2\varepsilon_p)^p \right).
\end{equation}
The Karush-Kuhn-Tucker (KKT) stationarity condition with respect to $y_k$ requires:
\begin{equation}
    \frac{\partial \gL}{\partial y_k} = \frac{2}{p} w_k y_k^{\frac{2-p}{p}} - \lambda_k + \nu = 0.
\end{equation}

First, we establish that $\nu$ must be strictly negative. Because the primal constraint requires $\sum_k y_k = (2\varepsilon_p)^p > 0$, there must be at least one index $i$ where $y_i > 0$.
By complementary slackness, the corresponding dual variable is $\lambda_i = 0$, which implies that $\nu = -\frac{2}{p} w_i y_i^{\frac{2-p}{p}}$.
As $w_i > 0$, $y_i > 0$, and for $1 \leq p < 2$ the exponent $\frac{2-p}{p} > 0$, it follows that $\nu < 0$.

Next, we prove that the solution must be strictly interior (i.e., $y_k > 0$ for all $k$).
Assume for contradiction that there exists some index $j$ such that $y_j = 0$.
Because $\frac{2-p}{p} > 0$, the $j$-th stationarity condition becomes
\begin{equation}
    0 - \lambda_j + \nu = 0 \Longrightarrow \lambda_j = \nu < 0,
\end{equation}
a contradiction, i.e., any solution must be strictly interior.
Again, by complementary slackness, we then must have that $\lambda_k = 0$ for all $k$, which implies that
\begin{equation}
    y_k = \left( \frac{-p \nu}{2 w_k} \right)^{\frac{p}{2-p}}, \quad \forall k \in [K].
\end{equation}
Denoting $C \coloneq (- \frac{p \nu}{2})^{\frac{p}{2-p}} > 0$, we can rewrite the solution as $y_k = C \cdot w_k^{-\frac{p}{2-p}}$.
To determine the constant $C$, we apply the primal equality constraint $\sum_k y_k = (2\varepsilon_p)^p$:
\begin{equation}
    \sum_k C \cdot w_k^{-\frac{p}{2-p}} = (2\varepsilon_p)^p \implies C = \frac{(2\varepsilon_p)^p}{\sum_k w_k^{-\frac{p}{2-p}}}.
\end{equation}
Substituting $C$ back into our expression for $y_k$ yields the optimal coordinates in the transformed space:
\begin{equation}
    y_k^\star = (2\varepsilon_p)^p \frac{w_k^{-\frac{p}{2-p}}}{\sum_i w_i^{-\frac{p}{2-p}}}.
\end{equation}
Converting back to our original variables via $x_k^\star = (y_k^\star)^{\frac{1}{p}}$, we recover the dense allocation:
\begin{equation}
    x_k^\star = 2\varepsilon_p \frac{w_k^{-\frac{1}{2-p}}}{\left( \sum_i w_i^{-\frac{p}{2-p}} \right)^{\frac{1}{p}}}.
\end{equation}
Finally, we substitute $x_k^\star$ into the original objective $F(\vx)$ to find the minimum value $V^\star$:
\begin{equation}
    V^\star = \sum_k w_k (x_k^\star)^2 
    = 4 \varepsilon_p^2 \sum_k w_k \frac{w_k^{-\frac{2}{2-p}}}{\left( \sum_i w_i^{-\frac{p}{2-p}} \right)^{\frac{2}{p}}} 
    = 4 \varepsilon_p^2 \frac{\sum_k w_k^{-\frac{p}{2-p}}}{\left( \sum_i w_i^{-\frac{p}{2-p}} \right)^{\frac{2}{p}}}
    = \frac{4 \varepsilon_p^2}{\left( \sum_k w_k^{-\frac{p}{2-p}} \right)^{\frac{2-p}{p}}}.
\end{equation}
Restoring back to $\vs^\star$ as $s^\star_k = s_k' \pm x_k^\star$ yields \textbf{Case I}.

\paragraph{Case II: $p \geq 2$.}
In this regime, as the exponent of $\widetilde{F}(\cdot)$ satisfies $\frac{2}{p} \leq 1$, $\widetilde{F}(\vy)$ is a \emph{concave} function (strictly concave if $p > 2$).
Also note that the domain $\gY$ is a simplex defined as follows:
\begin{equation}
    \gY := \mathrm{conv}\left( \left\{ (2\varepsilon_p)^p \ve_1, \cdots, (2\varepsilon_p)^p \ve_K \right\} \right), \quad \mathrm{Ext}(\gY) = \left\{ (2\varepsilon_p)^p \ve_1, \cdots, (2\varepsilon_p)^p \ve_K \right\}.
\end{equation}
Bauer's maximum principle~\citep{bauer1958maximum} then states that $\min_{\vy \in \gY} \widetilde{F}(\vy)$ is attained at at least one extreme point of $\gY$.

At each vertex $(2\varepsilon_p)^p \ve_k$, we have that
\begin{equation}
    \tilde{F}\left( (2\varepsilon_p)^p \ve_k \right) = w_k \left( (2\varepsilon_p)^p \right)^{\frac{2}{p}} = 4 \varepsilon_p^2 w_k.
\end{equation}
To find the global minimum across all vertices, we simply choose the index that minimizes the RHS, i.e., the minimum is obtained at $k^\star \in \argmin_k w_k$.
Converting back to $\vx$, the optimal solution is the following 1-sparse vector:
\begin{equation}
    \vx^\star = 2\varepsilon_p \ve_{k^\star}, \quad V^\star = 4 \varepsilon_p^2 \min_k w_k.
\end{equation}
Restoring back to $\vs^\star$ as $s^\star_k = s_k' \pm x_k^\star$ yields \textbf{Case II}.
\qed

\subsection{\texorpdfstring{Lower Bound for Eqn.~\eqref{eqn:infimum-p}: Infimum over Algorithms when $p \geq 2$}{Lower Bound for Eqn. (3): Infimum over Algorithms when p is greater than or equal to 2}}
\label{app:infimum-p}

For notational convenience, let us denote $g_k \coloneq c_{j^*(k)} \sigma_{k,j^*(k)}^2 \geq 0$ for each $k \in [K]$, and $G \coloneq \sum_{k \in [K]} g_k = \gA_\infty^*(\bm\sigma, \vc)$.
With this, let us recall Eqn.~\eqref{eqn:infimum-p}:
\begin{equation}
    V^* = \min_{\{p^{(k)}\}} \sum_{k \in [K]} g_k \kl(p^{(k)}, 1 - \delta), \text{ subj. to } \sum_{k \in [K]} p^{(k)} \leq \delta, \ 0 \leq p^{(k)} \leq 1.
\end{equation}

Invoking the well-known lower bound of Bernoulli KL~\citep[Eqn. (11)]{garivier2019lower}, we have that for any feasible $\{p^{(k)}\}_{k \in [K]}$,
\begin{align}
    \sum_{k \in [K]} g_k \kl(p^{(k)}, 1 - \delta) &\geq \sum_{k \in [K]} g_k \left[ (1-p^{(k)}) \log\frac{1}{\delta} - \log 2 \right] \nonumber \\
    &= \left( G - \sum_{k \in [K]} g_k p^{(k)} \right) \log\frac{1}{\delta} - G \log 2 \\
    &\geq \left( G - \left( \max_k g_k \right) \sum_{k \in [K]} p^{(k)} \right) \log\frac{1}{\delta} - G \log 2 \\
    &\geq \left( G - G \delta \right) \log\frac{1}{\delta} - G \log 2 \tag{$\{p^{(k)}\}_{k \in [K]}$ is feasible} \\
    &= G\left[ (1 - \delta)\log\frac{1}{\delta} - \log 2 \right].
\end{align}
Lastly, if $\delta \leq e^{-2}$, then we further have that
\begin{equation}
    V^* \geq G \left[ (1 - e^{-2}) - \frac{1}{2} \right] \log\frac{1}{\delta}
    = G \left( \frac{1}{2} - \frac{1}{e^2} \right) \log\frac{1}{\delta}.
\end{equation}
\qed

%% file: 904Proof-Lower-inexp.tex
\section{\texorpdfstring{Proof of \Cref{thm:lower-bound-exp}: In-Expectation Lower Bounds}{Proof of Theorem 5.2: In-Expectation Lower Bounds}}
\label{app:lower-bound-exp}

By Yao's minimax principle~\citep{yao1977}, for any prior $\mu$ over $\left\{ \vs : \bignorm{\vs - \vs^\star}_p \leq \xi(B) \right\}$,
\begin{equation}
    \inf_{\hat{\vs}} \sup_{\vs : \bignorm{\vs - \vs^\star}_p \leq \xi(B)} \E^{\hat{\vs}}_{\vs}\left[ \bignorm{\hat{\vs} - \vs}_p \right] \geq \inf_{\hat{\vs}} \E^{\hat{\vs}}_{\vs \sim \mu}\left[ \bignorm{\hat{\vs} - \vs}_p \right],
\end{equation}
and analogously for $\E^{\hat{\vs}}_{\vs \sim \mu}\left[ \bignorm{\hat{\vs} - \vs}_p^p \right]^{\frac{1}{p}}$.
We will thus lower bound the RHS (Bayes error) with an appropriate choice of $\mu$.

For both in-expectation lower bounds, we utilize the \textbf{\textit{Assouad's lemma}}~\citep{assouad1983,yu1997lecam}\footnote{This is derived by applying the Le Cam's Two-Point Method~\citep{lecam1973} coordinate-wise to the hypercube: see \citet{yu1997lecam} and \citet[Chapter 2]{tsybakov} for more detailed discussions.} combined with the Bretagnolle-Huber inequality~\citep{bretagnolle-huber}, which we recall here:
\begin{lemma}[Assouad's lemma]
\label{lem:assouad}
    Let $\gF_K = \{-1, 1\}^K$ be the hypercube, $\{\sP_{\vv} : \vv \in \gF_K \}$ be a family of $2^K$ probability measures indexed by $\vv \in \gF_K$, and $w_k > 0$ be some fixed weights.
    Then, for any estimator $\hat{\vv} \in \gF_K$,
    \begin{equation}
        \E_{\vv \sim \mathrm{Unif}(\gF_K)} \left[ \sum_{k \in [K]} w_k \indicator[\hat{v}_k \neq v_k] \right] \ge \frac{1}{4} \sum_{k \in [K]} w_k e^{- \min\left\{ \KL(\bar{\sP}_{+,k}, \bar{\sP}_{-,k}), \KL(\bar{\sP}_{-,k}, \bar{\sP}_{+,k}) \right\}},
    \end{equation}
    where $\bar{\sP}_{i,k} \coloneq \frac{1}{2^{K-1}} \sum_{\vv \in \gF_K : v_k = i} \sP_{\vv}$ for $i \in \{-, +\}$ is the marginal distribution for $v_k = i$.
\end{lemma}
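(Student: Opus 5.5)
The plan is to reduce, for each coordinate $k$, to a binary test between the two marginal mixtures $\bar{\sP}_{+,k}$ and $\bar{\sP}_{-,k}$, and to combine Le Cam's two-point method with the Bretagnolle--Huber inequality, summed over coordinates.

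\textbf{Step 1: reduction to per-coordinate tests.} Since $\hat{\vv}\in\gF_K$ is a deterministic function of the observations, linearity of expectation gives
\[
\E_{\vv \sim \mathrm{Unif}(\gF_K)}\Bigl[\sum_k w_k \indicator[\hat v_k\neq v_k]\Bigr]=\sum_k w_k\,\E_{\vv}\bigl[\indicator[\hat v_k\neq v_k]\bigr].
\]
For fixed $k$, split the uniform average according to whether $v_k=+1$ or $v_k=-1$. Let $A_k=\{\hat v_k=+1\}$. Then
\[
\E_{\vv}\bigl[\indicator[\hat v_k\neq v_k]\bigr]=\tfrac12\bar{\sP}_{+,k}(A_k^c)+\tfrac12\bar{\sP}_{-,k}(A_k).
\]
Here we use that each marginal $\bar{\sP}_{i,k}$ is exactly the average of $\sP_{\vv}$ over the $2^{K-1}$ vertices with $v_k=i$, so the expectation is linear in the measure.

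\textbf{Step 2: two-point bound.} For any event $A$, the Bretagnolle--Huber inequality gives
\[
\sP(A^c)+\mathbb{Q}(A)\ \ge\ \tfrac12 e^{-\KL(\sP,\mathbb{Q})}.
\]
Since the left-hand side is symmetric in the pair of measures (apply it to $A^c$ with the roles of $\sP$ and $\mathbb{Q}$ swapped), we may replace $\KL(\sP,\mathbb{Q})$ by the minimum of the two divergences. Applying this with $\sP=\bar{\sP}_{+,k}$, $\mathbb{Q}=\bar{\sP}_{-,k}$ and $A=A_k$ gives the per-coordinate bound
\[
\E_{\vv}\bigl[\indicator[\hat v_k\neq v_k]\bigr]\ \ge\ \tfrac14 e^{-\min\{\KL(\bar{\sP}_{+,k},\bar{\sP}_{-,k}),\,\KL(\bar{\sP}_{-,k},\bar{\sP}_{+,k})\}}.
\]
Multiplying by $w_k\ge0$ and summing over $k$ yields the claim.

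\textbf{Main obstacle.} The only delicate point is that the Bretagnolle--Huber inequality is needed in its form $\sP(A^c)+\mathbb{Q}(A)\ge \tfrac12 e^{-\KL}$, rather than the total-variation form. This form follows from
\[
\sP(A^c)+\mathbb{Q}(A)\ \ge\ \int \min(d\sP,d\mathbb{Q})\ \ge\ \tfrac12\Bigl(\int\sqrt{d\sP\,d\mathbb{Q}}\Bigr)^2\ \ge\ \tfrac12 e^{-\KL(\sP,\mathbb{Q})}.
\]
The middle step is Cauchy--Schwarz, using $\sqrt{ab}=\sqrt{\min(a,b)\max(a,b)}$ and $\int\max(d\sP,d\mathbb{Q})\le2$. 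The last step is Jensen's inequality applied to $\log\int\sqrt{d\mathbb{Q}/d\sP}\,d\sP$. I would state this inequality as a standard cited fact. If $\hat{\vv}$ is randomized, I would first condition on the external randomness and then average.
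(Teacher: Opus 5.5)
Your proof is correct and takes the route the paper indicates in its footnote: you apply Le Cam's two-point method coordinate-wise, testing $\bar{\sP}_{+,k}$ against $\bar{\sP}_{-,k}$, bound each test error with the Bretagnolle--Huber inequality, and use the symmetry of $\sP(A^c)+\mathbb{Q}(A)$ to justify taking the minimum of the two KL divergences. The paper only cites this lemma (Assouad; Yu; Bretagnolle--Huber) and gives no proof, and your self-contained derivation of the Bretagnolle--Huber bound via $\int\min(d\sP,d\mathbb{Q})$, Cauchy--Schwarz and Jensen is sound.
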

Throughout, we denote $\E_{\vv} = \E_{\vv \sim \mathrm{Unif}(\gF_K)}$.

\paragraph{Construction.}
Suppose that $(\bm\sigma, \vc)$ is given and fixed, and let $\vs^\star \in \sR^K$ be a given score vector.
To apply Assouad's lemma, we define the prior $\mu$ over the score vectors as the uniform distribution over the following hypercube:
\begin{equation}
    \mu = \mathrm{Unif}\left( \left\{ \vs(\vv) \triangleq \vs^\star + \vv \odot \bm\Delta \ : \ \vv \in \{-1, +1\}^K \right\} \right),
\end{equation}
where $\bm{\Delta} \coloneq (\Delta_1, \dots, \Delta_K) \in \sR_{\geq 0}^K$ is chosen a-priori as described for each lower bound.

We utilize the same construction as in \Cref{prop:beta-construction}, with $R_k \coloneq \min\{s_k^\star, R - s_k^\star\}> 0$, $\max_j \sigma_{k,j}^2 \leq R_k^2 / 2$, and $\Delta_k \leq R_k / 2$.
We set $\sP_\vv = \sP_{\vs^\star + \vv \odot \bm\Delta}$.

We, again, consider the canonical bandit model~\citep[Section 4.6]{banditalgorithms} over the observations, as the estimator may be adaptive.
Then, the following lemma, whose proof is provided in \Cref{app:assouad-adaptive}, rigorously establishes the adaptive extension of the Assouad's lemma:
\begin{lemma}
\label{lem:assouad-adaptive}
    With $\sP_\vv$'s as defined and $\bar{N}_{k,j}$ being the number of times $(k, j)$ has been sampled, averaged over $\vv \sim \mathrm{Unif}(\gF_K)$, we have that
    \begin{equation}
        \min\left\{ \KL(\bar{\sP}_{+,k}, \bar{\sP}_{-,k}), \KL(\bar{\sP}_{-,k}, \bar{\sP}_{+,k}) \right\} \leq C_2 \sum_{j \in [J]} \frac{\bar{N}_{k,j} \Delta_k^2}{\sigma_{k,j}^2},
    \end{equation}
    satisfying $\sum_{k,j} \bar{N}_{k,j} c_j \leq B.$
\end{lemma}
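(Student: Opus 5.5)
The plan is to bound each of the two KL divergences between the mixtures $\bar{\sP}_{+,k}$ and $\bar{\sP}_{-,k}$ by an average of KL divergences between individual hypercube vertices. For those, the adaptive divergence decomposition (\citet[Lemma 15.1]{banditalgorithms}) applies directly. Then $\min\{a,b\} \le \frac{a+b}{2}$ turns the two one-sided averages into the full average $\bar{N}_{k,j}$ over $\vv \sim \mathrm{Unif}(\gF_K)$.

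\textbf{Step 1 (convexity and pairing).} Fix $k$. For $\vv \in \gF_K$, let $\vv^{(k)}$ denote $\vv$ with its $k$-th coordinate flipped. The map $\vv \mapsto \vv^{(k)}$ is a bijection from $\{\vv : v_k = +1\}$ onto $\{\vv : v_k = -1\}$. Hence
\begin{equation}
\bar{\sP}_{+,k} = \frac{1}{2^{K-1}} \sum_{\vv : v_k = +1} \sP_{\vv}, \qquad \bar{\sP}_{-,k} = \frac{1}{2^{K-1}} \sum_{\vv : v_k = +1} \sP_{\vv^{(k)}}.
\end{equation}
By joint convexity of KL,
\begin{equation}
\KL(\bar{\sP}_{+,k}, \bar{\sP}_{-,k}) \le \frac{1}{2^{K-1}} \sum_{\vv : v_k = +1} \KL(\sP_{\vv}, \sP_{\vv^{(k)}}).
\end{equation}
Symmetrically, $\KL(\bar{\sP}_{-,k}, \bar{\sP}_{+,k})$ is at most the average of $\KL(\sP_{\vv}, \sP_{\vv^{(k)}})$ over $\vv$ with $v_k = -1$.

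\textbf{Step 2 (divergence decomposition).} In the canonical bandit model, the learner's policy is identical under $\sP_{\vv}$ and $\sP_{\vv^{(k)}}$; only the reward distributions differ. The budget constraint makes the interaction length almost surely finite (at most $B/\min_j c_j$ pulls), so the decomposition lemma applies to the adaptive algorithm and gives
\begin{equation}
\KL(\sP_{\vv}, \sP_{\vv^{(k)}}) = \sum_{k',j} \E_{\vv}[N_{k',j}]\, \KL(\sP_{\vv,k',j}, \sP_{\vv^{(k)},k',j}).
\end{equation}
The component distributions for pair $(k',j)$ in the construction of \Cref{prop:beta-construction} depend only on the coordinate $s_{k'}$, so every term with $k' \neq k$ vanishes. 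For $k' = k$ the two components are those at $s_k^\star + \Delta_k$ and $s_k^\star - \Delta_k$. \Cref{prop:beta-construction}$(iii)$ bounds the KL in either direction by $C_2 \Delta_k^2/\sigma_{k,j}^2$. Therefore
\begin{equation}
\KL(\bar{\sP}_{\pm,k}, \bar{\sP}_{\mp,k}) \le C_2 \sum_j \frac{\Delta_k^2}{\sigma_{k,j}^2}\, \frac{1}{2^{K-1}} \sum_{\vv : v_k = \pm 1} \E_{\vv}[N_{k,j}].
\end{equation}

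\textbf{Step 3 (min $\le$ average, and the budget).} Take the minimum of the two directions and bound it by their mean. The two half-averages over $\{v_k = +1\}$ and $\{v_k = -1\}$ combine into $\bar{N}_{k,j} = 2^{-K} \sum_{\vv} \E_{\vv}[N_{k,j}]$, which is the claimed inequality. For the budget, $\sum_{k,j} c_j N_{k,j} \le B$ holds almost surely under every $\sP_{\vv}$. Taking expectations and then averaging over $\vv$ preserves this, giving $\sum_{k,j} c_j \bar{N}_{k,j} \le B$.

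\textbf{Main obstacle.} The delicate point is Step 2 under adaptivity. One must check that the decomposition lemma applies when the algorithm stops according to a budget-defined random time. One must also confirm that the per-pair laws at non-$k$ coordinates coincide exactly under $\vv$ and $\vv^{(k)}$, which holds because the construction is coordinatewise. Step 1's pairing is what avoids the non-convexity issue of mixing over the other coordinates.
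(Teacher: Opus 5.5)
Your proposal is correct and follows the paper's route: pairing each vertex $\vv$ with its $k$-flip $\vv^{(k)}$ is exactly the paper's pairing of $(+,\vu)$ with $(-,\vu)$, followed by joint convexity of KL, the divergence decomposition lemma with \Cref{prop:beta-construction}$(iii)$, and $\min\{a,b\}\le\frac{a+b}{2}$ to merge the two half-averages into $\bar{N}_{k,j}$. Your remarks that the off-coordinate terms vanish because the construction is coordinatewise, and that the budget makes the horizon almost surely finite so the decomposition applies, are details the paper leaves implicit.
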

From hereon and forth, we will denote $V_k \coloneq \sum_{j \in [J]} \frac{\bar{N}_{k,j}}{\sigma_{k,j}^2},$ which is an algorithm-dependent quantity.

\subsection{\texorpdfstring{Lower Bound for Expected $\ell_p$-Error}{Lower Bound for Expected Lp-Error}}
We first describe our a priori choice of $\bm\Delta$:
\begin{equation}
    \Delta_k = \frac{1}{4 \sqrt{V_k^\star}}, \quad
    V_k^\star \coloneq \frac{B \left( c_{j^*(k)} \sigma_{k,j^*(k)}^2 \right)^{-\frac{2}{p+2}}}{\sum_{k \in [K]} \left( c_{j^*(k)} \sigma_{k,j^*(k)}^2 \right)^{\frac{p}{p+2}}}.
\end{equation}
Then it is easy to check that $\bignorm{\Delta}_p = \frac{1}{4} \sqrt{\frac{\gA_p^*(\bm\sigma, \vc)}{B}} = \xi(B)$.
Also, we have that $\Delta_k \leq \frac{R_k}{2}$ for all $k$'s if
\begin{equation}
    B \geq \left( \sum_{k \in [K]} \left( c_{j^*(k)} \sigma_{k,j^*(k)}^2 \right)^{\frac{p}{p+2}} \right) \max_{k \in [K]} \frac{\left( c_{j^*(k)} \sigma_{k,j^*(k)}^2 \right)^{\frac{2}{p+2}}}{4 R_k^2},
\end{equation}
as given in the statement.

Here, as $\E_{\vv}\left[ \bignorm{\hat{\vs} - \vs(\vv)}_p \right]$ is not coordinate-wise decomposable, we cannot directly utilize Assouad's lemma (\Cref{lem:assouad}).\footnote{Keen readers may wonder if Jensen's inequality may do the trick, but then would probably despair after finding out that the direction is not as we want.}
Instead, we utilize a concavity argument as follows.

Let $S \coloneq \sum_{k \in [K]} (\Delta_k)^p \indicator[\hat{v}_k \neq v_k]$ be a random variable such that $S \leq \bignorm{\bm\Delta}_p^p$.
By Assouad's lemma (\Cref{lem:assouad}), its expectation is bounded as follows:
\begin{align}
    \E[S] = \E_{\vv}\left[ \sum_{k \in [K]} (\Delta_k)^p \indicator[\hat{v}_k \neq v_k] \right]
    &\geq \frac{1}{4} \sum_{k \in [K]} (\Delta_k)^p \exp\left( - C_2 \Delta_k^2 V_k \right) \tag{\Cref{lem:assouad,lem:assouad-adaptive}} \\
    &= \frac{1}{4} \bignorm{\bm\Delta}_p^p \underbrace{\sum_{k \in [K]} \frac{(\Delta_k)^p}{\sum_k (\Delta_k)^p} \exp\left( - C_2 \Delta_k^2 V_k \right)}_{(*)}.
\end{align}
Note that from our choice of $\Delta_k$'s,
\begin{equation}
    q_k \coloneq \frac{(\Delta_k)^p}{\sum_k (\Delta_k)^p} = \frac{(V_k^\star)^{-p/2}}{\sum_k (V_k^\star)^{-p/2}}, \quad \Delta_k^2 V_k = \frac{1}{16} \frac{V_k}{V_k^\star}.
\end{equation}
One important observation is that
\begin{align}
    \sum_{k \in [K]} q_k \frac{V_k}{V_k^\star} 
    &= \frac{1}{B} \sum_{k \in [K]} c_{j^*(k)} \sigma_{k,j^*(k)}^2 V_k \tag{Direct computation} \\
    &= \frac{1}{B} \sum_{k \in [K]} \sum_{j \in [J]} \bar{N}_{k,j} \frac{c_{j^*(k)} \sigma_{k,j^*(k)}^2}{\sigma_{k,j}^2} \\
    &\leq \frac{1}{B} \sum_{k \in [K]} \sum_{j \in [J]} \bar{N}_{k,j} c_j
    \leq 1.
\end{align}
By Jensen's inequality, we have that
\begin{equation}
    (*) \geq \exp\left( -\frac{C_2}{16} \sum_{k \in [K]} q_k \frac{V_k}{V_k^\star} \right)
    \geq \exp\left( -\frac{C_2}{16} \right).
\end{equation}

Now, we present the concavity argument.
For $p \geq 1$, the function $x \mapsto x^{\frac{1}{p}}$ is concave on $[0, \infty)$ and $f(0) = 0$, i.e., geometrically,\footnote{For concave functions, any secant line lies below the function.} we have that for any $S \in [0, S_{\max}],$
\begin{equation}
    S^{\frac{1}{p}} \geq \frac{S_{\max}^{\frac{1}{p}} - 0^{\frac{1}{p}}}{S_{\max} - 0} S = S_{\max}^{\frac{1}{p} - 1} S.
\end{equation}

Plugging in $S_{\max} = \bignorm{\bm\Delta}_p^p$, we have that
\begin{equation}
    \E_{\vv}\left[ \bignorm{\hat{\vs} - \vs(\vv)}_p \right] = \E\left[ S^{\frac{1}{p}} \right]
    \geq \bignorm{\bm\Delta}_p^{1-p} \E[S]
    \geq \frac{\exp\left( -\frac{C_2}{16} \right)}{4} \bignorm{\bm\Delta}_p = \underbrace{\frac{\exp\left( -\frac{C_2}{16} \right)}{16}}_{\triangleq C_3} \sqrt{\frac{\gA_p^*(\bm\sigma, \vc)}{B}}.
\end{equation}

\subsection{\texorpdfstring{Lower Bound for $p$-th Moment of the Error}{Lower Bound for p-th Moment of the Error}}
Here, we choose $\bm\Delta$ slightly differently as $\Delta_k = \frac{\sqrt{p}}{4 \sqrt{V_k^\star}}$ instead of $\Delta_k = \frac{1}{4 \sqrt{V_k^\star}}$, with the same $V_k^\star$ as previous.
This satisfies $\bignorm{\bm\Delta}_p = \frac{1}{4} \sqrt{\frac{p \gA_p^*(\bm\sigma, \vc)}{B}} = \sqrt{p} \xi(B)$.
Under the given assumption on $B$, we also have that $\Delta_k \leq \frac{R_k}{2}$.

Let $\hat{\vs}$ be any estimator, and let us denote $\hat{v}_k \coloneq \mathrm{sign}(\hat{s}_k - s_k^\star)$.
Then we first relate the absolute error of $\hat{s}_k$ to $(\vs(\vv))_k$ with the sign-error as follows:
\begin{lemma}
\label{lem:sign-error}
    $\left| \hat{s}_k - (\vs(\vv))_k \right| \geq \Delta_k \indicator[\hat{v}_k \neq v_k]$ for any $k \in [K]$ and $\vv \in \gF_K.$
\end{lemma}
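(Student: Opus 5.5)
The proof is a one-dimensional case analysis on $k$, using only the definitions $\vs(\vv) = \vs^\star + \vv \odot \bm\Delta$ and $\hat{v}_k = \mathrm{sign}(\hat{s}_k - s_k^\star)$. When $\hat{v}_k = v_k$ the right-hand side is $0$, so the inequality holds trivially. It remains to treat $\hat{v}_k \neq v_k$ and show $|\hat{s}_k - s_k^\star - v_k \Delta_k| \geq \Delta_k$.

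For that case, we fix a sign convention for $\mathrm{sign}(0)$, say $\mathrm{sign}(0) = +1$, so that $\hat{v}_k \in \{-1,+1\} = \gF_1$ and the indicator is well defined.

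\emph{Case $v_k = +1$, $\hat{v}_k = -1$.} Here $\hat{s}_k - s_k^\star < 0$, so
\[
\hat{s}_k - (\vs(\vv))_k = (\hat{s}_k - s_k^\star) - \Delta_k < -\Delta_k .
\]
Hence the absolute value exceeds $\Delta_k$.

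\emph{Case $v_k = -1$, $\hat{v}_k = +1$.} Here $\hat{s}_k - s_k^\star \geq 0$, so
\[
\hat{s}_k - (\vs(\vv))_k = (\hat{s}_k - s_k^\star) + \Delta_k \geq \Delta_k .
\]
Since $\Delta_k \geq 0$ by construction, this again gives absolute value at least $\Delta_k$.

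In both cases the claim follows. The only point needing care, and it is minor rather than an obstacle, is the tie $\hat{s}_k = s_k^\star$. With the convention above, a tie yields error exactly $\Delta_k$ when $v_k = -1$ and $\hat{v}_k = +1$, so the non-strict inequality still holds; the opposite convention is handled symmetrically. Afterwards the lemma is raised to the $p$-th power and summed over $k$, giving $\bignorm{\hat{\vs} - \vs(\vv)}_p^p \geq \sum_k \Delta_k^p \indicator[\hat{v}_k \neq v_k]$. Assouad's lemma (\Cref{lem:assouad}) with weights $w_k = \Delta_k^p$ then applies.
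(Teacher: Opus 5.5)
Your proof is correct and follows essentially the same case analysis as the paper, which treats $v_k = +1$ without loss of generality and shows $|(\hat{s}_k - s_k^\star) - \Delta_k| \geq \Delta_k$ when $\hat{v}_k = -1$. You additionally work out both sign cases and the tie $\hat{s}_k = s_k^\star$ explicitly, fixing a convention for $\mathrm{sign}(0)$ so that $\hat{v}_k \in \{-1,+1\}$ as Assouad's lemma requires; this is a small but worthwhile tightening of the paper's ``w.l.o.g.'' step.
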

\begin{proof}
    W.l.o.g. suppose that $v_k = +1$.
    If $\hat{v}_k = v_k = +1$, then the bound trivially holds, and thus suppose that $\hat{v}_k = -1 \neq v_k$.
    Then,
    \begin{equation}
        \left| \hat{s}_k - \vs(\vv) \right| = \left| (\hat{s}_k - s_k^\star) - \Delta_k \right| = -(\hat{s}_k - s_k^\star) + \Delta_k \geq \Delta_k.
    \end{equation}
\end{proof}

As $\E_{\vv}\left[ \bignorm{\hat{\vs} - \vs(\vv)}_p^p \right]$ is coordinate-wise decomposable, applying \Cref{lem:sign-error,lem:assouad} and similar Jensen argument as previous, we have that
\begin{align}
    \E_{\vv}\left[ \bignorm{\hat{\vs} - \vs(\vv)}_p^p \right] \geq \E_{\vv}\left[ \sum_{k \in [K]} (\Delta_k)^p \indicator[\hat{v}_k \neq v_k] \right]
    &\geq \frac{1}{4} \bignorm{\bm\Delta}_p^p \exp\left( -\frac{C_2 p}{16} \right) \\
    &= \frac{\exp\left( -\frac{C_2 p}{16} \right)}{4^{1+p}} \left( \frac{p \gA_p^*(\bm\sigma, \vc)}{B} \right)^{\frac{p}{2}}
\end{align}
Taking the $1/p$-th power on both sides, we have that
\begin{equation}
    \E_{\vv}\left[ \bignorm{\hat{\vs} - \vs(\vv)}_p^p \right]^{1/p} \geq \underbrace{\frac{\exp\left( -\frac{C_2}{16} \right)}{4^{1+\frac{1}{p}}}}_{= 4^{1 - \frac{1}{p}} C_3} \sqrt{\frac{p \gA_p^*(\bm\sigma, \vc)}{B}},
\end{equation}
and we are done.
\qed

\subsection{\texorpdfstring{Proof of \Cref{lem:assouad-adaptive}: Adaptive Extension of Assouad's Lemma}{Proof of Lemma D.2: Adaptive Extension of Assouad's Lemma}}
\label{app:assouad-adaptive}
For $\vu \in \gF_{K-1}$ and $i \in \{-, +\}$, let us denote $\sP_{(i,\vu),k}$ as the probability measure indexed by $\vv$ defined as $v_k = i$ and $\vv_{-k} = \vu$.
Also, let $\sP_{i,k,j}$ be the probability measure for observations of $(k, j)$ when $v_k = i$.

By the joint convexity of KL and the Divergence Decomposition Lemma~\citep[Lemma 15.1]{banditalgorithms}, we have that
\begin{align}
    \KL(\bar{\sP}_{+,k}, \bar{\sP}_{-,k}) &\leq \frac{1}{2^{K-1}} 
    \sum_{\vu \in \gF_{K-1}} \KL(\sP_{(+,\vu),k}, \sP_{(-,\vu),k}) \\
    &= \sum_{j \in [J]} \underbrace{\frac{1}{2^{K-1}} 
    \sum_{\vu \in \gF_{K-1}} \E_{(+,\vu),k}[N_{k,j}]}_{\triangleq \bar{N}_{+,k,j}} \KL(\sP_{+,k,j}, \sP_{-,k,j}) \\
    &\leq C_2 \sum_{j \in [J]}  \bar{N}_{+,k,j} \frac{\Delta_k^2}{\sigma_{k,j}^2} \tag{\Cref{prop:beta-construction}}
\end{align}
and similarly,
\begin{equation}
    \KL(\bar{\sP}_{-,k}, \bar{\sP}_{+,k}) \leq C_2 \sum_{j \in [J]}  \bar{N}_{-,k,j} \frac{\Delta_k^2}{\sigma_{k,j}^2}.
\end{equation}
Thus,
\begin{align}
    \min\left\{ \KL(\bar{\sP}_{+,k}, \bar{\sP}_{-,k}), \KL(\bar{\sP}_{-,k}, \bar{\sP}_{+,k}) \right\} &\leq \frac{1}{2} \left\{ \KL(\bar{\sP}_{+,k}, \bar{\sP}_{-,k}) + \KL(\bar{\sP}_{-,k}, \bar{\sP}_{+,k}) \right\} \\
    &\leq C_2 \Delta_k^2 \sum_{j \in [J]} \frac{\bar{N}_{+,k,j} + \bar{N}_{-,k,j}}{2} \frac{1}{\sigma_{k,j}^2}.
\end{align}
Note that
\begin{equation}
    \frac{\bar{N}_{+,k,j} + \bar{N}_{-,k,j}}{2} = 2^{-K} \sum_{\vv \in \gF_K} \E_\vv[N_{k,j}] = \bar{N}_{k,j},
\end{equation}
which satisfies the budget constraint
\begin{equation}
    \sum_{k,j} \bar{N}_{k,j} c_j = 2^{-K} \sum_{\vv \in \gF_K} \E_\vv\left[ \sum_{k,j} N_{k,j} c_j \right] \leq B.
\end{equation}
\qed

%% file: 905BetaConstruction.tex
\section{\texorpdfstring{Proof of \Cref{prop:beta-construction}: Construction of Beta Distributions}{Proof of Proposition C.1: Construction of Beta Distributions}}
\label{app:beta-construction}

\subsection{Preliminaries and Construction}
\paragraph{Beta Distribution.} For given $\alpha, \beta > 0,$ the probability density of $X \sim \Beta(\alpha, \beta)$ is given as
\begin{equation}
    f_X(x; \alpha, \beta) = \frac{1}{B(\alpha, \beta)} x^{\alpha - 1} (1 - x)^{\beta - 1}, \quad x \in [0, 1],
\end{equation}
where $B(\alpha, \beta) = \frac{\Gamma(\alpha) \Gamma(\beta)}{\Gamma(\alpha + \beta)}$ is the beta function, and $\Gamma(\cdot)$ is the gamma function.
We note that its mean and variance are characterized in closed-form as follows:
\begin{equation}
    \E[X] = \frac{\alpha}{\alpha + \beta}, \quad \Var[X] = \frac{\alpha \beta}{(\alpha + \beta)^2 (\alpha + \beta + 1)}.
\end{equation}

\paragraph{Construction.}
Recall that $R_k = \min\{s_k^\star, R - s_k^\star\}$.
Let $\vs^\star, \bm\Delta \in \sR^K$ be given, with $\Delta_k \leq R_k / 4$.
Then, for each $\vs = \vs^\star + \bm\Delta$, we define the probability measure $\sP_{\vs} = (\sP_{\vs,k})_{k \in [K]}$ as follows.
For each $k$, define
\begin{equation}
    q_{k,j} \coloneq \frac{\sigma_{k,j}^2}{4 R_k^2}, \quad d_k(\bm\Delta) \coloneq \frac{\Delta_k}{2 R_k},
\end{equation}
and define
\begin{equation}
    \alpha_k(\bm\Delta) \coloneq \frac{1 - 4 d_k(\bm\Delta)^2 - 4 q_{k,j}}{8 q_{k,j}} (1 + 2d_k(\bm\Delta)), \quad
    \beta_k(\bm\Delta) \coloneq \frac{1 - 4 d_k(\bm\Delta)^2 - 4 q_{k,j}}{8 q_{k,j}} (1 - 2d_k(\bm\Delta)).
\end{equation}
Note that under the assumptions that $\Delta_k \leq R_k / 4$ and $\max_j \sigma_{k,j}^2 \leq R_k^2 / 2$, we have that $\alpha_k(\bm\Delta), \beta_k(\bm\Delta) > 0$, and thus, the beta distribution $Y_k(\bm\Delta) \sim \Beta(\alpha_k(\bm\Delta), \beta_k(\bm\Delta))$ is well-defined.
With this, $\sP_{\vs,k}$ is defined as the measure of the random variable $X_k(\bm\Delta) \coloneq s_k^\star - R_k + 2 R_k Y_k(\bm\Delta)$.
One can immediately verify that $X_k(\bm\Delta) \in [s_k^\star - R_k, s_k^\star + R_k] \subseteq [0, R].$

\subsection{\texorpdfstring{Proof of $(i)$: Shifted Mean, Same Variance}{Proof of (i): Shifted Mean, Same Variance}}

\paragraph{Mean.}
By direct computation,
\begin{equation}
    \E[Y_k(\Delta)] = \frac{\alpha_k(\bm\Delta)}{\alpha_k(\bm\Delta) + \beta_k(\bm\Delta)}
    = \frac{1}{1 + \frac{\beta_k(\bm\Delta)}{\alpha_k(\bm\Delta)}}
    = \frac{1}{1 + \frac{1 - \frac{\Delta_k}{R_k}}{1 + \frac{\Delta_k}{R_k}}}
    = \frac{\Delta_k + R_k}{2 R_k},
\end{equation}
and thus,
\begin{equation}
    \E[X_k(\bm\Delta)] = s_k^\star - R_k + 2 R_k \E[Y_k(\Delta)] = s_k^\star + \Delta_k.
\end{equation}

\paragraph{Variance.}
By direct computation,
\begin{align}
    \Var[Y_k(\Delta)] &= \frac{\alpha_k(\bm\Delta) \beta_k(\bm\Delta)}{(\alpha_k(\bm\Delta) + \beta_k(\bm\Delta))^2 (\alpha_k(\bm\Delta) + \beta_k(\bm\Delta) + 1)} \\
    &= \left( \left( 1 + \frac{\beta_k(\bm\Delta)}{\alpha_k(\bm\Delta)} \right) \left( 1 + \frac{\alpha_k(\bm\Delta)}{\beta_k(\bm\Delta)} \right) \left( 1 + \alpha_k(\bm\Delta) + \beta_k(\bm\Delta) \right) \right)^{-1} \\
    &= \left( \left( \frac{2 R_k}{R_k + \Delta_k} \right) \left( \frac{2 R_k}{R_k - \Delta_k} \right) \left( \frac{1 - 4 d_k(\bm\Delta)^2}{4 q_{k,j}} \right) \right)^{-1} \\
    &= \frac{R_k^2 - \Delta_k^2}{4 R_k^2} \frac{\frac{\sigma_{k,j}^2}{R_k^2}}{1 - \frac{\Delta_k^2}{R_k^2}} = \frac{\sigma_{k,j}^2}{4 R^2},
\end{align}
and thus,
\begin{equation}
    \Var[X_k(\bm\Delta)] = 4 R_k^2 \Var[Y_k(\Delta)] = \sigma_{k,j}^2.
\end{equation}

\subsection{\texorpdfstring{Proof of $(ii)$: KL of True(Null) and Perturbed Models}{Proof of (ii): KL of True(Null) and Perturbed Models}}

Because the KL divergence is invariant under affine transformations, it suffices to compute the KL divergence between the standard beta distributions $Y_k(\bm\Delta)$ and $Y_k(\mathbf{0})$:
\begin{equation}
    \KL\left( \sP_{\vs^\star+\bm\Delta,k,j}, \sP_{\vs^\star,k,j} \right) = \KL\left( \Beta(\alpha_k(\bm\Delta), \beta_k(\bm\Delta)) \ \| \ \Beta(\alpha_k(\mathbf{0}), \beta_k(\mathbf{0})) \right).
\end{equation}

To simplify notation, let $q \coloneq q_{k,j} = \frac{\sigma_{k,j}^2}{4 R_k^2}$ and $d \coloneq d_k(\bm\Delta) = \frac{\Delta_k}{2 R_k}$.
Recall our structural assumptions: $\Delta_k \leq R_k / 4$, which implies $|d| \leq 1/8$, and $\max_j\sigma_{k,j}^2 \leq R_k^2 / 2$, which implies $q \leq 1/8$.
The parameters for the alternate and true models respectively are $\bm\theta(d) \coloneq (\alpha(d), \beta(d))$ and $\bm\theta(0) \coloneq (\alpha(0), \beta(0))$, defined as:
\begin{equation}
    \alpha(d) = \frac{1 - 4 d^2 - 4 q}{8 q} (1 + 2d), \quad \beta(d) = \frac{1 - 4 d^2 - 4 q}{8 q} (1 - 2d).
\end{equation}

\paragraph{Step 1: Uniform Lower Bound on the Parameter Space.}
We first establish that the parameters are strictly bounded away from zero.
Using $|d| \leq 1/8$ and $q \leq 1/8$, we can lower bound the common term:
\begin{equation}
    1 - 4 d^2 - 4 q \geq 1 - 4\left(\frac{1}{64}\right) - 4\left(\frac{1}{8}\right) = 1 - \frac{1}{16} - \frac{1}{2} = \frac{7}{16}.
\end{equation}
Similarly, $(1 \pm 2d) \geq 1 - 2(1/8) = \frac{3}{4}$. Thus, both $\alpha(d)$ and $\beta(d)$ are uniformly lower bounded by:
\begin{equation}
    \alpha(d), \beta(d) \geq \frac{7/16}{8 q} \left(\frac{3}{4}\right) = \frac{21}{512 q} \triangleq \frac{C_{\min}}{q},
\end{equation}
where $C_{\min} = \frac{21}{512}$. This lower bound identically holds for the null parameters $\bm\theta(0)$.

\paragraph{Step 2: Bounding the Parameter Perturbation.}
Next, we bound the squared Euclidean distance between the parameters $\bm\theta(d)$ and $\bm\theta(0)$ as a function of the shift $d$:
\begin{align}
    \alpha(d) - \alpha(0) &= \frac{(1 - 4 d^2 - 4 q)(1 + 2d) - (1 - 4 q)}{8 q} \\
    &= \frac{1 + 2d - 4 d^2 - 8 d^3 - 4 q - 8 d q - 1 + 4 q}{8 q} \\
    &= \frac{2d - 4 d^2 - 8 d^3 - 8 d q}{8 q} \\
    &= \frac{d}{4 q} (1 - 2d - 4 d^2 - 4 q).
\end{align}
Applying the absolute bounds $|d| \leq 1/8$ and $q \leq 1/8$ to the polynomial term yields:
\begin{equation}
    |1 - 2d - 4 d^2 - 4 q| \leq 1 + 2|d| + 4 d^2 + 4 q \leq 1 + \frac{1}{4} + \frac{1}{16} + \frac{1}{2} = \frac{29}{16}.
\end{equation}
Therefore, $|\alpha(d) - \alpha(0)| \leq \frac{29}{64} \frac{|d|}{q}$. By identical algebraic steps, $|\beta(d) - \beta(0)| \leq \frac{29}{64} \frac{|d|}{q}$, and thus, the squared Euclidean distance is thus strictly bounded by:
\begin{equation}
    \bignorm{\bm\theta(d) - \bm\theta(0)}_2^2 \leq 2 \left( \frac{29}{64} \right)^2 \frac{d^2}{q^2} = \frac{841}{2048} \frac{d^2}{q^2}.
\end{equation}

\paragraph{Step 3: Exact KL via Bregman Divergence.}
The Beta distribution forms an exponential family, meaning the KL divergence between two Beta distributions can be written exactly as the Bregman divergence on their log-partition function $m(\alpha, \beta) \triangleq \log B(\alpha, \beta)$~\citep{wainwright-jordan}.
By Taylor's theorem, there exists some intermediate parameter $\tilde{\bm\theta} = (\tilde{\alpha}, \tilde{\beta})$ lying on the line segment between $\bm\theta(d)$ and $\bm\theta(0)$ such that:
\begin{equation}
    \KL(\sP_{\vs^\star+\bm\Delta,k,j}, \sP_{\vs^\star,k,j}) = \frac{1}{2} (\bm\theta(d) - \bm\theta(0))^\top \bm{I}(\tilde{\bm\theta}) (\bm\theta(d) - \bm\theta(0)),
\end{equation}
where $\bm{I}(\tilde{\bm\theta}) = \nabla^2 A(\tilde{\bm\theta})$ is the Fisher Information Matrix (FIM) evaluated at $\tilde{\bm\theta}$~\citep{amari2016}.
The elements of the FIM for the Beta distribution are given by the trigamma function $\psi_1(z) \coloneq \frac{d^2}{dz^2} \log \Gamma(z)$:
\begin{equation}
    \bm{I}(\alpha, \beta) = \begin{pmatrix} \psi_1(\alpha) - \psi_1(\alpha+\beta) & -\psi_1(\alpha+\beta) \\ -\psi_1(\alpha+\beta) & \psi_1(\beta) - \psi_1(\alpha+\beta) \end{pmatrix}.
\end{equation}
For any vector $\vv = (v_1, v_2)^\top$, the quadratic form is $v_1^2 \psi_1(\alpha) + v_2^2 \psi_1(\beta) - (v_1+v_2)^2 \psi_1(\alpha+\beta)$. Since $\psi_1(z) > 0$ for all $z > 0$, the subtracted term is strictly negative, allowing us to upper-bound the quadratic form strictly by its diagonal elements:
\begin{equation}
    \vv^\top \bm{I}(\alpha, \beta) \vv \leq v_1^2 \psi_1(\alpha) + v_2^2 \psi_1(\beta) \leq \bignorm{\vv}_2^2 \max(\psi_1(\alpha), \psi_1(\beta)).
\end{equation}

\paragraph{Step 4: Trigamma Asymptotics and Final Assembly.}
We use the standard uniform upper bound for the trigamma function: $\psi_1(z) \leq \frac{1}{z} + \frac{1}{z^2}$ for $z > 0$.
Because $\tilde{\bm\theta}$ lies on the convex line segment between $\bm\theta(d)$ and $\bm\theta(0)$, and both endpoints are uniformly lower-bounded by $C_{\min}/q$, it must be that $\tilde{\alpha}, \tilde{\beta} \geq C_{\min}/q$. Thus,
\begin{equation}
    \max(\psi_1(\tilde{\alpha}), \psi_1(\tilde{\beta})) \leq \frac{q}{C_{\min}} + \frac{q^2}{C_{\min}^2} = q \left( \frac{1}{C_{\min}} + \frac{q}{C_{\min}^2} \right).
\end{equation}
Since $q \leq 1/8$ and $C_{\min} = \frac{21}{512}$, we can numerically upper bound the term in the parenthesis:
\begin{equation}
    \frac{1}{C_{\min}} + \frac{q}{C_{\min}^2} \leq \frac{512}{21} + \frac{1/8}{(21/512)^2} = \frac{512}{21} + \frac{32768}{441} = \frac{43520}{441} \triangleq C_\psi.
\end{equation}
Substituting this maximum eigenvalue bound back into the Bregman divergence yields:
\begin{align}
    \KL(\sP_{\vs^\star+\bm\Delta,k,j}, \sP_{\vs^\star,k,j}) &\leq \frac{1}{2} \bignorm{\bm\theta(d) - \bm\theta(0)}_2^2 \max(\psi_1(\tilde{\alpha}), \psi_1(\tilde{\beta})) \\
    &\leq \frac{1}{2} \left( \frac{841}{2048} \frac{d^2}{q^2} \right) (C_\psi q) = \frac{841 \cdot 43520}{4096 \cdot 441} \frac{d^2}{q} = \frac{71485}{3528} \frac{d^2}{q}.
\end{align}
Finally, mapping back to the original parameterization $d^2 = \frac{\Delta_k^2}{4 R_k^2}$ and $q = \frac{\sigma_{k,j}^2}{4 R_k^2}$, the $4 R_k^2$ denominators cancel out to yield the desired bound:
\begin{equation}
    \KL(\sP_{\vs^\star+\bm\Delta,k,j}, \sP_{\vs^\star,k,j}) \leq \underbrace{\frac{71485}{3528}}_{\triangleq C_1} \frac{\Delta_k^2}{\sigma_{k,j}^2}.
\end{equation}
\qed

\subsection{\texorpdfstring{Proof of $(iii)$: KL of Two ``Adjacent'' Perturbed Models}{Proof of (iii): KL of Two ``Adjacent'' Perturbed Models}}

Similarly, it suffices to bound
\begin{equation}
    \KL\left( \sP_{\vs^\star+\bm\Delta,k,j}, \sP_{\vs^\star-\bm\Delta,k,j} \right) = \KL\left( \Beta(\alpha_k(d), \beta_k(d)) \ \| \ \Beta(\alpha_k(-d), \beta_k(-d)) \right).
\end{equation}

First, we evaluate the squared Euclidean perturbation between the endpoints $\bm\theta(d)$ and $\bm\theta(-d)$:
\begin{equation}
    \alpha(d) - \alpha(-d) = \frac{1 - 4 d^2 - 4 q}{8 q} \left[ (1 + 2d) - (1 - 2d) \right] = \frac{1 - 4 d^2 - 4 q}{2} \frac{d}{q}.
\end{equation}
Since $d^2 \ge 0$ and $q \ge 0$, we unconditionally have $1 - 4 d^2 - 4 q \leq 1$. Therefore, $|\alpha(d) - \alpha(-d)| \leq \frac{1}{2} \frac{|d|}{q}$. By symmetry, the exact same bound applies to $|\beta(d) - \beta(-d)|$, and thus, we have that
\begin{equation}
    \bignorm{\bm\theta(d) - \bm\theta(-d)}_2^2 \leq 2 \left( \frac{1}{2} \right)^2 \frac{d^2}{q^2} = \frac{1}{2} \frac{d^2}{q^2}.
\end{equation}

Following the exact same Bregman divergence formulation from \textbf{Step 3} above, there exists an intermediate point $\hat{\bm\theta}$ on the line segment connecting $\bm\theta(-d)$ and $\bm\theta(d)$. By the convexity of the parameter space and the symmetry of our construction, the endpoints $\bm\theta(-d)$ and $\bm\theta(d)$ both identically satisfy the lower bound $C_{\min}/q$ derived in \textbf{Step 1}.
Thus, the entire line segment, and consequently $\hat{\bm\theta}$, also satisfies this uniform lower bound.

As a result, the maximum eigenvalue bound of the Fisher Information Matrix remains $\max(\psi_1(\hat{\alpha}), \psi_1(\hat{\beta})) \leq C_\psi q$, utilizing the identical constant $C_\psi = \frac{43520}{441}$.
Substituting this into the Bregman divergence quadratic form yields:
\begin{align}
    \KL(\sP_{\vs^\star+\bm\Delta,k,j}, \sP_{\vs^\star-\bm\Delta,k,j}) &\leq \frac{1}{2} \bignorm{\bm\theta(d) - \bm\theta(-d)}_2^2 \max(\psi_1(\hat{\alpha}), \psi_1(\hat{\beta})) \\
    &\leq \frac{1}{2} \left( \frac{1}{2} \frac{d^2}{q^2} \right) (C_\psi q) = \frac{C_\psi}{4} \frac{d^2}{q} = \frac{10880}{441} \frac{d^2}{q}.
\end{align}
Mapping back to the original parameterization $d^2/q = \Delta_k^2 / \sigma_{k,j}^2$, we obtain the non-asymptotic adjacent pair bound:
\begin{equation}
    \KL(\sP_{\vs^\star+\bm\Delta,k,j}, \sP_{\vs^\star-\bm\Delta,k,j}) \leq \underbrace{\frac{10880}{441}}_{\triangleq C_2} \frac{\Delta_k^2}{\sigma_{k,j}^2}.
\end{equation}
\qed

%% file: 906Gaussian.tex
\section{Gaussian Scores}
\label{app:gaussian}

In this Appendix, we elaborate on how our results generalize to the case when the scores are Gaussian:
\begin{assumption}[Gaussian Scores]\label{assumption:gauss}
    If the learner queries $k\in[K]$ to a judge $j\in[J]$, she observes a noisy score $\tilde{s} \sim s_{k} + \varepsilon_{k, j}$, where $\varepsilon_{k,j}\sim\gN(0 , \sigma_{k,j}^2)$.
\end{assumption}
This then allows us to utilize the following standard Gaussian tail bound:
\begin{lemma}
\label{lem:gaussian}
    For $Z \sim \gN(0, \sigma^2)$, $\sP\left( |Z| \geq t \right) \leq 2 \exp\left( - t^2 / (2 \sigma^2) \right).$
\end{lemma}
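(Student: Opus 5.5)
The plan is the standard Chernoff argument. Since $Z/\sigma \sim \gN(0,1)$, it suffices to take $\sigma = 1$ and replace $t$ by $t/\sigma$ at the end. By symmetry of the Gaussian, $\sP(|Z| \geq t) = 2\sP(Z \geq t)$, so only the one-sided tail needs to be bounded.

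For any $\lambda > 0$, Markov's inequality applied to $e^{\lambda Z}$ gives
\begin{equation}
    \sP(Z \geq t) \leq e^{-\lambda t}\, \E\left[ e^{\lambda Z} \right].
\end{equation}
The Gaussian moment generating function is $\E[e^{\lambda Z}] = e^{\lambda^2 \sigma^2 / 2}$, which follows by completing the square in the Gaussian integral. Substituting this in, we obtain
\begin{equation}
    \sP(Z \geq t) \leq \exp\left( \tfrac{\lambda^2\sigma^2}{2} - \lambda t \right).
\end{equation}
The exponent is a quadratic in $\lambda$, minimized at $\lambda = t/\sigma^2$. Plugging in this choice yields $\sP(Z \geq t) \leq \exp\left( -t^2/(2\sigma^2) \right)$.

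Doubling this one-sided bound via symmetry gives the stated inequality. For $t = 0$ the bound reads $2 \geq 1$, so it holds trivially, and the argument above covers all $t > 0$.

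No step here is a genuine obstacle. The only computation requiring any care is the moment generating function identity, and that is a one-line completion of the square.
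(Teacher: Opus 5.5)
Your Chernoff argument is correct and complete for $t \ge 0$, which is the range where the statement is meaningful, since for large negative $t$ the right-hand side drops below $1$. Markov's inequality on $e^{\lambda Z}$, the Gaussian moment generating function $e^{\lambda^2\sigma^2/2}$, the choice $\lambda = t/\sigma^2$, and symmetry give exactly $2e^{-t^2/(2\sigma^2)}$. The paper gives no proof of this lemma and simply invokes it as the standard Gaussian tail bound; your derivation is the textbook proof it implicitly relies on.
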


For clarity, we recall the inverse-variance weighted estimator:
\begin{tcolorbox}[algobox,title=Inverse-Variance Weighted Estimator (IVWE)]
\textbf{Input:} An \textbf{\emph{allocation}} $(N_{k,j})_{k,j} \in \sN_0^{K \times J}$, variance profile $\bm\sigma = (\sigma_{k,j})_{k,j} \in \sR_{> 0}^{K \times J}$
\begin{enumerate}
    \item For each $(k, j)$, compute $\hat{s}_{k,j} \coloneq \frac{1}{N_{k,j}} \sum_{i=1}^{N_{k,j}} {s}_{k,j}^{(i)}$, if $N_{k,j} \geq 1$; else, set $\hat{s}_{k,j} = 0$;
    \item Aggregate the estimators per $k$ as follows:
    \begin{equation}
        \hat{s}_k \coloneq \left( \sum_{j=1}^J \frac{N_{k,j}}{\sigma_{k,j}^2} \right)^{-1} \sum_{j=1}^J \frac{N_{k,j} \hat{s}_{k,j}}{\sigma_{k,j}^2}.
    \end{equation}
\end{enumerate}
\textbf{Output:} $\hat{\vs} \coloneq (\hat{s}_k)_{k \in [K]}$
\end{tcolorbox}

\subsection{Upper Bounds}
\paragraph{Known Variances.}
First, when the variances are known, we have the following error bound:
\begin{proposition}
\label{prop:upper-bound-gaussian}
    Let $p \in [1, \infty]$, $B \in \sR_{> 0}$ be a total budget, and $\delta \in (0, 1)$ be a target confidence level.
    Also, let $(N_{k,j})_{k,j}$ be any given allocation.
    Then, \textbf{IVWE} is $(B, \delta, \ell_p)$-budget efficient at any $(\vs, \bm\sigma, \vc) \in [0, R]^K \times \sR^K_{\geq 0} \times \sR^K_{\geq 0}$ with the following error rate $\varepsilon_p$:
    \begin{equation}
        \varepsilon_p = \sqrt{\frac{2 \gA_p(\bm\omega; \bm\sigma, \vc) \log\frac{2K}{\delta}}{B}},
    \end{equation}
    where $\bm\omega \coloneq \left( \omega_{k,j} \triangleq \frac{c_j N_{k,j}}{B} \right)_{k,j}$, the \textbf{allocation objective} $\gA_p(\bm\omega; \bm\sigma, \vc)$ is defined the same as in \Cref{thm:optimal-allocation}.
    Then, with the optimal allocation, we have $\varepsilon_p = \sqrt{\frac{2 \gA_p^*(\bm\sigma, \vc) \log\frac{2K}{\delta}}{B}}.$
\end{proposition}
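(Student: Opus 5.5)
The plan is to follow the proof of \Cref{prop:upper-bound-allocation}, replacing Bernstein's inequality with the exact Gaussian tail bound of \Cref{lem:gaussian}. With known variances, set $W_k \coloneq \sum_{j} N_{k,j}/\sigma_{k,j}^2$ and write
\begin{equation}
    \hat{s}_k - s_k = W_k^{-1} \sum_{j \in [J]} \frac{1}{\sigma_{k,j}^2} \sum_{i=1}^{N_{k,j}} \varepsilon_{k,j}^{(i)}.
\end{equation}
This is a fixed linear combination of independent centered Gaussians, since the allocation is deterministic. Hence $\hat{s}_k - s_k \sim \gN(0, v_k)$ with
\begin{equation}
    v_k = W_k^{-2} \sum_j N_{k,j}/\sigma_{k,j}^2 = W_k^{-1},
\end{equation}
and this is exact, with no remainder term. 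If some query receives no samples, then $W_k = 0$; we treat this as $v_k = \infty$, matching the convention $\gA_p = \infty$, so the bound holds trivially.

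Next, apply \Cref{lem:gaussian} with $t_k = \sqrt{2 W_k^{-1} \log(2K/\delta)}$. This gives $\sP(|\hat{s}_k - s_k| \ge t_k) \le \delta/K$. A union bound over $k \in [K]$ then shows that, with probability at least $1-\delta$, $|\hat{s}_k - s_k| \le t_k$ holds simultaneously for every $k$.

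On this event, monotonicity of the $\ell_p$-norm gives
\begin{equation}
    \bignorm{\hat{\vs} - \vs}_p \le \sqrt{2\log\tfrac{2K}{\delta}} \Bigl( \sum_k W_k^{-p/2} \Bigr)^{1/p}.
\end{equation}
For $p = \infty$ the right-hand side is read as $\max_k$. Here no Minkowski step is needed, because there is only one term. Substituting $N_{k,j} = B\omega_{k,j}/c_j$ gives $W_k = B \sum_j \omega_{k,j}/(c_j\sigma_{k,j}^2)$, so the bracket equals $B^{-1/2}\sqrt{\gA_p(\bm\omega;\bm\sigma,\vc)}$. This is exactly the claimed $\varepsilon_p$.

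The budget constraint $\sum_j c_j \sum_k N_{k,j} = B\sum_{k,j}\omega_{k,j} \le B$ holds deterministically whenever $\bm\omega \in \Lambda^{KJ}$. The final claim follows by plugging in the minimizer $\bm\omega^*$ from \Cref{thm:optimal-allocation}, which attains $\gA_p^*(\bm\sigma,\vc)$.

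There is no real obstacle; the only point needing care is integrality. The optimal $\bm\omega^*$ may not correspond to integer counts $N_{k,j}$. I would note that the statement, like \Cref{thm:optimal-allocation}, treats $\omega$ as continuous. Rounding down to integers would only perturb $\gA_p$ by a lower-order factor, and it keeps the budget feasible.
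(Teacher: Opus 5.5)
Your proposal is correct and follows the paper's own argument. The paper's proof sketch is exactly this: the IVWE error $\hat{s}_k - s_k$ is exactly Gaussian with variance $W_k^{-1}$, you apply the tail bound of \Cref{lem:gaussian} per query at level $\delta/K$, and you take a union bound over $k \in [K]$, which removes the Bernstein lower-order term present in the bounded case.
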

\begin{proof}[Proof Sketch]
    This follows from applying \Cref{lem:gaussian} and taking a union bound over $k \in [K]$.
\end{proof}

\paragraph{Unknown Variances.}
Here, as we have briefly elaborated in \Cref{sec:unknown}, we can utilize the following multiplicative concentration for the sample variance of a Gaussian:
\begin{lemma}[Eqn.~(4.3) \& (4.4) of \citet{laurent-massart}]
\label{lem:gaussian-variance}
    With $\hat{\sigma}_{k,j}^2$ as in line 3 of Algorithm~\ref{alg:estimate-then-ivwe-gaussian} and $\epsilon_0 \triangleq \log \frac{2KJ}{\delta}$,
    \begin{equation}
        \sP\left( \frac{\hat{\sigma}_{k,j}^2}{1 +2\sqrt{\frac{\epsilon_0 }{N_0-1}} + \frac{2\epsilon_0 }{N_0-1}} \leq \sigma_{k,j}^2 \leq \frac{\hat{\sigma}_{k,j}^2}{1 - 2\sqrt{\frac{\epsilon_0}{N_0-1}}} , \quad \forall (k, j) \in [K] \times [J] \right) \geq 1 - \delta.
    \end{equation}
\end{lemma}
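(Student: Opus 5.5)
The plan is to reduce the statement to the two-sided $\chi^2$ tail bound of \citet{laurent-massart}, applied to each pair $(k,j)$, followed by a union bound over the $KJ$ pairs.

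\textbf{Step 1: the estimator is the usual sample variance.} Fix a pair $(k,j)$ and write $x_n \coloneq s_{k,j}^{(n)}$ for $n \in [N_0]$, with $\bar x$ their average. I will use the algebraic identity
\[
\frac{1}{2N_0(N_0-1)}\sum_{n\neq n'}(x_n-x_{n'})^2=\frac{1}{N_0-1}\sum_{n}(x_n-\bar x)^2 .
\]
It follows by expanding $\sum_{n,n'}(x_n-x_{n'})^2 = 2N_0\sum_n x_n^2 - 2\big(\sum_n x_n\big)^2$. Hence $\hat\sigma_{k,j}^2$ is the unbiased sample variance.

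Under \Cref{assumption:gauss}, the Phase I samples of pair $(k,j)$ are i.i.d.\ $\gN(s_k,\sigma_{k,j}^2)$. By Cochran's theorem,
\[
Z_{k,j}\coloneq (N_0-1)\,\hat\sigma_{k,j}^2/\sigma_{k,j}^2 \sim \chi^2_{D}, \qquad D\coloneq N_0-1 .
\]

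\textbf{Step 2: the two tails.} For $Z\sim\chi^2_D$ and any $x>0$, \citet{laurent-massart} give
\[
\sP\big(Z-D\ge 2\sqrt{Dx}+2x\big)\le e^{-x}, \qquad \sP\big(D-Z\ge 2\sqrt{Dx}\big)\le e^{-x}.
\]
Take $x=\epsilon_0+\log 2$, or more simply allot $\delta/(2KJ)$ to each tail, so that $e^{-x}=\delta/(2KJ)$. Dividing by $D$, with probability at least $1-\delta/(KJ)$ we have
\[
1-2\sqrt{x/D}\;\le\;\hat\sigma_{k,j}^2/\sigma_{k,j}^2\;\le\;1+2\sqrt{x/D}+2x/D .
\]
Rearranging the upper inequality gives the lower bound $\sigma_{k,j}^2 \ge \hat\sigma_{k,j}^2/\big(1+2\sqrt{\epsilon_0/(N_0-1)}+2\epsilon_0/(N_0-1)\big)$ in the statement. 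Rearranging the lower inequality gives $\sigma_{k,j}^2\le \hat\sigma_{k,j}^2/\big(1-2\sqrt{\epsilon_0/(N_0-1)}\big)$.

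The second rearrangement is only meaningful when $N_0-1>4\epsilon_0$, which I will assume, as is implicit in the statement. Otherwise the right-hand side is read as $+\infty$ and the bound is vacuous.

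\textbf{Step 3: union bound.} A union bound over the $2KJ$ tail events gives total failure probability at most $\delta$.

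The only delicate point is bookkeeping of the logarithm. Allocating $\delta/(2KJ)$ per tail gives exactly $x=\log\frac{2KJ}{\delta}=\epsilon_0$, which matches the statement.

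The main (mild) obstacle is Step 1. It requires verifying that the U-statistic form in line 3 coincides with the standard sample variance, and that it is independent of the mean so that the exact $\chi^2_{N_0-1}$ law holds. Both facts are classical for Gaussian samples.
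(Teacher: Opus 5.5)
Your proposal is correct and follows the same route as the paper, which simply cites Laurent--Massart. You reduce to $(N_0-1)\hat\sigma_{k,j}^2/\sigma_{k,j}^2\sim\chi^2_{N_0-1}$, apply both tail bounds with $x=\epsilon_0=\log\frac{2KJ}{\delta}$ so that each of the $2KJ$ tail events has probability at most $\delta/(2KJ)$, take a union bound, and you rightly note the implicit requirement $N_0-1>4\epsilon_0$.

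Two small cleanups. First, drop the stray ``$x=\epsilon_0+\log 2$'': it gives $e^{-x}=\delta/(4KJ)$ rather than $\delta/(2KJ)$, and it would produce a weaker interval than the one stated. Second, Step~1 is unnecessary here, because line~3 of the Gaussian algorithm already defines $\hat\sigma_{k,j}^2$ as the usual sample variance; the U-statistic form belongs to the bounded algorithm.
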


\begin{algorithm2e}[H]
    \SetKwInput{Input}{Input}
    \SetKwComment{Comment}{$\triangleright$\ }{}

    \Input{Total budget $B > 0$, Cost vector $\vc \in \mathbb{R}^J_+$, Number of forced exploration per pair $N_0 > 0$}

    \BlankLine
    \tcp{Phase I: Forced Exploration}
    \For{each $(k, j) \in [K] \times [J]$}{
        Pull arm $(k,j)$ exactly $N_0$ times and observe noisy evaluation scores $\left\{ s_{k,j}^{(n)} \right\}_{n \in [N_0]} \subset [a, b]$\;

        Compute the empirical mean and variance estimator
        \begin{eqnarray}
            \hat{s}_{k,j} \coloneq \frac{1}{N_0} \sum_{n=1}^{N_0} s_{k,j}^{(n)}, \quad \hat{\sigma}_{k,j}^2 \coloneq \frac{1}{N_0 - 1} \sum_{n=1}^{N_0} \left( {s}_{k,j}^{(n)} - \hat{s}_{k,j} \right)^2
        \end{eqnarray}
    }

    \tcp{Phase II: IVWE}
    Compute the optimal allocation $\left( \widehat{N}_{k,j}^* \right)_{k,j}$ using the remaining budget $B' \triangleq B - N_0 K \sum_{j \in [J]} c_j$, based on the variance estimators $\left\{ \left( \hat{\sigma}_{k,j} \right)^2 \right\}_{k,j}$\;
    
    \Return \textbf{IVWE} with $\left( \widehat{N}_{k,j}^* \right)_{k,j}$ as allocation and $\hat{\sigma}_{k,j}^2$ as variance proxies\;
    \caption{\label{alg:estimate-then-ivwe-gaussian}Estimate-then-IVWE (Gaussian) }
\end{algorithm2e}

Then we have the following error rate, whose proof is provided in \Cref{app:gaussian-unknown}:
\begin{theorem}[Error Rates of \textbf{Est-IVWE-Gaussian}]
\label{thm:est-ivwe-Gaussian}
    Let $p \in [1, \infty]$, $B \in \sR_{> 0}$, and $\delta \in (0, 1)$.
    Let us set $N_0 = 1 +\left\lceil 16 \log \frac{4KJ}{\delta} \right\rceil$ and suppose that $B \geq N_0 K \sum_{j\in[J]} c_j$.
    Then, \textbf{Est-IVWE-Gaussian} is $(B, \delta, \ell_p)$-budget efficient at any $(\vs, \bm\sigma, \vc)$ with the following error rate $\varepsilon_p$:
    \begin{equation}
            \varepsilon_p = \sqrt{\frac{13\gA_p^*(\bm\sigma, \vc) \log \frac{4K}{\delta}}{2B}} + \gO \left( \frac{\sqrt{\gA_p^*(\bm\sigma, \vc)} \left(\log \frac{K J}{\delta}\right)^{\frac{3}{2}}}{B^{\frac{3}{2}}}\right).
        \end{equation}
\end{theorem}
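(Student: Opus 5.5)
The plan mirrors the bounded-score proof of \Cref{thm:est-ivwe}, with the optimistic bias $\tau$ dropped and the multiplicative $\chi^2$ concentration of \Cref{lem:gaussian-variance} used instead.

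\textbf{Step 1: a two-sided multiplicative sandwich.} With $N_0 - 1 \geq 16\log\frac{4KJ}{\delta}$ and $\epsilon_0 = \log\frac{4KJ}{\delta}$ (i.e.\ \Cref{lem:gaussian-variance} at confidence $\delta/2$), we have $\sqrt{\epsilon_0/(N_0-1)} \leq 1/4$. Hence, with probability at least $1-\delta/2$, for all $(k,j)$,
\begin{equation}
    \tfrac12 \sigma_{k,j}^2 \leq \hat\sigma_{k,j}^2 \leq \tfrac{13}{8}\sigma_{k,j}^2 .
\end{equation}
The lower factor is $1-2\cdot\frac14=\frac12$ and the upper factor is $1+\frac12+\frac18=\frac{13}{8}$. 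This is exactly the multiplicative guarantee discussed in \Cref{sec:unknown}, so we may use $\hat\sigma_{k,j}$ directly.

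\textbf{Step 2: conditional Gaussian tail bound.} Conditioned on Phase I (sample splitting), the allocation $\widehat N_k$ and judge $\hat j^*(k)$ are fixed. The Phase II estimate $\hat s_k$ is then an average of $\widehat N_k$ fresh Gaussian draws, so \Cref{lem:gaussian} together with a union bound over $k$ gives, with probability at least $1-\delta/2$,
\begin{equation}
    |\hat s_k - s_k| \leq \sqrt{2\sigma_{k,\hat j^*(k)}^2 \log\frac{4K}{\delta}\big/\widehat N_k}
\end{equation}
for all $k$. There is no Bernstein lower-order term here, which is the simplification relative to the bounded case. Taking the $\ell_p$-norm gives $\varepsilon_p \le \sqrt{2\log\frac{4K}{\delta}/B'}\cdot(a)$, where $(a)$ is the same quantity as in the bounded proof, but with $\overline\sigma$ replaced by $\hat\sigma$.

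\textbf{Step 3: bounding $(a)$ by $\gA_p^*$.} I follow the bounded proof line by line:
\begin{itemize}
    \item substitute $\widehat\omega^*$;
    \item use minimality of $\hat j^*(k)$, i.e.\ $c_{\hat j^*(k)}\hat\sigma^2_{k,\hat j^*(k)} \le c_{j^*(k)}\hat\sigma^2_{k,j^*(k)}$;
    \item control the ratio $\sigma_{k,\hat j^*(k)}^p/\hat\sigma_{k,\hat j^*(k)}^p \le 2^{p/2}$ by the lower sandwich;
    \item control $c_{j^*(k)}\hat\sigma^2_{k,j^*(k)} \le \frac{13}{8} c_{j^*(k)}\sigma^2_{k,j^*(k)}$ by the upper sandwich.
\end{itemize}
The two factors combine multiplicatively, so $(a)^2 \le 2\cdot\frac{13}{8}\gA_p^*$ up to how the exponents $\frac{p}{p+2}$ and $\frac{p+2}{p}$ distribute the constants. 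Careful bookkeeping should give exactly the constant $\frac{13}{4}$, i.e.\ $\varepsilon_p^2\lesssim \frac{13}{2}\gA_p^*\log\frac{4K}{\delta}/B'$. The main obstacle is this exponent bookkeeping: I must check that the constants from the two different sides do not pick up $p$-dependent powers. If they do, I would instead use the cruder uniform ratio $\sigma^2/\hat\sigma^2\le 2$ together with $\hat\sigma^2\le\frac{13}{8}\sigma^2$, inserted before raising to the powers.

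\textbf{Step 4: from $B'$ to $B$.} Finally, convert $1/\sqrt{B'}$ to $1/\sqrt B$ using $B'=B-N_0C$ with $C=K\sum_j c_j$. Applying \Cref{lem:algebraic-inequality} (or simply $B'\ge B/2$ combined with $(1-x)^{-1/2}\le 1+x$) gives a correction term of order $N_0 C\sqrt{\gA_p^*\log(K/\delta)}/B^{3/2}$, with $N_0=\gO(\log\frac{KJ}{\delta})$. This is the stated lower-order term. A union bound over the two events, each of probability $\delta/2$, completes the argument.
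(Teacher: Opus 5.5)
Your proposal is correct and follows the paper's proof essentially step for step: the Laurent--Massart sandwich at level $\delta/2$, the conditional Gaussian tail bound on the fresh Phase II samples, substitution of $\widehat\omega^*$, minimality of $\hat j^*(k)$, and finally \Cref{lem:algebraic-inequality}. The exponent bookkeeping you worried about is harmless: the lower-sandwich factor $(\sigma/\hat\sigma)^p\le 2^{p/2}$ sits inside a sum raised to $1/p$, and the upper-sandwich factor $(13/8)^{p/(p+2)}$ sits inside a sum raised to $\frac{p+2}{2p}$, so they contribute exactly $\sqrt{2}\cdot\sqrt{13/8}=\sqrt{13/4}$ for every $p$ (including $p=\infty$ under the usual convention).

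One caveat, which the paper shares: your final step ($B'\ge B/2$, equivalently the range $x\le 1/2$ in \Cref{lem:algebraic-inequality}) needs $B\ge 2N_0K\sum_j c_j$. The stated hypothesis $B\ge N_0K\sum_j c_j$ is weaker, and as $B\downarrow N_0K\sum_j c_j$ one has $B'\to 0$, so the $1/\sqrt{B'}$ factor is no longer controlled by $1/\sqrt{B}$. The hypothesis should therefore be strengthened as in \Cref{thm:est-ivwe}.
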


\subsection{Minimax Lower Bounds}
We first have the following very similar-looking high-probability lower bound:
\begin{theorem}[Local \& High Probability Lower Bound (Gaussian)]
\label{thm:lower-bound-whp-gaussian}
    Let $p \in [1, \infty]$, $B > 0$, $\delta \in (0, 1/2]$, and $(\vs, \bm\sigma, \vc)$ be a given problem instance.
    Then, any algorithm that is $(B, \delta, \ell_p)$-budget efficient with an error rate of $\varepsilon_p > 0$ at any $(\vs', \bm\sigma, \vc)$ with $\bignorm{\vs' - \vs}_p \leq 2\varepsilon_p$ must satisfy the following: for some absolute constants $C_1, C_2 > 0$,
    \begin{equation}
        \varepsilon_p \geq \sqrt{\frac{\kl(1 - \delta, \delta)}{C_1 B} {\color{red}\gA_{\frac{2p}{2-p}}^*(\bm\sigma, \vc)}} \ \ \ \text{for $1 \leq p < 2$}, \quad 
        \varepsilon_p \geq \sqrt{\frac{ \log\frac{1}{\delta}}{C_2 B} {\color{red}\gA^*_{\infty}(\bm\sigma, \vc)}} \ \ \ \text{for $p \geq 2$}.
    \end{equation}
\end{theorem}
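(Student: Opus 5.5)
The plan is to replay the proof of \Cref{thm:lower-bound-whp} from \Cref{app:whp-lower-bound} essentially verbatim, with the Beta family of \Cref{prop:beta-construction} replaced by the Gaussian location family. This substitution also removes the boundedness side conditions.

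\textbf{Step 1: the alternatives.} For any perturbation $\bm\Delta\in\sR^K$, let $\sP_{\vs+\bm\Delta,k,j}=\gN(s_k+\Delta_k,\sigma_{k,j}^2)$. Since the variance is unchanged, the KL divergence is exactly
\begin{equation}
\KL(\sP_{\vs,k,j},\sP_{\vs+\bm\Delta,k,j})=\frac{\Delta_k^2}{2\sigma_{k,j}^2}.
\end{equation}
So property $(ii)$ of \Cref{prop:beta-construction} holds with $C_1=1/2$. No constraint $\Delta_k\le R_k/4$, $\varepsilon_p\le R_{\min}/8$ or $\sigma_{k,j}^2\le R_k^2/2$ is needed. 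By the Divergence Decomposition Lemma in the canonical bandit model (the learner may be adaptive and may know $\bm\sigma$),
\begin{equation}
\KL(\sP_{\vs},\sP_{\vs'})\le \tfrac12\sum_k w_k(s_k'-s_k)^2,\qquad w_k\coloneq\sum_j\E_{\vs}[N_{k,j}]/\sigma_{k,j}^2 .
\end{equation}

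\textbf{Step 2: case $1\le p<2$.} Take the single dense alternative $\vs'$ given by \Cref{lem:infimum-hard} (Case I), which satisfies $\|\vs'-\vs\|_p=2\varepsilon_p$. Let $\gE=\{\|\hat\vs-\vs'\|_p\le\varepsilon_p\}$. Local budget efficiency at both $\vs$ and $\vs'$, combined with the triangle inequality, gives $\sP_{\vs}(\gE)\le\delta$ and $\sP_{\vs'}(\gE)\ge1-\delta$. Applying \Cref{lem:data-processing} and monotonicity of $\kl$ (valid since $\delta\le1/2$) yields
\begin{equation}
\tfrac12 V^\star\ge\kl(1-\delta,\delta).
\end{equation}
Substituting $V^\star=4\varepsilon_p^2\big(\sum_k w_k^{-p/(2-p)}\big)^{-(2-p)/p}$ and minimizing over allocations with $\sum_{k,j}c_j\E[N_{k,j}]\le B$ gives the $\gA^*_{2p/(2-p)}$ bound. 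This minimization is the same KKT computation as in \Cref{app:optimal-allocation}: it puts all budget on $j^*(k)$, since $\sum_j N_{k,j}/\sigma_{k,j}^2\le (\sum_j c_jN_{k,j})/(c_{j^*(k)}\sigma_{k,j^*(k)}^2)$.

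\textbf{Step 3: case $p\ge2$.} Use the $K$ one-sparse alternatives $\vs^{(k)}=\vs+2\varepsilon_p\ve_k$. The balls $\gR^{(k)}$ of radius $\varepsilon_p$ around them are pairwise disjoint because $2^{1+1/p}>2$, and each is disjoint from the ball around $\vs$. Hence $\sum_kp^{(k)}\le\delta$. Weight the per-$k$ data-processing inequalities by $g_k=c_{j^*(k)}\sigma_{k,j^*(k)}^2$ and use $\sum_k g_kw_k\le B$. This gives
\begin{equation}
2\varepsilon_p^2B\ge\sum_kg_k\kl(p^{(k)},1-\delta).
\end{equation}
Then invoke the bound of \Cref{app:infimum-p}, $\ge G[(1-\delta)\log\frac1\delta-\log2]$.

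\textbf{Main obstacle.} I expect the only real obstacle to be Step 3's final inequality under the weaker hypothesis $\delta\le1/2$ instead of $\delta\le e^{-2}$. For $\delta$ near $1/2$, $(1-\delta)\log\frac1\delta-\log2$ can be nonpositive. I would first try the sharper Bernoulli bound $\kl(x,1-\delta)\ge(1-x)\log\frac{1}{\delta}-h(x)$, where $h$ is the binary entropy. Summing this with weights $g_k$ under $\sum_kp^{(k)}\le\delta$, and using concavity of $h$ to handle the entropy terms, should give a bound of order $\log\frac1\delta$ for $\delta\le1/4$, say. The remaining range $\delta\in(1/4,1/2]$ would be absorbed into $C_2$ using the fact that $\log(1/\delta)$ is bounded there. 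If that fails for $\delta$ very close to $1/2$, I would fall back to bounding via Step 2's two-point argument with $\kl(1-\delta,\delta)$, adjusting constants, since the theorem only claims some absolute $C_2$.
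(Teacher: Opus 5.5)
Your route is the paper's own. The paper proves this theorem in one sentence, by rerunning the proof of \Cref{thm:lower-bound-whp} with Gaussian rather than Beta alternatives, which is what your Steps 1--3 do. You are also right that the final step does not transfer, and the paper does not address this. The bound $G[(1-\delta)\log\frac1\delta-\log 2]$ from \Cref{app:infimum-p} is negative for $\delta$ near $1/2$, so the paper's argument silently inherits the restriction $\delta\le e^{-2}$ of \Cref{thm:lower-bound-whp}.

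The genuine gap is your plan for $\delta\in(1/4,1/2]$. Boundedness of $\log\frac1\delta$ there points the wrong way. To absorb this range into $C_2$, your proved lower bound on $\varepsilon_p^2B/\gA^*_\infty(\bm\sigma,\vc)$ would have to stay bounded away from $0$. Yet every bound your argument yields, including the fallback $\kl(1-\delta,\delta)$, tends to $0$ as $\delta\to1/2$. Nothing can close this, because the $p\ge2$ claim is false near $\delta=1/2$:
\begin{itemize}
\item Take $K=J=1$ with cost $c_1$ and variance $\sigma^2$. Draw $N=\lfloor B/c_1\rfloor\ge1$ samples with mean $\bar X$.
\item Output $\hat s=s+\varepsilon$ if $\bar X>s+\varepsilon/2$, $\hat s=s-\varepsilon$ if $\bar X<s-\varepsilon/2$, and $\hat s=s$ otherwise.
\item For $s'\in(s+\varepsilon,s+2\varepsilon]$ the success event $\{|\hat s-s'|\le\varepsilon\}$ contains $\{\bar X>s+\varepsilon/2\}$. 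For $s'\in[s,s+\varepsilon]$ it contains $\{\bar X\ge s-\varepsilon/2\}$, and symmetrically for $s'<s$.
\item Hence its probability is at least $\Phi(\varepsilon\sqrt N/(2\sigma))>1/2$ on the whole neighbourhood, with $\Phi$ the standard normal CDF.
\end{itemize}
So at $\delta=1/2$ this algorithm is locally budget efficient for \emph{every} $\varepsilon>0$, whereas the theorem demands $\varepsilon\ge\sqrt{\log 2\cdot c_1\sigma^2/(C_2B)}$. For $\delta<1/2$ it attains $\varepsilon=2\sigma\Phi^{-1}(1-\delta)/\sqrt N$, which tends to $0$ as $\delta\to1/2$.

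What you can prove is the $p\ge2$ bound with $\kl(1-\delta,\delta)$ in place of $\log\frac1\delta$. Equivalently, $\log\frac1\delta$ holds on $\delta\le\delta_0<1/2$ with $C_2$ depending on $\delta_0$. You should prove that and say explicitly that the stated range $\delta\in(0,1/2]$ is wrong. Your entropy refinement is correct but only covers $\delta\le1/4$. A simpler argument covers all $\delta\le 1/2$: each $p^{(k)}\le\delta\le1-\delta$, and $x\mapsto\kl(x,1-\delta)$ is decreasing on $[0,1-\delta]$. So termwise $\kl(p^{(k)},1-\delta)\ge\kl(\delta,1-\delta)$, giving $2\varepsilon_p^2B\ge\gA^*_\infty(\bm\sigma,\vc)\,\kl(1-\delta,\delta)$ for all $\delta\le1/2$. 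This needs only that each $\gR^{(k)}$ avoid the success ball at the reference point, not pairwise disjointness. Pairwise disjointness in fact fails at $p=\infty$, where $2^{1+1/p}=2$.

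The counterexample lives exactly on the points $s\pm\varepsilon$, which is where another step fails. Your claim that $\gR^{(k)}$ is disjoint from the ball around $\vs$ (and likewise $\sP_{\vs}(\gE)\le\delta$ in Step 2, which the paper also asserts) is false for closed balls whose centres are exactly $2\varepsilon_p$ apart: they touch at $\vs+\varepsilon_p\ve_k$. You can repair this as follows:
\begin{itemize}
\item In Step 2, use the pair $\vs\pm\bm{x}^\star$. Both lie in the $2\varepsilon_p$-ball and are $4\varepsilon_p$ apart; the KL grows by a factor $4$.
\item In Step 3, use the reference $\vs-\eta\mathbf{1}$, whose $\varepsilon_p$-ball misses every $\gR^{(k)}$. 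The extra KL terms are $O(\eta)$, so let $\eta\to0$ at the end.
\end{itemize}
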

\begin{proof}
    The proof is almost the same as that of \Cref{thm:lower-bound-whp}, except we now construct the instances via Gaussians instead of the Beta distributions, resulting in constant-wise difference only.
\end{proof}

Again, the high-probability lower bound fails to yield the correct allocation objective, and thus, we turn to global in-expectation lower bounds, which involves quite different proof techniques that may be of independent interest.\footnote{One may wonder whether it is possible to utilize Assouad's lemma ~\citep{assouad1983,yu1997lecam} as done in our \Cref{thm:lower-bound-exp}. This is definitely possible that leads to a similar local minimax lower bound. Here, we intend to show a completely different argument for global minimax lower bounds.}
Again, the $\inf_{\hat{\vs}}$ is over all adaptive algorithms satisfying the budget constraint. In particular, the lower bound holds even when the learner is given the true variance profile \(\bm\sigma\).

\begin{theorem}[Global, In-Expectation Minimax Lower Bounds]
\label{thm:gaussian-global-exp-lower}
    For each $1 \leq p \leq \infty$,
    \begin{equation}
        \inf_{\hat{\vs}} \sup_{\vs \in \sR^K} \E^{\hat{\vs}}_{\vs \sim \mu}\left[ \bignorm{\hat{\vs} - \vs}_p \right] \geq \sqrt{\frac{2}{\pi}} \sqrt{\frac{\gA_p^*(\bm\sigma, \vc)}{B}}.
    \end{equation}
    For $1 \leq p < \infty$, we also have:
    \begin{equation}
        \inf_{\hat{\vs}} \sup_{\vs \in \sR^K} \E^{\hat{\vs}}_{\vs \sim \mu}\left[ \bignorm{\hat{\vs} - \vs}_p^p \right]^{\frac{1}{p}} \geq m_p \sqrt{\frac{\gA_p^*(\bm\sigma, \vc)}{B}},
    \end{equation}
    where\footnote{$\Gamma(\cdot)$ is the usual gamma function.} $m_p \coloneq \sqrt{2} \left( \frac{\Gamma((p+1)/2)}{\sqrt{\pi}} \right)^{\frac{1}{p}} = \E[|Z|^p]^{1/p}$ for $Z \sim \gN(0, 1)$.
\end{theorem}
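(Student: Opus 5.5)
The plan is a Bayesian argument with a Gaussian prior, as sketched in the Gaussian section. Since the minimax risk dominates the Bayes risk, I will lower bound the Bayes risk under the prior $\mu = \gN(\vzero, \tau^2 \mI_K)$ and then let $\tau \to \infty$. The learner is also given the true $\bm\sigma$, since extra information only helps the learner.

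Fix any adaptive algorithm. Under Gaussian noise with a Gaussian prior, the posterior of $\vs$ given the full history is Gaussian. Its coordinates are independent, because the likelihood factorizes over queries $k$. The posterior variance of coordinate $k$ is
\[
\left( \tau^{-2} + \sum_{j} \frac{N_{k,j}}{\sigma_{k,j}^2} \right)^{-1}.
\]
This holds even when the $N_{k,j}$ are random and chosen adaptively, because the sampling decisions depend only on past observations. The likelihood therefore retains its Gaussian product form, with the realized $N_{k,j}$ plugged in (the stopping-rule principle).

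Conditionally on the history, the error $\vs - \hat{\vs}$ is a Gaussian vector with independent coordinates. Each coordinate has mean $m_k - \hat{s}_k$, where $m_k$ is the posterior mean, and standard deviation $\rho_k$. The key step is showing that the posterior mean minimizes $\E[\bignorm{\vs - \hat{\vs}}_p \mid \text{history}]$. The map $\vx \mapsto \E\bignorm{\vx + \bm\rho\odot\vg}_p$, with $\vg$ standard Gaussian, is convex and symmetric under sign flips of each coordinate of $\vx$. Averaging $\vx$ with its reflections therefore shows the minimum is attained at $\vx = \vzero$. The same argument applies to $\bignorm{\cdot}_p^p$.

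It follows that the conditional risk is at least $\E\bignorm{\bm\rho \odot \vg}_p$. For the $p$-th moment the computation is exact:
\[
\E\bignorm{\bm\rho\odot\vg}_p^p = m_p^p \sum_k \rho_k^p .
\]
For the norm itself, I would use $\E\bignorm{\bm\rho\odot\vg}_p \ge \bignorm{\E|\bm\rho\odot\vg|}_p = \sqrt{2/\pi}\,\bignorm{\bm\rho}_p$. This is Jensen's inequality, since the norm is convex and coordinatewise monotone on nonnegative vectors; the case $p=\infty$ is included.

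Next, $\bignorm{\bm\rho}_p$ is bounded from below through the allocation. Letting $\tau\to\infty$, it equals $\sqrt{\gA_p(\bm\omega;\bm\sigma,\vc)/B}$ evaluated at the realized $\omega_{k,j} = c_j N_{k,j}/B$, which lies in $\Lambda^{KJ}$ almost surely by the budget constraint. By \Cref{thm:optimal-allocation}, this is at least $\sqrt{\gA_p^*(\bm\sigma,\vc)/B}$ pathwise. Taking expectations over histories yields both bounds; for the second, take the $1/p$-th power at the end.

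The $\tau\to\infty$ limit needs care. For finite $\tau$ the prior term $\tau^{-2}$ enters $\rho_k$, and the pathwise bound becomes $\gA_p$ evaluated with each weight increased by $\tau^{-2}$. By continuity this converges to $\gA_p^*$, and the minimax risk exceeds the Bayes risk for every $\tau$. I expect the main obstacles to be rigorously justifying the posterior form under adaptive sampling and the convexity/symmetry minimization. The former is handled by the standard canonical bandit model argument, which shows the likelihood is $\prod_t \phi_{\sigma_{k_t,j_t}}(\tilde s_t - s_{k_t})$ times factors independent of $\vs$.
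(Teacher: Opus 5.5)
Your proposal is correct and follows essentially the same route as the paper. Both use a Gaussian prior $\gN(\vzero,\lambda^2\mI_K)$ (your $\tau$) and the closed-form Gaussian posterior with diagonal precision $\lambda^{-2}+\sum_j N_{k,j}/\sigma_{k,j}^2$ under adaptive sampling. Both show the posterior mean is optimal via convexity and evenness of the loss, then apply Jensen (for $\bignorm{\cdot}_p$) or the exact Gaussian moment (for $\bignorm{\cdot}_p^p$), and finish with $\lambda\to\infty$ and \Cref{thm:optimal-allocation}. Your pathwise treatment of the random allocation $\bm\omega$, and your order of the limit and the infimum over allocations (bound by $\inf_{\bm\omega}$ at each fixed $\tau$, then let $\tau\to\infty$), are slightly more careful than the paper. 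The paper implicitly treats $\gI(\gA)$ as deterministic and takes the limit before the infimum.
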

\begin{proof}[Proof Sketch]
    The full proof is deferred to \Cref{app:lower-bound-exp-gaussian}.
    Instead of following the standard lower bound framework~\citep{yu1997lecam,tsybakov}, we heavily exploit the fact that everything is Gaussian.
    Specifically, this implies that the posterior risk is minimized at the posterior mean, and everything (e.g., posterior distribution, Fisher information) is computable in closed forms.
\end{proof}

We show that the above is actually constant-wise tight(!) across \emph{all} $p$'s by complementing this with a matching in-expectation upper bound when \emph{variances are known}:
\begin{theorem}
\label{thm:in-exp-upper-bound-known}
    When the variances are \emph{known}, \textbf{IVWE} with optimal allocation achieves the following:
    \begin{equation}
        \E\left[ \bignorm{\hat{\vs} - \vs}_p^p \right]^{\frac{1}{p}} \leq m_p \sqrt{\frac{\gA_p^*(\bm\sigma, \vc)}{B}}.
    \end{equation}
\end{theorem}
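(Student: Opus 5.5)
Under Gaussian scores, \textbf{IVWE} with a fixed (deterministic) allocation gives each coordinate an exactly Gaussian error, so the plan is a direct computation rather than a concentration argument. Fix the oracle allocation $\bm\omega^*$ of \Cref{thm:optimal-allocation}, i.e., $N_{k,j}^* = B\omega_{k,j}^*/c_j$. For now treat these counts as real numbers; the integrality issue is handled at the end. For each $k$ we have $\hat{s}_k - s_k = W_k^{-1}\sum_j \sigma_{k,j}^{-2}\sum_{i}(s_{k,j}^{(i)}-s_k)$ with $W_k = \sum_j N_{k,j}/\sigma_{k,j}^2$, exactly as in the proof of \Cref{prop:upper-bound-allocation}. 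This is a linear combination of independent centered Gaussians, so $\hat{s}_k - s_k \sim \gN(0, W_k^{-1})$ exactly. Hence $\E|\hat{s}_k - s_k|^p = m_p^p W_k^{-p/2}$, using $m_p = \E[|Z|^p]^{1/p}$ for $Z\sim\gN(0,1)$.

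Summing over coordinates, which is legitimate because the $p$-th moment decomposes, gives
\[
\E\bigl[\|\hat{\vs}-\vs\|_p^p\bigr] = m_p^p \sum_k W_k^{-p/2}.
\]
Next rewrite $W_k = B\sum_j \omega_{k,j}/(c_j\sigma_{k,j}^2)$. Then $\sum_k W_k^{-p/2} = B^{-p/2}\,\gA_p(\bm\omega;\bm\sigma,\vc)^{p/2}$ by the definition of the allocation objective. Taking $p$-th roots yields $\E[\|\hat{\vs}-\vs\|_p^p]^{1/p} = m_p\sqrt{\gA_p(\bm\omega;\bm\sigma,\vc)/B}$. At $\bm\omega=\bm\omega^*$ this equals $m_p\sqrt{\gA_p^*(\bm\sigma,\vc)/B}$ by \Cref{thm:optimal-allocation}, and the claimed bound holds with equality. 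No inequality is lost, which is exactly why this matches the constant $m_p$ in \Cref{thm:gaussian-global-exp-lower}.

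The only real obstacle is rounding, since $N_{k,j}$ must be integers. I would treat the statement, as the paper does elsewhere, under a continuous (divisible-budget) idealization, or note the correction. Rounding $N^*_{k,j}$ down keeps the algorithm within budget and changes $W_k$ by a factor $1-O(1/N^*_{k,j^*(k)})$, which gives a lower-order multiplicative correction $1+O(1/B)$. I would state this as a remark. For $p=\infty$, note that $m_p$ is only defined for $p<\infty$, consistent with the statement being for $1\le p<\infty$.
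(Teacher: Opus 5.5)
Your proposal is correct and follows the paper's proof essentially verbatim. The paper likewise uses $\hat{s}_k - s_k \sim \gN\bigl(0, (\sum_j N_{k,j}/\sigma_{k,j}^2)^{-1}\bigr)$, decomposes the $p$-th moment coordinate-wise into $m_p\bigl(\sum_k W_k^{-p/2}\bigr)^{1/p}$, and concludes via \Cref{thm:optimal-allocation}; your observation that the bound actually holds with equality and your rounding remark, which the paper silently ignores and which needs $B$ large enough that every $N^*_{k,j^*(k)} \geq 1$, are sensible refinements rather than departures.
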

\begin{proof}
    Recall that $\hat{s}_k - s_k \sim \gN\left( 0, \left( \sum_{j=1}^J \frac{N_{k,j}}{\sigma_{k,j}^2} \right)^{-1} \right)$.
    Thus, we have that
    \begin{equation}
        \E\left[ \bignorm{\hat{\vs} - \vs}_p^p \right]^{\frac{1}{p}} = \mathbb{E} \left[ \sum_{k\in[K]} |\hat{s}_k - s_k|^p \right]^{\frac{1}{p}} \leq m_p \left( \sum_{k \in [K]} \left( \sum_{j=1}^J \frac{N_{k,j}}{\sigma_{k,j}^2} \right)^{-\frac{p}{2}} \right)^{\frac{1}{p}},
    \end{equation}
    where $m_p = \E[|Z|^p]^{1/p}$ for $Z \sim \gN(0, 1)$ as in \Cref{thm:gaussian-global-exp-lower}.
    Then, we conclude by \Cref{thm:optimal-allocation}.
\end{proof}

We lastly show another global lower bound for $\E^{\hat{\vs}}_{\vs \sim \mu}\left[ \bignorm{\hat{\vs} - \vs}_p^p \right]^{\frac{1}{p}}$ based on a completely different (and from our perspective, slightly less well-known) technique that may be of an independent interest:
\begin{theorem}
\label{thm:lower-bound-van-trees}
    For $1 \leq p < \infty$, we have:
    \begin{equation}
        \inf_{\hat{\vs}} \sup_{\vs \in \sR^K} \E^{\hat{\vs}}_{\vs \sim \mu}\left[ \bignorm{\hat{\vs} - \vs}_p^p \right]^{\frac{1}{p}} \geq \frac{\sqrt{2 \pi e}}{C_{\mathrm{ME}}(p)}  \sqrt{\frac{\gA_p^*(\bm\sigma, \vc)}{B}},
    \end{equation}
    where $C_{\mathrm{ME}}(p) \triangleq 2 e^{\frac{1}{p}} \Gamma(p^{-1}) p^{\frac{1}{p} - 1}$ is the partition function of the $L_p$ max-entropy distribution.
\end{theorem}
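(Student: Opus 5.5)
The plan is a Bayesian argument: lower bound the minimax risk by a Bayes risk under a wide Gaussian prior, and then lower bound the Bayes risk coordinate-wise. For each coordinate we use the fact that the $L_p$ maximum-entropy distribution has the largest differential entropy among all laws with a given $p$-th absolute moment. The constant $C_{\mathrm{ME}}(p)$ should appear exactly through this one-dimensional max-entropy bound. Concretely, the density proportional to $\exp(-|x|^p/(p a^p))$ has normalizer $2a\,\Gamma(1/p)\,p^{1/p-1}$ and entropy $\log\bigl(2a\,\Gamma(1/p)\,p^{1/p-1}e^{1/p}\bigr) = \log(C_{\mathrm{ME}}(p)\,a)$. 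Hence any real random variable $X$ with a density satisfies
\begin{equation}
    \E[|X|^p]^{\frac{1}{p}} \geq \frac{e^{h(X)}}{C_{\mathrm{ME}}(p)}.
\end{equation}

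\emph{Step 1 (Bayes reduction and Gaussian posterior).} We put the prior $s_k \sim \gN(0,\rho^2)$ independently over $k$ and bound $\sup_{\vs}$ from below by the Bayes risk. Fix any adaptive algorithm and let $Y$ denote the full history. Because the adaptive sampling rule depends only on past observations, it factors out of the likelihood. The likelihood of $\vs$ given $Y$ is therefore still $\prod_{k,j}\prod_i \gN(s_{k,j}^{(i)}; s_k, \sigma_{k,j}^2)$. So, conditionally on $Y$, the coordinates $s_k$ are independent Gaussians with precision $W_k(Y) + \rho^{-2}$, where $W_k(Y) \coloneq \sum_j N_{k,j}/\sigma_{k,j}^2$ and the $N_{k,j}$ are the random counts. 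Given $Y$, the estimate $\hat{s}_k$ is a fixed number. Translation invariance of entropy and the max-entropy inequality then give
\begin{equation}
    \E\bigl[|\hat{s}_k - s_k|^p \,\big|\, Y\bigr] \geq \left( \frac{\sqrt{2\pi e}}{C_{\mathrm{ME}}(p)} \right)^p \bigl(W_k(Y) + \rho^{-2}\bigr)^{-\frac{p}{2}}.
\end{equation}
Here we used that the entropy of $\gN(0,v)$ equals $\log\sqrt{2\pi e v}$.

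\emph{Step 2 (Averaging over the adaptivity).} We take expectations over $Y$. Since $x \mapsto x^{-p/2}$ is convex on $(0,\infty)$, Jensen's inequality gives $\E[(W_k+\rho^{-2})^{-p/2}] \geq (\bar{W}_k + \rho^{-2})^{-p/2}$, where $\bar{W}_k = \sum_j \bar{N}_{k,j}/\sigma_{k,j}^2$ and $\bar{N}_{k,j} = \E[N_{k,j}]$. The averaged counts inherit the budget constraint $\sum_{k,j} c_j \bar{N}_{k,j} \leq B$, exactly as in \Cref{lem:assouad-adaptive}. We then sum over $k$ and let $\rho\to\infty$; this is legitimate because the supremum over $\vs\in\sR^K$ dominates every Bayes risk.

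\emph{Step 3 (Optimizing the allocation).} The remaining quantity is $\sum_k \bar{W}_k^{-p/2}$. Writing $\omega_{k,j} = c_j\bar{N}_{k,j}/B \in \Lambda^{KJ}$ turns it into $B^{-p/2}\,\gA_p(\bm\omega;\bm\sigma,\vc)^{p/2}$. Its minimum over the simplex is $B^{-p/2}\,\gA_p^*(\bm\sigma,\vc)^{p/2}$ by \Cref{thm:optimal-allocation}, since relaxing integer counts to real ones can only lower the minimum. Taking the $1/p$-th power yields the stated bound.

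The main obstacle is Step 1's claim that the posterior is Gaussian with precision $W_k(Y)+\rho^{-2}$ even under adaptive, budget-stopped sampling. This needs the stopping and selection rules to be measurable with respect to the past, so that their factors do not depend on $\vs$ and cancel in Bayes' rule; we would write this out carefully in the canonical bandit model. Jensen's inequality must also be applied only after conditioning, not to the unconditional error, so that the randomness of $W_k$ is handled correctly.
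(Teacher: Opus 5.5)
Your argument is correct, but it takes a different route from the paper.

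\textbf{The paper's route.} The paper applies the scalar $L_p$ van Trees inequality of \citet{chen2024vantrees} coordinate-wise. That inequality is itself Efroimovich's inequality combined with the same max-entropy bound you use. The prior is a compactly supported cosine-squared density of half-width $r$, with prior Fisher information $\pi^2/r^2$. The expected data Fisher information $\sum_j \E[N_{k,j}]/\sigma_{k,j}^2$ enters the inequality directly. So there is no conditioning on the history and no separate Jensen step. The paper then lets $r\to\infty$ and invokes \Cref{thm:optimal-allocation}.

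\textbf{Your route.} You instead exploit Gaussian conjugacy. The exact posterior (this is \Cref{lem:gaussian-posterior}) is Gaussian with precision $W_k(Y)+\rho^{-2}$, so its entropy is explicit and the max-entropy bound applies to it directly. Jensen on $x\mapsto x^{-p/2}$ then absorbs the randomness of the adaptive counts. In effect, you replace Efroimovich's inequality by an exact computation.

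\textbf{What each buys.} Your route is more elementary and handles adaptivity explicitly. It also shows the statement is a corollary of the $p$-th moment part of \Cref{thm:gaussian-global-exp-lower}. Conditionally on $Y$, symmetry of the Gaussian posterior yields the exact constant $m_p$, and the max-entropy inequality applied to $\gN(0,1)$ gives $m_p \ge \sqrt{2\pi e}/C_{\mathrm{ME}}(p)$. The van Trees route never uses the form of the posterior, only the Fisher information of the observation model. It would therefore carry over to non-conjugate regular families, which is why the paper presents it as a separate technique.

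\textbf{One point to tighten.} $\bar N_{k,j}$ is an expectation under the prior mixture, so it depends on $\rho$. You should therefore minimize over allocations at fixed $\rho$ and only then take the limit. This is straightforward:
\begin{itemize}
\item Add mass $\rho^{-2}c_{j^*(k)}\sigma_{k,j^*(k)}^2/B$ to $\omega_{k,j^*(k)}$. This writes $\bar W_k+\rho^{-2}$ as $B\sum_j \tilde\omega_{k,j}/(c_j\sigma_{k,j}^2)$ for an allocation $\tilde{\bm\omega}$ of total mass at most $1+\rho^{-2}\gA_\infty^*(\bm\sigma,\vc)/B$.
\item Since $\gA_p$ is homogeneous of degree $-1$ in $\bm\omega$, every algorithm satisfies $\sum_k(\bar W_k+\rho^{-2})^{-p/2}\ge\bigl(1+\rho^{-2}\gA_\infty^*(\bm\sigma,\vc)/B\bigr)^{-p/2}\bigl(\gA_p^*(\bm\sigma,\vc)/B\bigr)^{p/2}$.
\item Letting $\rho\to\infty$ is then harmless.
\end{itemize}
The paper's proof passes over the same dependence on $r$ in the same way.
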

\begin{proof}[Proof Sketch]
    The full proof is deferred to \Cref{app:van-trees}.
    The key technical tool is the \textbf{\textit{van Trees inequality}}~\citep{vanTrees,gill-levit}, recently extended to $L_p$-norm by \citet{chen2024vantrees}, with cosine-squared prior~\citep{tsybakov} and taking a limit to uninformative prior.
\end{proof}

Note that when $p = 2$, this bound is also constant-wise optimal: $m_2 = 1 = \left( \frac{\sqrt{2 \pi e}}{C_{\mathrm{ME}}(2)} \right)^2$.

%% file: 907Proof-Gaussian.tex
\section{\texorpdfstring{Proofs of Theorems in \Cref{app:gaussian} (Gaussian Scores)}{Proofs of Theorems in Appendix F (Gaussian Scores)}}\label{app:gaussian-proof}

\subsection{\texorpdfstring{Proof of \Cref{thm:est-ivwe-Gaussian}: Error Rate of Est-IVWE-Gaussian}{Proof of Theorem F.4: Error Rate of Est-IVWE-Gaussian}}
\label{app:gaussian-unknown}
Denote the lower and upper bounds on variances $\sigma_{k,j}^2$ after the exploration phase by
\begin{equation}
    \underline{\sigma}_{k,j}^2 = \frac{\hat{\sigma}_{k,j}^2}{1+2\sqrt{\frac{\epsilon_0}{n_0-1}} +\frac{2\epsilon_0}{n_0-1}}
    \quad \text{and} \quad
    \overline{\sigma}_{k,j}^2 = \frac{\hat{\sigma}_{k,j}^2}{1 - 2\sqrt{\frac{\epsilon_0}{n_0-1}}},
\end{equation}
where $\epsilon_0 = \log\frac{4 K J}{\delta}$.
By \Cref{lem:gaussian-variance}, we know that
\begin{equation}
    \sP\left( \underline{\sigma}_{k,j}^2 \leq \sigma_{k,j}^2 \leq \underline{\sigma}_{k,j}^2 \right) \geq 1 - \frac{\delta}{2},
\end{equation}
which we will assume to hold throughout the proof.
We denote the ratio between the upper and lower bounds as $\kappa = \overline{\sigma}_{k,j}^2 / \underline{\sigma}_{k,j}^2$, which is the same across all pairs $(k, j)$.
Since $N_0 = 1 + \left\lceil 16 \log \frac{4KJ}{\delta} \right\rceil$, $\epsilon_0 / (N_0-1) \leq \frac{1}{16}$, we have that $\kappa \leq \frac{13}{4}$.

Recall that \textbf{Est-IVWE-Gaussian} (\Cref{alg:estimate-then-ivwe-gaussian}) designates an (predicted) optimal judge $\hat{j}^*(k) \triangleq \argmin_{j \in [J]} c_j \hat{\sigma}_{k,j}^2$ for each query $k \in [K]$, then computes the optimal allocation with variance estimator $\hat{\sigma}_{k,j}^2$ and remaining budget $B' = B - C N_0$ with $C \coloneq K \sum_{j\in[J]} c_j$ as follows:
\begin{equation}
    \widehat{\omega}_{k,\hat{j}^*(k)}^* = \frac{\left(c_{\hat{j}^*(k)} \hat{\sigma}_{k, \hat{j}^*(k)}^2 \right)^{\frac{p}{p+2}}}{\sum_{k'\in[K]} \left( c_{\hat{j}^*(k')} \hat{\sigma}_{k', \hat{j}^*(k')}^2 \right)^{\frac{p}{p+2}}}, \quad
    \widehat{N}_{k,j}^* = \frac{B' \widehat{\omega}_{k,j}^*}{c_j}.
\end{equation}

Since $\hat{s}_k$ is the sample mean of Gaussian samples, $\hat{s}_k$ itself is also Gaussian.
Thus, by \Cref{lem:gaussian}, we have that
\begin{equation}
    \sP\left( | \hat{s}_k - s_k | \leq \sqrt{\frac{2 \sigma_{k,\hat{j}^*(k)}^2 \log \frac{2K}{\delta}}{\widehat{N}_{k,\hat{j}^*(k)}^*}}, \quad \forall k \in [K] \right) \geq 1 - \frac{\delta}{2}.
\end{equation}
Thus, by summing up $p$-th power of these terms over $k\in[K]$ and substituting the definition of $\widehat{\omega}_{k,j}^*$,the following holds with probability at least $1 - \delta$:
\begin{align}
    \varepsilon_p & \leq \sqrt{2\log \frac{2K}{\delta}} \left( \sum_{k\in[K]} \left( \frac{\widehat{N}_{k,\hat{j}^*(k)}^*}{\sigma_{k,\hat{j}^*(k)}^2} \right)^{-\frac{p}{2}}\right)^{\frac{1}{p}}\\
    & = \sqrt{\frac{2\log \frac{2K}{\delta}}{B'}} \left( \sum_{k\in[K]} \left( \frac{\widehat{\omega}_{k,\hat{j}^*(k)}}{c_{\hat{j}^*(k)} \sigma_{k,\hat{j}^*(k)}^2} \right)^{-\frac{p}{2}}\right)^{\frac{1}{p}}\\
    & = \sqrt{\frac{2\log \frac{2K}{\delta}}{B'}} \left( \sum_{k\in[K]} \left( \frac{1}{c_{\hat{j}^*(k)} \sigma_{k,\hat{j}^*(k)}^2}\cdot \frac{\left(c_{\hat{j}^*(k)} \hat{\sigma}_{k, \hat{j}^*(k)}^2 \right)^{\frac{p}{p+2}}}{\sum_{k'\in[K]} \left( c_{\hat{j}^*(k')} \hat{\sigma}_{k', \hat{j}^*(k')}^2 \right)^{\frac{p}{p+2}}} \right)^{-\frac{p}{2}}\right)^{\frac{1}{p}}\\
    & = \sqrt{\frac{2\log\frac{2K}{\delta}}{B'}}\sqrt{\sum_{k\in[K]} \left( c_{\hat{j}^*(k)} \hat{\sigma}_{k,\hat{j}^*(k)}^2 \right)^{\frac{p}{p+2}}} \left( \sum_{k\in[K]} \left(\frac{\sigma_{k,\hat{j}^*(k)}}{\hat{\sigma}_{k,\hat{j}^*(k)}}\right)^p \left( c_{\hat{j}^*(k)} \hat{\sigma}_{k,\hat{j}^*(k)}^2 \right)^{\frac{p}{p+2}} \right)^{\frac{1}{p}}\\
    & \leq \sqrt{\frac{1}{1 - 2\sqrt{\frac{\epsilon_0}{N_0-1}}}} \sqrt{\frac{2\log\frac{2K}{\delta}}{B'}} \left( \sum_{k\in[K]} \left( c_{\hat{j}^*(k)} \hat{\sigma}_{k,\hat{j}^*(k)}^2 \right)^{\frac{p}{p+2}} \right)^{\frac{p+2}{2p}},
\end{align}
where the last inequality holds since $\sigma_{k,j}^2 / \hat{\sigma}_{k,j}^2 \leq \left(1 - 2\sqrt{\frac{\epsilon_0}{N_0-1}}\right)^{-1} $ for any $(k,j) \in [K] \times [J]$ by \Cref{lem:gaussian-variance}.
When $p = \infty$, we set $p/(p+2)=1$ and $(p+2)/p=1$ by convention, and the above inequality holds in that case as well:
\begin{align}
    \varepsilon_\infty & \leq \max_{k\in[K]} \sqrt{2\log \frac{2K}{\delta} \frac{\sigma_{k,\hat{j}^*(k)}^2}{\widehat{N}_{k,\hat{j}^*(k)}^*}} \\
    & = \max_{k\in[K]} \sqrt{\frac{2\log \frac{2K}{\delta}}{B'} \frac{c_{\hat{j}^*(k)}\sigma_{k,\hat{j}^*(k)}^2}{\widehat{\omega}_{k,\hat{j}^*(k)}^*}}\\
    & = \sqrt{\frac{2\log \frac{2K}{\delta}}{B'}}\left( \max_{k\in[K]} \sqrt{\frac{\sigma_{k,\hat{j}^*(k)}^2}{\hat{\sigma}_{k,\hat{j}^*(k)}^2} \sum_{k'\in[K]} c_{\hat{j}^*(k')} \hat{\sigma}_{k', \hat{j}^*(k')}^2 }\right)\\
    & \leq \sqrt{\frac{1}{1 - 2\sqrt{\frac{\epsilon_0}{N_0-1}}}} \sqrt{\frac{2\log\frac{2K}{\delta}}{B'}}\sqrt{\sum_{k\in[K]} c_{\hat{j}^*(k)} \hat{\sigma}_{k, \hat{j}^*(k)}^2 }.
\end{align}

As $c_{\hat{j}^*(k)} \hat{\sigma}_{k, \hat{j}^*(k)}^2 \leq c_{j^*(k)} \hat{\sigma}_{k,j^*(k)}^2$ by the definition of $\hat{j}^*(k)$, we have
\begin{align}
    &\left( \sum_{k\in[K]} \left( c_{\hat{j}(k)} \hat{\sigma}_{k,\hat{j}(k)}^2 \right)^{\frac{p}{p+2}} \right)^{\frac{p+2}{2p}} \\
    &\leq \left( \sum_{k\in[K]} \left( c_{j^*(k)} \hat{\sigma}_{k,j^*(k)}^2 \right)^{\frac{p}{p+2}} \right)^{\frac{p+2}{2p}}\\
    & \leq \sqrt{1 + 2\sqrt{\frac{\epsilon_0}{N_0-1}} + \frac{2\epsilon_0}{N_0-1}} \left( \sum_{k\in[K]} \left( c_{j^*(k)} \sigma_{k,j^*(k)}^2 \right)^{\frac{p}{p+2}} \right)^{\frac{p+2}{2p}}. \tag{\Cref{lem:gaussian-variance}}
\end{align}
Combining everything, we conclude as follows: with probability at least $1 - \delta$,
\begin{align}
    \varepsilon_p & \leq \sqrt{\frac{2\kappa \gA_p^*(\bm\sigma, \vc) \log\frac{4K}{\delta}}{B'}}\\
    & \leq \sqrt{13 \gA_p^*(\bm\sigma, \vc) \log\frac{4K}{\delta}} \left( \frac{1}{\sqrt{2 B}} + \frac{CN_0}{\sqrt{2} B^{\frac{3}{2}}}\right) \tag{\Cref{lem:algebraic-inequality}, $\kappa \leq 13/4$} \\
    &= \sqrt{\frac{13\gA_p^*(\bm\sigma, \vc) \log \frac{4K}{\delta}}{2B}} + \underbrace{\frac{C \left(1 + \left\lceil 16 \log \frac{4KJ}{\delta}\right\rceil \right) \sqrt{\frac{13}{2}\gA_p^*(\bm\sigma,\vc)\log \frac{4K}{\delta}}}{B^{\frac{3}{2}}}}_{= \gO \left( \frac{\sqrt{\gA_p^*(\bm\sigma, \vc)} \left(\log \frac{KJ}{\delta}\right)^{\frac{3}{2}}}{B^{\frac{3}{2}}}\right)}.
\end{align}

\subsection{\texorpdfstring{Proof of \Cref{thm:gaussian-global-exp-lower}: Global In-Expectation Lower Bounds for Gaussian Scores}{Proof of Theorem F.6: Global In-Expectation Lower Bounds for Gaussian Scores}}
\label{app:lower-bound-exp-gaussian}

Again, we start with the Yao's minimax principle~\citep{yao1977}, but without the locality: for any prior $\mu$ over $\sR^K$ (note that now, we consider the prior over the entire $\sR^K$ instead of near some given $\vs^\star$),
\begin{equation}
    \inf_{\hat{\vs}} \sup_{\vs \in \sR^K} \E^{\hat{\vs}}_{\vs}\left[ \bignorm{\hat{\vs} - \vs}_p \right] \geq \inf_{\hat{\vs}} \E^{\hat{\vs}}_{\vs \sim \mu}\left[ \bignorm{\hat{\vs} - \vs}_p \right],
\end{equation}
and analogously for $\E^{\hat{\vs}}_{\vs \sim \mu}\left[ \bignorm{\hat{\vs} - \vs}_p^p \right]^{\frac{1}{p}}$.
We will thus lower bound the RHS (Bayes error) with an appropriate choice of $\mu$.

In particular, we consider an isotropic Gaussian prior: for a $\lambda > 0$,
\begin{equation}
    \mu \coloneq \gN(\vzero, \lambda^2 \mI_K).
\end{equation}

Let us fix an arbitrary, adaptive algorithm $\gA$ that interacts with the true environment $\vs^\star$ sequentially as follows: at each timestep $t$, when the learner selects query $k_t$ and judge $j_t$, they observe a noisy score $\tilde{s}_t \coloneq s_{k_t} + \varepsilon_{k_t,j_t}$.
Let us denote $N_{k,j} \coloneq \sum_t \indicator[k_t = k, j_t = j]$ as the number of times $\gA$ queries the pair $(k, j)$ throughout the interaction.
Suppose that $\gA$ interacts for $T$ rounds, while satisfying the budget constraint almost surely.
For each time $t$, let $\gH_t \coloneq (k_1, j_1, \tilde{s}_1, \cdots, k_t, j_t, \tilde{s}_t)$ be the interaction history til time $t$.
Then, at the end of the interaction, $\gA$ outputs an estimator $\hat{\vs} = \hat{\vs}(\gH_T)$.
For clarity, with a slight abuse of notation, we will distinguish between the two: $\gA$ as an adaptive data collection subprocedure, and $\hat{\vs}$ as an aggregation subprocedure that takes $\gH_T$ as an input.

Then, using the fact that everything is Gaussian, we can compute the posterior distribution of $\vs \mid \gH_T$ in closed-form, as shown in the following lemma whose proof is deferred to the end:
\begin{lemma}
\label{lem:gaussian-posterior}
    $\vs \mid \gH_T \sim \gN(\E[\vs \mid \gH_T], \gI(\gA)^{-1})$, where $\gI(\gA) = \mathrm{diag}\left( (\gI_k(\gA))_{k \in [K]} \right)$ is the (diagonal) posterior Fisher information of the algorithm $\gA$ defined as
    \begin{equation}
        \gI_k(\gA) = \lambda^{-2} +\sum_{j \in [J]} \frac{N_{k,j}}{\sigma_{k,j}^2}.
    \end{equation}
\end{lemma}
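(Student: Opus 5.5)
The posterior is computed by induction over the interaction history, using the fact that the adaptive sampling rule does not change the likelihood's dependence on $\vs$. Fix the algorithm $\gA$; its choices $(k_t,j_t)$ may depend on past observations and internal randomness, but not on $\vs$. The joint density of $(\vs,\gH_T)$ factorizes as
\begin{equation}
    \mu(\vs)\prod_{t=1}^{T} \pi_t(k_t,j_t \mid \gH_{t-1})\,\varphi_{\sigma_{k_t,j_t}}\!\left(\tilde{s}_t - s_{k_t}\right),
\end{equation}
where $\pi_t$ is the (possibly randomized) selection kernel and $\varphi_\sigma$ is the $\gN(0,\sigma^2)$ density. The kernels $\pi_t$ do not involve $\vs$, so they drop out after normalizing in $\vs$. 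The posterior is therefore proportional to $\mu(\vs)\prod_t \varphi_{\sigma_{k_t,j_t}}(\tilde{s}_t - s_{k_t})$. This is the standard ``likelihood principle'' step in the canonical bandit model of \citet[Section 4.6]{banditalgorithms}; I would cite it and state it carefully.

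Taking logarithms gives, up to a term not depending on $\vs$,
\begin{equation}
    -\frac{\bignorm{\vs}_2^2}{2\lambda^2} - \sum_{t=1}^T \frac{(\tilde{s}_t - s_{k_t})^2}{2\sigma_{k_t,j_t}^2}
    = \sum_{k\in[K]} \left( -\frac{s_k^2}{2\lambda^2} - \sum_{j\in[J]}\sum_{t: (k_t,j_t)=(k,j)} \frac{(\tilde{s}_t - s_k)^2}{2\sigma_{k,j}^2} \right).
\end{equation}
The right-hand side is separable across coordinates $k$ and is a concave quadratic in each $s_k$. The coefficient of $-s_k^2/2$ is $\lambda^{-2} + \sum_j N_{k,j}/\sigma_{k,j}^2 = \gI_k(\gA)$, where $N_{k,j}$ is the realized count, which is a function of $\gH_T$. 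Completing the square in each coordinate shows that, conditionally on $\gH_T$, the $s_k$ are independent Gaussians with precision $\gI_k(\gA)$. Their mean is $\gI_k(\gA)^{-1}\sum_j \sigma_{k,j}^{-2}\sum_{t:(k_t,j_t)=(k,j)}\tilde{s}_t$, which is exactly $\E[\vs\mid\gH_T]$. This yields $\vs\mid\gH_T\sim\gN(\E[\vs\mid\gH_T],\gI(\gA)^{-1})$ with a diagonal covariance.

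The main subtlety is rigor in the first step. The counts $N_{k,j}$ and the horizon $T$ may be random, since the budget constraint allows stopping times. I would handle this by noting that the factorization holds on each realized history: the stopping decision is also a measurable function of $\gH_t$ and internal randomness, independent of $\vs$. Alternatively, I would argue by induction on $t$ that $\vs\mid\gH_t$ is Gaussian with diagonal precision $\lambda^{-2}\mI_K + \sum_{s\le t}\sigma_{k_s,j_s}^{-2}\ve_{k_s}\ve_{k_s}^\top$. Each new observation is a conjugate Gaussian update on a single coordinate, and the selection $(k_{t+1},j_{t+1})$ is conditionally independent of $\vs$ given $\gH_t$. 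The remaining steps are routine algebra.
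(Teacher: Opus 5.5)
Your proposal is correct and follows essentially the same route as the paper: factor the likelihood through the $\vs$-independent selection kernels $\pi_t$, drop them, and complete the square separately in each coordinate to get precision $\lambda^{-2}+\sum_j N_{k,j}/\sigma_{k,j}^2$. The paper first rewrites the sum of squares using the per-pair empirical means $\bar{s}_{k,j}$ before completing the square, which is equivalent to your direct collection of terms; your treatment of a random horizon is extra care that the paper does not spell out.
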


Denoting $\gH_T(\gA)$ to be the random full interaction history for $\gA$,
\begin{align}
    \inf_\gA \inf_{\hat{\vs}} \E_{\vs \sim \mu}^{\gA, \hat{\vs}}\left[ f\left( \hat{\vs}(\gH_T(\gA)) - \vs \right) \right] = \inf_\gA \inf_{\hat{\vs}} \E^\gA\left[ \underbrace{\E_{\vs \mid \gH_T(\gA)}^{\hat{\vs}} \left[ f\left( \hat{\vs}(\gH_T(\gA)) - \vs \right) \mid \gH_T(\gA) \right]}_{\textbf{\textit{conditional posterior risk}}} \right],
\end{align}
where either $f(\vx) = \bignorm{\vx}_p$ or $f(\vx) = \bignorm{\vx}_p^p$.

We now argue that the posterior mean minimizes the conditional posterior risk, i.e., when $\hat{\vs}(\gH_T) = \E[\vs \mid \gH_T]$.
To see this, let $Z := S - \E[\vs \mid \gH_T]$.
Then, conditioned on $\gH_T$, $Z$ is centered Gaussian and symmetric.
Also, both choices of $f$ are convex and even ($f(\vu) = f(-\vu)$).

For any estimator $\hat{\vs}$, define its posterior risk as
\begin{equation}
    L(\hat{\vs}) \coloneq \E[f(\hat{\vs} - \vs) \mid \gH_T] = \E[f(\vu - Z) \mid \gH_T],
\end{equation}
where $\vu \coloneq \hat{\vs} - \E[\vs \mid \gH_T]$.
Then as $L$ is convex, by Jensen's midpoint inequality, we have that
\begin{equation}
    L(0) \leq \frac{L(\vu) + L(-\vu)}{2} = L(\vu),
\end{equation}
i.e., the minimum is attained at $\vu = 0$.
This implies that the posterior mean $\E[\vs \mid \gH_T]$ minimizes the conditional posterior risk.\footnote{This resembles Anderson's lemma~\citep{anderson1955}, which states that a centered Gaussian measure assigns maximal mass to a symmetric convex set when the set is centered at the origin.}

Combining this with \Cref{lem:gaussian-posterior}, we have that
\begin{align}
    \inf_\gA \inf_{\hat{\vs}} \E_{\vs \sim \mu}^{\gA, \hat{\vs}}\left[ f\left( \hat{\vs}(\gH_T(\gA)) - \vs \right) \right] &\geq \inf_\gA \E^\gA\left[ \E_{\vs \mid \gH_T(\gA)} \left[ f\left( \E[\vs \mid \gH_T(\gA)] - \vs \right) \mid \gH_T(\gA) \right] \right] \\
    &= \inf_\gA \E_{\vx \sim \gN(\vzero, \gI(\gA)^{-1})}\left[ f(\vx) \right] \tag{\Cref{lem:gaussian-posterior}} \\
    &= \inf_\gA \E_{\vg \sim \gN(\vzero, \mI_K)}\left[ f\left( \gI(\gA)^{-\frac{1}{2}} \odot \vg \right) \right].
\end{align}

When $f(\vx) = \bignorm{\vx}_p$, by Jensen's inequality,\footnote{$\vg \mapsto \bignorm{\vu \odot \vg}_p$ is convex over $\sR_{>0}^K$, for any $\vu \in \sR_{>0}^K$.}
\begin{align}
    \E_{\vg \sim \gN(\vzero, \mI_K)}\left[ \bignorm{ \gI(\gA)^{-\frac{1}{2}} \odot \vg}_p \right]
    &= \E_{\vg \sim \gN(\vzero, \mI_K)}\left[ \bignorm{ \gI(\gA)^{-\frac{1}{2}} \odot |\vg|}_p \right] \\
    &\geq \bignorm{ \gI(\gA)^{-\frac{1}{2}} \odot \E_{\vg \sim \gN(\vzero, \mI_K)}[\vg]}_p 
    = \sqrt{\frac{2}{\pi}} \bignorm{\gI(\gA)^{-\frac{1}{2}}}_p,
\end{align}
where $|\cdot|$ is applied element-wise.

When $f(\vx) = \bignorm{\vx}_p^p$.
\begin{align}
    \E_{\vg \sim \gN(\vzero, \mI_K)}\left[ \bignorm{ \gI(\gA)^{-\frac{1}{2}} \odot \vg}_p^p \right]^{\frac{1}{p}}
    = \sum_{k \in [K]} \left( \gI(\gA)^{-\frac{1}{2}} \right) \E_{g \sim \gN(0, 1)}[|g|^p]^{\frac{1}{p}}
    = m_p \bignorm{\gI(\gA)^{-\frac{1}{2}}}_p.
\end{align}
Either way, we arrive at $\bignorm{\gI(\gA)^{-\frac{1}{2}}}_p$, which we lower bound as follows.
Let \(N_{k,j}\) denote the total number of times pair \((k,j)\) is queried by algorithm $\gA$ subject to the budget constraint, and let us reparametrize it by $\omega_{k,j} \coloneq \frac{c_j N_{k,j}}{B}$.
Then, we have that for each allocation $\{\omega_{k,j}\}_{k,j}$,
\begin{equation}
    \bignorm{\gI(\gA)^{-\frac{1}{2}}}_p = \left( \sum_{k \in [K]} \left( \lambda^{-2} + \sum_{j \in [J]} \frac{B \omega_{k,j}}{c_j \sigma_{k,j}^2} \right)^{-p/2} \right)^{1/p}, \quad \forall \lambda > 0.
\end{equation}
Taking the limit as $\lambda \to \infty$, we have that
\begin{equation}
    \bignorm{\gI(\gA)^{-\frac{1}{2}}}_p = \frac{1}{\sqrt{B}} \left( \sum_{k \in [K]} \left( \sum_{j \in [J]} \frac{\omega_{k,j}}{c_j \sigma_{k,j}^2} \right)^{-p/2} \right)^{1/p}
    = \sqrt{\frac{\gA_p(\bm\omega; \bm\sigma, \vc)}{B}}.
\end{equation}
Taking the infimum over $\gA$, which is equivalent to taking the infimum over $\bm\omega$ under the budget constraint, we conclude with \Cref{thm:optimal-allocation}.
\qed

\begin{remark}
    One could also utilize Gaussian hypercontractivity (specifically, Kahane's inequality) to transform a lower bound on $\E\left[\bignorm{\hat{\vs} - \vs}_p^p\right]^{1/p}$ into a lower bound on $\E\left[\bignorm{\hat{\vs} - \vs}_p\right]$, up to some Gaussian moment-dependent constants.
    But this leads to a bit more suboptimal constant.
\end{remark}

\begin{proof}[Proof of \Cref{lem:gaussian-posterior}]
    This follows from directly factorizing the likelihood and completing the square.
    Denoting $\pi_t(\cdot, \cdot)$ as the sampling kernel of the algorithm $\gA$, we can drop terms independent of $\vs$ by absorbing them into a proportionality constant. The likelihood is:
    \begin{align}
        p(\gH_T \mid \vs) &= \prod_{t=1}^T \left\{ \pi_t((k_t,j_t)\mid \gH_{t-1}) \frac{1}{\sqrt{2\pi}\sigma_{k_t,j_t}} \exp\left( - \frac{(\tilde{s}_t - s_{k_t})^2}{2\sigma_{k_t,j_t}^2} \right) \right\} \\
        &\propto \exp\left( - \frac{1}{2} \sum_{t=1}^T \frac{(\tilde{s}_t - s_{k_t})^2}{\sigma_{k_t,j_t}^2} \right).
    \end{align}
    By grouping the observations by $(k, j)$, and using the empirical mean $\bar{s}_{k,j} \coloneq \frac{1}{N_{k,j}} \sum_{t=1}^T \indicator[k_t = k, j_t = j] \tilde{s}_t$, the sum of squares decomposes as:
    \begin{equation}
        \sum_{t=1}^T \frac{(\tilde{s}_t - s_{k_t})^2}{\sigma_{k_t,j_t}^2} = \sum_{k,j} \frac{1}{\sigma_{k,j}^2} \left( \sum_{t: k_t=k, j_t=j} (\tilde{s}_t - \bar{s}_{k,j} + \bar{s}_{k,j} - s_{k})^2 \right).
    \end{equation}
    Since the cross-terms sum to zero ($\sum (\tilde{s}_t - \bar{s}_{k,j}) = 0$), this simplifies to:
    \begin{equation}
        \sum_{k,j} \frac{1}{\sigma_{k,j}^2} \left( \sum_{t: k_t=k, j_t=j} (\tilde{s}_t - \bar{s}_{k,j})^2 + N_{k,j}(\bar{s}_{k,j} - s_k)^2 \right).
    \end{equation}
    Absorbing the terms independent of $s_k$ into the proportionality constant, we have:
    \begin{equation}
        p(\gH_T \mid \vs) \propto \exp\left( - \frac{1}{2} \sum_{k \in [K]} \sum_{j \in [J]} \frac{N_{k,j}}{\sigma_{k,j}^2} (s_k - \bar{s}_{k,j})^2 \right).
    \end{equation}
    Now, applying Bayes' rule with the prior $\vs \sim \mu = \gN(0, \lambda^2 \mI_K)$, the posterior is:
    \begin{align}
        p(\vs \mid \gH_T) &\propto p(\gH_T \mid \vs) \mu(\vs) \\
        &\propto \exp\left( - \frac{1}{2} \sum_{k,j} \frac{N_{k,j}}{\sigma_{k,j}^2} (s_k - \bar{s}_{k,j})^2 \right) \exp\left( - \frac{1}{2} \sum_{k \in [K]} \frac{s_k^2}{\lambda^2} \right) \\
        &= \prod_{k \in [K]} \exp \left( - \frac{1}{2} \left[ \frac{s_k^2}{\lambda^2} + \sum_{j \in [J]} \frac{N_{k,j}}{\sigma_{k,j}^2} (s_k - \bar{s}_{k,j})^2 \right] \right).
    \end{align}
    Focusing on the exponent for each $k$, we expand and collect the $s_k$ terms:
    \begin{align}
        \frac{s_k^2}{\lambda^2} + \sum_{j \in [J]} \frac{N_{k,j}}{\sigma_{k,j}^2} (s_k^2 - 2 s_k \bar{s}_{k,j} + \bar{s}_{k,j}^2)
        &= s_k^2 \underbrace{\left( \frac{1}{\lambda^2} + \sum_{j \in [J]} \frac{N_{k,j}}{\sigma_{k,j}^2} \right)}_{= \gI_k(\gA)} - 2s_k \sum_{j \in [J]} \frac{N_{k,j}}{\sigma_{k,j}^2} \bar{s}_{k,j} + C_k \\
        &= \gI_k(\gA) (s_k - \mu_k)^2 + C_k',
    \end{align}
    where $C_k$ and $C_k'$ are terms independent of $s_k$, and $\mu_k \coloneq \gI_k(\gA)^{-1} \sum_j \frac{N_{k,j}}{\sigma_{k,j}^2} \bar{s}_{k,j}$.
    Thus, $p(\vs \mid \gH_T) \propto \prod_k \exp\left( - \frac{\gI_k(\gA)}{2} (s_k - \mu_k)^2 \right)$, which is exactly the density of a Gaussian distribution with mean $\boldsymbol{\mu}$ and a diagonal precision matrix whose $k$-th diagonal entry is $\gI_k(\gA)$.
\end{proof}

\subsection{\texorpdfstring{Proof of \Cref{thm:lower-bound-van-trees}: Global In-Expectation Lower Bound for Gaussian Scores via van Trees Inequality}{Proof of Theorem F.8: Global In-Expectation Lower Bound for Gaussian Scores via van Trees Inequality}}
\label{app:van-trees}
Our key technical tool is a generalization of van Trees inequality~\citep{vanTrees,gill-levit} to $L_p$-norm due to \citet{chen2024vantrees} via Efroimovich's inequality~\citep{efroimovich1979}:
\begin{lemma}[Scalar Version of Theorem 2.4 of \citet{chen2024vantrees}]
\label{lem:van-trees}
    Let $1 \leq p < \infty$ and define $C_{\mathrm{ME}}(p) \triangleq 2 e^{\frac{1}{p}} \Gamma(p^{-1}) p^{\frac{1}{p} - 1}$ to be the partition function of the $L_p$ max-entropy distribution.
    For a closed interval $\Theta \subseteq \sR$, suppose that we have a parametric family of distributions $\{\sP_\theta\}_{\theta \in \Theta}$ with density $f(x; \theta)$, and let $\mu$ be a prior density over $\Theta$ that is absolutely continuous, \textbf{vanishes at the boundary}, and has finite Fisher information.
    
    Then, for any estimator $\hat{\theta} : X \mapsto \hat{\theta}(X) \in \sR$ where $X \sim P_\theta$, the \textbf{expected $p$-th moment risk} is lower-bounded as follows:
    \begin{equation}
        \E_{\theta \sim \mu}\E_{\hat{\theta}, X}\left[ \bigabs{\theta - \hat{\theta}(X)}^p \right] \geq \left( \frac{\sqrt{2 \pi e}}{C_{\mathrm{ME}}(p)} \right)^p \left( \E_{\theta \sim \mu}[\gI(\theta)] + \gJ(\mu) \right)^{-\frac{p}{2}},
    \end{equation}
    where the \textbf{Fisher information} terms are defined as
    \begin{align}
        \gI(\theta) \coloneq \E_X\left[ -\frac{\partial^2}{\partial \theta^2} \log p(X \mid \theta) \right], \quad \gJ(\mu) \coloneq \E_{\theta \sim \mu} \left[ -\frac{\partial^2}{\partial \theta^2} \log \mu(\theta) \right].
    \end{align}
    Especially note that when $p = 2$, $C_{\mathrm{ME}}(2) = \sqrt{2 \pi e}$, and the constant multiple becomes $1$, as in the original van Trees inequality~\citep{gill-levit,vanTrees}.
\end{lemma}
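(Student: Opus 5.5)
The plan is to go through Efroimovich's information inequality, following \citet{chen2024vantrees}: first lower bound the conditional entropy of $\theta$ given $X$ by Fisher information, then upper bound that same entropy by the $p$-th moment of the estimation error using a maximum-entropy argument. If $\hat\theta$ uses internal randomness $U$, I will append $U$ to the observation $X$. Since $U$ is independent of $\theta$, this changes neither $\gI(\theta)$ nor $\gJ(\mu)$, so it suffices to treat deterministic estimators $\hat\theta(X)$. All entropies below are differential entropies of $\theta$ under the joint law $\theta\sim\mu$, $X\mid\theta\sim\sP_\theta$.

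\emph{Step 1 (Efroimovich).} The target is
\begin{equation}
    \frac{1}{2\pi e} e^{2 h(\theta \mid X)} \geq \left( \E_{\theta\sim\mu}[\gI(\theta)] + \gJ(\mu) \right)^{-1}.
\end{equation}
For each fixed $x$, apply Stam's inequality $N(Y)\,J(Y)\geq 1$ to the posterior density $\pi(\cdot\mid x)$ on $\Theta$. Here $N(Y)=e^{2h(Y)}/(2\pi e)$ is the entropy power, and $J$ is the location Fisher information $\int (\partial_\theta \log\pi)^2\pi$. This gives $h(\theta\mid X=x)\geq -\tfrac12\log\bigl(2\pi e\, J(\pi(\cdot\mid x))\bigr)$.

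Next, average over $x$ and use concavity of $\log$ (Jensen) to move the expectation inside. It then remains to prove the van Trees identity
\begin{equation}
    \E_X\bigl[J(\pi(\cdot\mid X))\bigr]=\E_{\theta\sim\mu}[\gI(\theta)]+\gJ(\mu).
\end{equation}
This follows from writing $\partial_\theta\log\pi(\theta\mid x)=\partial_\theta\log f(x;\theta)+\partial_\theta\log\mu(\theta)$ and expanding the square. The cross term vanishes because the score has zero mean under $\sP_\theta$. Each remaining square equals the corresponding negative second derivative after integrating by parts in $\theta$. It is exactly here that the hypothesis that $\mu$ \textbf{vanishes at the boundary} of $\Theta$ and has finite Fisher information is used, since it kills the boundary terms.

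\emph{Step 2 (max-entropy bound).} Shift invariance of entropy gives $h(\theta\mid X)=h(\theta-\hat\theta(X)\mid X)$, and conditioning reduces entropy, so $h(\theta\mid X)\leq h(E)$ with $E\coloneq\theta-\hat\theta(X)$. Among densities with $\E|E|^p=m$, entropy is maximized by $f(y)\propto\exp(-|y|^p/(pm))$. A direct computation gives its normalizer as $2\Gamma(1/p)p^{1/p-1}m^{1/p}$ and its entropy as $\log\bigl(C_{\mathrm{ME}}(p)\,m^{1/p}\bigr)$. The general inequality $h(E)\leq\log(C_{\mathrm{ME}}(p)m^{1/p})$ follows from nonnegativity of $\KL(P_E\,\|\,f)$.

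\emph{Step 3 (combine).} Chaining the two steps yields $C_{\mathrm{ME}}(p)^2 m^{2/p}\geq 2\pi e\,(\E\gI+\gJ)^{-1}$. Raising to the power $p/2$ gives the claimed bound on $m=\E|\theta-\hat\theta|^p$. For $p=2$ the constant is $C_{\mathrm{ME}}(2)=2e^{1/2}\Gamma(1/2)/\sqrt2=\sqrt{2\pi e}$, which recovers classical van Trees as a sanity check.

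I expect the main obstacle to be the rigor of Step 1. Stam's inequality must be applied to posteriors supported on a compact interval, which need not be smooth. The integration by parts needs regularity (differentiability of $f(x;\cdot)$ and interchange of derivative and integral). I would handle this exactly under the stated assumptions (absolute continuity of $\mu$, vanishing at the boundary, finite $\gJ(\mu)$), defining $J$ via $L^2$ scores, and, if needed, approximate by smoothing. The other steps are routine computations.
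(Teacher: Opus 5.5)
Your proof is correct and follows the route the paper relies on. The paper does not prove this lemma itself; it imports it from \citet{chen2024vantrees} as a consequence of Efroimovich's inequality. That is precisely your Step 1, combined with the generalized-Gaussian maximum-entropy bound of Step 2, and your normalizer and the value $C_{\mathrm{ME}}(2)=\sqrt{2\pi e}$ both check out.

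One refinement concerns where the hypothesis that $\mu$ vanishes at the boundary enters. It is needed chiefly so that each posterior, extended by zero, is absolutely continuous on $\mathbb{R}$, and Stam's inequality then holds with the interior Fisher information. Without it the inequality fails: for a uniform prior and uninformative data the interior score is identically zero. By contrast, converting squared scores into negative second derivatives uses differentiation under $\int dx$ for $\gI$, and the boundary values of $\mu'$ (not of $\mu$) for $\gJ$.
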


And again, we begin By Yao's Minimax Principle~\citep{yao1977}: for \emph{any} decomposable prior $\mu = \bigotimes_{k \in [K]} \mu_k$ over $\vs$,
\begin{align}
    \inf_{\gA} \sup_{\vs} \E^\gA_{\vs}\left[ \bignorm{\hat{\vs} - \vs}_p^p \right] &\geq \inf_{\gA} \E_{\vs \sim \mu}\left[ \E^\gA_{\vs}\left[ \bignorm{\hat{\vs} - \vs}_p^p \right] \right] \\
    &= \inf_{\gA} \sum_k \E_{s_k \sim \mu_k}\left[ \E^\gA_{s_k}\left[ |\hat{s}_k - s_k|^p \right] \right],
\end{align}
where the algorithm $\gA$ must satisfy the knapsack constraint: if $\gA$ samples each $(k,j)$ pair $N_{k,j}$ times, then $\sum_j c_j \sum_k N_{k,j} \leq B$.
Let us fix an arbitrary algorithm $\gA$, and the corresponding numbers of pulls $N_{k,j}$'s.

As first suggested in \citet{borovkov1998,tsybakov}, we choose $\mu_k$ as the \emph{cosine-squared} prior density, indexed by $r > 0$:\footnote{One can verify using calculus of variation that this prior density is the ``optimal'' choice that minimizes $\gJ(\mu)$ among all continuously differentiable densities supported on $[-r, r]$ that vanish at the boundaries.}
\begin{equation}
    \mu_k^{(r)}(s) = \frac{1}{r} \cos^2\left(\frac{\pi (s - s_k)}{2 r}\right) \mathbb{I}_{[s_k - r, s_k + r]}(s).
\end{equation}

Its prior Fisher information $\gJ(\mu_k^{(r)}) \triangleq \E_{s \sim \mu_k^{(r)}}\left[ -\frac{\partial^2}{\partial s^2} \log \mu_k^{(r)}(s) \right]$ is computed as:
\begin{align}
    \gJ(\mu_k^{(r)}) &= \int_{s_k-r}^{s_k+r} \mu_k^{(r)}(s) \left( -\frac{\partial^2}{\partial s^2} \left[ \log\left(\frac{1}{r}\right) + 2 \log \cos\left(\frac{\pi (s - s_k)}{2r}\right) \right] \right) ds \\
    &= \int_{-r}^{r} \frac{1}{r} \cos^2\left(\frac{\pi s}{2 r}\right) \left( \frac{\pi^2}{2 r^2} \sec^2\left(\frac{\pi s}{2 r}\right) \right) ds \\
    &= \int_{-r}^{r} \frac{\pi^2}{2 r^3} ds = \frac{\pi^2}{r^2}.
\end{align}
The Fisher information of the Gaussian observations and algorithm $\gA$ is computed as follows: denoting $p(\cdot | s, \sigma^2)$ as the probability density of $\gN(s, \sigma^2),$
\begin{align}
    \E_{s_k \sim \mu_k^{(r)}}\left[ \gI(s_k) \right] &= \E_{s_k \sim \mu_k^{(r)}}^\gA \left[ -\frac{\partial^2}{\partial s_k^2} \left[ \sum_{t=1}^T \indicator[k_t = k] \log p(\tilde{s}_t \mid s_k, \sigma_{k_t,j_t}^2) \right] \right] \\
    &= \E_{s_k \sim \mu_k^{(r)}}^\gA \left[ -\frac{\partial^2}{\partial s_k^2} \left[ \sum_{j \in [J]} \sum_{t=1}^T \indicator[k_t = k, j_t = j] \left( -\frac{(\tilde{s}_t - s_k)^2}{2\sigma_{k,j}^2} + C \right) \right] \right] \\
    &= \E^\gA \left[ \sum_{j \in [J]} \sum_{t=1}^T \indicator[k_t = k, j_t = j] \frac{1}{\sigma_{k,j}^2} \right] \\
    &= \E^\gA \left[ \sum_{j \in [J]} \frac{N_{k,j}}{\sigma_{k,j}^2} \right]
    = \sum_{j \in [J]} \frac{\E^\gA[N_{k,j}]}{\sigma_{k,j}^2}.
\end{align}
With this, we apply the \textbf{\textit{$L_p$-van Trees inequality}} (\Cref{lem:van-trees}) for each $k$, which gives
\begin{align}
    \sum_k \E_{s_k \sim \mu_k^{(r)}}\left[ \E^\gA_{s_k}\left[ |\hat{s}_k - s_k|^p \right] \right]
    &\geq \left( \frac{\sqrt{2 \pi e}}{C_{\mathrm{ME}}(p)} \right)^p \sum_{k \in [K]} \left( \sum_{j \in [J]} \frac{\E^\gA[N_{k,j}]}{\sigma_{k,j}^2} + \frac{\pi^2}{r^2} \right)^{-\frac{p}{2}}.
\end{align}
Chaining everything, we then have that
\begin{equation}
    \sup_{\vs} \E^\gA_{\vs}\left[ \bignorm{\hat{\vs} - \vs}_p^p \right] \geq \left( \frac{\sqrt{2 \pi e}}{C_{\mathrm{ME}}(p)} \right)^p \sum_{k \in [K]} \left( \sum_{j \in [J]} \frac{B \omega_{k,j}}{c_j \sigma_{k,j}^2} + \frac{\pi^2}{r^2} \right)^{-\frac{p}{2}},
\end{equation}
where as in the proof of \Cref{thm:lower-bound-whp}, we define $\omega_{k,j} = \frac{\E^\gA[N_{k,j}] c_j}{B}$.

As this holds for any $r > 0$, taking the limit on both sides as $r \rightarrow \infty$ and $1/p$-th power, we have
\begin{equation}
    \sup_{\vs} \E^\gA_{\vs}\left[ \bignorm{\hat{\vs} - \vs}_p^p \right]^{\frac{1}{p}} \geq \frac{\sqrt{2 \pi e}}{C_{\mathrm{ME}}(p)} \left( \sum_{k \in [K]} \left( \sum_{j \in [J]} \frac{B \omega_{k,j}}{c_j \sigma_{k,j}^2} \right)^{-\frac{p}{2}} \right)^{\frac{1}{p}}.
\end{equation}
Taking the infimum over $\gA$ (and thus over $\bm\omega$) and invoking \Cref{thm:optimal-allocation}, we are done.
\qed

%% file: 908Experiments.tex
\section{Additional Experiments}\label{app:experiments}
This appendix provides additional experimental results that extend the analysis presented in Section 7. We provide further empirical evidence regarding the optimality of our algorithms across different norms and datasets.

\subsection{Details and Additional Results in Experiments on Synthetic Data}\label{app:synthetic_data_extended}

For each query $k \in [K]$, the ground-truth score $s_k$ is sampled uniformly from the interval $[0.1, 0.9]$, then the true variance $\sigma_{k,j}^2$ for each judge $j$ on query $k$ is sampled uniformly from $[10^{-4}, 0.9 s_k(1-s_k)]$. The upper bound of the interval is set to 90\% of the maximum possible variance for a bounded distribution on $[0,1]$ with a mean $s_k$.
Finally, the cost of each judge is sampled uniformly from the interval $[0.5, 1.5]$.

In Figure \ref{fig:add-exp-synthetic}, we present the $\ell_1$ and $\ell_\infty$-errors of the synthetic experiments described in \Cref{sec:experiments} ($K=1000, J=10$). Consistent with the $\ell_2$-error in the main text, both \estivwe{} (Bounded) and \estivwe{} (Gaussian) show a clear convergence toward the \textsc{Oracle} as the budget $B$ increases. This confirms that our adaptive allocation strategy effectively minimizes the $\ell_p$-error for any $p$, outperforming the non-adaptive \textsc{Uniform} strategy.
\begin{figure}[!h]
    \centering
    \begin{subfigure}[b]{0.495\textwidth}
        \centering
        \includegraphics[width=\linewidth]{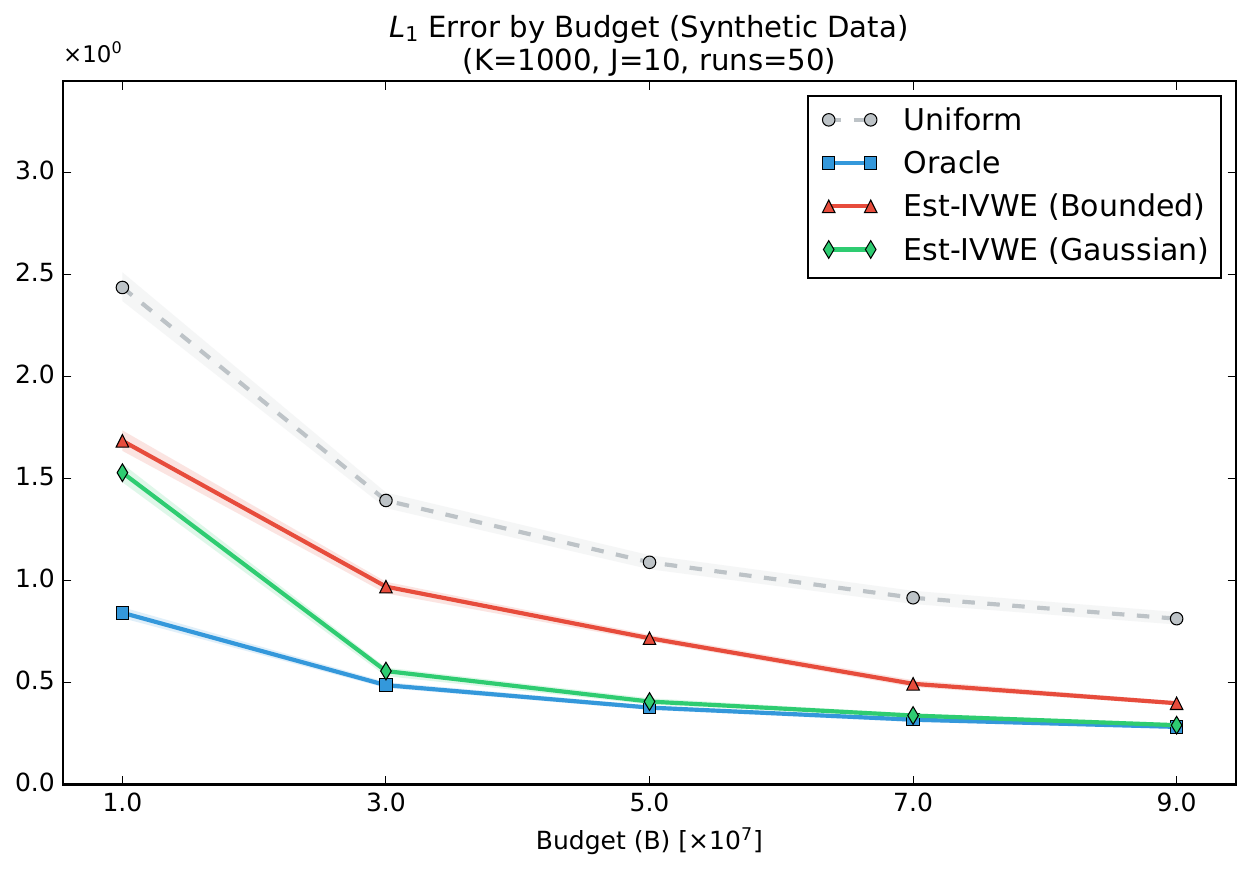}
        \caption{$\ell_1$-Error.}
        \label{fig:synthetic_l1}
    \end{subfigure}
    \hfill
    \begin{subfigure}[b]{0.495\textwidth}
        \centering
        \includegraphics[width=\linewidth]{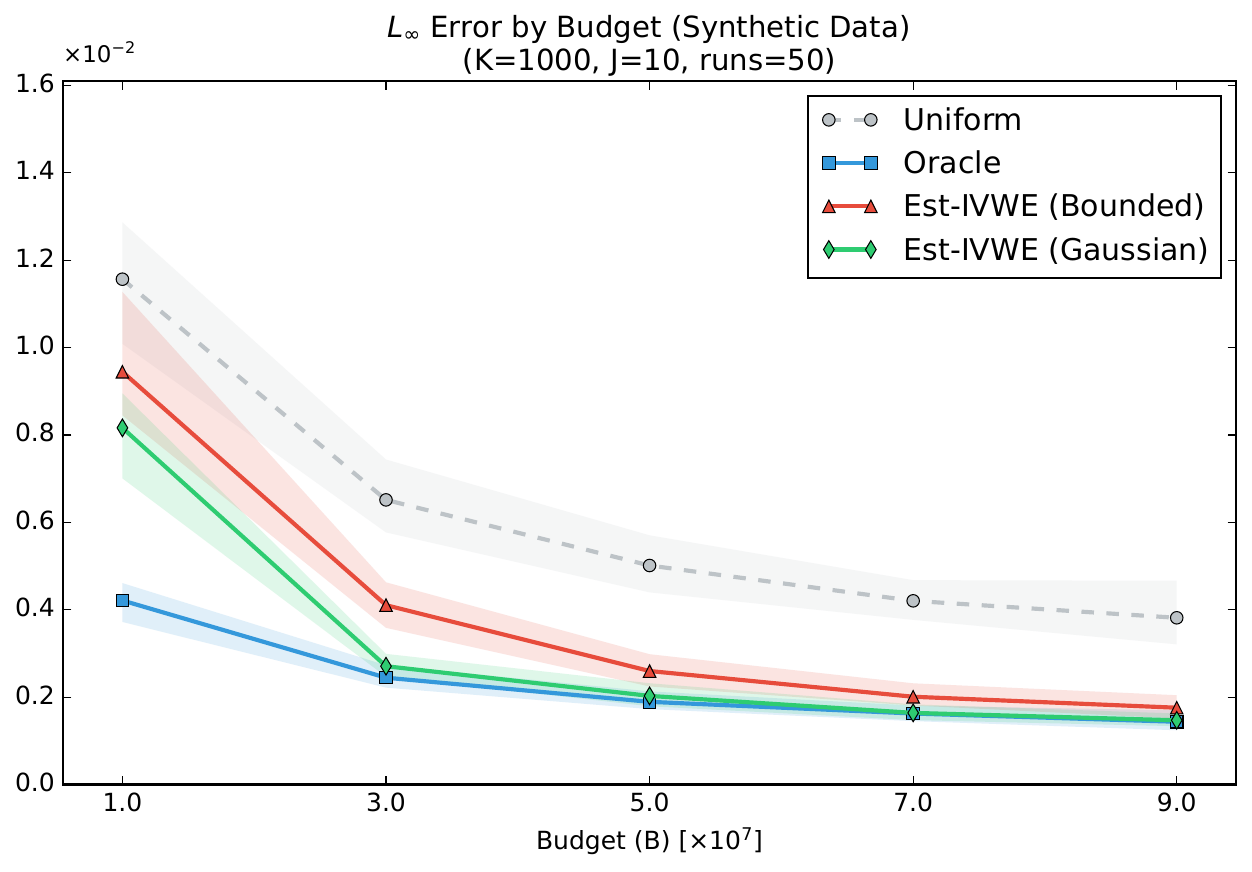}
        \caption{$\ell_\infty$-Error.}
        \label{fig:synthetic_linf}
    \end{subfigure}
    \caption{Additional Experimental results on synthetic datasets. All results are averaged over $50$ independent runs, with the $10\%$ and $90\%$ quantiles (shaded regions)}
    \label{fig:add-exp-synthetic}
\end{figure}
\subsection{Details and Additional Results in Experiments on Real-data}\label{app:real_data_extended}
Our preprocessing pipeline establishes a reliable baseline by first performing consensus filtering. We select only those queries where all three judges provide an identical mode score via majority voting. This ensures that the established $s_k$ serves as a credible proxy for the true score. Furthermore, we ensure statistical significance in variance estimation by retaining query-judge pairs that have at least 25 evaluation samples for all judges. Following this filtering, we obtain a subset of queries for four distinct categories: Complexity, Correctness, Helpfulness, and Verbosity. Heteroskedasticity is modeled by empirically calculating the true variance $\sigma_{k,j}^2$ as the mean squared deviation from the consensus score $s_k$.

We provide the complete experimental results for all four datasets (Complexity, Correctness, Helpfulness, and Verbosity) preprocessed as described above. Figure \ref{fig:total_errors} illustrates the $\ell_1, \ell_2,$ and $\ell_\infty$ errors for each dataset. Across all twelve scenarios, the \estivwe{} variants consistently show the relative superiority over the \textsc{Uniform} baseline, performing as well as the \textsc{Oracle} strategy as the budget $B$ increases.

\begin{sidewaysfigure}[t]
    \centering
    \begin{subfigure}[b]{0.3\textwidth}
        \centering
        \includegraphics[width=\textwidth]{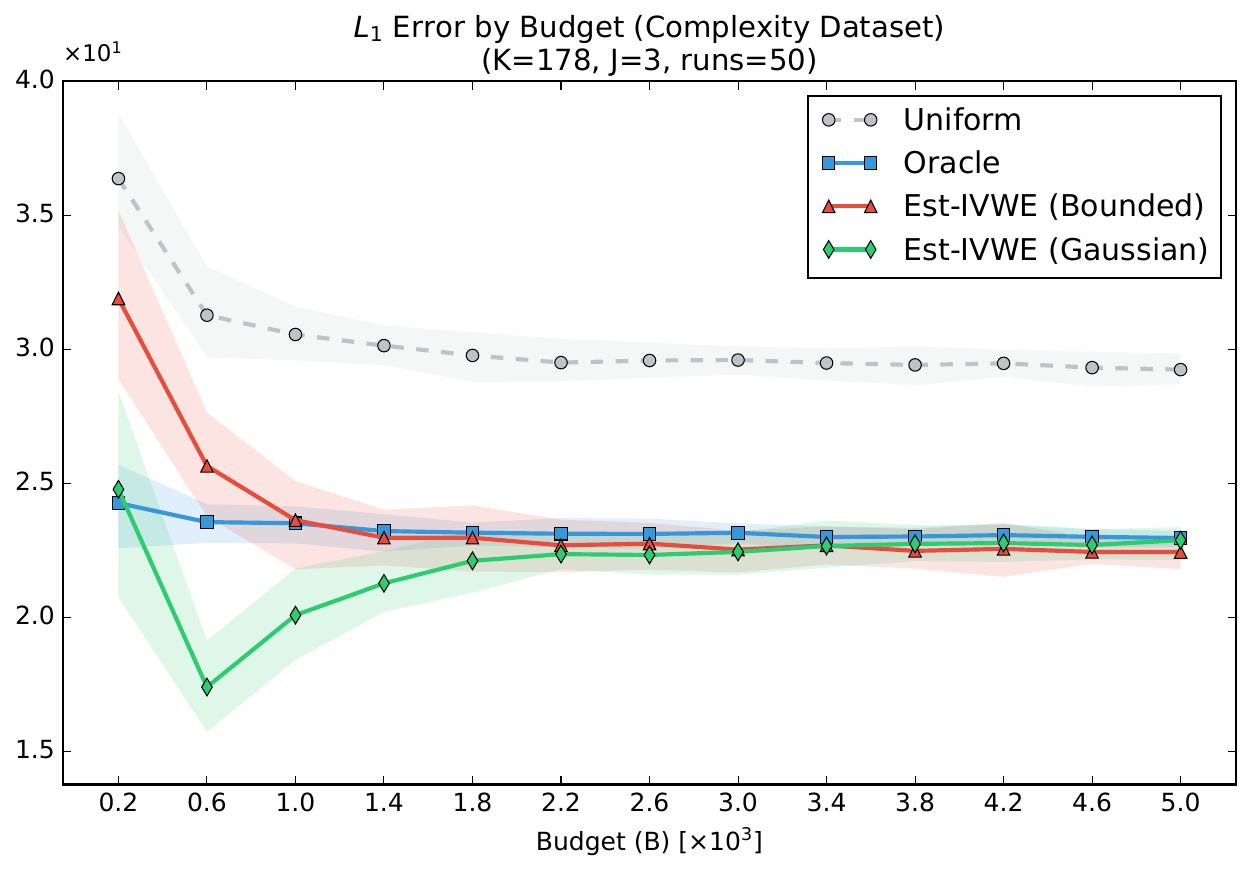}
    \end{subfigure}
    \hspace{10pt}
    \begin{subfigure}[b]{0.3\textwidth}
        \centering
        \includegraphics[width=\textwidth]{fig/real/complexity_K178_J3_runs50_l2.pdf}
    \end{subfigure}
    \hspace{10pt}
    \begin{subfigure}[b]{0.3\textwidth}
        \centering
        \includegraphics[width=\textwidth]{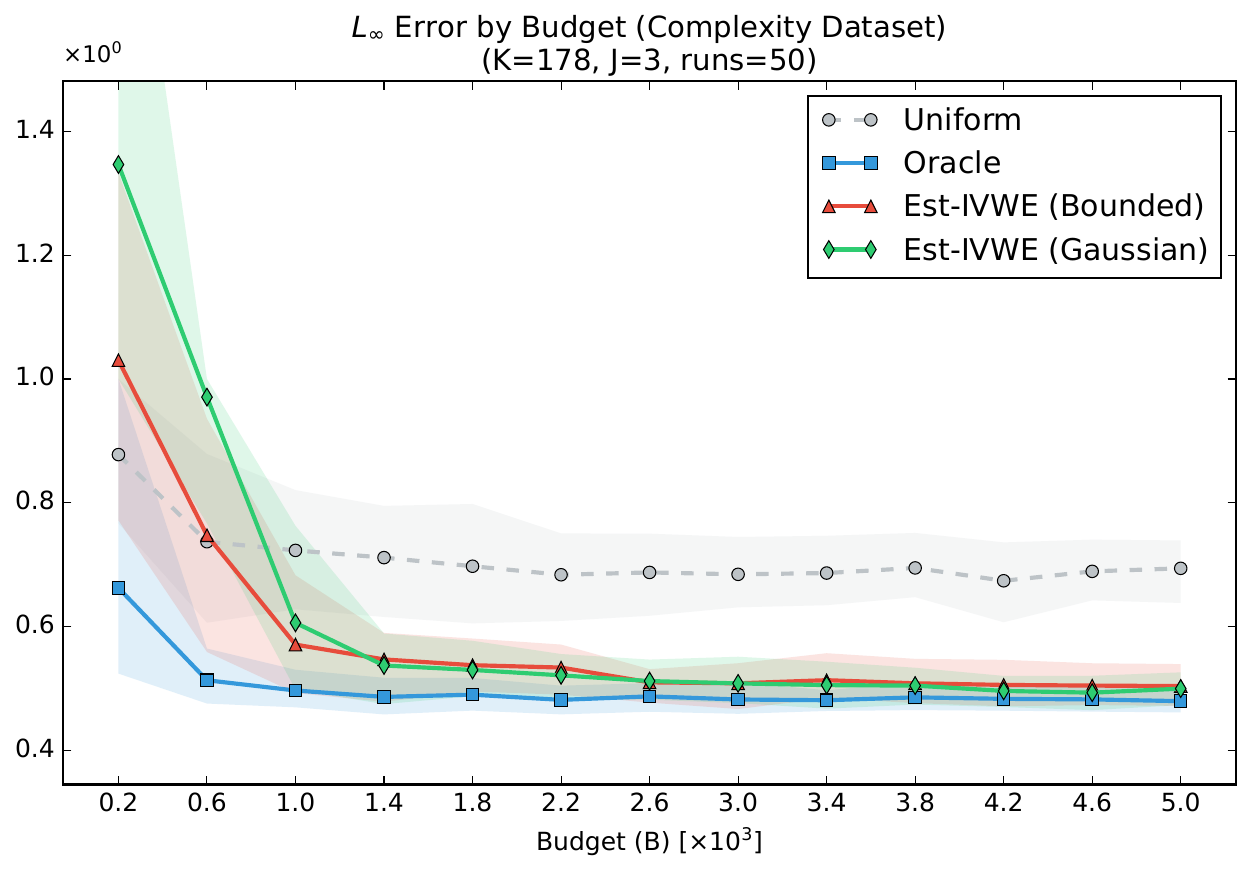}
    \end{subfigure}
    
    \vfill
    
    \begin{subfigure}[b]{0.3\textwidth}
        \centering
        \includegraphics[width=\textwidth]{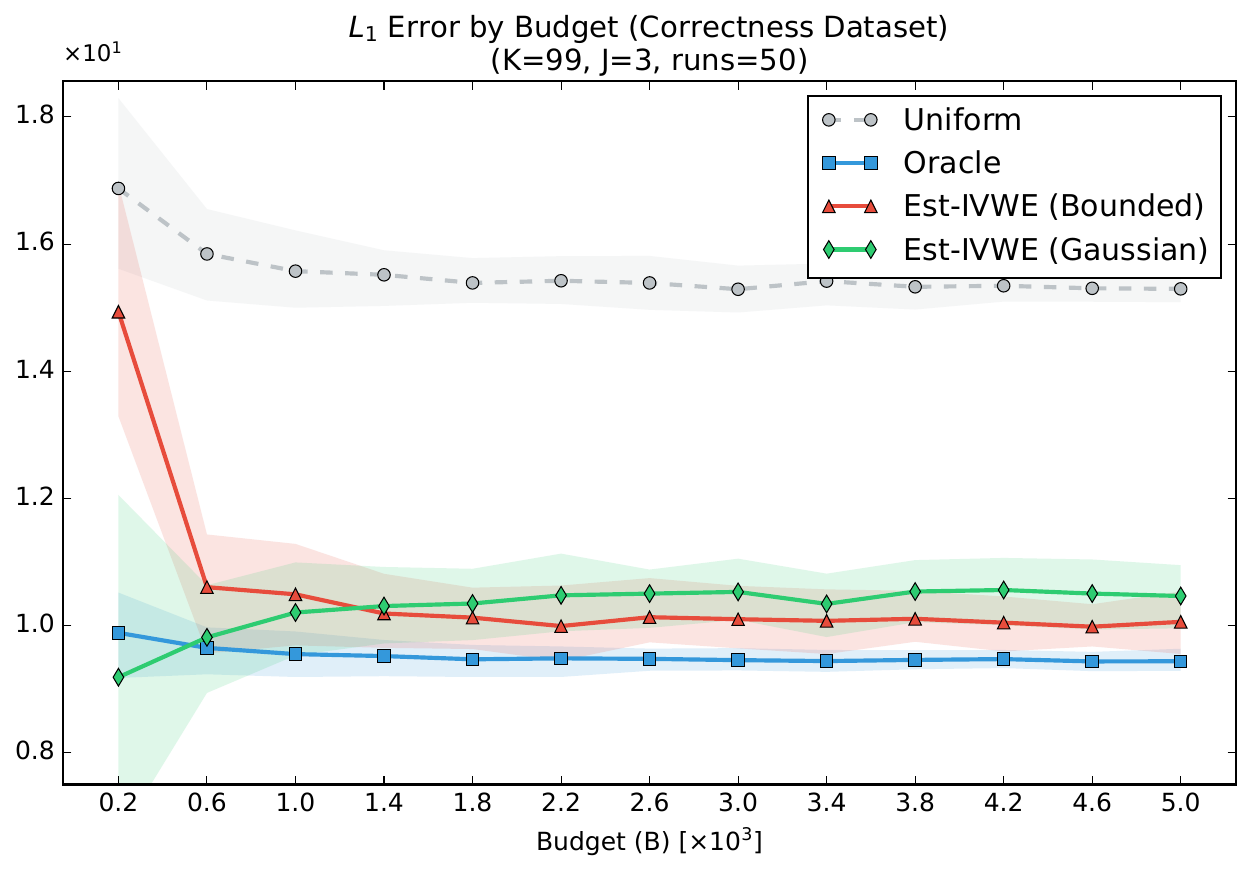}
    \end{subfigure}
    \hspace{10pt}
    \begin{subfigure}[b]{0.3\textwidth}
        \centering
        \includegraphics[width=\textwidth]{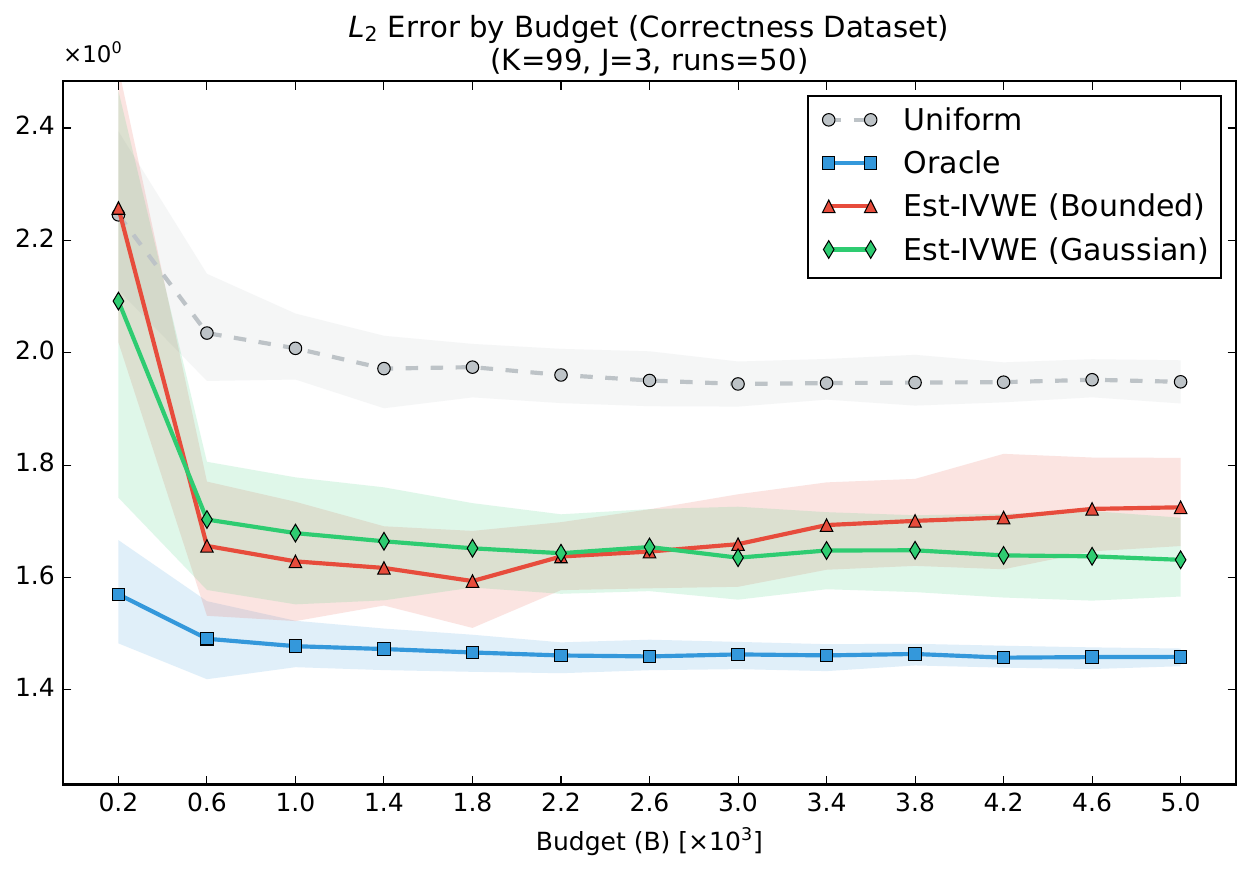}
    \end{subfigure}
    \hspace{10pt}
    \begin{subfigure}[b]{0.3\textwidth}
        \centering
        \includegraphics[width=\textwidth]{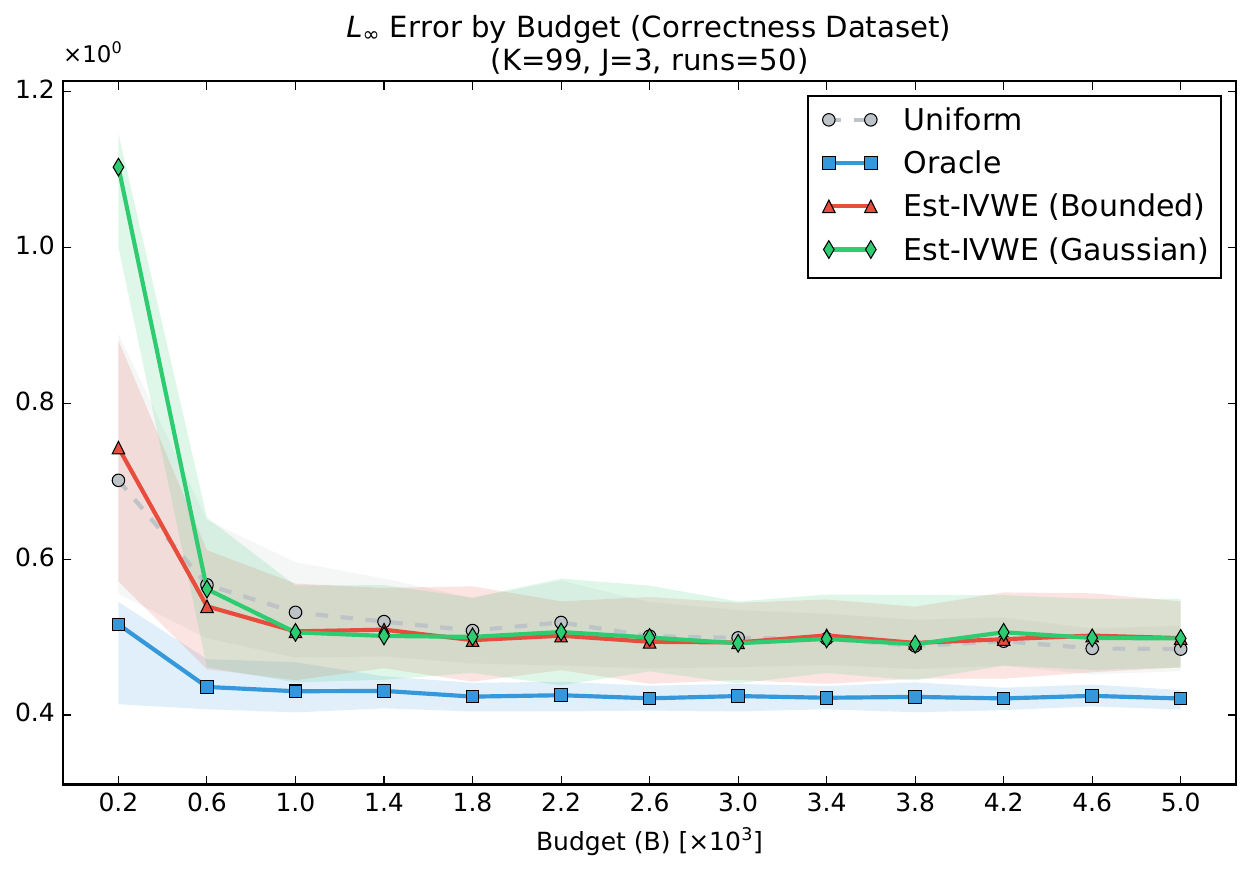}
    \end{subfigure}
    
    \vfill
    
    \begin{subfigure}[b]{0.3\textwidth}
        \centering
        \includegraphics[width=\textwidth]{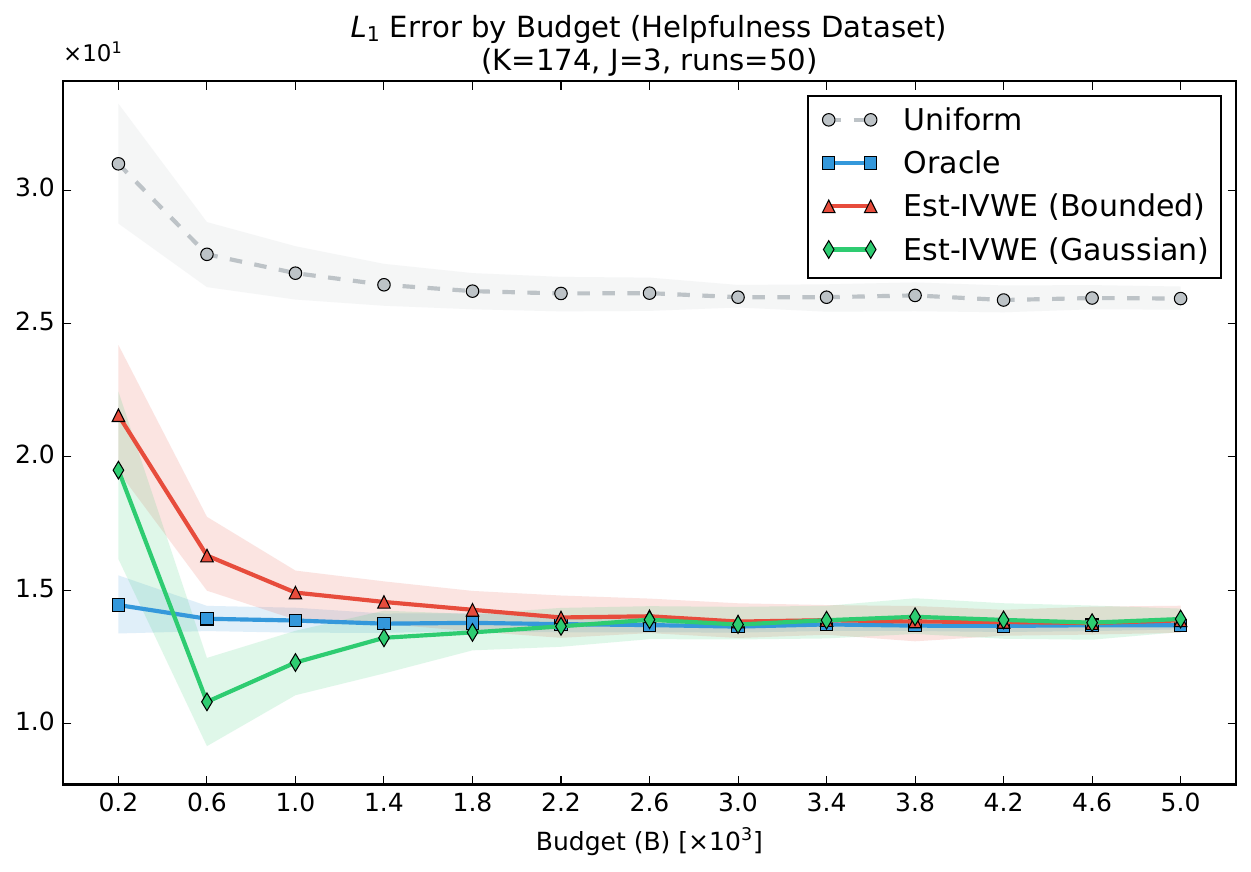}
    \end{subfigure}
    \hspace{10pt}
    \begin{subfigure}[b]{0.3\textwidth}
        \centering
        \includegraphics[width=\textwidth]{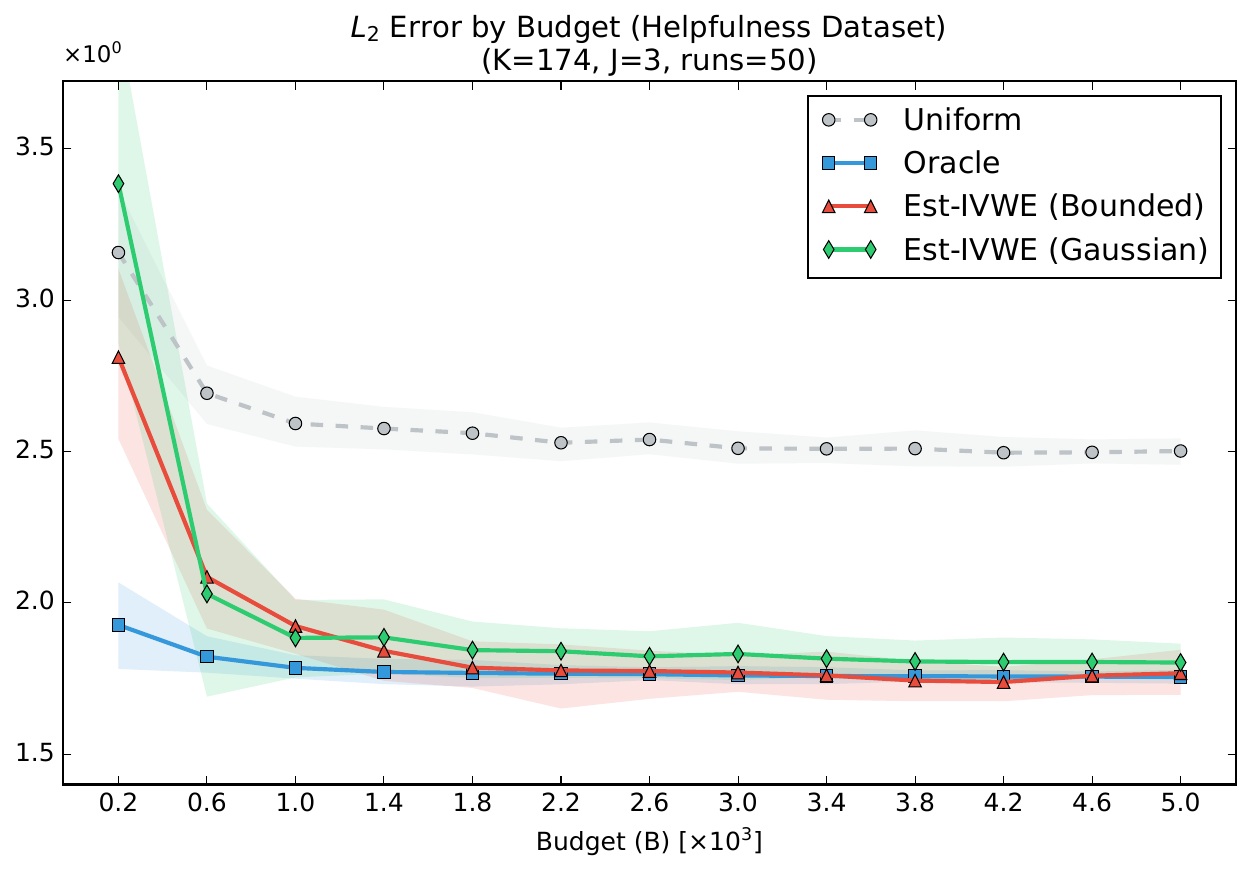}
    \end{subfigure}
    \hspace{10pt}
    \begin{subfigure}[b]{0.3\textwidth}
        \centering
        \includegraphics[width=\textwidth]{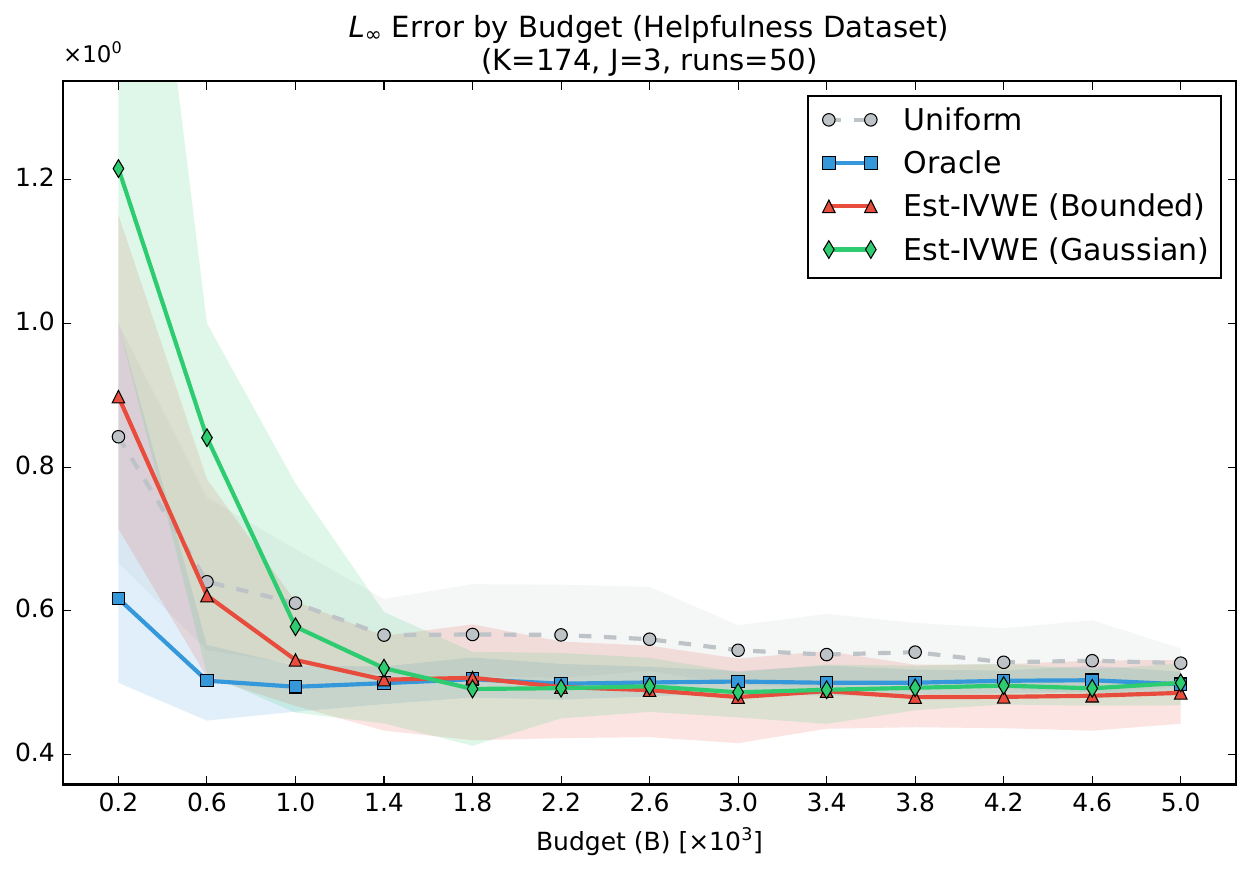}
    \end{subfigure}
    
    \vfill
    
    \begin{subfigure}[b]{0.3\textwidth}
        \centering
        \includegraphics[width=\textwidth]{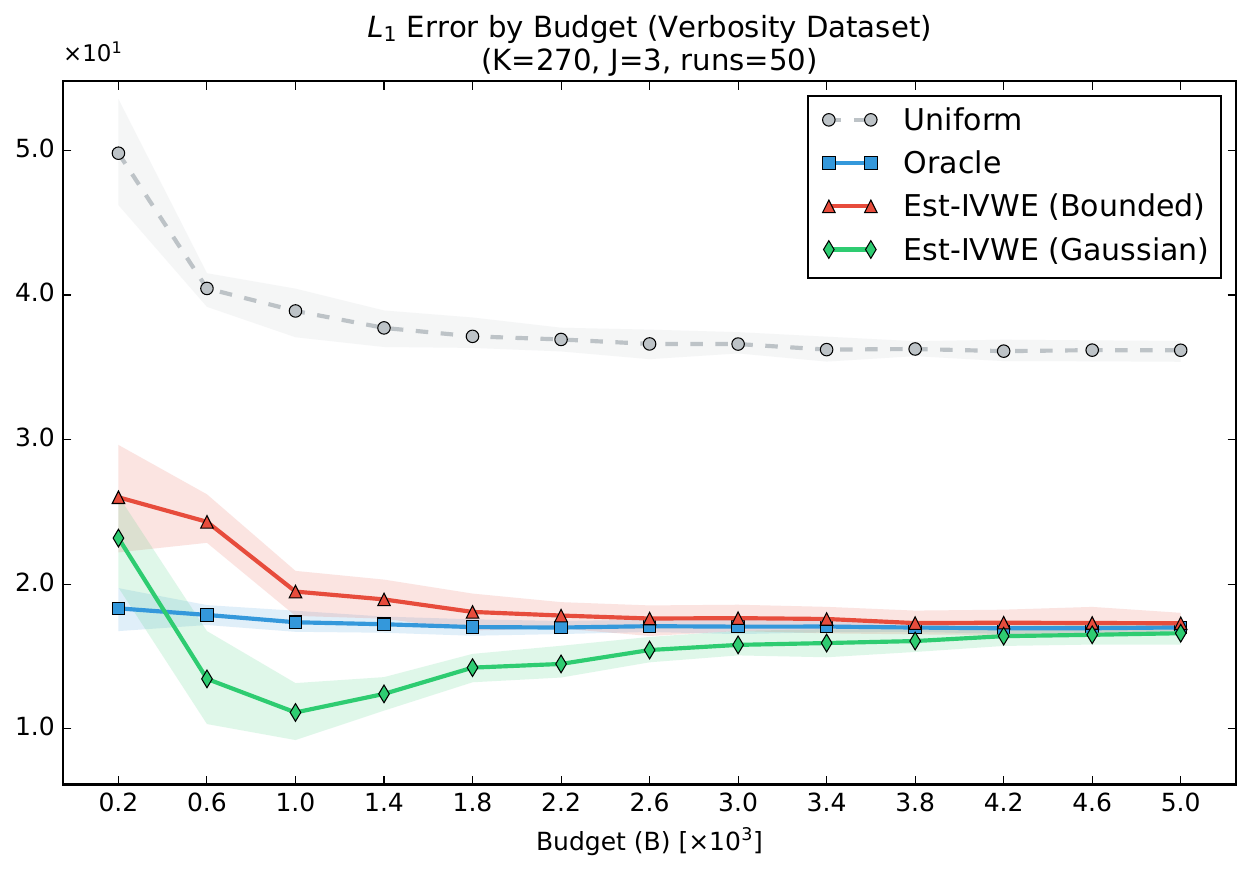}
    \end{subfigure}
    \hspace{10pt}
    \begin{subfigure}[b]{0.3\textwidth}
        \centering
        \includegraphics[width=\textwidth]{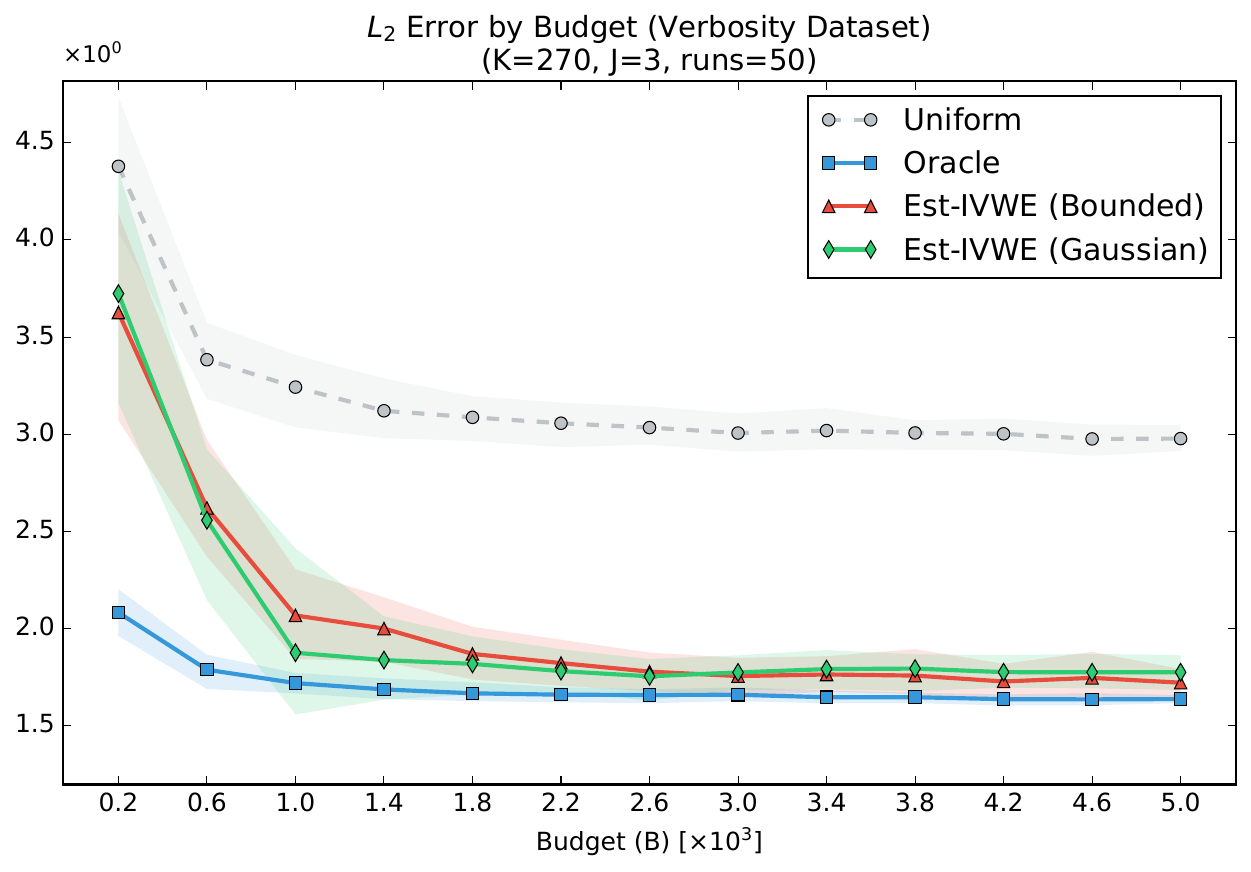}
    \end{subfigure}
    \hspace{10pt}
    \begin{subfigure}[b]{0.3\textwidth}
        \centering
        \includegraphics[width=\textwidth]{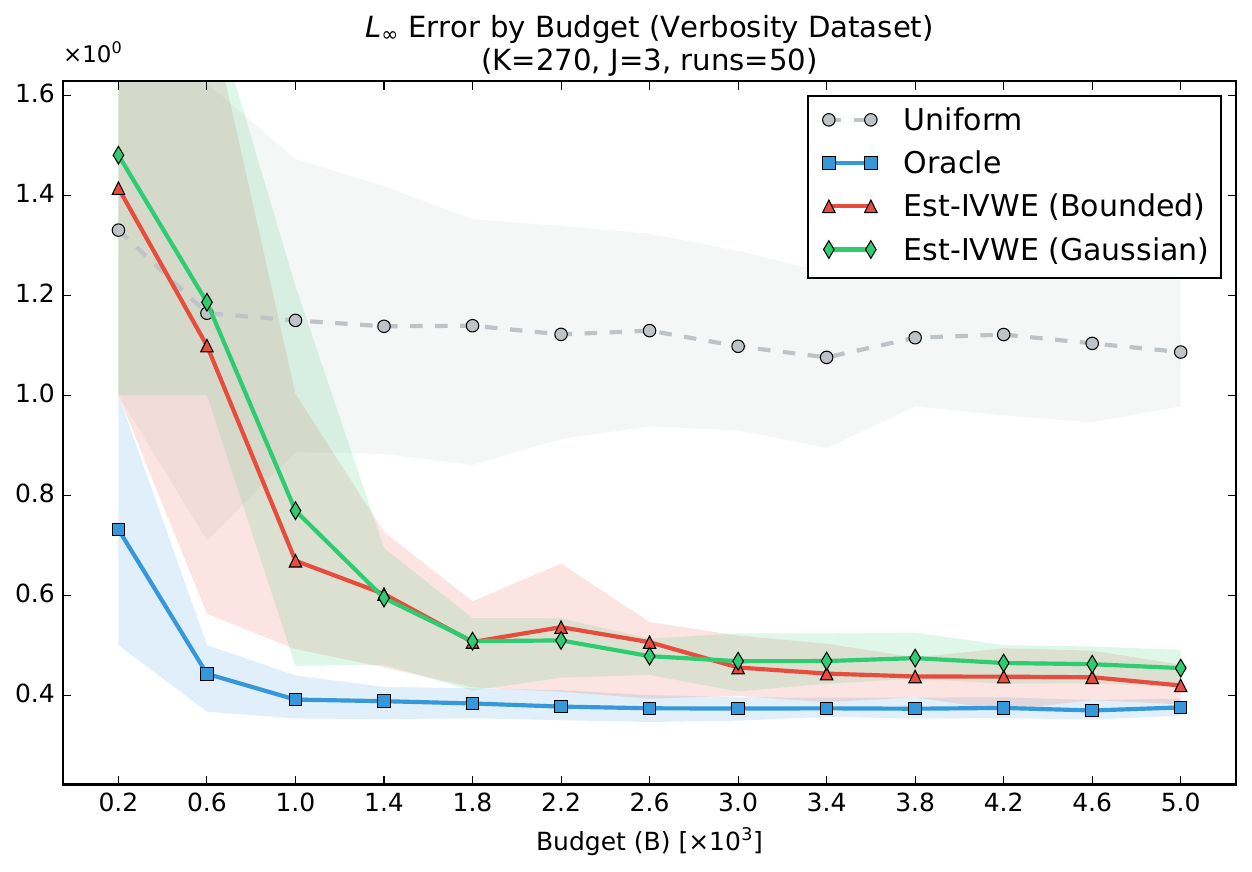}
    \end{subfigure}
    
    \caption{Full experimental results for the datasets (Complexity, Correctness, Helpfulness, Verbosity). Each column corresponds to an error metric ($\ell_1, \ell_2, \ell_\infty$).}
    \label{fig:total_errors}
\end{sidewaysfigure}

\subsection{Sensitivity Analysis on Cost-Quality Correlations}
\label{app:cost_analysis}

To evaluate the robustness of our framework beyond the uniform cost setting employed in \Cref{sec:experiments}, we investigate how varying correlations between a judge's cost and their evaluation accuracy (inverse variance) affect convergence behavior. Theoretically, the selection of an optimal judge is governed by the cost-weighted variance, $j^*(k) = \argmin_{j\in[J]} c_j \sigma_{k,j}^2$, which balances the trade-off between per-sample information gain and budget consumption. We conducted a series of ablation studies using \textsc{Oracle} and \estivwe{} (Bounded) for $p \in \{1, 2, \infty\}$ under three cost structures: (i) \textit{Bad is Expensive}, where variance and cost are positively correlated; (ii) \textit{Bad is Cheap}, where low-quality judges are priced cheaper; and (iii) \textit{Uniform Costs}, where the costs are all the same.
Precisely, for a fixed query $k$, the variance $\sigma_{k,j}^2$ for each judge $j\in[J]$ is sampled from the interval $[ j, J+j-1 ] \times s_k (1-s_k)/ (2J)$ as the larger the index $j$, the larger its variance $\sigma_{k,j}^2$ tends to be.
In \textit{Bad is Expensive} scenario, the cost is set to be proportional to $\sum_{k\in [K]} \sigma_{k,j}^2$; while in \textit{Bad is Cheap} scenario, the cost is set to be proportional to $\sum_{k\in [K]} \sigma_{k,j}^{-2}$.
In both cases, the costs are normalized so that the mean cost over $J$ judges becomes $0.1$.

As illustrated in \Cref{fig:cost-structure}, our empirical findings suggest that while the specific cost structure influences the constant factors of the convergence rate, it does not alter the fundamental asymptotic decay of the estimation error. The \textit{Bad is Cheap} scenario represents the most challenging regime for adaptive allocation; when a judge's high variance is offset by a significantly lower cost, the cost-weighted variances $\{c_j \sigma_{k,j}^2\}_{j\in[J]}$ across different judges become numerically similar. This reduced ``gap'' between the optimal judge and noisier alternatives necessitates a more extensive initial exploration ($N_0$) to ensure correct identification. Nevertheless, the results confirm that the algorithm remains robust across diverse cost scenarios, consistently achieving error reduction without failing to identify the optimal judges as the budget scales.

\begin{sidewaysfigure}[h!]
    \centering
    \begin{subfigure}[b]{0.3\textwidth}
        \centering
        \includegraphics[width=\textwidth]{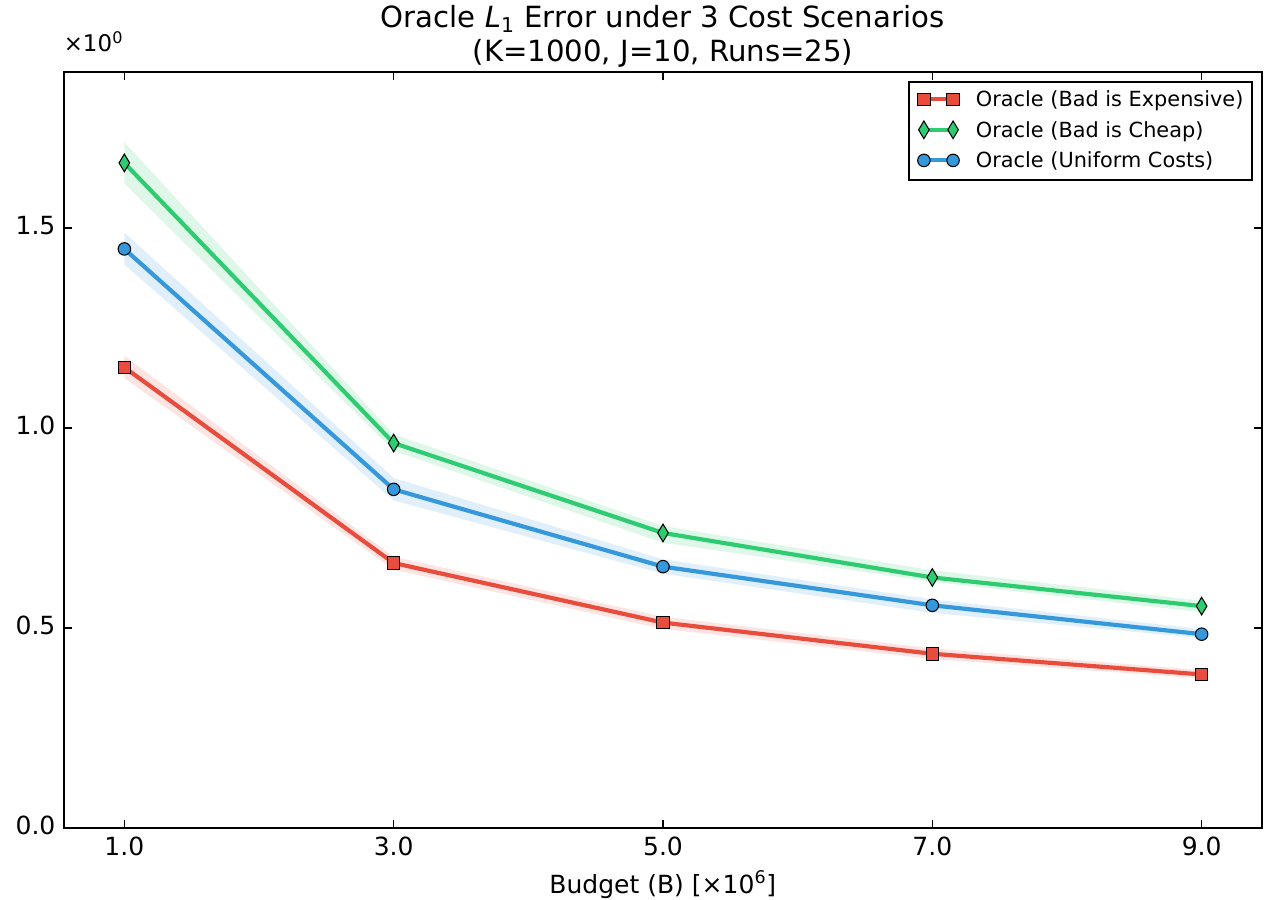}
     \end{subfigure}
    \hspace{10pt}
    \begin{subfigure}[b]{0.3\textwidth}
        \centering
        \includegraphics[width=\textwidth]{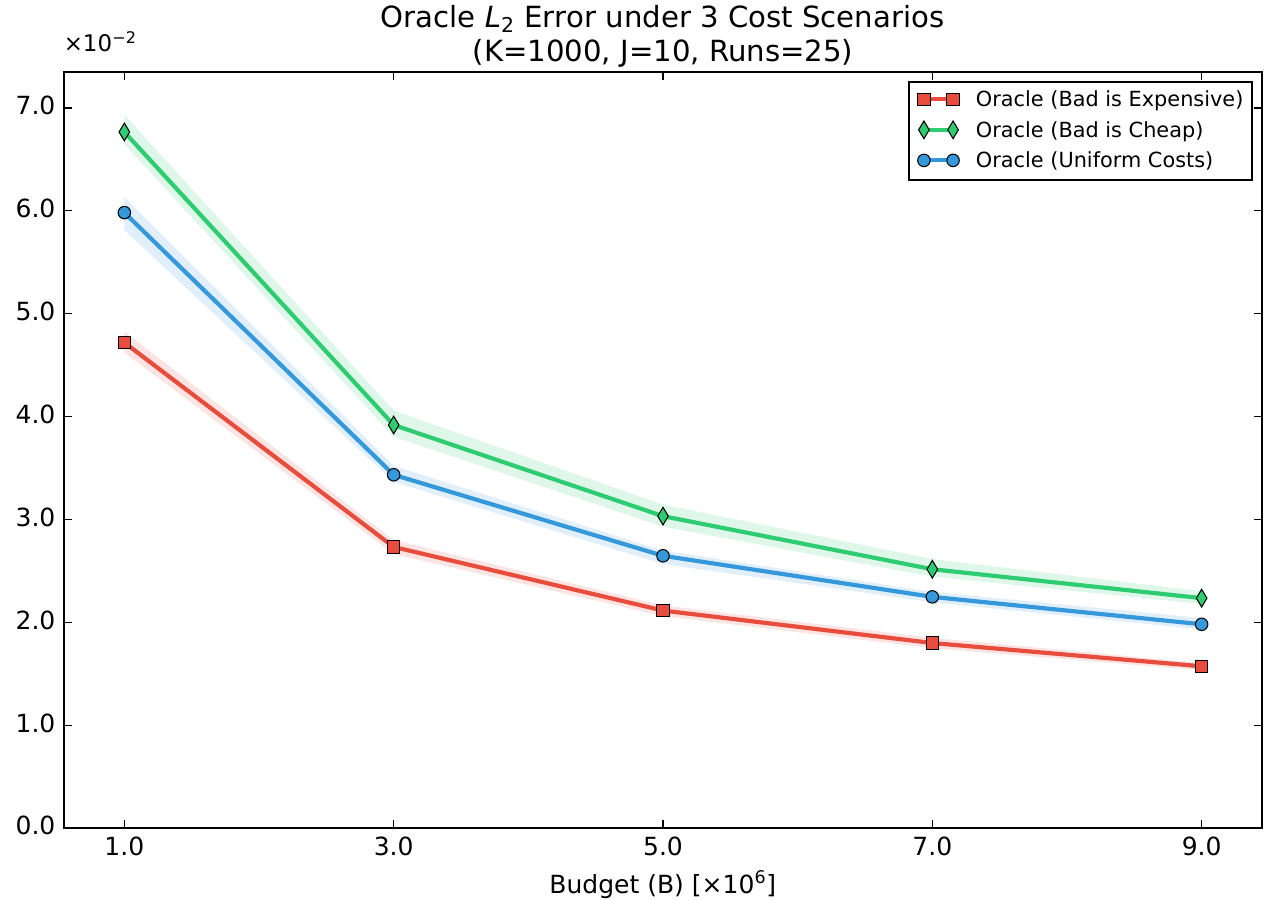}
    \end{subfigure}
    \hspace{10pt}
    \begin{subfigure}[b]{0.3\textwidth}
        \centering
        \includegraphics[width=\textwidth]{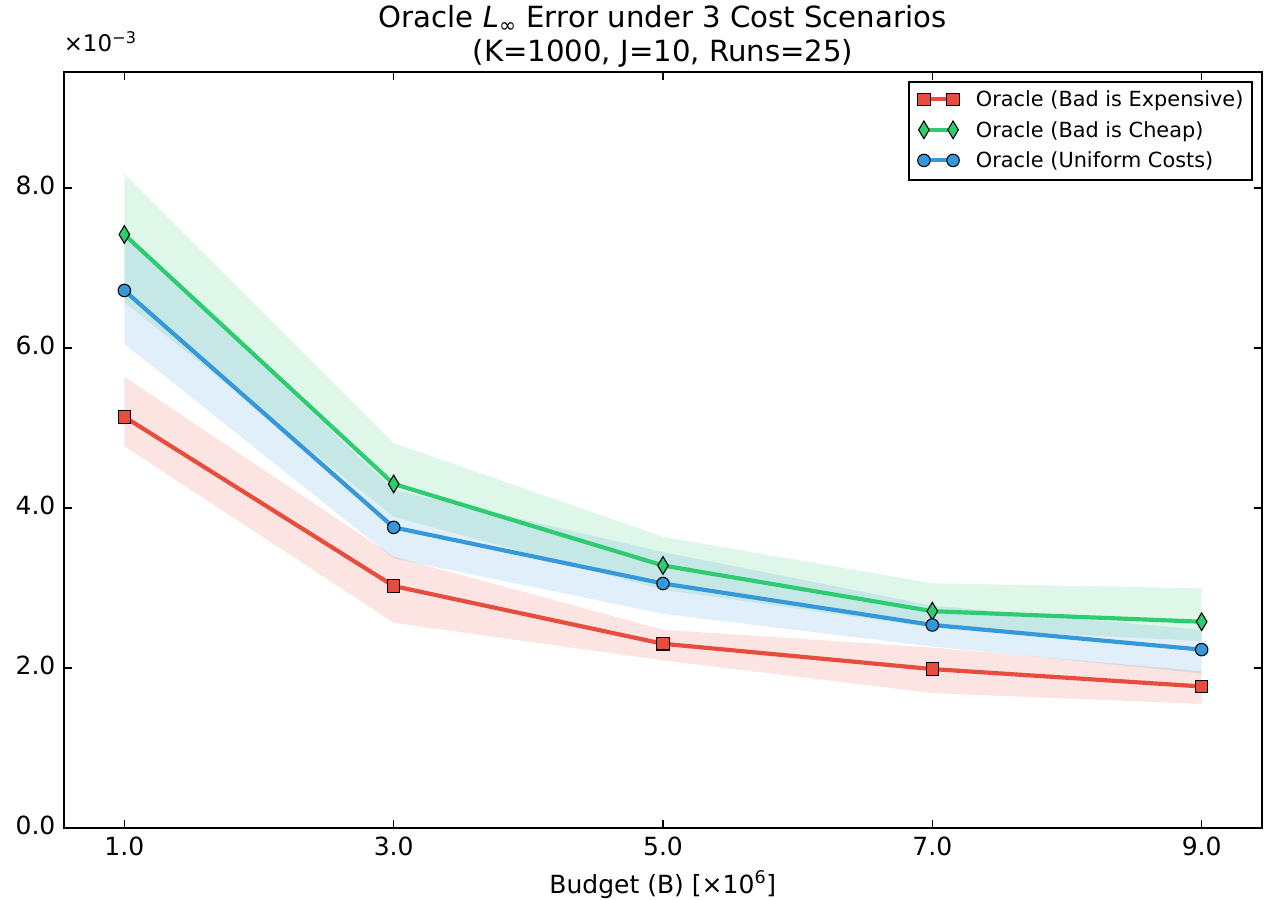}
    \end{subfigure}
    
    \vspace{20pt}
    
    \begin{subfigure}[b]{0.3\textwidth}
        \centering
        \includegraphics[width=\textwidth]{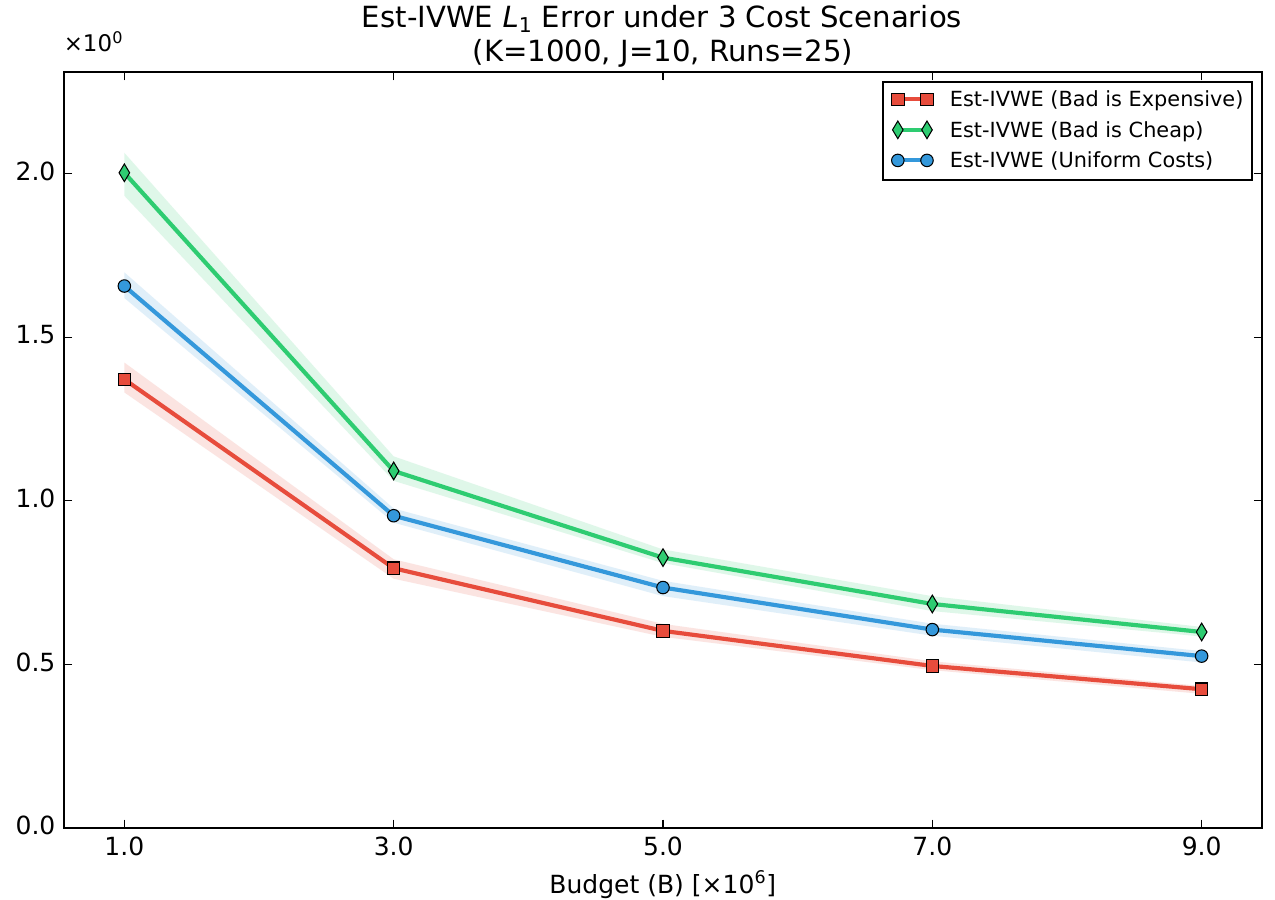}
    \end{subfigure}
    \hspace{10pt}
    \begin{subfigure}[b]{0.3\textwidth}
        \centering
        \includegraphics[width=\textwidth]{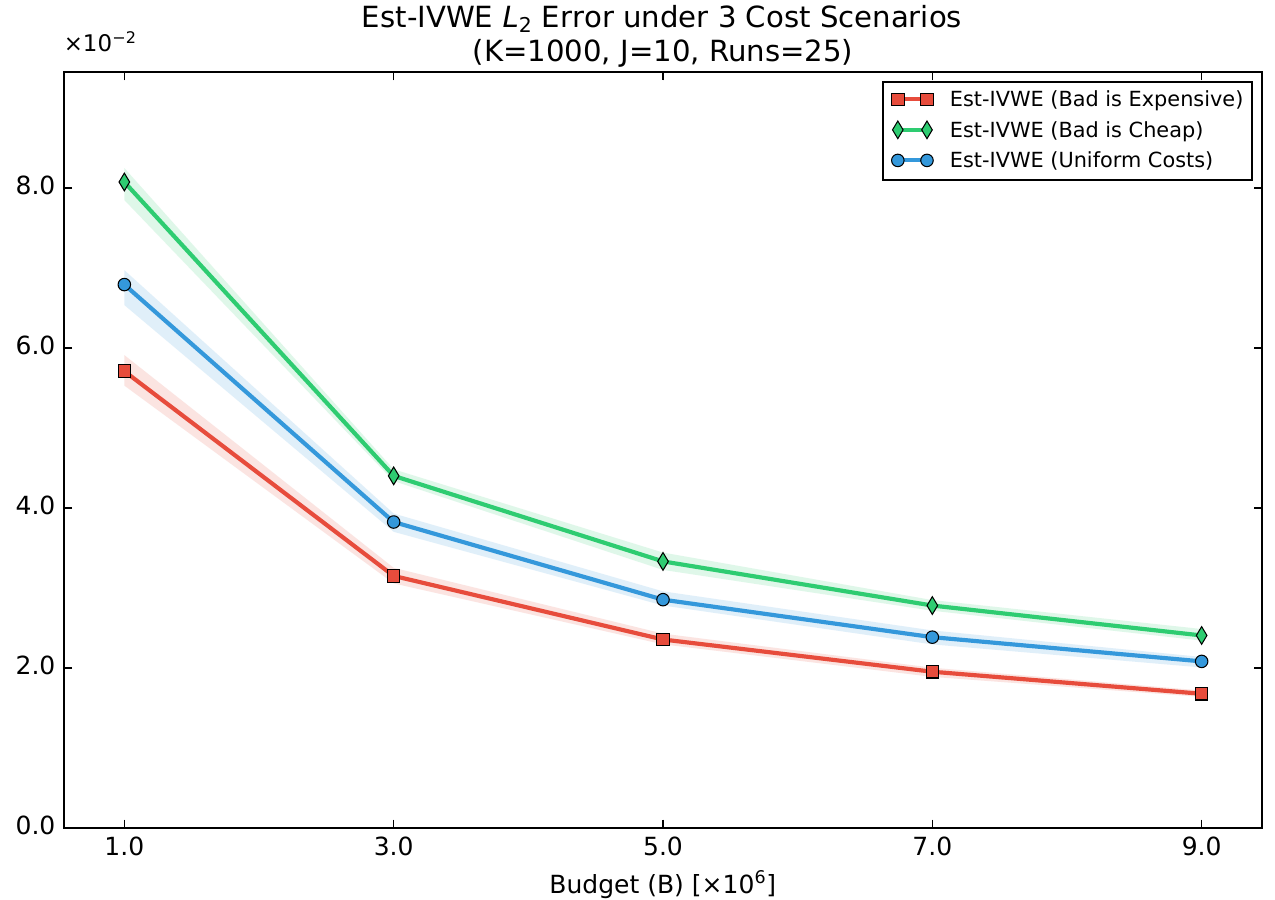}
    \end{subfigure}
    \hspace{10pt}
    \begin{subfigure}[b]{0.3\textwidth}
        \centering
        \includegraphics[width=\textwidth]{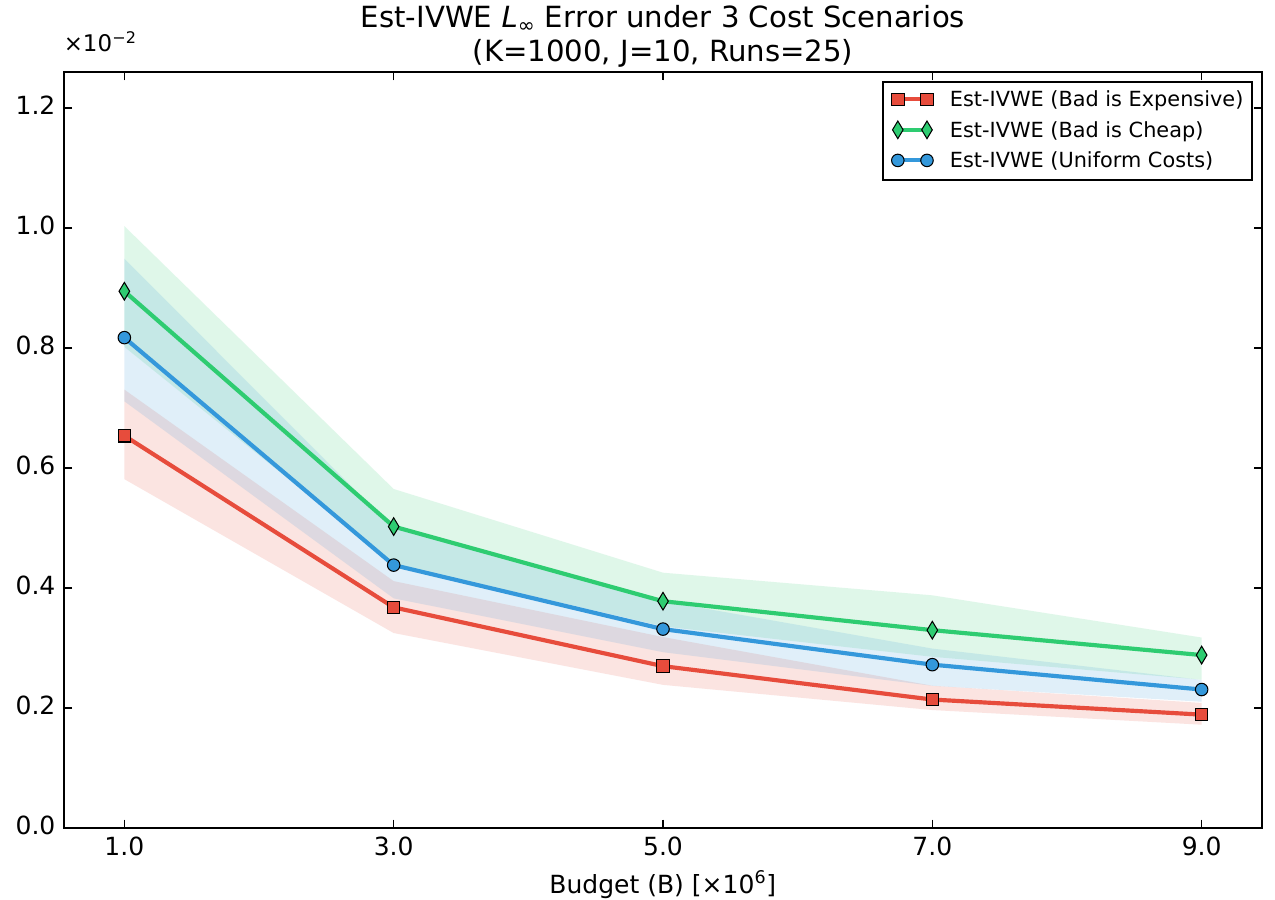}
    \end{subfigure}
    \caption{Impact of Cost Structure on Performances of \textbf{Est-IVWE} and \textbf{Oracle}}
    \label{fig:cost-structure}
\end{sidewaysfigure}